\newtheoremstyle{break}
  {\topsep}{\topsep}%
  {\itshape}{}%
  {\bfseries}{}%
  {\newline}{}%
\newtheorem{conjecture}{Conjecture}
\tikzset{
    cross/.pic = {
    \draw[rotate = 45, densely dotted, red, line width=0.33mm] (-#1,0) -- (#1,0);
    \draw[rotate = 45, densely dotted, red, line width=0.33mm] (0,-#1) -- (0, #1);
    }
}
\theoremstyle{plain}
\newtheorem{theorem}{Theorem}[section]
\newtheorem{lemma}[theorem]{Lemma}
\newtheorem{corollary}[theorem]{Corollary}
\theoremstyle{definition}
\newtheorem{definition}[theorem]{Definition}
\newtheorem{assumption}[theorem]{Assumption}
\theoremstyle{remark}
\newtheorem{remark}[theorem]{Remark}
\newtheorem{example}[theorem]{Example}
\newcommand{\Secref}[1]{\hyperref[#1]{Section \ref*{#1}}}
\newcommand{\Appref}[1]{\hyperref[#1]{Appendix \ref*{#1}}}
\crefname{assumption}{Assumption}{Assumptions}
\newcommand{\esslimsup}{\text{ess} \operatorname*{limsup}\limits}
\newcommand{\Qgram}[1]{\mathcal{Q}_{#1\text{-gram}}}
\newcommand{\dec}[1]{\textsf{dec} ( #1 )}
\newcommand{\enc}[1]{\textsf{enc} ( #1 )}
\newcommand{\encgre}[1]{\textsf{enc}_{\text{gre}} ( #1 )}
\newcommand{\decgre}[1]{\textsf{dec}_{\text{gre}} ( #1 )}
\newcommand{\encseq}[1]{\textsf{enc}_{\text{BPE}} ( #1 )}
\newcommand{\encmin}[1]{\textsf{enc}_{\text{min}} ( #1 )}
\newcommand{\decmin}[1]{\textsf{dec}_{\text{min}} ( #1 )}
\newcommand{\encseqsplit}[1]{\textsf{enc}_{\text{BPE.split}} ( #1 )}
\newcommand{\decseqsplit}[1]{\textsf{dec}_{\text{BPE.split}} ( #1 )}
\newcommand{\Dict}{\textsf{Dict}}
\newcommand{\DS}{\textsf{DS}}
\newcommand{\Puni}{Q_{\text{uni}}}
\newcommand{\PMLEuni}{Q_{\text{MLE}}}
\newcommand{\sumnt}{\sum_{\bm{t}} n_{\bm{t}}}
\newcommand{\Sgre}{S_{\text{gre}}}
\newcommand{\sumntprime}{\sum_{\bm{t}'} n_{\bm{t}'}}
\newcommand{\Esumntprime}{\mathbb{E} \left[ \sum\nolimits_{\bm{t}'} n_{\bm{t}'} \right]}
\newcommand{\Qhat}{\widehat{Q}}
\title{Toward a Theory of Tokenization in LLMs}
\author{Nived Rajaraman, Jiantao Jiao, Kannan Ramchandran \thanks{Nived Rajaraman is with the Department of Electrical Engineering and Computer Sciences, University of California, Berkeley. Jiantao Jiao is with the Department of Electrical Engineering and Computer Sciences and the Department of Statistics, University of California, Berkeley. Kannan Ramchandran is with the Department of Electrical Engineering and Computer Sciences, University of California, Berkeley. Email: \{nived.rajaraman, jiantao, kannanr\}@eecs.berkeley.edu}
\thanks{This work was accepted to NeurIPS 2025 with a different title, ``An Analysis of Tokenization: Transformers under Markov data'' (cf. \cite{rajaraman2024analysis})}}
\date{\today}
\begin{document}

\maketitle

\begin{abstract}
While there has been a large body of research attempting to circumvent tokenization for language modeling \citep{clark2022canine,xue2022byt5}, the current consensus is that it is a necessary initial step for designing state-of-the-art performant language models. In this paper, we investigate tokenization from a theoretical point of view by studying the behavior of transformers on simple data generating processes. When trained on data drawn from certain simple $k^{\text{th}}$-order Markov processes for $k > 1$, transformers exhibit a surprising phenomenon - in the absence of tokenization, they empirically are incredibly slow or fail to learn the right distribution and predict characters according to a unigram model \citep{makkuva2024attention}. With the addition of tokenization, however, we empirically observe that transformers break through this barrier and are able to model the probabilities of sequences drawn from the source near-optimally, achieving small cross-entropy loss. With this observation as starting point, we study the end-to-end cross-entropy loss achieved by transformers with and without tokenization. With the appropriate tokenization, we show that even the simplest unigram models (over tokens) learnt by transformers are able to model the probability of sequences drawn from $k^{\text{th}}$-order Markov sources near optimally. Our analysis provides a justification for the use of tokenization in practice through studying the behavior of transformers on Markovian data.
\end{abstract}

\doparttoc
\faketableofcontents

\part{}

\section{Introduction}

The training of language models is typically not an end-to-end process. Language models are often composed of a ``tokenizer'', which encodes a sequence of characters into a sequence of token ids, which map to substrings. The subsequent language modeling task is carried out by a neural network or transformer, which is pre-trained and fine-tuned on large datasets. The ideal goal is to jointly train the tokenizer and transformer with end-to-end accuracy as the objective. This is a challenging problem to solve efficiently, and thus, the tokenizer is generally adapted on a portion of the training dataset and frozen before the transformer is trained.

The simplest tokenizer encodes at the character or at the byte level (after encoding into UTF-8), such as is done in ByT5 \citep{xue2022byt5} and \textsc{Canine} \citep{clark2022canine}. The maximum sequence length that such models can process is also smaller for the same amount of compute. In practice, byte-level/character-level models perform worse for the reason that semantic relationships can be harder to capture at the character level \citep{libovicky2021don,itzhak2021models} and for this reason, tokenizing at the subword level is used more commonly.

Though used most commonly, tokenization at the subword level often has sharp edges. Test sequences may contain rare tokens which were never seen in the training dataset. The presence of such tokens may induce undesirable behavior in the outputs of models \citep{SolidGoldMagikarp,kharitonov2021bpe,yu2021rare} and present an attack surface for bad actors. Moreover, tokenized models struggle on tasks that involve manipulation at the character level, such as spelling out words or reversing sentences. For similar reasons, LLMs with standard tokenizers also struggle to carry out basic arithmetic \citep{golkar2023xval}. Despite this brittleness, tokenization is used in nearly all state-of-the-art LLM architectures. 

Since tokenizers are usually trained in isolation, they do not directly optimize for extrinsic loss metrics such as the end-to-end perplexity or precision. A number of domain and task specific \textit{intrinsic} objectives and evaluation metrics have been proposed for tokenization. The most commonly used intrinsic metric to compare tokenizers with the same dictionary size is the average compressed sequence length. \citet{zouhar-etal-2023-tokenization} show that the Renyi efficiency of the tokenizer correlates well with the end-to-end BLEU for machine translation. \citet{petrov2023language} propose parity, which associates a higher score if the tokenizer parses sentences across different languages having the same meaning, into a similar number of tokens. Similarly, \citet{galle2019investigating} investigate various tokenizers for machine translation from the view of compression capacity and conclude that for two token vocabularies of the same size, the one that typically requires fewer tokens to cover a sentence typically achieves a better translation, a commonly used metric known as \textit{fertility} \citep{rust2021good,workshop2022bloom,ali2023tokenizer}. The validity of these proxy objectives in general, is established by experimental verification.

In this paper, we introduce a statistical formulation for tokenization for next-word-prediction. Taking a step back, rather than focusing on proxy evaluation metrics, which lead to an ever-changing goalpost, we focus on understanding the behavior of the end-to-end cross-entropy loss, $\mathcal{L} (\cdot)$. We consider a simplification of real world data generating processes and study the case where data sources are $k^{\text{th}}$-order Markov processes. Within this framework we can compare tokenizers against each other, and in the process capture several interesting phenomena. Our main results are as follows,

\begin{enumerate}
    \item There are very simple $k^{\text{th}}$-order Markov processes such that in the absence of any tokenization, transformers trained on data drawn this source are empirically observed to predict characters according to a unigram model. This phenomenon is observed under a wide variety of hyperparameter choices. This is problematic because unigram models such as that induced by the stationary distribution are poor at modeling Markovian data and suffer from a high cross-entropy loss. This phenomenon was also recently observed in \cite{makkuva2024attention}.
    \item When trained with tokenization, transformers are empirically observed to break through this barrier and are able to capture the probability of sequences under the Markov distribution near-optimally. In other words, in the presence of tokenization, transformers appear to achieve near-optimal cross-entropy loss. This phenomenon is observed with a multitude of tokenizers used commonly in practice. Even though the end-to-end model is near optimal, we observe that the behavior of the transformer is surprisingly the same in a qualitative sense - the model largely still predicts tokens according to a unigram distribution.
    \item We analyze a toy tokenizer which adds all length-$k$ sequences into the dictionary and show that as dictionary size grows, unigram models trained on the tokens get better at modeling the probabilities of sequences drawn from Markov sources. We then theoretically prove that tokenizers used in practice, such as the LZW tokenizer \citep{zouhar-etal-2023-tokenization} and a variant of the BPE tokenizer \citep{gage1994new, sennrich-etal-2016-neural} which are learnt from data also satisfy this property but require much smaller dictionaries to achieve any target cross-entropy loss.
\end{enumerate}

In our framework, the most challenging hurdle and the biggest departure from previous work such as \citep{zouhar2023formal} is the element of generalization - understanding how a tokenizer performs on new sequences that it was not trained on. This generalization turns out to be a delicate phenomenon - we show in \Cref{app:lb-1,app:lb-2} that there exist tokenizers which generalize poorly in the sense that they may compress the dataset they are trained on into a short sequence of tokens, but completely fail to generalize to new sequences. We also show that there exist dictionaries which generalize well (in the sense of having low cross-entropy loss) to new sequences under one encoding algorithm, but fail to generalize under another.

\subsection{Related Work}

Tokenization has a long history of empirical study in natural language processing. In the literature, a number of tokenizers have been developed for various domains such as math \citep{singh2024tokenization}, code \citep{zheng2023outline,Parr13} and morphology-aware tokenizers for different languages like Japanese \citep{tolmachev-etal-2018-juman,unidic} and Arabic \citep{alyafeai2023evaluating} among many others. In modern LLMs, the most commonly used tokenizers are variants of BPE \citep{gage1994new}, Wordpiece \citep{schuster2012japanese} and the Unigram tokenizer \citep{kudo2018subword} which learn a dictionary from data, rather than hard-coding language dependent rules. There has been a long line of work interpreting tokenization from various lenses \citep{grefenstette1994word,palmer2000tokenisation}. Notably, \citet{zouhar2023formal} take the view that the popular BPE tokenizer \citep{wolff1975algorithm,gage1994new} can be viewed as a greedy algorithm for the problem of finding the sequence of ``token merges'' which minimizes the length of the resulting string and establish approximation guarantees for the algorithm. However, the aspect of generalization is missing here - when trained on one and evaluated on another dataset, these guarantees may no longer hold. 

The theoretical study of transformers has also received much attention recently and we discuss the closest relatives to our work below. \cite{edelman2024evolution} study the learning trajectory of transformers trained on data drawn from $1^{\text{st}}$-order Markov chains. While the authors empirically observe that the models eventually learn to predict tokens correctly according to the Markov kernel, simplicity bias slows down the optimization - the models initially predict tokens according to a unigram model (in context unigrams), which delays learning the optimal solution. This phenomenon was also observed in \cite{makkuva2024attention} who prove that when transformers are trained on data generated from a simple $1^{\text{st}}$-order switching Markov processes, the optimization landscape contains a bad local minimum at the unigram model corresponding to the stationary distribution of the process. For $k^{th}$-order switching Markov processes, they observe that forcing the context window of the transformer to be small is the only hyperparameter which enables the model to learn the underlying Markov kernel. While a positive result, such a fix is unlikely to be used in practice.
On the positive side, \cite{nichani2024transformers} study an in-context causal learning task that generalizes learning in-context bigrams for $1^{\text{st}}$-order Markov processes. The authors authors analyze the trajectory of gradient descent and show that in the special case of data sampled from a Markov chain, transformers learn an induction head which learns to predict according to in-context bigram counts.

\paragraph{Notation.} All logarithms are base $e$, unless specified otherwise. The Shannon entropy $H (X)$ of a categorical random variable $X$ is $-\sum_{x \in \text{supp} (X)} p(x) \log p(x)$. $H_{\textsc{Ber}} (p)$ captures the entropy of a Bernoulli random variable with parameter $p$. The notation $O_{p,q,r} (f(n))$ (likewise $\Omega_{\{\cdot\}}$ and $\Theta_{\{\cdot\}}$) indicate that the underlying constant depends polynomially on the parameters $p,q$ and $r$. The notation $\widetilde{O} (f(n))$ (likewise, $\widetilde{\Theta}$ and $\widetilde{\Omega}$) ignores $\mathrm{polylog} (n)$ terms. For a set $S$, $S^\star = \cup_{k=1}^\infty S^k$, the set of all sequences with elements drawn from $S$. For a sequence $\bm{t}$, $\bm{t}_{i:j} = (\bm{t}_i, \bm{t}_{i+1},\cdots,\bm{t}_j)$ returns a slice of the sequence.

\section{Formulation}

We consider a setting where the learner's objective is to learn a language model which models probabilities of sequences over an input alphabet $\mathcal{A}$. The data to be modeled is generated according to an unknown probability model $P : \mathcal{A}^\star \to [0,1]$ over strings. A tokenizer is a tuple $\mathcal{T} = (\Dict,\DS,\enc{\cdot}, \dec{\cdot})$. Here $\Dict$ is a collection of tokens and $\DS$ is a data-structure which stores additional information about tokens; for instance, in BPE, every token is constructed from a pair of shorter tokens, and this additional information is stored in $\DS$. If not explicitly instantiated, $\DS = \emptyset$. The encoding function $\enc{\cdot} : \mathcal{A}^\star \to \Dict^\star$, which is implicitly a function of $\DS$, maps strings of characters to a sequence of tokens, and likewise, the decoding function $\dec{\cdot} : \Dict^\star \to \mathcal{A}^\star$ maps a sequence of tokens to a string of characters. We assume that the tokenizer is ``consistent'', namely, $\dec{\enc{\cdot}}$ is the identity function.

We consider a setting where the learner has access to a training dataset which is a sequence of length $n$ sampled from a data source\footnote{This can be thought of as the concatenation of all the individual sequences in the training dataset.}. We study the likelihood maximization problem, where the objective of the learner is to learn an end to end model such that the cross-entropy loss is minimized. In the presence of tokenization, we have a model of the form $Q_{\text{end}} = Q \circ \enc{\cdot}$ where $Q$ is a joint distribution across sequences of tokens when the tokenizer corresponding to $\enc{\cdot}$ is used. The cross-entropy loss, i.e. the log-perplexity, can be written down as,
\begin{align} \label{eq:objective}
    \mathcal{L}_m ( Q_{\text{end}} ) \triangleq - \mathbb{E} [\log Q ( \enc{\bm{s}} )],
\end{align}
with the objective to minimize it. Here, the expectation is over $\bm{s}$, a fresh test sequence of length $m$ sampled from the data generating process. Fixing a tokenizer, let $\mathcal{Q}$ denote a family of joint distributions over tokens (i.e. likelihood models). The objective then is to jointly design a tokenizer (with encoding function $\enc{\cdot}$) and likelihood model $Q \in \mathcal{Q}$ such that the test loss $\mathcal{L}_m (Q \circ \enc{\cdot})$ is small.
In general, jointly optimizing over the tokenizer $\mathcal{T}$ and the likelihood model $Q$ is a chicken-and-egg problem. Optimizing over $\mathcal{T}$ requires knowing the corresponding optimal likelihood model over its tokens, $Q^\star \in \mathcal{Q}$. However, in order to learn such a model, we must have committed to a tokenizer upfront.

In this paper, the end-to-end task we consider is next token prediction, where perplexity minimization is a natural objective. In practice, other metrics may be more commonly employed depending on the domain, such as BLEU \citep{papineni2002bleu} or ROUGE \citep{lin2004rouge} for machine translation. The theoretical study of these metrics is left as future work.
\subsection{Data generating process}

\begin{figure}
    \centering
    \begin{tikzpicture}[->,>=stealth,shorten >=1pt,auto,node distance=2.5cm]
  \tikzstyle{every state}=[fill=gray!25,draw=black,text=black]

  \node[state] (A)              {$0$};
  \node[state] (B) [right of=A] {$1$};

  \path (A) edge [bend left]  node {$p$} (B)
        (B) edge [bend left]  node {$q$} (A);
\end{tikzpicture}
    \caption{Switching process: The above Markov process describes the distribution of $X_n$ conditioned on $X_{n-1}$. \textit{$k^{th}$-order extension} of this Markov process: the conditional probability of $X_n$ only depends on $X_{n-k}$ and is given by the above process, with $\Pr (X_n=1|X_{n-k}=0) = p$ and $\Pr (X_n=1|X_{n-k}=1) = 1-q$.}
    \label{fig:switch-mc}
\end{figure}

In this paper, we consider a simplification of real-world data generating processes by considering the case where the data generating distribution is a $k^{\text{th}}$-order Markov process over characters. Studying the behavior of transformers trained on Markov data was the subject of the works of \citet{makkuva2024attention} and \citet{edelman2024evolution}, where a number of interesting phenomena were unearthed. Surprisingly, when a transformer is trained on data from certain simple $k^{\text{th}}$-order Markov processes like the one considered in \Cref{fig:switch-mc}, a very peculiar phenomenon occurs -  the transformer seems to learn a unigram model over characters, and in particular one induced by the stationary distribution over characters, despite the data following a Markov process. For the switching Markov chain in \Cref{fig:switch-mc}, \cite{edelman2024evolution} flesh out the behavior in more detail for the case $k=1$ - the transformer has a critical point corresponding to a unigram model which estimates the probability of $0$ and $1$ in an in-context manner from the test sequence it sees. When trained for sufficiently long, the transformer eventually escapes this saddle point.

In the case of the switching process for $k \ge 2$, the situation is different. The transformer appears to be much slower to escape the stationary unigram model altogether regardless of the choice of a number of hyperparameters, including the number of feed-forward layers in the model, the embedding dimension, and the number of attention heads. In \Cref{fig:unigram-transformer} this is made clearer - the character-level transformer fails to improve its test loss beyond that of the best unigram model on the training data (dotted line) within the allotted budget. In general, given an arbitrary test sequence, the transformer learns to output the next character as $0$ with probability equal to the empirical fraction of $0$'s observed in the test sequence. As we increase the training budget, the character-level transformer eventually also learns to represent the Markov source, but this takes significantly longer and is not plotted here. This phenomenon is not only true of the switching processes of \Cref{fig:switch-mc}: we plot the validation loss of transformers trained on randomly generated higher-order Markov processes in \Cref{fig:order2,fig:order4} and observe a similar trend.

Formally, for a dictionary $\Dict$, the unigram family of models, $\Qgram{1}$, is defined as below: $Q \in \Qgram{1}$ associates the probability $Q (\bm{t}_1,\bm{t}_2,\cdots,\bm{t}_j) = Q_{\#} (j) \prod_{i=1}^j Q_{\text{tok}} (\bm{t}_i)$ to the sequence of tokens $(\bm{t}_1,\cdots,\bm{t}_j)$ for some measures $Q_{\#}$ and $Q_{\text{tok}}$ supported on $\mathbb{N}$ and $\Dict$ respectively. Empirically, transformers appear to learn an in-context unigram model when trained on data from the switching process in \Cref{fig:switch-mc} for $k \ge 2$. How bad can this be? It turns out that the gap between the cross-entropy of the best unigram model and that of the optimal model can be characterized precisely.

\begin{figure}
\begin{subfigure}[t]{0.49\textwidth}
    \centering
    \includegraphics[width=\columnwidth]{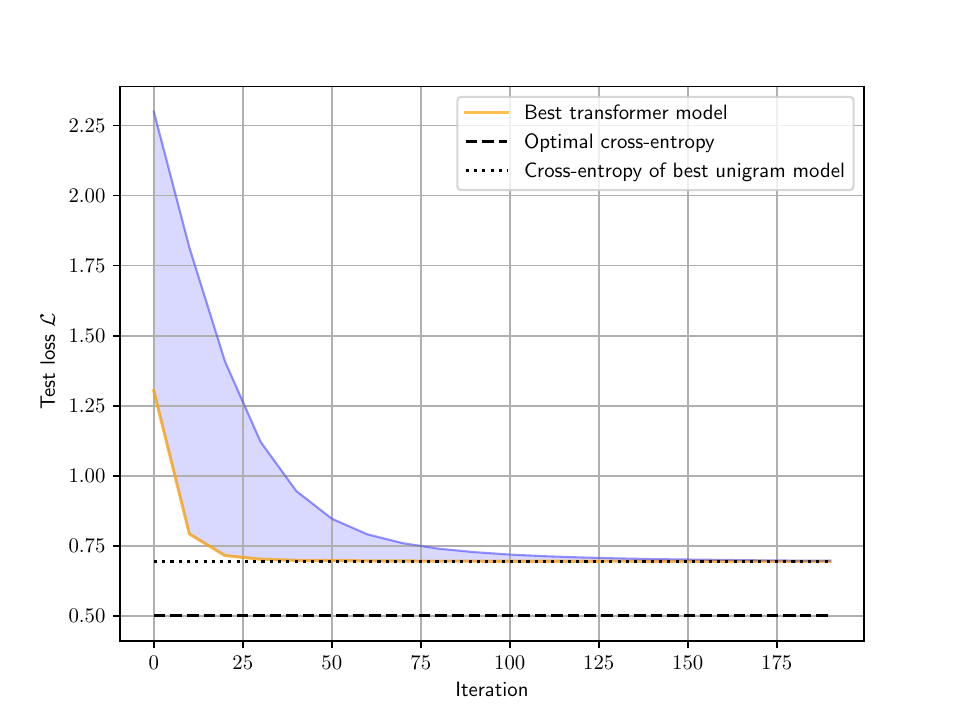}
    \caption{The loss of the transformer fails to converge to the optimal cross-entropy loss (dashed line) within the allotted training budget, and gets stuck at the loss of the best i.i.d. model (dotted line). The shaded region captures how the test loss curves vary as hyperparameters (number of layers, embedding dimension etc.) are changed.} 
    \label{fig:unigram-transformer}
\end{subfigure}\quad
\begin{subfigure}[t]{0.49\textwidth}
    \centering
    \includegraphics[width=\columnwidth]{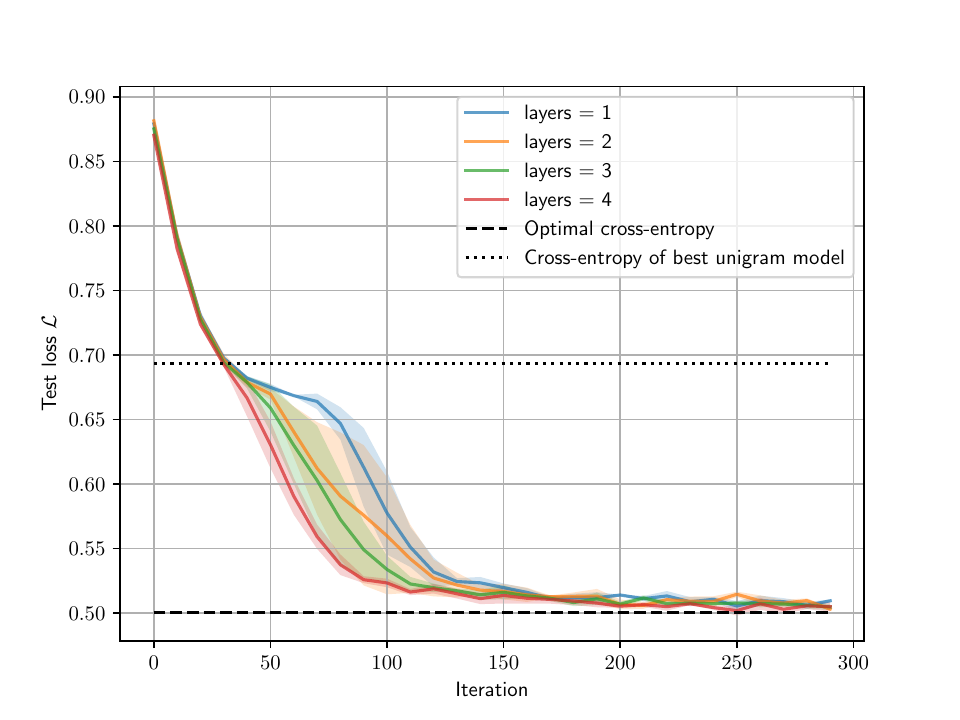}
    \caption{In the presence of tokenization, the test loss of the model approaches the optimal bound (dashed line) well within the allotted training budget. It is worth noting that the models trained here are significantly smaller than those considered in \Cref{fig:unigram-transformer}, having up to $70\times$ fewer parameters and yet are able to achieve the optimal cross-entropy loss.} 
    \label{fig:layers}
\end{subfigure}
\caption{Transformers trained on the order-$2$ switching Markov process (\Cref{fig:switch-mc}) with $p=q=0.8$. On the left we have the model trained without tokenization and on the right the model uses BPE with a dictionary of size $10$ learnt from data. The hyperparameter sweep is discussed in \Cref{app:add-exp} (\Cref{tab:hyperparam}). Training for significantly longer enables the character-level model to break past this barrier (cf. \Cref{fig:order2,fig:order4}).}
\end{figure}

\begin{figure}
\begin{subfigure}[b]{0.49\textwidth}
    \centering
    \includegraphics[width=\textwidth]{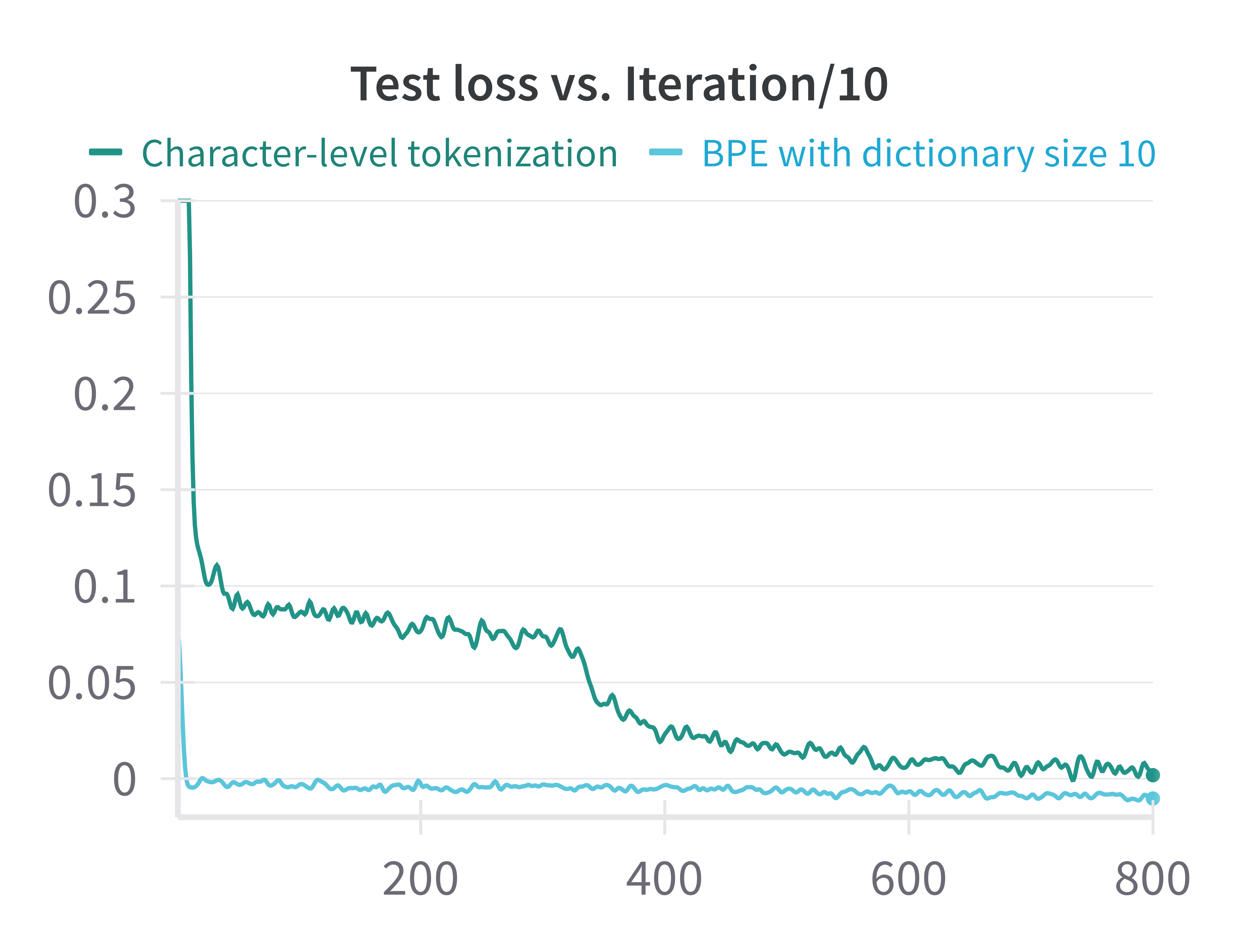}
    \caption{Order-$2$ Markov process}
    \label{fig:order2}
\end{subfigure}\quad
\begin{subfigure}[b]{0.49\textwidth}
    \centering
    \includegraphics[width=\textwidth]{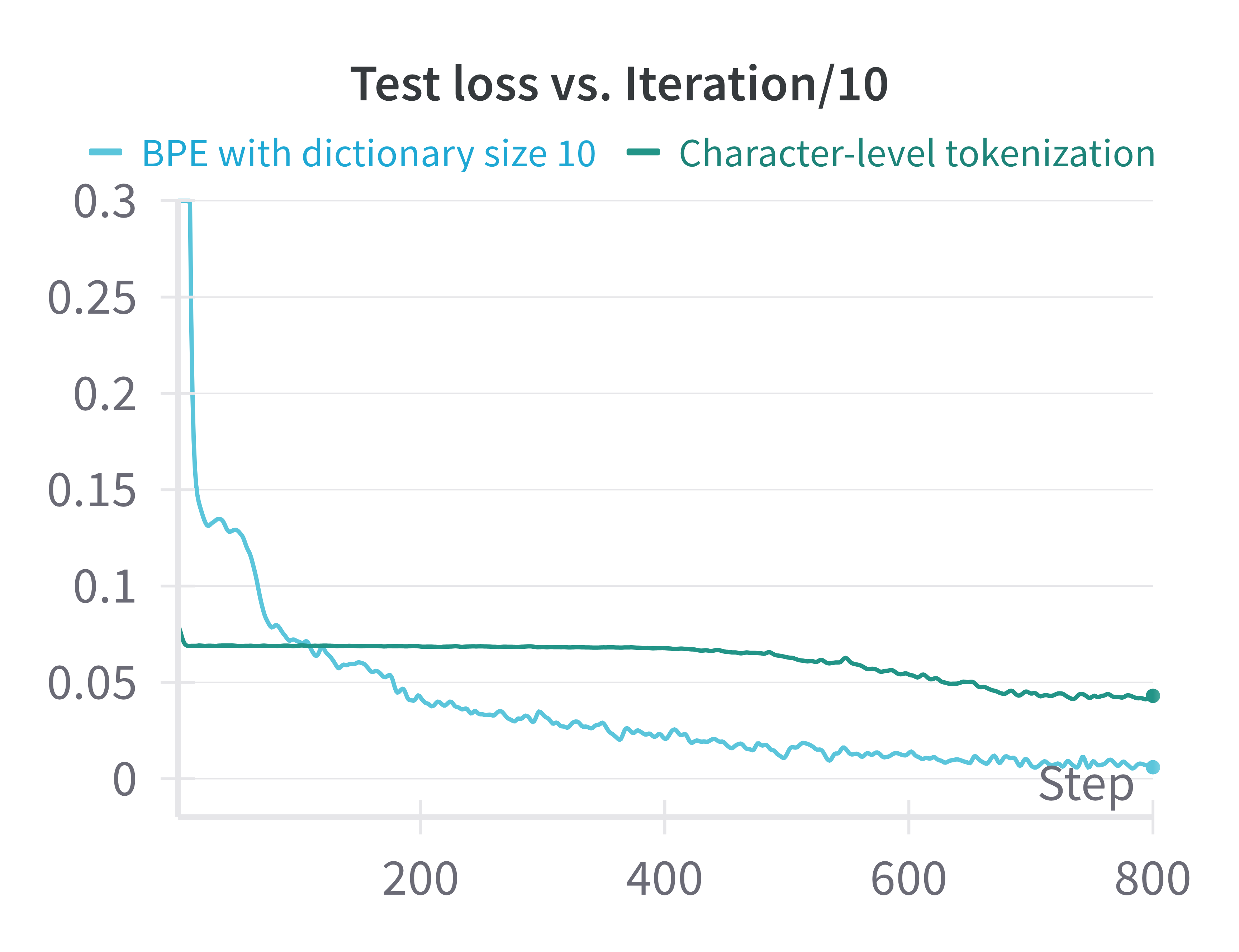}
    \caption{Order-$4$ Markov process}
    \label{fig:order4}
\end{subfigure}
\caption{One hyperparameter-tuned training run of a transformer trained on data drawn from a randomly sampled order-$k$ Markov process $P$ where $\Pr (X_n = \cdot|X_{n-1},\cdots,X_{n-k})$ is drawn from $\text{Dir} (\mathbf{1})$ prior. These models are trained for significantly longer ($\approx 8$K iterations) compared to \Cref{fig:unigram-transformer,fig:layers}. Character-level tokenization results in extremely slow optimization, which worsens as the order grows.}
\end{figure}


\begin{theorem} \label{theorem:char-level}
Consider any ergodic data source with stationary distribution over characters $\pi$. The unconstrained optimal likelihood model achieves cross-entropy loss,
\begin{align}
    \min_{Q} \mathcal{L}_m (Q) = H(P)
\end{align}
In contrast, the cross-entropy loss under any unigram model $Q \in \Qgram{1}$ is lower bounded by,
\begin{align}
    \mathcal{L}_m (Q) \ge m H (\pi)
\end{align}
\end{theorem}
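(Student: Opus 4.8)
The plan is to derive both inequalities from the single elementary fact that, for probability measures $\mu,\nu$ on a common space, $-\mathbb{E}_{X\sim\mu}[\log\nu(X)] = H(\mu) + \mathrm{KL}(\mu\,\|\,\nu) \ge H(\mu)$ (Gibbs' inequality), applied once at the level of whole length-$m$ strings and once at the level of individual characters. For the first claim, let $P_m$ be the law of a length-$m$ string $\bm{s}$ drawn from the source; with character-level (trivial) tokenization $\enc{\bm{s}} = \bm{s}$, a likelihood model $Q$ is just a distribution on $\mathcal{A}^m$, and $\mathcal{L}_m(Q) = -\mathbb{E}_{\bm{s}\sim P_m}[\log Q(\bm{s})] = H(P_m) + \mathrm{KL}(P_m\,\|\,Q) \ge H(P_m)$, with equality at the admissible choice $Q = P_m$. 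Hence $\min_Q\mathcal{L}_m(Q) = H(P_m)$, the quantity denoted $H(P)$ in the statement.

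For the unigram bound, note that every test string has length exactly $m$ and, at the character level, its ``tokens'' are its characters, so any $Q \in \Qgram{1}$ assigns $Q(\bm{s}) = Q_{\#}(m)\prod_{i=1}^m Q_{\text{tok}}(s_i)$ to $\bm{s} = (s_1,\dots,s_m)$. Rescaling $Q_{\text{tok}}$ to a probability measure on $\mathcal{A}$ and absorbing the lost mass into $Q_{\#}$ (i.e. $\widetilde{Q}_{\text{tok}} = Q_{\text{tok}}/c$, $\widetilde{Q}_{\#}(j) = Q_{\#}(j)c^j$ with $c = \sum_t Q_{\text{tok}}(t)$) leaves $Q$ unchanged, and then $Q_{\#}(m) = \sum_{\bm{s}\in\mathcal{A}^m}Q(\bm{s}) \le 1$. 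Taking $-\log$ and expectation,
\begin{align}
    \mathcal{L}_m(Q) = \log\frac{1}{Q_{\#}(m)} + \sum_{i=1}^m \mathbb{E}\!\left[\log\frac{1}{Q_{\text{tok}}(s_i)}\right] \ge \sum_{i=1}^m \mathbb{E}\!\left[\log\frac{1}{Q_{\text{tok}}(s_i)}\right].
\end{align}
Since the source is stationary, each $s_i$ has marginal $\pi$, so every summand equals $H(\pi) + \mathrm{KL}(\pi\,\|\,Q_{\text{tok}}) \ge H(\pi)$, and summing over the $m$ coordinates gives $\mathcal{L}_m(Q) \ge m H(\pi)$.

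The one point I would treat carefully — and the only place the argument can break — is the appeal to stationarity: the bound $m H(\pi)$ uses that the one-dimensional marginal of $s_i$ is exactly $\pi$ for every $i$. If one only assumes ergodicity from an arbitrary initial distribution, the honest statement is $\mathcal{L}_m(Q) \ge m\,H(\bar{\mu}_m)$ with $\bar{\mu}_m = \tfrac1m\sum_{i=1}^m \mathrm{law}(s_i)$ (this is tight, attained by $Q_{\#}(m) = 1$ and $Q_{\text{tok}} = \bar{\mu}_m$), and $\bar{\mu}_m$ converges to but need not equal $\pi$; so I would either invoke the standing assumption that the process is stationary or restate the bound with the time-averaged marginal. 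The bookkeeping with the length factor $Q_{\#}$ is harmless here precisely because the test length is deterministic, so $\log(1/Q_{\#}(m)) \ge 0$; this is the step that would need an extra ingredient (the entropy of the length distribution) if test lengths were random.
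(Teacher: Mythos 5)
Your proof is correct and follows essentially the same route as the paper's: Gibbs' inequality at the level of whole strings (so $\mathcal{L}_m(Q) = H(P) + \mathrm{KL}(P\,\|\,Q)$, minimized at $Q=P$) for the first claim, and for the second, dropping the nonnegative $\log(1/Q_{\#}(m))$ term and applying Gibbs' inequality to each character marginal against $Q_{\text{tok}}$. Your closing caveat is well taken — the paper's step likewise needs the one-dimensional marginals to equal $\pi$ (stationarity, or an asymptotic version via ergodicity), and your extra bookkeeping normalizing $Q_{\text{tok}}$ and checking $Q_{\#}(m)\le 1$ is a welcome tightening of a step the paper leaves implicit.
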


The ratio of $H(P)$ and $m H(\pi)$ can be made arbitrarily large for the switching Markov chains in \Cref{fig:switch-mc} as the switching probabilities $p$ and $q$ approach $0$ or $1$. See \Cref{ex:1} for more details.

\begin{example} \label{ex:1}
Consider the switching Markov process in \Cref{fig:switch-mc} on $\{ 0,1 \}$ with $p = q = 1 - \delta$. For this process, $\lim_{m \to \infty} \frac{1}{m} H(P) = H_{\textsf{Ber}} ( \delta) = \delta \log(1/\delta) + (1-\delta) \log (1/(1-\delta))$, but $\pi = \{ 1/2,1/2\}$ and so $H(\pi) = H_{\textsf{Ber}} (1/2) = \log(2)$. The ratio $\lim_{m \to \infty} \frac{m H(\pi)}{H(P)}$ goes to $\infty$ as $\delta \to 0$.
\end{example}

\section{The role of tokenization}

\begin{figure*}
    \centering
    \vspace{1em}
    \includegraphics[width=\columnwidth, trim = {50 0 150 30}, clip]{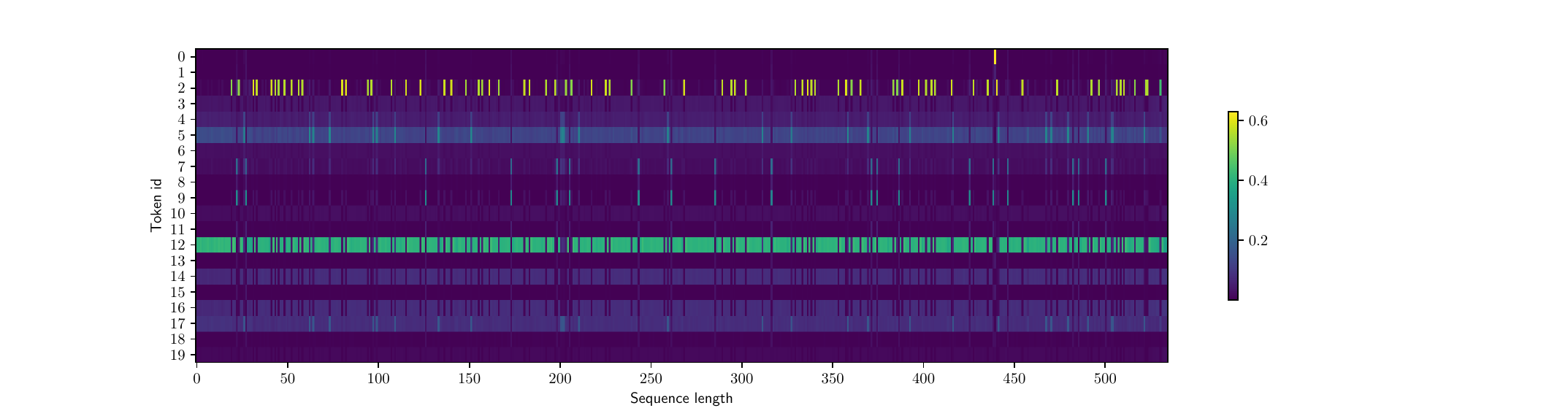}
    \caption{Token distribution returned by the transformer when fed in a sequence generated from the stochastic source, tokenized by a learnt BPE encoder with $20$ tokens. A test sequence is generated from the stochastic source and encoded into a token sequence $\bm{t}$. Each narrow vertical column represents the distribution over next tokens returned by the transformer when the first $x$ tokens of $\bm{t}$ are fed into the model, where $x$ is varied from $0$ to the length of $\bm{t}$. For most values of $x$, the model appears to predict the same distribution over the next token.}
    \label{fig:unigram}
\end{figure*}

While transformers are a powerful class of models, it is concerning that they fail to learn very simple distributions such as $k^{\text{th}}$-order Markov processes under a wide choice of hyperparameters. Why do transformers work so well in practice if they can’t learn Markovian data? It turns out that there is a simple missing ingredient in all the architectures considered so far: tokenization. The models are trained on raw character sequences drawn from the stochastic source. To understand the role of tokenization, we train the transformer on sequences generated from the stochastic source which are encoded into tokens by a learnt BPE tokenizer. The transformer operates on sequences of tokens, rather than sequences of characters. In \Cref{fig:layers} we plot the results of this experiment - in the presence of tokenization, the cross-entropy loss of the end-to-end model approaches the optimal bound. Moreover, as we increase the order of the Markov process, the disparity persists: in \Cref{fig:order2,fig:order4}, we see that the addition of tokenization enables the model to learn much more quickly

Let's peek into the model a bit more and understand its behavior. In \Cref{fig:unigram} we draw a test sequence from the stochastic source and tokenize it using a learnt BPE tokenizer with $20$ tokens to generate a sequence $\bm{t}$ with $k$ tokens. We feed in $\bm{t}_{1:x}$ for each $x \in [k]$ into the model and observe the next-token distribution the model generates. For test sequences of length $k = 600$ tokens, we plot the distribution over next-tokens in \Cref{fig:unigram} as we vary the length of the prefix. Observe that the model learns to predict the a similar next-token distribution at almost all values of $x$, essentially ignoring the context it was shown. Thus the transformer learns what is essentially a unigram model.

The behavior of the transformer on the $k^{\text{th}}$-order switching source in \Cref{fig:switch-mc} with and without tokenization is essentially the same. In both cases, the model learns a unigram model over the tokens - in the absence of tokenization this unigram model is in fact the stationary distribution induced by the source. If the transformer learns a unigram model in both cases, how come there is such a large gap in performance between the two? To understand this in more detail, we analyze a toy tokenizer in the next section.

\section{Unigram models under tokenization} \label{sec:unigram}

Let's consider a toy tokenizer which assigns all possible substrings of length $r$ as tokens in the dictionary and study what happens when a unigram model is trained on the tokenized sequences. The total dictionary size $d = 2^r$. A sequence of characters is mapped to a sequence of tokens by simply chunking it into a sequences of $r$ characters which are replaced by the corresponding token index\footnote{The last few characters which do not add up to $r$ in total are tokenized as characters. These boundary effects will not matter as the test sequences grow in length}. The resulting stochastic process on the tokens is still Markovian, but over a state space of size $2^r$. For any unigram model $Q$ on the tokens, the cross-entropy loss can be written down as,
\begin{align}
    \mathcal{L}_m ( Q \circ \enc{\cdot}) &= \mathbb{E} \left[ \sum_{\bm{t} \in \enc{\bm{s}}} \log (1 / Q_{\text{tok}} (\bm{t})) \right],
\end{align}
where we ignore the contribution of $Q_{\#} (|\enc{\bm{s}}|)$ to the cross-entropy by simply choosing $Q_{\#} = \mathrm{Unif} ([m])$ and letting $\lim_{m \to \infty} \log(m)/m = 0$. For any token $\bm{t}$, choose $Q_{\text{tok}} (\bm{t}) = \pi (\bm{t}_1) \prod_{i=1}^{r-1} P (\bm{t}_{i+1} | \bm{t}_i)$ as the stationary probability the underlying Markov process associates with $\bm{t}$. Then,
\begin{align}
    &\mathcal{L}_m ( Q \circ \enc{\cdot}) \\
    &= - \mathbb{E} \left[ \log (P(\bm{s}) + \sum_{i = 0 }^{m/k-1} \log \left( \frac{\pi (\bm{s}_{ki + 1})}{P(\bm{s}_{ki+1} | \bm{s}_{ki})} \right) \right] \\
    &\overset{(i)}{\approx} H(P) + \frac{1}{k} \left( m H(\pi) - H(P) \right) \\
    &= H(P) \left( 1 - \frac{1}{\log_2 (d)} \right) + \frac{m H(\pi)}{\log_2 (d)}. \label{eq:rchunk}
\end{align}
In $(i)$ the approximation is derived from the fact that as $m$ grows large, $\frac{1}{m} \sum_{i = 0}^{m/k} \log (P (\bm{s}_{ki + \ell+1} | \bm{s}_{ki + \ell})$ approaches $\frac{H(P)}{k}$.
With $d=2$ (i.e., $r=1$), we recover the performance of the character tokenizer in \Cref{theorem:char-level}. An immediate implication of this simple calculation is that there is a unigram model which is nearly optimal as the dictionary size grows to $\infty$.

While this toy tokenizer allows us to glean this intuition behind why tokenization allows unigram models to be near-optimal, there are some obvious issues. One, the tokenizer does not adapt to the distribution of the data. Indeed, for the switching Markov source in \Cref{fig:switch-mc}, as $p = q = \delta \to 0$, the source contains increasingly longer sequences of contiguous $0$'s and $1$'s. In this case, it makes since to have a dictionary containing such sequences, rather than all possible length-$r$ sequences, many of which would be seen very few times (if at all) in a test sequence. At a more technical level, in \cref{eq:rchunk}, to get to a cross-entropy loss of $2 H(P)$, the size of the dictionary required by the toy tokenizer is $e^{mH(\pi) / H(P)}$. As discussed in \Cref{ex:1} for the switching Markov process with $p = q = \delta$, this dictionary size can be extremely large and scales exponentially (in $1/\delta$) as $e^{1/\delta \log (1/\delta)}$ when $\delta$ is small. In general, on stochastic sources on a much larger alphabet, such as English/ASCII, this toy tokenizer would result in a prohibitively large dictionary.

A large dictionary itself is not a major problem per se, trading off the width of the model for the need for larger dimensional embeddings. However, larger dictionaries are usually correlated with the presence of rare tokens which appear infrequently at training time. This presents a problem in practice - a lot more data is often required to see enough examples of such tokens to learn good embeddings for them. More importantly, in the absence of this volume of data, rare tokens present an attack surface to elicit undesirable behavior in the model \citep{SolidGoldMagikarp}. In practice, this issue present with the toy tokenizer is, to an extent, resolved by using tokenization algorithms such as BPE or Wordpiece, which learn dictionaries from data. In the process, they are able to avoid learning extremely rare tokens, by enforcing a lower bound on the number of their occurrences in the training data to be allocated as a token. Moreover, by minimizing the number of such rare tokens, the model is able to utilize its token budget in a more efficient manner.

We now introduce the main theoretical result of this paper, showing that with the appropriate tokenization algorithm with a token budget of $d$, a unigram model is not only asymptotically able to achieve the optimal cross-entropy loss, but also requires far smaller dictionaries to match the performance of the toy tokenizer considered earlier. In order to avoid dealing with the transient characteristics of the source, we consider the cross-entropy loss in \cref{eq:objective} under the assumption that the test sequences $\bm{s}$ are of length $m \to \infty$. Namely, define the normalized loss,
\begin{align} \label{eq:objective-normalized}
    \mathcal{L} (\cdot) = \lim_{m \to \infty} \frac{1}{m} \mathcal{L}_m (\cdot)
\end{align}

\begin{theorem} \label{theorem:main}
Consider a Markov data generating process which satisfies \Cref{ass:ss}. Let $d$ denote a budget on the size of the dictionary. Then, there exists a tokenizer with at most $d$ tokens and encoding function $\enc{\cdot}$, such that,
\begin{align} \label{eq:constobj}
    &\min_{\mathcal{Q} \in \Qgram{1}} \mathcal{L} (Q \circ \enc{\cdot}) \le \frac{1}{1 - \varepsilon} \min_{Q'} \mathcal{L} (Q')
\end{align}
where $\varepsilon$ is $ \log(1/\delta)/0.99\log(d)$\footnote{$\varepsilon$ is assumed to be $<1$ in this statement. The constant $0.99$ can be made arbitrarily close to $1$.}. Furthermore, a tokenizer satisfying \cref{eq:constobj} with probability $\ge 1 - d^{-\Omega_\delta (\log(d))}$ can be learnt from a dataset of $\widetilde{O}_\delta (d)$ characters.
\end{theorem}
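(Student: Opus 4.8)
\emph{Overview and reduction.} The plan is to exhibit a dictionary built by greedily growing the most probable leaf --- the common skeleton of the LZW parser and the BPE variant referred to above --- and to bound the optimal unigram loss by relating it to the expected token length. Fix any prefix-free dictionary $\Dict$ whose elements are the leaves of a complete tree, and encode a string by walking the tree greedily, $\enc{\bm{s}} = (\bm{t}_1,\dots,\bm{t}_N)$. As in \Cref{sec:unigram} I would take $Q_{\#} = \mathrm{Unif}([m])$ and discard its $O(\log m/m)\to 0$ contribution, so that $\mathcal{L}(Q\circ\enc{\cdot}) = \lim_{m\to\infty}\frac1m\,\Expect\!\left[\sum_{i=1}^N \log(1/Q_{\mathrm{tok}}(\bm{t}_i))\right]$. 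Minimizing over $Q_{\mathrm{tok}}$ gives the stationary token frequency, and because the source mixes (\Cref{ass:ss}) the parsing of a long trajectory is ``renewal-like'': the fraction of positions carrying token $\bm{t}$ converges to a limit $q(\bm{t})$ and $N/m\to 1/\bar\ell$, where $\bar\ell=\Expect_q[\,|\bm{t}|\,]$ is the mean token length. Hence $\min_{Q\in\Qgram{1}}\mathcal{L}(Q\circ\enc{\cdot}) = H(q)/\bar\ell$, while \Cref{theorem:char-level} gives $\min_{Q'}\mathcal{L}(Q') = \bar H$, the entropy rate. So it suffices to construct $\Dict$ with $|\Dict|\le d$ and $\bar\ell\,\bar H\ge(1-\varepsilon)\,H(q)$.

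\emph{The construction and two estimates.} Take $\Dict$ to be the leaves of the (Markov) Tunstall-type tree of the source stopped once it has about $d^{c}$ leaves for a constant $c\in(0.99,1)$: repeatedly split the leaf of largest stationary probability into its one-character extensions. Two facts then hold. (a) \emph{Near-uniformity of tokens}: each leaf conditional probability lies in $[\delta\theta,\theta]$ for the terminal threshold $\theta=\Theta(d^{-c})$, where $\delta$ is the minimum one-step transition probability furnished by \Cref{ass:ss}; mixing converts conditional into stationary frequency at the cost of another factor in $[\delta,1/\delta]$, so $-\log q(\bm{t}) = c\log d \pm(\log(1/\delta)+O(1))$ for every token, giving $H(q)\le c\log d + \log(1/\delta)+O(1)$ and $H(q)\ge c\log d-\log(1/\delta)-O(1)$. (b) \emph{A Wald-type identity}: with $g(x,y)=-\log P(y\mid x)$, bounded by $\log(1/\delta)$, optional stopping for the additive functional $\sum_{i\le\tau}g(X_{i-1},X_i)$ at a token boundary $\tau$ gives $\Expect[-\log P(\bm{t}\mid\text{entering state})] = \bar\ell\,\bar H$ up to a correction controlled by the Poisson solution of $g$; combined with (a) this yields $\bar\ell\,\bar H\ge H(q)-\log(1/\delta)-O(1)$. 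Dividing, $H(q)/(\bar\ell\,\bar H)\le 1+\frac{\log(1/\delta)+O(1)}{c\log d-\log(1/\delta)-O(1)}\le\frac1{1-\varepsilon}$ with $\varepsilon = \log(1/\delta)/0.99\log d$; the slack between $c$ and $0.99$ is precisely what absorbs the additive constants, which is why the constant $0.99$ can be pushed toward $1$. For $k>1$ the $k^{\text{th}}$-order extension splits into $k$ interleaved stride-$k$ first-order chains, so the argument applies blockwise.

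\emph{Learning the dictionary from data.} The ideal tree is determined only by the ranking of substrings by stationary probability down to level $\theta=\Theta(d^{-c})$, and there are $\mathrm{poly}(d)$ such candidate substrings. On a sample of $n=\mathrm{poly}(1/\delta)\cdot d\cdot\mathrm{polylog}(d)=\widetilde O_\delta(d)$ characters, a Bernstein bound for mixing Markov chains shows the empirical frequency of each candidate is within a $(1\pm o(1))$ factor of the truth with probability $1-\exp(-\Omega(\log^2 d)) = 1-d^{-\Omega_\delta(\log d)}$, and a union bound over the candidates preserves this. Running the greedy rule (the LZW / BPE-variant construction) on the empirical frequencies then reproduces the ideal splits except at ``near-ties'' --- candidate pairs whose probabilities differ by $o(1)$ --- and swapping a near-tie perturbs $\bar\ell$ and $H(q)$ by a lower-order amount, so the learned dictionary still satisfies \cref{eq:constobj}.

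\emph{Main obstacle.} I expect the crux to be estimate (b) together with the renewal claim: bounding the Poisson/boundary correction in the Wald-type identity when the stopping time $\tau$ (the token length) is itself heavy-tailed --- runs of length up to $\sim\delta^{-1}\log d$ under \Cref{ass:ss} --- and simultaneously certifying that the parse of the Markov chain regenerates sharply enough that $q(\bm{t})$ stays within a $\log(1/\delta)$ slack (in log scale) of the clean conditional probability $P(\bm{t}\mid\text{entering state})$. These two couplings are exactly what force a $\log(1/\delta)$ rather than an $O(1)$ slack, and preventing the mixing-time dependence from inflating it is the delicate point; the robustness-of-greedy step for the learned dictionary is a secondary but nontrivial obstacle.
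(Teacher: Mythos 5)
Your proposal is correct in its essentials and reaches the stated bound, but it takes a genuinely different route from the paper. The shared core is the step you call the Wald-type identity: the paper's \Cref{theorem:generalization} proves exactly this via the Shannon--McMillan--Breiman theorem, decomposing $-\log P(\bm{s})$ along token boundaries and paying $\log(1/\delta)$ per token to replace $P(\bm{t}\mid\text{entering state})$ by $P(\bm{t})=\mathbb{E}_{a\sim\pi}[P(\bm{t}|a)]$; phrased that way there is no boundary/Poisson correction at all, so the obstacle you flag as the crux largely dissolves (the paper handles the existence of the limiting token frequencies separately in \Cref{lemma:limit-exists}). Where you genuinely diverge is the tokenizer and the learning argument. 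The paper does not build a Tunstall tree from estimated probabilities: it runs LZW in a single left-to-right pass over the raw sample and proves combinatorially (\Cref{lemma:LZW-lb}) that every substring with $\max_a P(\bm{t}|a)\ge \delta/d^{0.99}$ is visited often enough to be tokenized, so the greedy encoder can only emit ``maximal'' tokens lying just outside this set, forcing $H(\PMLEuni,P)\ge 0.99\log d$ (the heavy-hitting property). Your complete prefix-free tree buys a cleaner existence proof --- the parse is unambiguous and the leaf probabilities sit in a $[\delta\theta,\theta]$ band by the greedy-splitting order --- whereas LZW's prefix-closed dictionary requires the maximal-token analysis; but your learning step is a plug-in estimator over $\mathrm{poly}_\delta(d)$ candidate substrings, and the ``reproduces the ideal splits except at near-ties'' framing is the weakest link. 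You do not need to reproduce the ideal tree: it is cleaner (and sufficient) to show that, when empirical frequencies are within a constant factor of the truth for all substrings of length $O(\log d/\delta)$, the leaves of the empirically-built tree have \emph{true} probability in a constant-factor widening of $[\delta\theta,\theta]$, which only perturbs the additive $O(\log(1/\delta))$ terms already absorbed by the gap between your $c$ and $0.99$. With that repair both routes deliver the theorem; the paper's has the advantage that the learned object is a standard tokenizer (LZW) analyzed as-is, while yours makes the ``every token has probability $\approx d^{-0.99}$'' mechanism more transparent.
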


The tokenizers considered in this theorem are far more efficient with their token budget than the toy tokenizer - to achieve a cross entropy loss within a factor $2$ of optimal, the dictionary size required by these tokenizer is $d \approx 1/\delta^{2}$ on any source satisfying \Cref{ass:ss}. In comparison, the toy tokenizer requires a dictionary size of $e^{1/\delta \log (1/\delta)}$ to achieve the same error. We show that the LZW tokenizer proposed in \citep{zouhar-etal-2023-tokenization} achieves the upper bound in \cref{eq:constobj} when trained on a dataset of size $\widetilde{O} (d)$. Likewise, we also show that a sequential variant of BPE achieves the upper bound in \cref{eq:constobj} up to a factor of $2$ and with a worse dependency in the $o(1)$ term when trained on a dataset of size $\widetilde{O} (d^2)$. What is interesting is that neither of these algorithms explicitly learn a unigram likelihood model, $Q$, while constructing the dictionary. Yet they are able to perform as well as the tokenizers which are jointly optimized with a likelihood model, such as the Unigram tokenization algorithm \citep{kudo2018subword}.

\paragraph{Key insight.} While the toy tokenizer provides a high level intuition as to why tokenization might enable unigram models to model Markov sources well, here we present a different explanation which captures tokenization from an operational viewpoint. Tokenizers which do a good job at learning patterns in the data and assigning these frequent patterns as tokens in the dictionary are compatible with an i.i.d. model over tokens. A hypothetical example motivating this point: consider a tokenizer such that the distribution of tokens in the encoding of a fresh string sampled from the source is distributed i.i.d., except that whenever the token $\bm{t}'$ appears, it is always followed by $\bm{t}''$. An i.i.d. model on the tokens is a poor approximation since $P (\bm{t}' \bm{t}'') \gg P (\bm{t}') P (\bm{t}'')$. However, by merging $\bm{t}'$ and $\bm{t}''$ into a new token $\bm{t}$ and adding this to the dictionary, the new distribution over tokens is i.i.d. In general, this motivates why it is desirable for a tokenizer to allocate new tokens to substrings which appear next to each other frequently, i.e. a pattern in the data. As more tokens are added to the dictionary, one might expect the cross-entropy loss incurred by the best unigram model to improve.\\[-0.5em]

Loosely, \Cref{theorem:char-level} gives an example where the converse is true as well - the character-level tokenizer does not learn patterns in the data generating process and the token budget is left unused. The cross-entropy loss incurred by this tokenizer under any unigram model is suboptimal. In the next section, we flesh out formally what it means for a tokenizer to ``learn patterns'' in the source process well. 

\subsection{Learning patterns in the source}

The main result of this section is a generic reduction: dictionaries which typically encode new strings into a few long tokens (defined in a formal sense in \Cref{theorem:generalization}), result in tokenizers achieving near-optimal cross-entropy loss. We prove this result for Markovian sources under a regularity assumption, which is that the associated connectivity graph of the chain is complete. The analogous assumption for $k^{\text{th}}$-order sources is that the transition kernel is entry-wise bounded away from $0$. This assumption is satisfied by all the sources considered in the paper thus far, such as the $k^{\text{th}}$-order switching processes in \Cref{fig:switch-mc}.

\begin{assumption}[Data generating process] \label{ass:ss} Assume that the data source is an ergodic Markov process with transition $P(\cdot|\cdot)$ and stationary distribution $\pi$. Assume that $\min_{a,a' \in \mathcal{A}} P(a'|a) \triangleq \delta > 0$.
\end{assumption}

For a substring $\bm{s}$ and a character $a$, define $P(\bm{s}|a) = P(\bm{s}_1|a)\prod_{i=2}^{|\bm{s}|} P (\bm{s}_i | \bm{s}_{i-1})$ denote the conditional probability of the substring $\bm{s}$. We now state the main result of this section.

\begin{theorem}[Bound on cross-entropy loss of dictionaries under greedy encoder] \label{theorem:generalization}
Consider a source satisfying \Cref{ass:ss} and any tokenizer $\mathcal{T}$ equipped with the greedy encoder, $\encgre{\cdot}$ with finitely long tokens. Define, $P (\bm{t}) = \mathbb{E}_{a \sim \pi}[ P(\bm{t}|a)]$ and
suppose $H(\PMLEuni,P) \ge \frac{1}{\varepsilon}\log(1/\delta)$ for some $\varepsilon < 1$. Then,
\begin{align} \label{eq:TlogT}
    \min_{Q \in \Qgram{1}} \mathcal{L} (Q \circ \encgre{\cdot}) \le \frac{\min_{Q} \mathcal{L} (Q)}{1-\varepsilon }.
\end{align}
\end{theorem}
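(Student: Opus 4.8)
The plan is to rewrite both sides of the claimed inequality in closed form, reduce to a single comparison between entropy‑rate‑type quantities, and then close the gap with the telescoping structure of the greedy parse together with the hypothesis.

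First I would put both sides in explicit form. Choosing $Q_{\#}=\mathrm{Unif}([m'])$ for $m'$ polynomial in $m$ makes the $Q_{\#}$ contribution to the normalized loss vanish, so a unigram model $Q\in\Qgram{1}$ with token law $Q_{\mathrm{tok}}$ achieves $\mathcal{L}(Q\circ\encgre{\cdot})=\ell\cdot\mathbb{E}_{\bm t\sim\PMLEuni}[\log(1/Q_{\mathrm{tok}}(\bm t))]$, where $\PMLEuni$ is the stationary (equivalently, limiting empirical/MLE) law of the tokens emitted by $\encgre{}$ on a fresh sample and $\ell$ is the asymptotic number of emitted tokens per character, so $\mathbb{E}_{\PMLEuni}[|\bm t|]=1/\ell$. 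Minimizing over $Q_{\mathrm{tok}}$ gives $\min_{Q\in\Qgram{1}}\mathcal{L}(Q\circ\encgre{\cdot})=\ell\,H(\PMLEuni)$, while $\min_Q\mathcal{L}(Q)=H(P)$ is the entropy rate of the source by \Cref{theorem:char-level}. Thus the claim is equivalent to $\ell\,H(\PMLEuni)\le H(P)/(1-\varepsilon)$. I would also record the structural fact that, on a stationary sample $\bm s=\bm t^{(1)}\bm t^{(2)}\cdots$, the emitted token sequence is (essentially) a stationary renewal/Markov process on $\Dict$ whose dependence on the past is carried only by the last character of the previous token together with the finite look‑ahead $\encgre{}$ uses to detect the longest match.

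Next I would use the deterministic identity, valid for any string and any parse, $\sum_{j}\log\tfrac{1}{P(\bm t^{(j)})}=\log\tfrac1{P(\bm s)}+\sum_{j\ge 2}\log\tfrac{P(\bm s_{p_j}\mid\bm s_{p_j-1})}{\pi(\bm s_{p_j})}$, where $p_j$ is the start index of the $j$th token; it follows by expanding $P(\bm t)=\pi(\bm t_1)\prod_{i\ge2}P(\bm t_i\mid\bm t_{i-1})$ and noting that the internal transitions of the tokens together with the boundary transitions exhaust all transitions of $\bm s$. Under \Cref{ass:ss} each boundary term lies in $[-\log(1/\delta),\log(1/\delta)]$; taking expectations, dividing by $m$, and letting $m\to\infty$ gives $|\ell\,H(\PMLEuni,P)-H(P)|\le\ell\log(1/\delta)$. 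The hypothesis $H(\PMLEuni,P)\ge\tfrac1\varepsilon\log(1/\delta)$ then forces $\log(1/\delta)\le\varepsilon\,H(\PMLEuni,P)$, so from $\ell\,(H(\PMLEuni,P)-\log(1/\delta))\le H(P)$ we get $\ell\le H(P)/\big((1-\varepsilon)H(\PMLEuni,P)\big)$, hence $\ell\,H(\PMLEuni,P)\le H(P)/(1-\varepsilon)$.

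It remains to pass from $H(\PMLEuni,P)$ to $H(\PMLEuni)$. By Gibbs' inequality $H(\PMLEuni)\le H(\PMLEuni,Q_0)$ for any law $Q_0$ on $\Dict$; I would take $Q_0=\Pgre$, the law of the first token $\encgre{}$ emits on a stationary sample, which is a genuine distribution on $\Dict$ and factors as $\Pgre(\bm t)=P(\bm t)\,s(\bm t)$ with $s(\bm t)\in(0,1]$ the probability the parse stops after $\bm t$ (this detour is necessary because $\sum_{\bm t\in\Dict}P(\bm t)$ can exceed $1$ when $\Dict$ is not prefix‑free, so one cannot simply renormalize $P$). This yields $H(\PMLEuni)\le H(\PMLEuni,P)+\mathbb{E}_{\bm t\sim\PMLEuni}[\log(1/s(\bm t))]$, and combining with the previous paragraph, $\min_{Q\in\Qgram{1}}\mathcal{L}(Q\circ\encgre{\cdot})\le H(P)/(1-\varepsilon)+\ell\,\mathbb{E}_{\PMLEuni}[\log(1/s(\bm t))]$. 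The crux — and the step I expect to be the main obstacle — is to show the stopping correction $\ell\,\mathbb{E}_{\PMLEuni}[\log(1/s(\bm t))]$ is negligible (it can be absorbed into a tightened version of the $\log(1/\delta)$ slack above). Here one uses (i) $\sum_{\bm t\in\Dict}P(\bm t)s(\bm t)=1$, (ii) $x\log(1/x)\le 1-x$, and, crucially, (iii) that a token with small stopping probability must have many proper extensions in $\Dict$ and is therefore much shorter than the maximal token length, so that — since the hypothesis forces $\mathbb{E}_{\PMLEuni}[|\bm t|]=1/\ell\ge H(\PMLEuni,P)/\log(1/\delta)\ge 1/\varepsilon$ to be large — such tokens carry little $\PMLEuni$‑mass. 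Quantifying (iii), i.e. bounding how the greedy encoder's look‑ahead degrades the i.i.d.‑over‑tokens approximation, is the delicate part; once it is in place the three estimates chain to $\ell\,H(\PMLEuni)\le H(P)/(1-\varepsilon)$, which is the assertion of \cref{eq:TlogT}.
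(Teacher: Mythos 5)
Your first two paragraphs reproduce the paper's core argument: decompose $P(\bm{s})$ along the greedy token boundaries, absorb each boundary transition into a $\log(1/\delta)$ slack via \Cref{ass:ss}, invoke the AEP and the almost-sure convergence of token frequencies (\Cref{lemma:limit-exists}) to obtain $\lim_m \frac{|\encgre{\bm{s}}|}{m}\bigl(H(\PMLEuni,P)-\log(1/\delta)\bigr)\le H_\infty$, and then use the hypothesis to conclude $\ell\, H(\PMLEuni,P)\le H_\infty/(1-\varepsilon)$. Up to that point you are aligned with the paper. The divergence — and the genuine gap — is in your third paragraph. You insist on evaluating the true minimizer, $\ell\,H(\PMLEuni)$, and then must pass from $H(\PMLEuni,P)$ back to the Shannon entropy $H(\PMLEuni)$; the stopping-probability correction $\ell\,\mathbb{E}_{\PMLEuni}[\log(1/s(\bm{t}))]$ that this introduces is exactly the term you leave unbounded, and your sketch of why it should be negligible (point (iii)) is not an argument but a restatement of what needs to be shown. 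As written, the proof does not close.

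The step is also unnecessary. The theorem only bounds the \emph{minimum} over $\Qgram{1}$, so it suffices to exhibit one unigram model achieving the bound, and the paper simply takes $Q_{\text{tok}}(\bm{t})=P(\bm{t})$ (the model $P_\pi$ in the appendix): its normalized loss is, by dominated convergence, exactly $\lim_m\frac{1}{m}\mathbb{E}\bigl[|\encgre{\bm{s}}|\,H(\PMLEuni,P)\bigr]$, which is precisely the quantity you have already controlled in your second paragraph. No comparison between $H(\PMLEuni)$ and $H(\PMLEuni,P)$ is ever needed. Your stated reason for avoiding this — that $\sum_{\bm{t}\in\Dict}P(\bm{t})$ may exceed $1$, so $P$ restricted to $\Dict$ need not be a probability measure — is a legitimate subtlety (one the paper itself glosses over), but the repair is to normalize or sub-normalize the test model and track the resulting $\log$-normalization constant, not to switch targets to $H(\PMLEuni)$; your own alternative $Q_0=\Pgre=P\cdot s$ reintroduces the very same $\mathbb{E}[\log(1/s)]$ correction. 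So either complete the bound on the stopping correction, or revert to the paper's route and use $P(\cdot)$ (suitably normalized) as the test unigram model.
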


\paragraph{Interpretation.}
$H(\PMLEuni,P) = \mathbb{E}_{\bm{t} \sim \PMLEuni} [\log (1/P (\bm{t}))]$ is large when the encoder places higher mass (i.e. larger values of $\PMLEuni (\cdot)$) on tokens which have low probability under $P$, i.e. which correspond to longer substrings. Intuitively, this metric is higher for tokenizers which typically use long tokens (i.e. low $P(\cdot)$) to encode new strings. This is closely related to the notion of fertility \citep{workshop2022bloom}.

\subsection{LZW tokenizer}

In this section we study the Lempel-Ziv-Welch (LZW) based tokenization scheme introduced by \cite{zouhar-etal-2023-tokenization} and establish guarantees of the form of \Cref{theorem:main} for this tokenizer.

\begin{definition}[LZW tokenizer] \label{def:LZW}
Iterating from left to right, the shortest prefix of the training dataset which does not already exist as a token is assigned as the next token in the dictionary. This substring is removed and the process is iterated on the remainder of the dataset. The tokenizer uses the greedy encoding algorithm (\Cref{def:encgre}) to encode new strings into tokens.
\end{definition}

\noindent \textit{An example of the LZW tokenizer:} For the source dataset $0100111$, the dictionary created is $\{ 0,1,00,11 \}$.

The LZW tokenizer is based on the LZW algorithm for compression \citep{lz78,lzw}. The dictionary satisfies the property that if some substring $\bm{s}'$ exists as a token in the dictionary, then all of its prefixes must also belong to the dictionary. In the next theorem, we show that the LZW tokenizer is approximately optimal.

\begin{theorem} \label{theorem:LZW}
Suppose the LZW tokenizer is trained on a dataset of length at most $d$ (thereby learning a dictionary with at most $d$ tokens). For Markov sources satisfying \Cref{ass:ss}, with probability $\ge 1-d^{-O_{\delta} (\log(d))}$, the resulting tokenizer satisfies,
\begin{align}
    \min_{Q \in \Qgram{1}} \mathcal{L} (Q \cdot \encgre{\cdot}) \le \frac{\min_Q \mathcal{L} (Q)}{1 - \varepsilon}.
\end{align}
where $\varepsilon = \frac{\log(1/\delta)}{0.99\log (d)}$\footnote{The constant $0.99$ can be made arbitrarily close to $1$.}.
\end{theorem}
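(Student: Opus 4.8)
The plan is to invoke \Cref{theorem:generalization} and verify its hypothesis for the LZW dictionary with high probability over the draw of the training data. Since that theorem already turns a lower bound on $H(\PMLEuni,P)$ into the claimed cross‑entropy guarantee, and since the target value in \Cref{theorem:LZW} is $\varepsilon=\log(1/\delta)/0.99\log(d)$, it suffices to show that with probability $\ge 1-d^{-\Omega_\delta(\log d)}$ the LZW tokenizer trained on $d$ characters satisfies $H(\PMLEuni,P)\ge 0.99\log(d)$; i.e.\ that when the greedy encoder $\encgre{\cdot}$ processes a fresh sequence from the source, the tokens it emits are ``long'' in the sense that $-\log P(\bm t)$ is at least $0.99\log d$ for (almost) every emitted token $\bm t$.

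The heart of the argument is a finite‑sample \emph{completeness} statement for the LZW dictionary $T$: with probability $\ge 1-d^{-\Omega_\delta(\log d)}$ over the training data, $T$ contains every substring $\bm w$ with $P(\bm w)\ge d^{-(1-\eta)}$ together with each one‑character extension $\bm wc$ having $P(\bm wc)\ge d^{-(1-\eta)}$, for a small constant $\eta$. I would prove this by induction on $|\bm w|$: the prefix‑closure property built into \Cref{def:LZW} gives $\bm w_{1:|\bm w|-1}\in T$, and a concentration argument shows that a string of probability $\ge d^{-(1-\eta)}$ occurs $\gtrsim d^{\eta}$ times in $d$ characters of an ergodic chain, which forces LZW first to promote $\bm w$ to a token and then to extend it along every frequently seen direction. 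The delicate point -- and the step I expect to be the main obstacle -- is that an occurrence of $\bm w$ in the training stream need not fall at a phrase boundary, so one must argue, using that the parse produces at most $d$ distinct phrases and hence has bounded average phrase length, that enough occurrences of $\bm w_{1:|\bm w|-1}$ \emph{do} align with phrase boundaries to trigger the required additions. This is the quantitative, high‑probability version of the classical fact that LZ78 parse trees are balanced at depth $\approx \log(\#\text{phrases})/\bar H$. A union bound over the $\mathrm{poly}_\delta(d)$ strings of probability $\ge d^{-(1-\eta)}$ (all of length $O_\delta(\log d)$), each failing with exponentially small probability, then comfortably yields the stated $d^{-\Omega_\delta(\log d)}$ error probability.

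Given the completeness event, the proof concludes as follows. Running $\encgre{\cdot}$ on a fresh sequence from the source -- started from an arbitrary character, which perturbs probabilities only by a $\delta$‑dependent constant since $\pi$ is bounded away from $0$ under \Cref{ass:ss} -- the encoder cannot terminate a token at a node $\bm w$ until the realized next character $c$ satisfies $P(\bm wc)<d^{-(1-\eta)}$; as $P(\bm wc)\ge\delta P(\bm w)$, this forces $P(\bm w)<d^{-(1-\eta)}/\delta$, so $-\log P(\bm t)\ge(1-\eta)\log d-\log(1/\delta)$ for every emitted token $\bm t$. Passing to the $m\to\infty$ limiting token frequency gives $H(\PMLEuni,P)=\mathbb{E}_{\bm t\sim\PMLEuni}[-\log P(\bm t)]\ge(1-\eta)\log d-\log(1/\delta)\ge 0.99\log d$ once $d$ is large relative to $1/\delta$, where the constant $0.99$ can be pushed toward $1$ by shrinking $\eta$ at the cost of a larger lower bound on $d$. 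Feeding this into \Cref{theorem:generalization} with $\varepsilon=\log(1/\delta)/0.99\log(d)$ yields $\min_{Q\in\Qgram{1}}\mathcal{L}(Q\circ\encgre{\cdot})\le \min_Q\mathcal{L}(Q)/(1-\varepsilon)$, which is the assertion of \Cref{theorem:LZW}.
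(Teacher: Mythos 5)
Your overall architecture is the same as the paper's: establish that with high probability the LZW dictionary contains \emph{every} substring of probability at least $d^{-(1-\eta)}$, deduce that the greedy encoder only ever emits tokens $\bm t$ lying just outside this set (so $P(\bm t)\le d^{-(1-\eta)}/\delta$ by \Cref{ass:ss} and prefix-closure), conclude $H(\PMLEuni,P)\ge 0.99\log d$, and finish by \Cref{theorem:generalization}. The boundary argument, the length bound $O_\delta(\log d)$ on the candidate substrings, the union bound over $\mathrm{poly}_\delta(d)$ of them, and the final invocation of \Cref{theorem:generalization} all match the paper (\Cref{def:heavyhitter}, \Cref{lemma:HH}, \Cref{lemma:underS-length-bound}).

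The genuine gap is exactly the step you flag as ``the main obstacle,'' and your proposed route does not close it. Counting the $\gtrsim d^{\eta}$ occurrences of $\bm w$ in the raw training stream does not by itself force LZW to learn $\bm w$: the algorithm only ``sees'' $\bm w$ when a phrase boundary lands at the start of an occurrence, and the locations of phrase boundaries are determined by the entire parse history, so they are not independent of where $\bm w$ occurs. An appeal to ``bounded average phrase length'' gives a $1/\ell_{\max}$ \emph{density} of boundaries but no control over which positions they hit, and the inductive step ``$d^{\eta}$ occurrences $\Rightarrow$ $\bm w$ is promoted and extended'' is therefore unproven. The paper avoids this entirely by changing what is counted: LZW's parse is generated phrase by phrase, and each new phrase is a fresh trajectory drawn from the source conditioned on the last character of the previous phrase (the joint sampling view in \Cref{lemma:LZW-lb}). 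The event ``$\bm w$ is a token'' is implied by the event ``at least $|\bm w|$ of the phrases have $\bm w$ as a prefix,'' since each such phrase extends the dictionary one character further along $\bm w$. With $d^\star=\Omega_\delta(d/\log d)$ phrases (guaranteed w.h.p.\ by the bound $\ell_{\max}=O_\delta(\log d)$ on token lengths, \Cref{lemma:lmax-bound} and \Cref{corr:1}) and each phrase having $\bm w$ as a prefix with conditional probability at least $\min_a P(\bm w|a)\ge \delta^2/d^{1-\eta}$, a binomial tail bound gives failure probability $\exp(-\widetilde\Omega_\delta(d^{\eta}))$ per candidate, and the union bound finishes. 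If you replace your occurrence-counting induction with this phrase-level count, the rest of your argument goes through as written.
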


The proof of this result considers all substrings $\bm{t}$ with $P(\bm{t}) \ge 1/d^{0.99}$. These substrings are reasonably high probability and observed many times in a dataset of $\widetilde{\Omega} (d)$ characters. We show that with high probability, the LZW tokenizer learns \textit{all} of these substrings as tokens in the dictionary. Now, when processing a new string, since the greedy algorithm only emits the longest substring which matches a token, every token allocated must fall on the ``boundary'' of this set, having $P(\bm{t}) \le O (1/d^{0.99})$. By definition, this means that $H(\PMLEuni, P) = \mathbb{E}_{\bm{t} \sim \PMLEuni} [\log(1/P(\bm{t}))] = 0.99 \log (d)$. Combining this with \Cref{theorem:generalization} completes the proof. At a high level, on the infinite tree of substrings $\mathcal{A}^\star$ we study which nodes are populated as tokens by LZW. This structure forms a Digital Search Tree (DST) and prior work analyzes the mean and variance of the profile of the DST under various source processes \citep{
jacquet2001average,drmota2011expected,hun2014typical,drmota2021node}. A detailed proof of \Cref{theorem:LZW} is provided in \Cref{app:proofLZW}.

\subsection{A sequential variant of BPE} \label{app:BPE}

In this section, we take a more practical look and try to analyze tokenizers used commonly in practice. The Byte-Pair-Encoding (BPE) algorithm \citep{gage1994new,sennrich-etal-2016-neural}, discovered in the compression literature as \textsc{RePAIR} \citep{larsson2000off,navarro2008re} was proposed as a faster alternative to LZW. It remains as one of the most commonly implemented tokenizers in natural language processing for various downstream tasks \citep{radford2019language,mann2020language,touvron2023llama}. A large proportion of open source and commercial LLMs currently use BPE as the tokenization algorithm of choice, such as GPT-2/3, Llama 1/2 and Mistral-7B to name a few.

The BPE algorithm is based on constructing the dictionary iteratively by merging pairs of tokens to result in a tokens. In each iteration, the pair of tokens which appear most frequently next to each other are merged together into a single token. Subsequently, every occurrence of the pair of tokens are replaced by the newly added token, breaking ties arbitrarily. The dictionary is thus an ordered mapping of the form $\bm{t} \gets (\bm{t}',\bm{t}'')$. To encode a new string, the BPE encoder iterates through the dictionary and for each rule $\bm{t} \gets (\bm{t}',\bm{t}'')$ replaces every consecutive occurrence of $\bm{t}'$ and $ \bm{t}''$ by the token $\bm{t}$ breaking ties arbitrarily.

To warm up our main results, it is worth understanding the behavior of the BPE tokenizer in a bit more detail. Unlike the toy tokenizer, it is a priori unclear whether unigram models trained on sequences tokenized by BPE even asymptotically (in the dictionary size) achieve the optimal cross-entropy loss. Indeed, for $\delta > 0$, consider a training sequence of length $m$ of the form,
\begin{equation} \label{eq:toydataset}
    \bm{s} = \underbrace{\left( \underbrace{01\cdots01}_{2/\delta} \underbrace{10\cdots10}_{2/\delta} \right)}_{\times \frac{m \delta}{4}}
\end{equation}
The probability that this sequence is generated by the order-$2$ switching Markov source with $p=q=\delta$ is,
\begin{align}
    \approx (1-\delta)^{\frac{m \delta}{4} \times \frac{4}{\delta} \times (1 - \delta)} (\delta)^{\frac{m \delta}{4} \times 4} = e^{- H(P)},
\end{align}
which uses the fact that $H(P) = m\delta \log (1/\delta) + m (1-\delta) \log (1/(1-\delta))$. This implies that even though the string has exponentially small probability, it is one of the typical sequences for this order-$2$ Markov source.
Let's understand what happens when the BPE tokenizer is trained on this dataset. Assuming that ties are broken arbitrarily, consider the run of the BPE algorithm detailed in \Cref{table:dict}. Here, we assume that $1/\delta-1$ is a power of $2$ and denote $r = \log_2 (1/\delta-1)$. The algorithm first merges $0$ and $1$ into a single token $\bm{t}_1$, which results in a long sequence of the form $\bm{t}_1 \cdots \cdots \bm{t}_1 1 \bm{t}_1 \cdots \cdots \bm{t}_1 0$ repeated $m \delta/4$ times. In subsequent rounds, the tokens $(\bm{t}_1,\bm{t}_1)$ is merged into $\bm{t}_2$, then $(\bm{t}_2,\bm{t}_2)$ is merged into $\bm{t}_3$ and so on, until is no longer possible. Finally, the resulting sequence is a repeating sequence of $5$ tokens where within each sequence, no pair of tokens appears more than once next to each other. Eventually these $5$ tokens are merged into a single token labelled $\bm{t}_{r+4}$, and in subsequent rounds the tokens $(\bm{t}_{r+4},\bm{t}_{r+4})$ are merged into $\bm{t}_{r+5}$, $(\bm{t}_{r+5},\bm{t}_{r+5})$ is merged into $\bm{t}_{r+6}$ and so on, until is no longer possible.

\begin{table}[h]
    \centering
    \begin{tabular}{|c|c|}
    \hline
        Initial & $ {\color{blue} 01\cdots \cdots 01} {\color{red} 10 \cdots \cdots 10}| \cdots$ \\
        \hline
        $\bm{t}_1 \gets (0,1)$ & $\bm{t}_1 \cdots \cdots \bm{t}_1 {\color{red} 1 } \bm{t}_1 \cdots \cdots \bm{t}_1 {\color{red} 0} | \cdots$ \\
        $\bm{t}_2 \gets (\bm{t}_1,\bm{t}_1)$ & $\bm{t}_2 \cdots \bm{t}_2 {\color{red} 1 } \bm{t}_2 \cdots \bm{t}_2 {\color{red} 0} | \cdots$ \\
        $\vdots$ & $\vdots$ \\
        $\bm{t}_{r} \gets (\bm{t}_{r-1},\bm{t}_{r-1})$ & $ \bm{t}_r \bm{t}_1 {\color{red} 1 }  \bm{t}_r {\color{red} 0} | \cdots$ \\
        \hline
        $\bm{t}_{r+1} \gets (\bm{t}_r,\bm{t}_1)$ & $ \bm{t}_{r+1} {\color{red} 1 } \bm{t}_{r} {\color{red} 0} | \cdots$ \\
        $\bm{t}_{r+2} \gets (\bm{t}_r,0)$ & $ \bm{t}_{r+1} {\color{red} 1 } \bm{t}_{r+2} | \cdots$ \\
        $\bm{t}_{r+3} \gets (\bm{t}_{r+1},1)$ & $ \bm{t}_{r+3} \bm{t}_{r+2} | \cdots$ \\
        $\bm{t}_{r+4} \gets (\bm{t}_{r+3}, \bm{t}_{r+2})$ & $ \bm{t}_{r+4} | \cdots$ \\
        \hline
        $\bm{t}_{r+5} \gets (\bm{t}_{r+4},\bm{t}_{r+4})$ & $ \bm{t}_{r+5} | \cdots$ \\
        $\bm{t}_{r+6} \gets (\bm{t}_{r+5},\bm{t}_{r+5})$ & $ \bm{t}_{r+6} | \cdots$.\\
        $\vdots$ & $\vdots$ \\\hline
    \end{tabular}
    \vspace{2em}
    \caption{A representation of the behavior of BPE when trained on the dataset in \cref{eq:toydataset}. We assume that $1/\delta-1$ is a power of $2$ and define $r = \log_2 (1/\delta-1)$.}
    \label{table:dict}
\end{table}

Observe that in the initial training dataset the substrings $0000$ and $1111$ never appears as a contiguous sequence. However, in a test sequence of length $m$ sampled from the $2^{\text{nd}}$-order Markov source, with high probability these substrings disjointly occur $\Theta (m)$ times each. The learnt dictionary associates each such disjoint occurrence of these substrings with at least $1$ token, for $0000$, the $3^{\text{rd}}$ $0$ must necessarily be tokenized as the token ``$0$''. Likewise, in $1111$, the $3^{\text{rd}}$ $1$ must necessarily be tokenized as the token ``$1$''. Therefore, when a new test string of length $m$ is tokenized, with high probability the tokens ``$0$'' and ``$1$'' form a constant fraction of the total collection of tokens.

Thus on freshly sampled test sequences, the BPE tokenizer appears to behave like the character-level tokenizer on a constant fraction of the input sequence. In particular, a simple calculation shows that the cross-entropy loss of any unigram model trained on this tokenizer must be far from the optimal bound of $m H_{\textsc{Ber}} (\delta)$ especially as $\delta$ becomes smaller,
\begin{align}
    &\min_{Q \in \Qgram{1}} \mathcal{L}_m (Q \circ \enc{\cdot}) \\
    &\ge \min_{Q \in \Qgram{1}} \mathbb{E} \left[ n_0 \log (1 / Q_{\text{tok}} (0) + n_1 \log (1 / Q_{\text{tok}} (1) \right] \\
    &\overset{(i)}{\ge}  \Omega(m) \cdot \min_{Q \in \Qgram{1}} \left( \log (1 / P_{\text{tok}} (0) + \log (1 / Q_{\text{tok}} (1) \right) \\
    &\ge \Omega (m).
\end{align}
where $(i)$ uses the fact that $\mathbb{E} [n_0] ,\mathbb{E} [n_1]  \in \Omega(m)$ and the last inequality uses $P_{\text{tok}} (0) P_{\text{tok}} (1) \le 1/4$ (AM-GM inequality) since they sum up to at most $1$. The purpose of this example is to show that there exist pathological training datasets which appear to be drawn from a stochastic source, but on which BPE fails to learn a good dictionary for the source. Thus proving a result such as \Cref{theorem:main} for BPE would require arguing that training datasets such as that in \cref{eq:toydataset} are unlikely to be seen.

The analysis of the standard variant of BPE turns out to be complicated for other reasons too. After every token is added the training dataset becomes a mix of all the previously added tokens, and arguing about the statistics of which pair of tokens appears most frequently for the next iteration becomes involved. For instance, adding $00$ as a token may reduce the frequency of occurrence of the substring $01$, but will not affect $11$. Thus, even though $01$ may a priori have been seen more frequently, it may not be chosen by BPE as the next token after $00$.

To avoid dealing with these issues, we consider a sequential/sample-splitting variant of BPE. At a high level, the algorithm breaks down a dataset of size $\Theta (d^2)$ into $d$ chunks and learns at most $1$ token from each chunk. The algorithm iterates over the chunks and finding the pair of tokens which appear most frequently next to each other in each chunk and adding it to the dictionary if it appears more than $\log (d)$ times. Every consecutive occurrence of the pair of tokens is replaced by the newly assigned token in the dataset. Thus, in each iteration $i$, at most $1$ token is added, depending on the statistics of the $i^{\text{th}}$ chunk and the tokens added so far to the dictionary. Based on the final size of the dictionary a different encoder/decoder pair is used - if the algorithm adds sufficiently many tokens to the dictionary, the greedy encoder is used, and if not, a parallel implementation of BPE's encoding algorithm is used (\Cref{def:BPE.split}). A formal description of the algorithm is in \Cref{alg:BPE-variant}.

\begin{definition}[BPE.split encoder] \label{def:BPE.split}
The BPE.split encoder parses a new string into tokens as follows. The algorithm partitions the string into contiguous chunks of length $d$. Then, BPE's encoder is applied on each chunk, which iterates through $\DS$ and replaces $\bm{t}'\bm{t}''$ by $\bm{t}$ for every rule $\bm{t} \gets (\bm{t}' ,\bm{t}'')$  in $\DS$, breaking ties arbitrarily. The individual token sequences are finally spliced together and returned.
\end{definition}

\begin{algorithm}[htb!]
\caption{Sequential implementation of BPE}\label{alg:BPE-variant}
\begin{algorithmic}
\STATE \textbf{Input:} $\epsilon \in (0,1)$; a dataset of size $n = \Theta (d^2)$, split into $d$ contiguous texts $\{ \textsf{text}_1,\cdots,\textsf{text}_d \}$ of length $\Theta (d)$ each.
\STATE \textbf{Output:} A tokenizer $\mathcal{T}$.
\COMMENT Generate Dictionary
\FOR{$i = 1,\cdots, d$}
\IF{$\exists$ a pair of tokens/characters $(\bm{t}',\bm{t}'')$ appearing $\ge \log (d)$ times consecutively in $\textsf{text}_i$}
\STATE Append the rule $\bm{t} \gets (\bm{t}',\bm{t}'')$ to $\DS$\;
\FOR{$j = i+1,\cdots,d$}
\STATE $\textsf{text}_j \gets \textsf{APPLY}_{\bm{t} \gets (\bm{t}',\bm{t}'')} (\textsf{text}_j)$;
\COMMENT Can be implemented in parallel
\ENDFOR
\ENDIF
\ENDFOR
\vspace{1em}

\COMMENT Encoder and Decoder
\IF{$|\Dict| < d_0 \triangleq \epsilon d / 2\log (4|\mathcal{A}|)$}
\STATE $\mathcal{T} \gets (\Dict,\DS,\encseqsplit{\cdot},\decseqsplit{\cdot})$
\ELSE
\STATE $\mathcal{T} \gets (\Dict,\emptyset,\encgre{\cdot},\decgre{\cdot})$
\ENDIF
\vspace{1em}

\STATE \textbf{def} $\textsf{APPLY}_{\bm{t} \gets (\bm{t}_1,\bm{t}_2)} (\textsf{text})$:

\STATE Replace every consecutive occurrence of $(\bm{t}',\bm{t}'')$ in $\textsf{text}$ by $\bm{t}$, breaking ties arbitrarily.
\end{algorithmic}
\end{algorithm}

The main result of this section is that up to a small additive error, \Cref{alg:BPE-variant} approaches a $2$-approximation to the optimal cross-entropy loss.

\begin{theorem} \label{theorem:BPE}
For any $\epsilon \in (0,1)$, run \Cref{alg:BPE-variant} on a dataset of $n = \Theta (d^2)$ characters to learn a dictionary with at most $d$ tokens. The resulting tokenizer $\mathcal{T}$ satisfies with probability $\ge 1 - e^{-\Omega (d \epsilon^2)}$,
\begin{align}
    \min_{Q \in \Qgram{1}} \mathcal{L} (Q \circ \enc{\cdot}) \le (2 + \varepsilon) \min_Q \mathcal{L} (Q) + \epsilon
\end{align}
where $\varepsilon = O \left( \frac{\log(1/\epsilon) \log^3 (1/\delta)}{\epsilon \delta^9 \log(d)} \right)$.
\end{theorem}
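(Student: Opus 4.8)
The plan is to route everything through \Cref{theorem:generalization}: it suffices to show that, with probability $\ge 1 - e^{-\Omega(d\epsilon^2)}$ over the $\Theta(d^2)$ training characters, \Cref{alg:BPE-variant} outputs a tokenizer whose encoder uses long tokens on a fresh test string, and then to translate the $\frac{1}{1-\varepsilon}$-guarantee of \Cref{theorem:generalization} into the stated $(2+\varepsilon)$-bound. I would condition on the branch taken by the algorithm. In the branch where at least $d_0$ tokens are learnt and $\encgre{\cdot}$ is used, the key claim is a \emph{completeness} property of the learnt dictionary: for a threshold $\Lambda = d^{-(1-\varepsilon)}$ (with $\varepsilon$ as in the statement), every substring $\bm s$ with $P(\bm s)\ge\Lambda$ is a token. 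Granting completeness, whenever $\encgre{\cdot}$ emits a token $\bm t$ from a test string, the next character $a$ must make $\bm t a$ a non-token, so $P(\bm t a) < \Lambda$ and hence $P(\bm t)\le \Lambda/\delta$ by \Cref{ass:ss}; thus every emitted token has $\log(1/P(\bm t)) \ge (1-\varepsilon)\log d - \log(1/\delta)$, so $H(\PMLEuni, P) \ge (1-\varepsilon)\log d - \log(1/\delta) \ge (1/\varepsilon)\log(1/\delta)$ for the claimed $\varepsilon$, and \Cref{theorem:generalization} closes this branch --- in fact with the stronger $(1+\varepsilon)$-factor, so the $2$ in the statement is needed only for the other branch.

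Establishing completeness is the crux, and the source of all the $\mathrm{poly}(1/\delta)$ and logarithmic losses (hence the unpleasant $\varepsilon = O(\log(1/\epsilon)\log^3(1/\delta)/(\epsilon\delta^9\log d))$). I would argue it by strong induction on $|\bm s|$ over substrings with $P(\bm s)\ge\Lambda$ --- their lengths are $O(\log(d)/\delta)$ and there are only $\mathrm{poly}(d)$ of them. For the inductive step, split $\bm s = \bm s'\bm s''$ into two already-learnt tokens (both halves again have probability $\gtrsim\delta\Lambda$, so fall under the induction after adjusting $\Lambda$): on a fresh chunk of length $\Theta(d)$, the consecutive token-pair $(\bm s',\bm s'')$ occurs $\gtrsim P(\bm s)\cdot\Theta(d)\ge\log d$ times once the earlier merges have been applied, so \Cref{alg:BPE-variant} either has $\bm s$ already or adds it that round. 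The subtleties I expect to fight are: (i) BPE applies merges left-to-right and greedily \emph{within} a chunk, so overlapping occurrences of $\bm s$ can be cannibalised before being counted --- I would show enough disjoint occurrences survive; (ii) ties --- each round some \emph{other} equally frequent pair may be chosen, so I can only certify progress, not which token is added, forcing a potential/counting argument that the $d$ chunks suffice to cover all $\le d$ targeted substrings; (iii) the Chernoff union bound over all candidate substrings, which is what inflates $\Lambda$ to $d^{-(1-\varepsilon)}$ and requires the $\Theta(d^2)$ sample size. I anticipate dealing with (ii) by showing each round either adds a targeted token or certifies --- via the failure of the ``some pair occurs $\ge \log d$ times'' test --- that the remaining data is already well compressed, which is exactly the hypothesis of the other branch.

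In the branch where fewer than $d_0$ tokens are learnt and $\encseqsplit{\cdot}$ is used, the termination criterion is itself the useful information: after the last merge, no consecutive token-pair repeats $\ge\log d$ times in any length-$d$ window of the tokenized training data, so such a window with $L$ tokens realises $\ge(L-1)/\log d$ distinct pairs, bounded by $|\Dict|^2 < d_0^2$. Using concentration over the $d$ roughly independent length-$\Theta(d)$ chunks --- which is where the $e^{-\Omega(d\epsilon^2)}$ failure probability comes from --- I would promote this to a per-character bound on the number of tokens $\encseqsplit{\cdot}$ emits on a fresh string, take $Q_{\mathrm{tok}}$ to be the empirical token distribution of the tokenized training data, and bound $\min_{Q\in\Qgram{1}}\mathcal{L}(Q\circ\encseqsplit{\cdot})$ directly by the entropy rate of the tokenized process. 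This is the place the multiplicative $2$ (BPE's within-chunk parse being only constant-factor optimal) and the additive $\epsilon$ (from the $\log d$ occurrence threshold and the $O(m/d)$ chunk boundaries of $\encseqsplit{\cdot}$) are absorbed. Overall, the single hardest point is the completeness claim in the first branch under adversarial tie-breaking and BPE's non-local encoding; once it is in hand, the reduction to \Cref{theorem:generalization} is routine and the second branch is a more standard concentration-and-counting argument.
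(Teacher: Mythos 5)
Your second branch (fewer than $d_0$ tokens, $\encseqsplit{\cdot}$) is essentially the paper's argument: the termination test certifies that on most chunks no pair repeats $\ge\log d$ times, monotonicity of this property under appending merge rules plus an Azuma--Hoeffding martingale bound transfers it to a fresh chunk with failure probability $e^{-\Omega(\epsilon^2 d)}$, and a \textsc{RePair}-style bound ($|\encseq{\cdot}|\log|\encseq{\cdot}|\le 2dH_\infty+O(d/\log d)$, Theorem 1 of \citet{navarro2008re}) converts ``no frequent pair'' into a length bound; that is exactly where the factor $2$ and the additive $\epsilon$ come from. (Your crude count ``$L\le|\Dict|^2\log d$ tokens per window'' is far too weak on its own --- you do need the entropy-based bound you allude to --- and you should be explicit that chunks certify the dictionary \emph{at the time they were processed}, so monotonicity is what lets you conclude anything about the final dictionary.)

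The genuine gap is the completeness claim in the first branch, which is both unestablished and almost certainly false for this algorithm. The inductive step fails at its core: even when $\bm s'$ and $\bm s''$ are already tokens, an occurrence of the substring $\bm s'\bm s''$ in the raw text need \emph{not} be parsed as the consecutive token pair $(\bm s',\bm s'')$ --- the BPE parse of each occurrence depends on its context, and earlier merges routinely straddle the split point, so the pair count can be far below $P(\bm s)\cdot\Theta(d)$. This is precisely the failure mode the paper exhibits with the dataset in \cref{eq:toydataset}, where $0000$ occurs $\Theta(m)$ times in test strings yet its interior characters are forced into single-character tokens. Your fix for tie-breaking also does not close: a round that adds a \emph{non-targeted} token certifies nothing about compression, so there is no potential argument guaranteeing all $\mathrm{poly}(d)$ targeted substrings are covered within $d$ rounds. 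Indeed, the form of $\varepsilon=O\bigl(\log(1/\epsilon)\log^3(1/\delta)/(\epsilon\delta^9\log d)\bigr)$ already rules out completeness as the mechanism: completeness would force $H(\PMLEuni,P)\ge(1-o(1))\log d$ and hence $\varepsilon=O(\log(1/\delta)/\log d)$ with no $\epsilon$ or $\delta^9$ dependence. The paper instead proves the much weaker statement that only a $\mathrm{poly}(\epsilon,\delta)$ \emph{fraction} of emitted tokens are long: it shows (\Cref{lemma:nosuffix}) that $\Omega(d_0)$ of the learnt tokens are neither prefixes nor suffixes of any other token, so they are always legal and terminal for the greedy encoder's rejection-sampling process regardless of the preceding tokens, and then a pigeonhole over the level sets $S_j$ of the tree (\Cref{lemma:tokval}) locates a deep band where these tokens have density $\Omega(\epsilon\,\mathrm{poly}(\delta)/1)$, so a fresh trajectory hits one with that probability; this yields $H(\PMLEuni,P)=\Omega(\epsilon\delta^9\log d/\cdots)$ and the theorem via \Cref{theorem:generalization}. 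You would need to replace your completeness induction with an argument of this ``many tokens are unavoidable and long'' type for the first branch to go through.
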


While the guarantees established for the sequential BPE tokenizer are weaker than those in \Cref{theorem:main}, the analysis turns out to be quite involved. \Cref{theorem:BPE} implies that unigram models trained on the sequential BPE tokenizer asymptotically approach the optimal cross-entropy loss up to a factor of $2$.

The formal proof of this result is presented in \Cref{app:BPE}.
What is the intuition behind using a different encoder in \Cref{alg:BPE-variant} depending on the number of tokens in the dictionary? When the number of tokens in the dictionary is smaller than $d_0$, we know that on a $1-d_0/d$ fraction of the iterations of \Cref{alg:BPE-variant}, a token is \textit{not} added to the dictionary, i.e. every pair of tokens already appears at most $\log(d)$ times together. This is a datapoint of ``evidence'' that under the dictionary in that iteration, the BPE encoder is already good at encoding new strings (of length $\Theta (d)$) in a way where pairs of tokens do not appear consecutively with high frequency. Since future dictionaries only have more rules appended to them, dictionaries only get better at encoding new strings into tokens where pairs do not frequently appear consecutively. In other words, the BPE encoder satisfies a monotonicity property. It remains to show that dictionaries which encode new sequences in a way where no pair of tokens appear too frequently have large $H(\PMLEuni,P)$ (to invoke \Cref{theorem:generalization}). This follows from ideas introduced in \citep{navarro2008re}. 

The case where the number of tokens is large ($\ge d_0$) turns out to present significant technical challenges for analyzing the BPE encoder. There is no longer much ``evidence'' that the dictionary in each iteration is good at encoding strings since in a large number of iterations a pair of tokens appear consecutively with high frequency. Analyzing the greedy encoder also presents its own challenges - although the algorithm has allocated a large number of tokens, it is possible that there are short tokens $\bm{t}$ which are maximal (i.e. they are not prefixes of other tokens). This is similar to the problem encountered by BPE when trained on the dataset in \cref{eq:toydataset} - although the algorithm has allocated a large number of tokens, the token $1$ is maximal since every other token begins with the character $0$. However, it turns out that such tokens, although present in the dictionary, are not observed frequently while encoding a fresh string drawn from the source.

\section{Additional theoretical results}

In this section, we discuss some additional results, which are fleshed out in more detail in \Cref{app:lm,app:lb-1,app:lb-2}.

\subsection{Finite sample considerations}
The results in \Cref{sec:unigram} focus on the analysis in the case where the downstream likelihood model is trained optimally. In practice, however, transformers are trained on finite datasets and not likely to be optimal. In this section, we focus on understanding the role of a finite dataset from a theoretical point of view. While the results in \Cref{theorem:main} indicate that a larger dictionary size is better in that the best unigram model achieves better cross-entropy loss, this statement ignores finite-sample considerations in training this model itself. In \Cref{app:lm}, for the LZW dictionary, we show that the number of samples required to learn this model to a multiplicative error of $1 + \xi$, $n_{\text{lm}}^\star$, scales as $\widetilde{O} (d^{1.01}/\delta \xi^2)$. In other words, given $n^\star_{\text{lm}}$ samples, with high probability, it is possible to learn a model $\widehat{Q}$ such that,
\begin{align}
    \mathcal{L} (\widehat{Q} \circ \encgre{\cdot}) \le (1 + \xi) \min_{Q \in \Qgram{1}} \mathcal{L} (Q \circ \encgre{\cdot}).
\end{align}
where $\enc{\cdot}$ is the greedy encoder on an LZW dictionary of size $d$ learnt from data. This result indicates that it is easier to train the downstream likelihood model on smaller dictionaries. In conjunction with \Cref{theorem:main}, this implies that there is an optimal dictionary size $d^\star$ for LZW which depends on $\delta$ and the size of the training dataset the learner has access to, which balances the limiting statistical error corresponding to a dictionary size of $d^\star$ with the finite sample error incurred in learning a likelihood model corresponding supported on $d^\star$ tokens. While our results do not establish these guarantees for transformers and are instead for a different finite-sample estimator of the optimal unigram model, it is an interesting direction for future research to characterize the learning trajectory and limiting statistical error of transformers trained with gradient descent.

\subsection{The generalization ability of tokenizers}

In this paper, we focus on tokenizers learnt from data, which are used to encode fresh strings which were not trained on previously. In previous work interpreting tokenizers from a theoretical point of view, this element of generalization to new strings has been missing. For instance, the work of \cite{zouhar2023formal} show that the BPE algorithm is essentially an approximation algorithm for the objective of finding the shortest encoding of the training dataset as a sequence of token merges. While this perspective is useful in formalizing the notion of compression BPE carries out, the characterization is ultimately with respect to the training dataset, and not in BPE's ability to compress fresh strings. In \Cref{app:lb-1} we show that there are tokenization algorithms which are extremely good at compressing the dataset they were trained on, but on new strings, the tokenizer behaves like a character tokenizer almost everywhere. As a consequence, transformers trained on the output of this tokenizer would fail to approach the optimal cross-entropy loss and in fact get stuck close to the loss of $m H(\pi)$ loss achieved by the character level tokenizer in \Cref{theorem:char-level}.

\subsection{Interaction between dictionary and encoder}

The choice of tokenizer in language modeling has received a large degree of study in various domains \citep{rust2021good,toraman2023impact} and is identified as a step that has been overlooked in designing performant LLMs \citep{mielke2021between,wu2023bloomberggpt}. While typically this has come to imply using different tokenization algorithms altogether, it is important to note that the tokenizer is composed of two parts - the dictionary and the encoding algorithm, and the interaction between the two may result in differently performing end-to-end models. In \Cref{app:lb-2}, we show that there exist dictionaries which generalize well under one encoder, in that unigram models trained on string encoded by this dictionary perform near-optimally, but these dictionaries completely fail to generalize under a different encoder. This means that in the process of designing good tokenizers, it does not suffice to think about the dictionary in isolation. Its interaction with the encoding algorithm is pertinent.

\section{Experimental Results}

In this section we discuss experimental results; hyperparameter choices and other details are presented in \Cref{app:add-exp}.

\begin{figure}
\begin{subfigure}[b]{0.49\textwidth}
    \centering
    \includegraphics[width=\columnwidth]{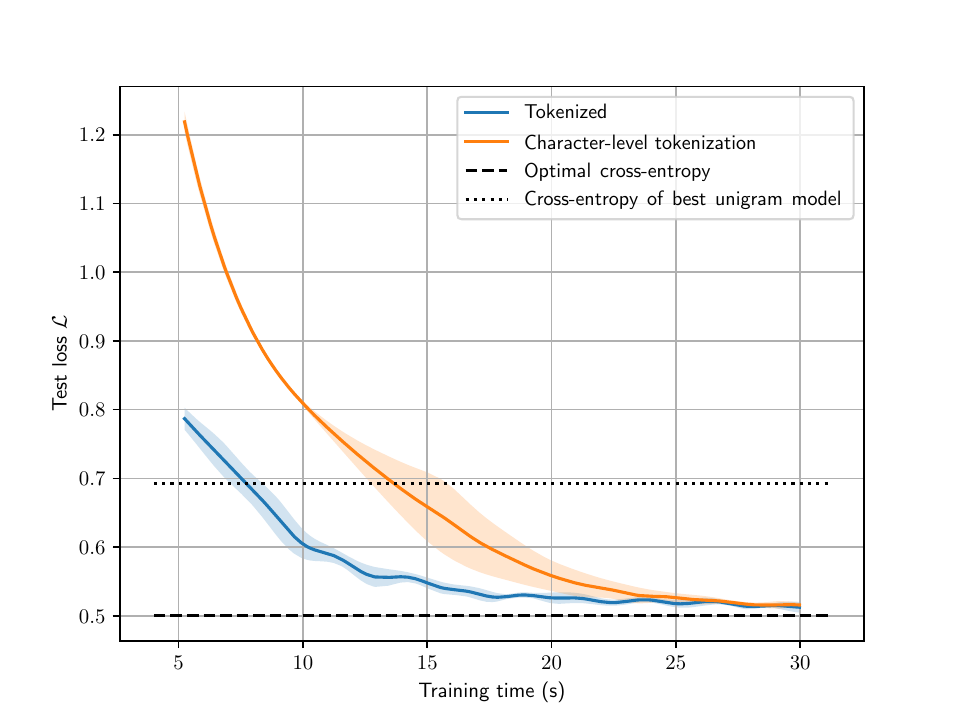}
    \caption{Convergence rate of smallest model which is within $10\%$ of the optimal-cross entropy in $300$ iterations. The smallest character-level model has $9$K parameters ($3$ layers, embedding dimension = $10$). The smallest tokenized model with a dictionary size of $10$ has $18$K parameters ($3$ layers, embedding dimension $=20$). The tokenized model has more parameters but the wall-clock time taken to reach any target loss is lower.}
    \label{fig:speedup}
\end{subfigure}\quad
\begin{subfigure}[b]{0.49\textwidth}
    \centering
    \includegraphics[width=\columnwidth]{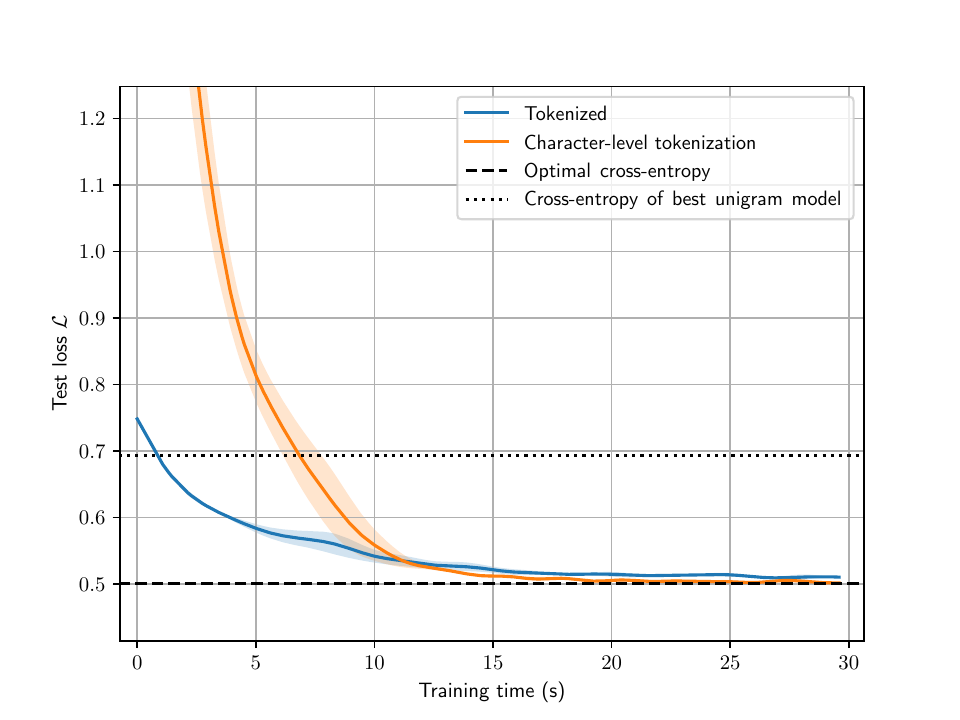}
    \caption{Convergence rate of models with the same embedding dimension ($20$), number of heads ($1$) and layers ($3$) with and without tokenization. The model with tokenization (dictionary size of $20$) appears to converge more quickly. The character-level model is trained on input sequences of length $512$; the tokenized model is effectively trained on smaller sequences (of length $\approx 145$).}
    \label{fig:speedup-2}
\end{subfigure}
\caption{Test loss vs. wall-clock time for the BPE tokenized and character-level models when trained on the order-$1$ switching Markov chain (\Cref{fig:switch-mc}) with $p=q=0.8$.}
\label{}
\end{figure}
\paragraph{Experiment 1 (\Cref{fig:speedup,fig:speedup-2}).} In this experiment we study the order-$1$ switching Markov chain (with $p=q=0.8$). Specifically for the case of $k=1$, transformers without tokenization empirically achieve a small cross-entropy on this learning task as seen in \Cref{fig:speedup} and earlier in \cite{makkuva2024attention}. We vary a number of hyperparameters to find the smallest character-level model which achieves a loss within $10\%$ of the optimal-cross entropy within the first $300$ iterations. Fixing the dictionary size for BPE as $10$, we also find the smallest transformer which achieves the same target loss. Although the smallest BPE tokenized model is larger than the smallest character-level tokenized model in terms of the number of parameters, the wall-clock time taken to optimize the model to any target test loss is observed to be smaller. Thus, tokenization appears to reduce the compute time required to train the model to a target test loss in the toy example we consider. In contrast, in \Cref{fig:speedup-2} we compare models with the same architecture trained with and without tokenization\footnote{The model with tokenization is trained on smaller sequences by virtue of tokenization.}. The model with BPE tokenization again appears to converge more quickly, although the limiting error achieved is subtly higher in comparison with the model with character-level tokenization. By making the model with BPE tokenization larger, we can tradeoff the convergence speed (in wall-clock time) with the limiting error. \\

\paragraph{Experiment 2a (\Cref{fig:TL-vs-dict-size}).} In this experiment, we vary the dictionary size of the tokenizer to observe how the test loss curves vary, when trained on a single random Markov process drawn from the Dirichlet prior described in \Cref{fig:order2,fig:order4}. We observe that increasing the dictionary size of the tokenizer helps the model to achieve smaller test losses more quickly. 

\begin{figure}[t]
    \centering
    \includegraphics[width=0.75\textwidth]{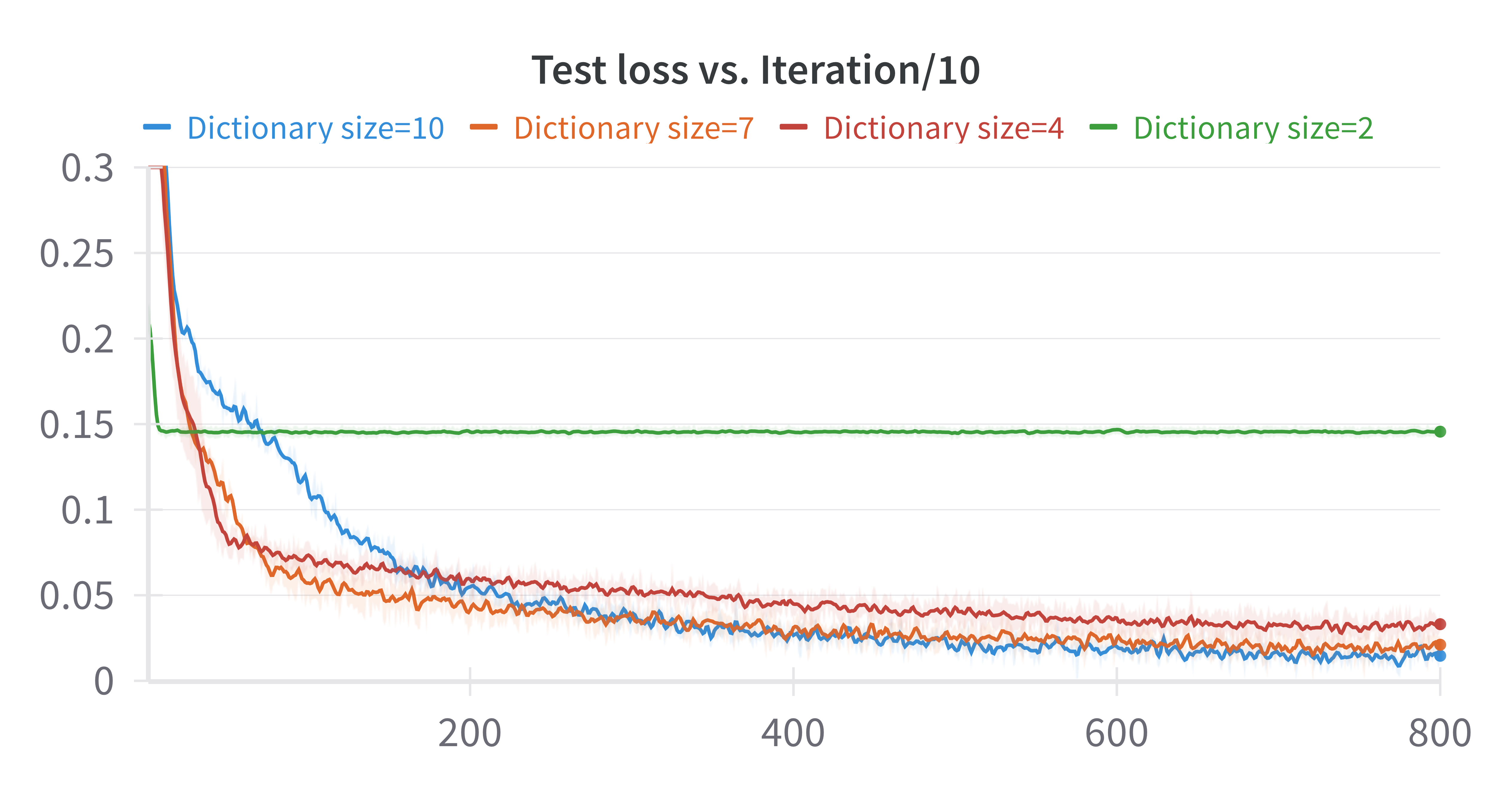}
    \caption{\textit{Performance vs. dictionary size.} Each curve corresponds to using learnt BPE tokenizers with different dictionary sizes. Since the underlying data process is on a binary space, dictionary size = $2$ corresponds to character-level tokenization. The underlying order-$4$ Markov chain is generated randomly, but fixed across all training runs, and the transformer's sequence length is adapted across different dictionary sizes (with the base character-level sequences of length $512$ in all cases). For each choice of the dictionary size, we average over $5$ training runs to average over the randomness in learning the tokenizer and transformer. Increasing the dictionary size improves performance, albeit subtly slowing down optimization.}
    \label{fig:TL-vs-dict-size}
\end{figure}

\noindent Next, we present the same evaluation of tokenizers, but on real world datasets. Since evaluating the end-to-end pipeline requires pretraining a transformer which is computationally intensive even at small and medium scales, we resort to evaluating tokenizers on $k$-gram models at these scales.

\paragraph{Experiment 2b (\Cref{fig:main}).} Since pretraining is an expensive operation, we resort to evaluating the effect of tokenization at scale by proxy techniques. We first train tokenizers on the Wikitext-103-raw-v1 dataset \citep{merity2016pointer} and compare the performance of unigram models trained on the GLUE dataset as the model size scales. In this experiment, we do not evaluate the cross entropy loss by pretraining a transformer. Rather, we estimate the cross-entropy of the best unigram model by using the approximation,
\begin{align}
    &- \mathbb{E} \Bigg[ \sum_{\bm{t} \in \Dict} n_{\bm{t}} \log \PMLEuni (\bm{t}) \Bigg] {\approx} - \sum_{\bm{t} \in \Dict} \widehat{n}_{\bm{t}} \log ( \widehat{Q} (\bm{t})) \label{eq:approx}
\end{align}
where $\widehat{Q} (\bm{t}) = \frac{\widehat{n}_{\bm{t}}}{\sum_{\bm{t}}\widehat{n}_{\bm{t}}}$ is the MLE unigram model learnt from a finite dataset, which we choose here as GLUE \citep{wang2019glue}, and $\widehat{n}_{\bm{t}}$ is the number of times the token $\bm{t}$ is observed in the encoding of the dataset.
This approximation allows us to separate the error stemming from learning a suboptimal likelihood model which tends to have higher sample complexity requirements and focus on the asymptotic error of the tokenizer.
Since the character-level tokenizer operates on a fixed vocabulary, in order to compare with the other tokenizers, we plot the number of unique $k$-grams observed in the training dataset along the $x$-axis. While this is not an apples-to-apples comparison, we use the number of unique $k$-grams in the dataset as a proxy for the complexity of the likelihood model trained. One may also use the total number of possible $k$-grams as a proxy; however a large fraction of these $k$-grams would likely never be observed in a real dataset (especially as $k$ grows).
\begin{figure}[t]
    \centering
    \includegraphics[scale=0.5]{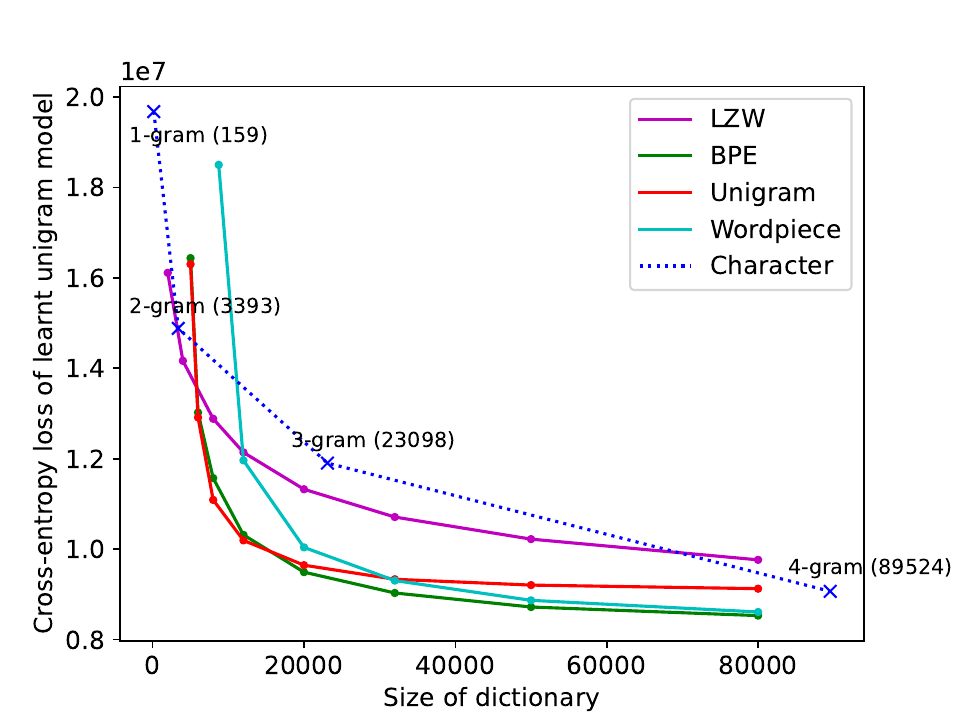}
    \caption{\textit{Performance vs. dictionary size.} All tokenizers are trained on the Wikitext-103 dataset with early stopping when the desired vocabulary size is met. For all other tokenizers we train unigram models while for the the character-level tokenizer, we train $k$-gram models for $k \in \{ 1,2,3,4\}$. In all cases, the GLUE dataset is used to learn the likelihood model. The number in the parentheses denotes the number of distinct observed $k$-grams, which lower bounds the $k$-gram model complexity.}
    \label{fig:main}
\end{figure}

\paragraph{Experiment 3 (Table~\ref{table:1}).} In this experiment, we compare the cross entropy loss of the best unigram model trained on pretrained tokenizers on an array of datasets. All the considered tokenizers have dictionary sizes in the range 31K-51K. The best bigram model under the character-level tokenizer is consistently outperformed by the best unigram likelihood model trained under a number of pretrained tokenizers on a variety of datasets: Rotten Tomatoes (8.5K sequences), GLUE (105K), Yelp review (650K), Wikitext-103-v1 (1.8M), OpenOrca (2.9M).\\[-5pt]

\begin{table}[ht]
\centering
\setlength{\tabcolsep}{4pt}
\begin{tabular}{|c|c|c|c|c|c|}
    \hline
    & RT & Wiki & OpenOrca & Yelp & GLUE \\
    \hline
    BERT tokenizer & 1.58 & 1.55 & 1.50 & 1.60 & 1.50 \\
    \hline
    Tinyllama & 1.75 & 1.84 & 1.75 & 1.82 & 1.70 \\
    \hline
    GPT-neox-20b & 1.57 & 1.64 & 1.60 & 1.66 & 1.48 \\
    \hline
    Mixtral-8x7b & 1.69 & 1.80 & 1.71 & 1.75 & 1.66 \\
    \hline
    Phi-2 tokenizer & 1.54 & 1.62 & 1.60 & 1.64 & 1.45 \\
 \hline
    \hline
    \rowcolor{blue!15!white} Char + bigram & 2.40 & 2.45 & 2.49 & 2.46 & 2.38 \\
    \hline
\end{tabular}
\caption{Cross-entropy loss estimates (using \cref{eq:approx}) of unigram models trained on pretrained tokenizers under a number of datasets. The last row (blue) is the character-level tokenizer, on which a more powerful bigram model is trained. The table indicates that tokenization allows unigram models on tokenized sequences to outperform even more powerful bigram models learnt on the underlying character-level sequences. BERT is based on Wordpiece. Tinyllama, GPT-neox-20b and Mixtral-8x7b are based on BPE. The character-level tokenizer we use is ByT5.}
\label{table:1}
\end{table}

\section{Open Questions}

In this section, we discuss some limitations of our work and open questions stemming from them. We show that when transformers are trained with or without tokenization, they learn to approximately represent $k$-gram models for different values of $k$. Transformers are capable of representing far more complex behavior, which are elicited under more complex data generating processes. Extending our formulation to these settings presents an avenue to develop an even better understanding of tokenization, and would allow finer-grained comparisons between tokenizers. The behavior and role of tokenizers may be very different in these contexts. Below we discuss some concrete questions.

Our theory assumes that the underlying Markov chain has every transition occurring with non-zero probability, which is a limitation. However, the analysis for the toy tokenizer in \cref{eq:rchunk} shows that when the dictionary size scales as $\exp (mH(\pi) / H(P))$, even in the absence of \Cref{ass:ss}, the tokenizer achieves the optimal cross-entropy to within a factor of $2$. This leads to the following conjecture.

\begin{conjecture}
In the spirit of eliminating \Cref{ass:ss}, is it possible to establish a version of \Cref{theorem:main} applicable to data drawn from any Markov chain, where $\varepsilon = \log(1/\delta)/0.99 \log(d)$ is replaced by $\varepsilon = \log (mH(\pi) / H(P))/ 0.99 \log(d)$.
\end{conjecture}


\bibliography{refs}

\begin{thebibliography}{57}
\providecommand{\natexlab}[1]{#1}
\providecommand{\url}[1]{\texttt{#1}}
\expandafter\ifx\csname urlstyle\endcsname\relax
  \providecommand{\doi}[1]{doi: #1}\else
  \providecommand{\doi}{doi: \begingroup \urlstyle{rm}\Url}\fi

\bibitem[Ali et~al.(2023)Ali, Fromm, Thellmann, Rutmann, L{\"u}bbering, Leveling, Klug, Ebert, Doll, Buschhoff, et~al.]{ali2023tokenizer}
Mehdi Ali, Michael Fromm, Klaudia Thellmann, Richard Rutmann, Max L{\"u}bbering, Johannes Leveling, Katrin Klug, Jan Ebert, Niclas Doll, Jasper~Schulze Buschhoff, et~al.
\newblock Tokenizer choice for llm training: Negligible or crucial?
\newblock \emph{arXiv preprint arXiv:2310.08754}, 2023.

\bibitem[Alyafeai et~al.(2023)Alyafeai, Al-shaibani, Ghaleb, and Ahmad]{alyafeai2023evaluating}
Zaid Alyafeai, Maged~S Al-shaibani, Mustafa Ghaleb, and Irfan Ahmad.
\newblock Evaluating various tokenizers for arabic text classification.
\newblock \emph{Neural Processing Letters}, 55\penalty0 (3):\penalty0 2911--2933, 2023.

\bibitem[Braess and Sauer(2004)]{braess2004bernstein}
Dietrich Braess and Thomas Sauer.
\newblock Bernstein polynomials and learning theory.
\newblock \emph{Journal of Approximation Theory}, 128\penalty0 (2):\penalty0 187--206, 2004.

\bibitem[Chen(2018)]{YCC}
Yen-Chi Chen.
\newblock Stochastic modeling of scientific data, Autumn 2018.

\bibitem[Clark et~al.(2022)Clark, Garrette, Turc, and Wieting]{clark2022canine}
Jonathan~H Clark, Dan Garrette, Iulia Turc, and John Wieting.
\newblock Canine: Pre-training an efficient tokenization-free encoder for language representation.
\newblock \emph{Transactions of the Association for Computational Linguistics}, 10:\penalty0 73--91, 2022.

\bibitem[Den et~al.(2007)Den, Ogiso, Ogura, Yamada, Minematsu, Uchimoto, and Koiso]{unidic}
Yasuharu Den, Toshinobu Ogiso, Hideki Ogura, Atsushi Yamada, Nobuaki Minematsu, Kiyotaka Uchimoto, and Hanae Koiso.
\newblock The development of an electronic dictionary for morphological analysis and its application to japanese corpus linguistics, Oct 2007.
\newblock URL \url{https://repository.ninjal.ac.jp/api/records/2201}.

\bibitem[Drmota and Szpankowski(2011)]{drmota2011expected}
Michael Drmota and Wojciech Szpankowski.
\newblock The expected profile of digital search trees.
\newblock \emph{Journal of Combinatorial Theory, Series A}, 118\penalty0 (7):\penalty0 1939--1965, 2011.

\bibitem[Drmota et~al.(2021)Drmota, Fuchs, Hwang, and Neininger]{drmota2021node}
Michael Drmota, Michael Fuchs, Hsien-Kuei Hwang, and Ralph Neininger.
\newblock Node profiles of symmetric digital search trees: Concentration properties.
\newblock \emph{Random Structures \& Algorithms}, 58\penalty0 (3):\penalty0 430--467, 2021.

\bibitem[Edelman et~al.(2024)Edelman, Edelman, Goel, Malach, and Tsilivis]{edelman2024evolution}
Benjamin~L Edelman, Ezra Edelman, Surbhi Goel, Eran Malach, and Nikolaos Tsilivis.
\newblock The evolution of statistical induction heads: In-context learning markov chains.
\newblock \emph{arXiv preprint arXiv:2402.11004}, 2024.

\bibitem[Eisner et~al.(2015)Eisner, Farkas, Haase, and Nagel]{eisner2015operator}
Tanja Eisner, B{\'a}lint Farkas, Markus Haase, and Rainer Nagel.
\newblock \emph{Operator theoretic aspects of ergodic theory}, volume 272.
\newblock Springer, 2015.

\bibitem[Gage(1994)]{gage1994new}
Philip Gage.
\newblock A new algorithm for data compression.
\newblock \emph{C Users Journal}, 12\penalty0 (2):\penalty0 23--38, 1994.

\bibitem[Gall{\'e}(2019)]{galle2019investigating}
Matthias Gall{\'e}.
\newblock Investigating the effectiveness of bpe: The power of shorter sequences.
\newblock In \emph{Proceedings of the 2019 conference on empirical methods in natural language processing and the 9th international joint conference on natural language processing (EMNLP-IJCNLP)}, pages 1375--1381, 2019.

\bibitem[Golkar et~al.(2023)Golkar, Pettee, Eickenberg, Bietti, Cranmer, Krawezik, Lanusse, McCabe, Ohana, Parker, et~al.]{golkar2023xval}
Siavash Golkar, Mariel Pettee, Michael Eickenberg, Alberto Bietti, Miles Cranmer, Geraud Krawezik, Francois Lanusse, Michael McCabe, Ruben Ohana, Liam Parker, et~al.
\newblock xval: A continuous number encoding for large language models.
\newblock \emph{arXiv preprint arXiv:2310.02989}, 2023.

\bibitem[Gray and Gray(2009)]{gray2009probability}
Robert~M Gray and RM~Gray.
\newblock \emph{Probability, random processes, and ergodic properties}, volume~1.
\newblock Springer, 2009.

\bibitem[Grefenstette and Tapanainen(1994)]{grefenstette1994word}
Gregory Grefenstette and Pasi Tapanainen.
\newblock What is a word, what is a sentence?: problems of tokenisation.
\newblock 1994.

\bibitem[Han et~al.(2021)Han, Jana, and Wu]{han2021optimal}
Yanjun Han, Soham Jana, and Yihong Wu.
\newblock Optimal prediction of markov chains with and without spectral gap.
\newblock \emph{Advances in Neural Information Processing Systems}, 34:\penalty0 11233--11246, 2021.

\bibitem[Hun and Vall{\'e}e(2014)]{hun2014typical}
Kanal Hun and Brigitte Vall{\'e}e.
\newblock Typical depth of a digital search tree built on a general source.
\newblock In \emph{2014 Proceedings of the Eleventh Workshop on Analytic Algorithmics and Combinatorics (ANALCO)}, pages 1--15. SIAM, 2014.

\bibitem[Itzhak and Levy(2021)]{itzhak2021models}
Itay Itzhak and Omer Levy.
\newblock Models in a spelling bee: Language models implicitly learn the character composition of tokens.
\newblock \emph{arXiv preprint arXiv:2108.11193}, 2021.

\bibitem[Jacquet et~al.(2001)Jacquet, Szpankowski, and Tang]{jacquet2001average}
Philippe Jacquet, Wojciech Szpankowski, and Jing Tang.
\newblock Average profile of the lempel-ziv parsing scheme for a markovian source.
\newblock \emph{Algorithmica}, 31:\penalty0 318--360, 2001.

\bibitem[Kharitonov et~al.(2021)Kharitonov, Baroni, and Hupkes]{kharitonov2021bpe}
Eugene Kharitonov, Marco Baroni, and Dieuwke Hupkes.
\newblock How bpe affects memorization in transformers.
\newblock \emph{arXiv preprint arXiv:2110.02782}, 2021.

\bibitem[Kudo(2018)]{kudo2018subword}
Taku Kudo.
\newblock Subword regularization: Improving neural network translation models with multiple subword candidates.
\newblock \emph{arXiv preprint arXiv:1804.10959}, 2018.

\bibitem[Larsson and Moffat(2000)]{larsson2000off}
N~Jesper Larsson and Alistair Moffat.
\newblock Off-line dictionary-based compression.
\newblock \emph{Proceedings of the IEEE}, 88\penalty0 (11):\penalty0 1722--1732, 2000.

\bibitem[Libovick{\`y} et~al.(2021)Libovick{\`y}, Schmid, and Fraser]{libovicky2021don}
Jind{\v{r}}ich Libovick{\`y}, Helmut Schmid, and Alexander Fraser.
\newblock Why don't people use character-level machine translation?
\newblock \emph{arXiv preprint arXiv:2110.08191}, 2021.

\bibitem[Lin(2004)]{lin2004rouge}
Chin-Yew Lin.
\newblock Rouge: A package for automatic evaluation of summaries.
\newblock In \emph{Text summarization branches out}, pages 74--81, 2004.

\bibitem[Makkuva et~al.(2024)Makkuva, Bondaschi, Girish, Nagle, Jaggi, Kim, and Gastpar]{makkuva2024attention}
Ashok~Vardhan Makkuva, Marco Bondaschi, Adway Girish, Alliot Nagle, Martin Jaggi, Hyeji Kim, and Michael Gastpar.
\newblock Attention with markov: A framework for principled analysis of transformers via markov chains.
\newblock \emph{arXiv preprint arXiv:2402.04161}, 2024.

\bibitem[Mann et~al.(2020)Mann, Ryder, Subbiah, Kaplan, Dhariwal, Neelakantan, Shyam, Sastry, Askell, Agarwal, et~al.]{mann2020language}
Ben Mann, N~Ryder, M~Subbiah, J~Kaplan, P~Dhariwal, A~Neelakantan, P~Shyam, G~Sastry, A~Askell, S~Agarwal, et~al.
\newblock Language models are few-shot learners.
\newblock \emph{arXiv preprint arXiv:2005.14165}, 2020.

\bibitem[Merity et~al.(2016)Merity, Xiong, Bradbury, and Socher]{merity2016pointer}
Stephen Merity, Caiming Xiong, James Bradbury, and Richard Socher.
\newblock Pointer sentinel mixture models.
\newblock \emph{arXiv preprint arXiv:1609.07843}, 2016.

\bibitem[Mielke et~al.(2021)Mielke, Alyafeai, Salesky, Raffel, Dey, Gall{\'e}, Raja, Si, Lee, Sagot, et~al.]{mielke2021between}
Sabrina~J Mielke, Zaid Alyafeai, Elizabeth Salesky, Colin Raffel, Manan Dey, Matthias Gall{\'e}, Arun Raja, Chenglei Si, Wilson~Y Lee, Beno{\^\i}t Sagot, et~al.
\newblock Between words and characters: A brief history of open-vocabulary modeling and tokenization in nlp.
\newblock \emph{arXiv preprint arXiv:2112.10508}, 2021.

\bibitem[Mourtada and Ga{\"\i}ffas(2022)]{mourtada2022improper}
Jaouad Mourtada and St{\'e}phane Ga{\"\i}ffas.
\newblock An improper estimator with optimal excess risk in misspecified density estimation and logistic regression.
\newblock \emph{The Journal of Machine Learning Research}, 23\penalty0 (1):\penalty0 1384--1432, 2022.

\bibitem[Naor et~al.(2020)Naor, Rao, and Regev]{naor2020concentration}
Assaf Naor, Shravas Rao, and Oded Regev.
\newblock Concentration of markov chains with bounded moments.
\newblock 2020.

\bibitem[Navarro and Russo(2008)]{navarro2008re}
Gonzalo Navarro and Lu{\'\i}s~MS Russo.
\newblock Re-pair achieves high-order entropy.
\newblock In \emph{DCC}, page 537. Citeseer, 2008.

\bibitem[Nichani et~al.(2024)Nichani, Damian, and Lee]{nichani2024transformers}
Eshaan Nichani, Alex Damian, and Jason~D Lee.
\newblock How transformers learn causal structure with gradient descent.
\newblock \emph{arXiv preprint arXiv:2402.14735}, 2024.

\bibitem[Palmer(2000)]{palmer2000tokenisation}
David~D Palmer.
\newblock Tokenisation and sentence segmentation.
\newblock \emph{Handbook of natural language processing}, pages 11--35, 2000.

\bibitem[Papineni et~al.(2002)Papineni, Roukos, Ward, and Zhu]{papineni2002bleu}
Kishore Papineni, Salim Roukos, Todd Ward, and Wei-Jing Zhu.
\newblock Bleu: a method for automatic evaluation of machine translation.
\newblock In \emph{Proceedings of the 40th annual meeting of the Association for Computational Linguistics}, pages 311--318, 2002.

\bibitem[Parr(2013)]{Parr13}
Terence Parr.
\newblock \emph{The Definitive ANTLR 4 Reference}.
\newblock Pragmatic Bookshelf, Raleigh, NC, 2 edition, 2013.
\newblock ISBN 978-1-93435-699-9.
\newblock URL \url{https://www.safaribooksonline.com/library/view/the-definitive-antlr/9781941222621/}.

\bibitem[Petrov et~al.(2023)Petrov, La~Malfa, Torr, and Bibi]{petrov2023language}
Aleksandar Petrov, Emanuele La~Malfa, Philip~HS Torr, and Adel Bibi.
\newblock Language model tokenizers introduce unfairness between languages.
\newblock \emph{arXiv preprint arXiv:2305.15425}, 2023.

\bibitem[Radford et~al.(2019)Radford, Wu, Child, Luan, Amodei, Sutskever, et~al.]{radford2019language}
Alec Radford, Jeffrey Wu, Rewon Child, David Luan, Dario Amodei, Ilya Sutskever, et~al.
\newblock Language models are unsupervised multitask learners.
\newblock \emph{OpenAI blog}, 1\penalty0 (8):\penalty0 9, 2019.

\bibitem[Rajaraman et~al.(2024)Rajaraman, Jiao, and Ramchandran]{rajaraman2024analysis}
Nived Rajaraman, Jiantao Jiao, and Kannan Ramchandran.
\newblock An analysis of tokenization: Transformers under markov data.
\newblock \emph{Advances in Neural Information Processing Systems}, 37:\penalty0 62503--62556, 2024.

\bibitem[Rumbelow and Watkins(2023)]{SolidGoldMagikarp}
Jessica Rumbelow and Matthew Watkins.
\newblock Solidgoldmagikarp.
\newblock \url{https://www.alignmentforum.org/posts/aPeJE8bSo6rAFoLqg/solidgoldmagikarp-plus-prompt-generation}, 2023.

\bibitem[Rust et~al.(2021)Rust, Pfeiffer, Vuli{\'c}, Ruder, and Gurevych]{rust2021good}
Phillip Rust, Jonas Pfeiffer, Ivan Vuli{\'c}, Sebastian Ruder, and Iryna Gurevych.
\newblock How good is your tokenizer? on the monolingual performance of multilingual language models.
\newblock In \emph{Proceedings of the 59th Annual Meeting of the Association for Computational Linguistics and the 11th International Joint Conference on Natural Language Processing (Volume 1: Long Papers)}, pages 3118--3135, 2021.

\bibitem[Scao et~al.(2022)Scao, Fan, Akiki, Pavlick, Ili{\'c}, Hesslow, Castagn{\'e}, Luccioni, Yvon, et~al.]{workshop2022bloom}
Teven~Le Scao, Angela Fan, Christopher Akiki, Ellie Pavlick, Suzana Ili{\'c}, Daniel Hesslow, Roman Castagn{\'e}, Alexandra~Sasha Luccioni, Fran{\c{c}}ois Yvon, et~al.
\newblock Bloom: A 176b-parameter open-access multilingual language model.
\newblock \emph{arXiv preprint arXiv:2211.05100}, 2022.

\bibitem[Schuster and Nakajima(2012)]{schuster2012japanese}
Mike Schuster and Kaisuke Nakajima.
\newblock Japanese and korean voice search.
\newblock In \emph{2012 IEEE international conference on acoustics, speech and signal processing (ICASSP)}, pages 5149--5152. IEEE, 2012.

\bibitem[Sennrich et~al.(2016)Sennrich, Haddow, and Birch]{sennrich-etal-2016-neural}
Rico Sennrich, Barry Haddow, and Alexandra Birch.
\newblock Neural machine translation of rare words with subword units.
\newblock In Katrin Erk and Noah~A. Smith, editors, \emph{Proceedings of the 54th Annual Meeting of the Association for Computational Linguistics (Volume 1: Long Papers)}, pages 1715--1725, Berlin, Germany, August 2016. Association for Computational Linguistics.
\newblock \doi{10.18653/v1/P16-1162}.
\newblock URL \url{https://aclanthology.org/P16-1162}.

\bibitem[Singh and Strouse(2024)]{singh2024tokenization}
Aaditya~K Singh and DJ~Strouse.
\newblock Tokenization counts: the impact of tokenization on arithmetic in frontier llms.
\newblock \emph{arXiv preprint arXiv:2402.14903}, 2024.

\bibitem[Tolmachev et~al.(2018)Tolmachev, Kawahara, and Kurohashi]{tolmachev-etal-2018-juman}
Arseny Tolmachev, Daisuke Kawahara, and Sadao Kurohashi.
\newblock {J}uman++: A morphological analysis toolkit for scriptio continua.
\newblock In Eduardo Blanco and Wei Lu, editors, \emph{Proceedings of the 2018 Conference on Empirical Methods in Natural Language Processing: System Demonstrations}, pages 54--59, Brussels, Belgium, November 2018. Association for Computational Linguistics.
\newblock \doi{10.18653/v1/D18-2010}.
\newblock URL \url{https://aclanthology.org/D18-2010}.

\bibitem[Toraman et~al.(2023)Toraman, Yilmaz, {\c{S}}ahinu{\c{c}}, and Ozcelik]{toraman2023impact}
Cagri Toraman, Eyup~Halit Yilmaz, Furkan {\c{S}}ahinu{\c{c}}, and Oguzhan Ozcelik.
\newblock Impact of tokenization on language models: An analysis for turkish.
\newblock \emph{ACM Transactions on Asian and Low-Resource Language Information Processing}, 22\penalty0 (4):\penalty0 1--21, 2023.

\bibitem[Touvron et~al.(2023)Touvron, Martin, Stone, Albert, Almahairi, Babaei, Bashlykov, Batra, Bhargava, Bhosale, et~al.]{touvron2023llama}
Hugo Touvron, Louis Martin, Kevin Stone, Peter Albert, Amjad Almahairi, Yasmine Babaei, Nikolay Bashlykov, Soumya Batra, Prajjwal Bhargava, Shruti Bhosale, et~al.
\newblock Llama 2: Open foundation and fine-tuned chat models.
\newblock \emph{arXiv preprint arXiv:2307.09288}, 2023.

\bibitem[Wang et~al.(2019)Wang, Singh, Michael, Hill, Levy, and Bowman]{wang2019glue}
Alex Wang, Amanpreet Singh, Julian Michael, Felix Hill, Omer Levy, and Samuel~R. Bowman.
\newblock {GLUE}: A multi-task benchmark and analysis platform for natural language understanding.
\newblock 2019.
\newblock In the Proceedings of ICLR.

\bibitem[Welch(1984)]{lzw}
Terry~A. Welch.
\newblock A technique for high-performance data compression.
\newblock \emph{Computer}, 17\penalty0 (06):\penalty0 8--19, 1984.

\bibitem[Wolff(1975)]{wolff1975algorithm}
J~Gerard Wolff.
\newblock An algorithm for the segmentation of an artificial language analogue.
\newblock \emph{British journal of psychology}, 66\penalty0 (1):\penalty0 79--90, 1975.

\bibitem[Wu et~al.(2023)Wu, Irsoy, Lu, Dabravolski, Dredze, Gehrmann, Kambadur, Rosenberg, and Mann]{wu2023bloomberggpt}
Shijie Wu, Ozan Irsoy, Steven Lu, Vadim Dabravolski, Mark Dredze, Sebastian Gehrmann, Prabhanjan Kambadur, David Rosenberg, and Gideon Mann.
\newblock Bloomberggpt: A large language model for finance.
\newblock \emph{arXiv preprint arXiv:2303.17564}, 2023.

\bibitem[Xue et~al.(2022)Xue, Barua, Constant, Al-Rfou, Narang, Kale, Roberts, and Raffel]{xue2022byt5}
Linting Xue, Aditya Barua, Noah Constant, Rami Al-Rfou, Sharan Narang, Mihir Kale, Adam Roberts, and Colin Raffel.
\newblock Byt5: Towards a token-free future with pre-trained byte-to-byte models.
\newblock \emph{Transactions of the Association for Computational Linguistics}, 10:\penalty0 291--306, 2022.

\bibitem[Yu et~al.(2021)Yu, Song, Kim, Lee, Ryu, and Yoon]{yu2021rare}
Sangwon Yu, Jongyoon Song, Heeseung Kim, Seong-min Lee, Woo-Jong Ryu, and Sungroh Yoon.
\newblock Rare tokens degenerate all tokens: Improving neural text generation via adaptive gradient gating for rare token embeddings.
\newblock \emph{arXiv preprint arXiv:2109.03127}, 2021.

\bibitem[Zheng et~al.(2023)Zheng, Sharan, Jaiswal, Wang, Xi, Xu, and Wang]{zheng2023outline}
Wenqing Zheng, SP~Sharan, Ajay~Kumar Jaiswal, Kevin Wang, Yihan Xi, Dejia Xu, and Zhangyang Wang.
\newblock Outline, then details: Syntactically guided coarse-to-fine code generation.
\newblock In \emph{International Conference on Machine Learning}, pages 42403--42419. PMLR, 2023.

\bibitem[Ziv and Lempel(1978)]{lz78}
Jacob Ziv and Abraham Lempel.
\newblock Compression of individual sequences via variable-rate coding.
\newblock \emph{IEEE transactions on Information Theory}, 24\penalty0 (5):\penalty0 530--536, 1978.

\bibitem[Zouhar et~al.(2023{\natexlab{a}})Zouhar, Meister, Gastaldi, Du, Sachan, and Cotterell]{zouhar-etal-2023-tokenization}
Vil{\'e}m Zouhar, Clara Meister, Juan Gastaldi, Li~Du, Mrinmaya Sachan, and Ryan Cotterell.
\newblock Tokenization and the noiseless channel.
\newblock In Anna Rogers, Jordan Boyd-Graber, and Naoaki Okazaki, editors, \emph{Proceedings of the 61st Annual Meeting of the Association for Computational Linguistics (Volume 1: Long Papers)}, pages 5184--5207, Toronto, Canada, July 2023{\natexlab{a}}. Association for Computational Linguistics.
\newblock \doi{10.18653/v1/2023.acl-long.284}.
\newblock URL \url{https://aclanthology.org/2023.acl-long.284}.

\bibitem[Zouhar et~al.(2023{\natexlab{b}})Zouhar, Meister, Gastaldi, Du, Vieira, Sachan, and Cotterell]{zouhar2023formal}
Vil{\'e}m Zouhar, Clara Meister, Juan~Luis Gastaldi, Li~Du, Tim Vieira, Mrinmaya Sachan, and Ryan Cotterell.
\newblock A formal perspective on byte-pair encoding.
\newblock \emph{arXiv preprint arXiv:2306.16837}, 2023{\natexlab{b}}.

\end{thebibliography}
\bibliographystyle{plainnat}

\newpage

\appendix

\addcontentsline{toc}{section}{Appendix} 
\part{Appendix} 
\parttoc 

\section{Analysis of LZW: Proofs of \Cref{theorem:generalization,theorem:LZW}}

\subsection{Notation and definitions} \label{sec:proof:limit-exists}

For each character $a \in \mathcal{A}$ let $\mathcal{T}^\star_a$ denote an infinite tree, with root vertex $\emptyset$, and subsequent vertices labelled by strings $\bm{t} \in \mathcal{A}^\star$. The edge from a parent vertex $\bm{t}$ to any child $\bm{t}a'$ is labelled with the probability $P(\bm{t}a'|\bm{t})$ unless $\bm{t} = \emptyset$, in which case the edge probability is $P(a'|a)$. An infinite trajectory sampled on the tree $\mathcal{T}_a^\star$ corresponds to an infinite string sampled from the stochastic source conditioned on the first character of the string being $a$. In this paper we only consider ergodic sources \citep{gray2009probability} for which we can define the ``entropy rate''. The entropy rate fundamentally captures the compressibility of the source, and can be defined as $H_\infty \triangleq \lim_{m \to \infty} \frac{1}{m} H(P)$ where $\bm{s}$ is a length $m$ string drawn from the source. By \Cref{theorem:char-level}, $H_\infty$, captures $\min_{Q} \mathcal{L} (Q)$.

\subsection{A basic result about the optimal achievable cross-entropy loss}

The ratio of $H(P)$ and $m H(\pi)$ can be made arbitrarily large for the switching Markov chains in \Cref{fig:switch-mc} as the switching probabilities $p$ and $q$ approach $0$ or $1$. See \Cref{ex:1} for more details.

\begin{example} \label{ex:1}
Consider the switching Markov process in \Cref{fig:switch-mc} on $\{ 0,1 \}$ with $p = q = 1 - \delta$. For this process, $\lim_{m \to \infty} \frac{1}{m} H(P) = H_{\textsf{Ber}} ( \delta) = \delta \log(1/\delta) + (1-\delta) \log (1/(1-\delta))$, but $\pi = \{ 1/2,1/2\}$ and so $H(\pi) = H_{\textsf{Ber}} (1/2) = \log(2)$. The ratio $\lim_{m \to \infty} \frac{m H(\pi)}{H(P)}$ goes to $\infty$ as $\delta \to 0$.
\end{example}

\subsection{Proof of \Cref{theorem:char-level}}

We first characterize the minimum achievable cross-entropy loss $\mathcal{L}_m (Q)$ without any restrictions on the likelihood model class $\mathcal{Q}$. Choosing $Q (\enc{\bm{s}}) = Q (\bm{s}) = P(\bm{s})$, the true probability of the sequence $\bm{s}$, we have $\mathcal{L}_m (Q \circ \enc{\cdot}) = H (\bm{s})$ where $H(\cdot)$ is the entropy function. It is not that difficult to see that this is also the minimum cross-entropy loss that can be achieved. For any distribution $Q$,
\begin{align}
     \mathcal{L}_m (Q) &= \mathbb{E} [ \log (1 / Q (\bm{s}) ] \\
     &= \mathbb{E} [ \log (P(\bm{s}) / Q (\bm{s}) ] + \mathbb{E} [\log(1/P(\bm{s}))] \\
    &= H ( P ) + D_{\text{KL}} (P \| Q ).
\end{align}
On the other hand, the cross-entropy loss under any unigram model $Q \in \Qgram{1}$ satisfies,
\begin{align}
    \frac{1}{m} \mathcal{L}_m (Q \circ \enc{\cdot}) &\overset{(i)}{=} - \frac{1}{m} \sum_{i=1}^m \mathbb{E} [ \log Q_{\text{tok}} (\bm{t}_i)] - \frac{1}{m} \mathbb{E} [\log Q_{\#} (m)] \\
    &\overset{(ii)}{\ge} - \sum_{a \in \mathcal{A}} \pi (a) \log Q_{\text{tok}} (a) \\
    &\ge H(\pi) \label{eq:toyminimizer}
\end{align}
where in $(i)$, we use the definition of the unigram model $Q$, and in $(ii)$, $\pi$ is the stationary distribution over characters induced by the stochastic source, and the ergodicity of the source is used. The last equation lower bounds $H(X,Y) \ge H(X)$.

\subsection{Maximum likelihood unigram model}

A number of our results (\Cref{theorem:generalization,theorem:LZW} to name a few) are related to bounding $\min_{Q \in \Qgram{1}} \mathcal{L} (Q \circ \enc{\cdot})$ for some tokenizer $\mathcal{T}$. In this section we introduce the maximum likelihood unigram model which captures the optimizer over $Q$ for any given tokenizer.

For the character level tokenizer, an examination of \Cref{theorem:char-level} shows that the optimal unigram likelihood model associates probability $Q_{\text{tok}} (a) = \pi(a)$, i.e. the limiting fraction of times the character $a$ is observed in the sequence. More generally, for a non-trivial tokenizer, the corresponding optimal unigram model $Q_{\text{tok}}^\star (\bm{t})$ ends up being the limiting expected fraction of times $\bm{t}$ is observed in an encoding of a sequence. This is the maximum likelihood unigram model, which we formally define below. The unigram MLE likelihood model associates probability,
\begin{align}
    \PMLEuni (\bm{t}) &\gets \lim_{m \to \infty} \mathbb{E} \left[ \frac{n_{\bm{t}}}{\sumnt} \right] \label{eq:PMLE-def}
\end{align}
to each token, where $n_{\bm{t}}$ is the random variable capturing the number of occurrences of the token $\bm{t}$ in the encoding of the length-$m$ string $\bm{s}$. Restricting the class of likelihood models to the unigram models, $\Qgram{1}$, $\PMLEuni$ captures the model which minimizes \cref{eq:objective}.

The unigram MLE model cannot be computed without an infinite amount of data, but can be approximated well with a finite amount of data, which forms the basis for \Cref{theorem:LM}. For certain encoding algorithms, we can show that the quantity $n_{\bm{t}}/\sumnt$ asymptotically converges to its expectation (\Cref{lemma:limit-exists}). This is the reason the unigram model in \cref{eq:PMLE-def} is referred to as a ``maximum likelihood'' model, since $\lim_{m \to \infty} n_{\bm{t}}/\sumnt$ is the limit as $|\bm{s}| = m \to \infty$ of the solution to the following likelihood maximization problem: given a sequence $\bm{s}$, find the distribution over tokens, $Q$, which maximizes
\begin{align}
    \prod_{\bm{t} \in \enc{\bm{s}}} Q (\bm{t}) \equiv \prod_{\bm{t} \in \Dict} \left( Q (\bm{t}) \right)^{n_{\bm{t}}}.
\end{align}
As discussed previously, the unigram MLE model over tokens in \cref{eq:PMLE-def} induces a joint distribution over sequences of tokens by looking at the product of the marginal probabilities of the composed tokens; in particular,
\begin{align} \label{eq:unigram}
    \PMLEuni (\bm{t}_1,\cdots,\bm{t}_j) = \PMLEuni(j) \prod_{i=1}^j \PMLEuni (\bm{t}_i),
\end{align}
where $\PMLEuni(j)$ is a distribution on the total number of tokens generated and is instantiated as $\textrm{Unif} ([m])$.

\begin{remark}
Note that the unigram MLE model specifies a distribution over tokens which is a function of the underlying encoding algorithm, $\enc{\cdot}$. Different encoders result in different population level distributions over tokens, and consequently different unigram MLE models.
\end{remark}

\begin{definition}[greedy encoder] \label{def:encgre}
Given a dictionary $\Dict$, the greedy encoder $\encgre{\bm{s}}$ encodes a string $\bm{s}$ into tokens by greedily matching from left to right, the largest substring that exists as a token in $\Dict$. This substring is then removed and the process iterated on the remainder of $\bm{s}$. The greedy decoder $\decgre{\cdot}$ is a lookup table - a sequence of tokens is decoded by replacing each occurrence of a token by the corresponding substring it maps to in $\Dict$.
\end{definition}

\begin{lemma} \label{lemma:limit-exists}
$\lim_{m \to \infty} \frac{n_{\bm{t}}}{\sumntprime} \overset{\text{a.s.}}{=} \lim_{m \to \infty} \mathbb{E} \left[ \frac{n_{\bm{t}}}{\sumntprime} \right]$ for any tokenizer having a finite vocabulary and finitely long tokens, using the greedy encoder.
\end{lemma}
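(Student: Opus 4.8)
The plan is to exhibit the sequence of tokens produced by greedily encoding the infinite source string as a deterministic function of a finite‑state Markov chain, and then invoke the ergodic theorem for Markov chains. Write $L$ for the maximum length of a token in $\Dict$ (finite by hypothesis). Recall that the tokenizer is assumed consistent, which together with the greedy encoder forces $\mathcal{A}\subseteq\Dict$; in particular $\encgre{\cdot}$ is well‑defined on every finite or infinite string. Let $(X_1,X_2,\dots)$ be the source (an ergodic Markov chain, as in \Cref{ass:ss}), and work with the cross‑entropy normalization already fixed in \Cref{eq:objective-normalized}.

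\textbf{Step 1 (regeneration at cut points).} Let $0=c_0<c_1<c_2<\cdots$ be the token boundaries produced by running $\encgre{\cdot}$ on $X_{1:\infty}$, so the $i$-th emitted token is $T_i=X_{c_{i-1}+1:c_i}$ with $1\le c_i-c_{i-1}\le L$. Define the look‑ahead window $S_i=(X_{c_i+1},\dots,X_{c_i+L})$. Because every token has length at most $L$, $T_{i+1}$ is a deterministic function of $S_i$ — the longest element of $\Dict$ that is a prefix of the string $S_i$ — and hence so is $|T_{i+1}|=c_{i+1}-c_i$. One checks that $c_i$ is a stopping time for the filtration $\bigl(\sigma(X_1,\dots,X_{j+L})\bigr)_j$, so by the strong Markov property the conditional law of $(X_{c_i+L+1},X_{c_i+L+2},\dots)$ given $(S_0,\dots,S_i)$ depends only on the last coordinate of $S_i$. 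Since $S_{i+1}$ is a deterministic function of $S_i$ together with $X_{c_i+L+1},\dots,X_{c_i+2L}$, it follows that $(S_i)_{i\ge 0}$ is a Markov chain on the finite state space $\mathcal{A}^L$, and $T_i=f(S_{i-1})$ for a fixed map $f$.

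\textbf{Step 2 (ergodic theorem) and Step 3 (transfer).} Restricted to the states reachable from $S_0$, the chain $(S_i)$ is irreducible and positive recurrent — irreducibility because $\delta>0$ makes every continuation of the source positive‑probability, and positive recurrence automatically on a finite state space — so it has a unique stationary law $\nu$, and the ergodic theorem gives, for every $\bm{t}\in\Dict$,
\[
\frac1N\sum_{i=1}^N \indc{T_i=\bm{t}}\ \xrightarrow[N\to\infty]{\text{a.s.}}\ \mu(\bm{t}):=\sum_{s:\,f(s)=\bm{t}}\nu(s).
\]
Now let $N_m$ be the number of complete tokens $\encgre{\cdot}$ emits on $\bm{s}_{1:m}$. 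Each token covers between $1$ and $L$ characters, so $m/L\le N_m\le m$ and $N_m\to\infty$; moreover the encoding of $\bm{s}_{1:m}$ differs from the length‑$N_m$ prefix of the encoding of $X_{1:\infty}$ only in a bounded number of tokens near position $m$ (the usual boundary effect), so $n_{\bm{t}}=\#\{i\le N_m:T_i=\bm{t}\}+O(1)$ and $\sumntprime=N_m+O(1)$. Applying the display along the divergent index $N_m$ gives $n_{\bm{t}}/\sumntprime\to\mu(\bm{t})$ almost surely. Finally $n_{\bm{t}}/\sumntprime\in[0,1]$, so bounded convergence gives $\mathbb{E}[n_{\bm{t}}/\sumntprime]\to\mu(\bm{t})$ as well; the two limits coincide, which is the assertion of the lemma (and identifies the common value as $\PMLEuni(\bm{t})$).

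The crux is Step 1: making precise the intuition that the greedy encoder ``starts fresh'' after each emitted token. Greedy encoding is not a local function of the string — whether a position is a boundary depends on the whole prefix, and deciding a token peeks a bounded distance into the future — so the reduction to the Markov chain $(S_i)$ genuinely needs the stopping‑time / strong‑Markov argument above rather than a one‑line appeal to a renewal property. Steps 2 and 3 are then routine: the classical ergodic theorem for finite Markov chains, plus bookkeeping about boundary tokens and bounded convergence. (For a general stationary ergodic but non‑Markov source one would instead realize $(T_i)$ through a first‑return‑type induced transformation and invoke the ergodic theorem there; the sources in this paper are Markov, so the argument above suffices.)
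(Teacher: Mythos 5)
Your proof is correct and follows the same strategy as the paper's: realize the greedy token stream as a deterministic function of a finite-state Markov chain and invoke an ergodic/Cesàro-average theorem, the only structural difference being that you take the $L$-character look-ahead window at each token boundary as the Markov state, whereas the paper blocks $\ell_{\max}$ consecutive tokens into a state (your version is arguably cleaner, since it avoids the paper's conditioning on $\bm{t}_j\cdots\bm{t}_i\bm{t}\notin\Dict$). One small caveat: ``irreducible on the states reachable from $S_0$'' is not automatic, since states reachable from $S_0$ need not intercommunicate; what you actually need --- and what $\delta>0$ does give, e.g.\ because the constant window $a^L$ is reachable with positive probability from every state, so there is a unique recurrent class --- is that the chain is unichain, which suffices for the ergodic theorem and matches the (equally informal) level of rigor in the paper's own appeal to the Krylov--Bogolyubov argument.
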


\begin{proof}
This result is essentially true because under the greedy encoder, the tokens in an encoding of a fresh string $\bm{t}$ may be generated by an $r^{th}$-order Markov process for some $r$. For such processes, the Cesàro average of the state distributions converges to a stationary distribution of the process (i.e., the Krylov–Bogolyubov argument).

Tokens are generated as follows. Suppose the previous tokens generated were $\bm{t}_1,\bm{t}_2,\cdots,\bm{t}_i$. The next token $\bm{t}_{i+1}$ is sampled by drawing an infinite trajectory from $\mathcal{T}^\star_a$ for $a \sim P (\cdot| \bm{t}_i)$ and returning the longest prefix $\bm{t}$ of this trajectory which is a token in $\Dict$, conditional on satisfying the conditions, $\bm{t}_j\bm{t}_{j+1}\cdots\bm{t}_i \bm{t} \not\in \Dict$ for all $j \in \{ 1,2,\cdots,i\}$. This process is repeated sequentially to generate all the tokens.

Suppose the length of the longest token in the dictionary is $\ell_{\max}$. Then, the distribution from a which a token is sampled depends on at most the previous $\ell_{\max}$ tokens. The reason for this is that the dependency of the $(i+1)^{th}$ token, $\bm{t}_{i+1}$, on the previously sampled tokens emerges in the constraint $\bm{t}_j \bm{t}_{j+1} \cdots \bm{t}_i \bm{t}_{i+1} \not\in \Dict$, satisfied by any candidate $\bm{t}_{i+1}$. Since each token is of length at least one, this condition is vacuously satisfied if $j < i - \ell_{\max}$.

With this view, the evolution of the state, defined as $\textsf{state}_r = (\bm{t}_{r \ell_{\max}},\bm{t}_{r \ell_{\max}-1},\cdots,\bm{t}_{(r-1)\ell_{\max}})$ evolves in a Markovian fashion. By the Krylov–Bogolyubov argument (cf. Proposition 4.2 in \citet{YCC}),
the time averaged visitation frequencies of a Markov chain coordinate-wise asymptotically converges to its expectation, almost surely. This expectation exists by Theorems 8.5 and 8.22 of \cite{eisner2015operator} which shows that for a matrix $A$ such that $\sup_{t \in \mathbb{N}} \| A^t \|_{\text{op}} < \infty$ the limit $\lim_{t \to \infty} \frac{1}{t} \sum_{i=1}^t A^i$ exists. For the finite-state Markov transition $A$ which captures the token generation process, condition $\sup_{t \in \mathbb{N}} \| A^t \|_{\text{op}} \le |\Dict|^{\ell_{\max}} < \infty$. This means that the limit of the time averaged state distribution exists. Moreover, for any initial distribution $\pi_0$ over tokens, $\pi = \lim_{t \to \infty} \frac{1}{t} \sum_{i=1}^t \pi_0 A^i$ satisfies the condition $\pi A = \pi$, implying that the limiting time-averaged state distribution is a stationary distribution of $A$. Since the limiting time-averaged measure on the state $\textsf{state}_r = (\bm{t}_{r \ell_{\max}},\cdots,\bm{t}_{r \ell_{\max}-1},\cdots,\bm{t}_{(r-1)\ell_{\max}})$ exists, this implies that the limiting time-averaged measure of $\bm{t}_{r \ell_{\max} - r'}$ for each $r' \in \{ 0,1,\cdots, \ell_{\max} \}$ exists. By taking the uniform average over $r'$ and $r$, the limiting time-averaged measure of $\bm{t}_i$ over $i \in \mathbb{N}$ exists.
\end{proof}

\subsection{Proof of \Cref{theorem:generalization}}

Consider a string $\bm{s}$ of length $m \to \infty$ which is encoded into a sequence of tokens $( \bm{t}_i : i \in [|\encgre{\bm{s}}|] )$. By the Asymptotic Equipartition Property (AEP) for ergodic sources, i.e. the Shannon–McMillan–Breiman theorem,
\begin{align} \label{eq:AEP}
    \Pr \left( \lim_{m \to \infty} -\frac{1}{m} \log P (\bm{s}) = H_\infty \right) = 1.
\end{align}
Here $\lim_{m \to \infty} \frac{H(P)}{m}$ also happens to be the entropy rate of the source. We use this property to bound the length of the greedy encoding, $|\encgre{\bm{s}}|$. Indeed, the probability of $\bm{s}$ may be decomposed as,
\begin{align}
     P (\bm{s}) = P (\bm{t}_1) \prod_{i=2}^{|\encgre{\bm{s}}|} P \big( \bm{t}_i \big| \bm{t}_{i-1}) \big) &\le \prod_{i=1}^{|\encgre{\bm{s}}|} \max_{a \in \mathcal{A}} P \big( \bm{t}_i \big| a \big).
\end{align}
Noting that $\delta \min_a P(\bm{t}|a) \ge \max_a P(\bm{t}|a)$, up to a $\delta$ factor we may replace the max over $a$ by an expectation over $a \sim \pi$ where $\pi$ is the stationary distribution of the stochastic source. In particular,
\begin{align}
     P (\bm{s}) \le \prod_{i=1}^{|\encgre{\bm{s}}|} P (\bm{t}_i) / \delta.
\end{align}
By invoking the AEP, \cref{eq:AEP},
\begin{align}
    \lim_{m \to \infty} \frac{1}{m} \sum\nolimits_{i = 1}^{|\encgre{\bm{s}}|} -\log \left( P (\bm{t}_i)) - \log(1/\delta) \right) \overset{\text{a.s.}}{\le} H_\infty
\end{align}
Recall that the greedy encoder satisfies \Cref{lemma:limit-exists} and for any $\bm{t} \in \Dict$, $\lim_{m \to \infty} \frac{n_{\bm{t}}}{|\encgre{\mathbf{s}}|} \overset{\text{a.s.}}{=} \PMLEuni (\bm{t})$. Furthermore, note that for any token $\bm{t} \in \Dict$, $P (\bm{t}) > \delta^{|\bm{t}|} > 0$, and $|\encgre{\bm{s}}| \le m$ surely. By almost sure convergence,
\begin{align}
    &\lim_{m \to \infty} \frac{|\encgre{\bm{s}}|}{m} \sum\nolimits_{\bm{t} \in \Dict} -\frac{n_{\bm{t}}}{|\encgre{\bm{s}}|} \left( \log \left( P (\bm{t}) - \log(1/\delta) \right) \right) \\
    &\qquad\overset{\text{a.s.}}{=} \lim_{m \to \infty} \frac{|\encgre{\bm{s}}|}{m} \left( H ( \PMLEuni, P) - \log(1/\delta) \right)
\end{align}
Furthermore, utilizing the assumption that $\varepsilon H(\PMLEuni,P) \ge \log(1/\delta)$ satisfied by the tokenizer,
\begin{align} \label{eq:klogn}
    \lim_{m \to \infty} \frac{ (1 - \varepsilon) |\encgre{\mathbf{s}}| \left( H(\PMLEuni, P) \right)}{m} \overset{\text{a.s.}}{\le} H_\infty.
\end{align}
Now we are ready to bound the expected cross-entropy loss of the tokenizer. Define the unigram model $P_\pi (\bm{t}_1, \bm{t}_2,\cdots,\bm{t}_j) = P_{\text{unif}} (j) \prod_{i=1}^j P (\bm{t}_i)$ where $P_{\text{unif}}$ is the uniform measure over $[m]$. Note that we have the inequality $\min_{Q \in \Qgram{1}} \lim_{m \to \infty} \frac{1}{m} \mathcal{L}_m (Q \circ \encgre{\cdot}) \le \lim_{m \to \infty} \frac{1}{m} \mathcal{L}_m (P_\pi \circ \encgre{\cdot})$ and therefore, it suffices to upper bound the RHS. In particular,
\begin{align}
    \mathcal{L}_m (P_\pi \circ \encgre{\cdot})
    &= - \mathbb{E} [ \log P_{\text{unif}} (|\encgre{\bm{s}}|) ] - \mathbb{E} \left[ \sum\nolimits_{\bm{t} \in \encgre{\bm{s}}} \log \left( P (\bm{t}) \right) \right] \\
    &\le \log (m) - \mathbb{E} \left[ \sum\nolimits_{\bm{t} \in \encgre{\bm{s}}} \log \left( P (\bm{t}) \right) \right] \label{eq:fornow2ndlast}
\end{align}
where the last inequality uses the fact that $P_{\text{unif}} (|\encgre{\bm{s}}|) = 1/m$. Note that as $m \to \infty$, by assumption on the tokenizer, the fraction of times the token $\bm{t}$ appears in the encoding of $\bm{s}$ converges almost surely converges to $\PMLEuni (\bm{t})$. Since $|\encgre{s}| \le m$ surely and $P (\bm{t}) > \delta^{|\bm{t}|} > 0$, by an application of the Dominated Convergence Theorem,
\begin{align}
    -\lim_{m \to \infty} \frac{1}{m} \mathbb{E} \left[ \sum\nolimits_{\bm{t} \in \encgre{\bm{s}}} \log \left( P (\bm{t}) \right) \right]
    &= -\lim_{m \to \infty} \frac{1}{m} \mathbb{E} \left[ |\encgre{\bm{s}}| \cdot \sum\nolimits_{\bm{t} \in \Dict} \PMLEuni (\bm{t}) \log \left( P (\bm{t}) \right) \right] \\
    &= \lim_{m \to \infty} \frac{1}{m} \mathbb{E} \left[ |\encgre{\bm{s}}| H(\PMLEuni,P) \right] \label{eq:fornowlast}
\end{align}
Combining \cref{eq:fornow2ndlast} with \cref{eq:fornowlast} and setting $\lim_{m \to \infty} \log(m)/m = 0$, and invoking \cref{eq:klogn},
\begin{align}
    \min_{Q \in \Qgram{1}} \lim_{m \to \infty} \frac{1}{m} \mathcal{L}_m ( \PMLEuni \circ \encgre{\cdot}) &= \lim_{m \to \infty} \frac{1}{m} \mathbb{E} \left[ |\encgre{\bm{s}}| H (\PMLEuni,P) \right] \label{eq:LD-PMLEuni}\\
    &\le \frac{H_\infty}{1-\varepsilon}. \label{eq:last}
\end{align}
By \Cref{theorem:char-level}, we have that $\min_{Q} \lim_{m\to\infty} \frac{1}{m} \mathcal{L}_m (Q \circ \encgre{\cdot}) = \lim_{m \to \infty} \frac{H(P)}{m} = H_\infty$, which uses the fact that the source is ergodic. Combining with \cref{eq:last} completes the proof.

\subsection{Heavy-hitter dictionaries and a proof of \Cref{theorem:LZW}}
\label{app:proofLZW}

In this section we prove \Cref{theorem:LZW} and introduce the notion of a heavy-hitting dictionary. At a high level, these dictionaries contain all the substrings which have reasonably high probability of being observed many times in a dataset of size $n = \widetilde{\Omega}_\delta (d)$. We first show in \Cref{lemma:HH} that heavy hitting dictionaries generalize well in the sense of having $H(\PMLEuni,P)$ being large (in conjunction with \Cref{theorem:generalization} this implies an upper bound on the cross-entropy loss of the best unigram model). Next, we will prove that the LZW algorithm (\Cref{def:LZW}) results in a heavy hitting dictionary with high probability.

\begin{figure}[h]
    \centering
    \includegraphics[scale=0.7,trim={1cm 0 0 0},clip]{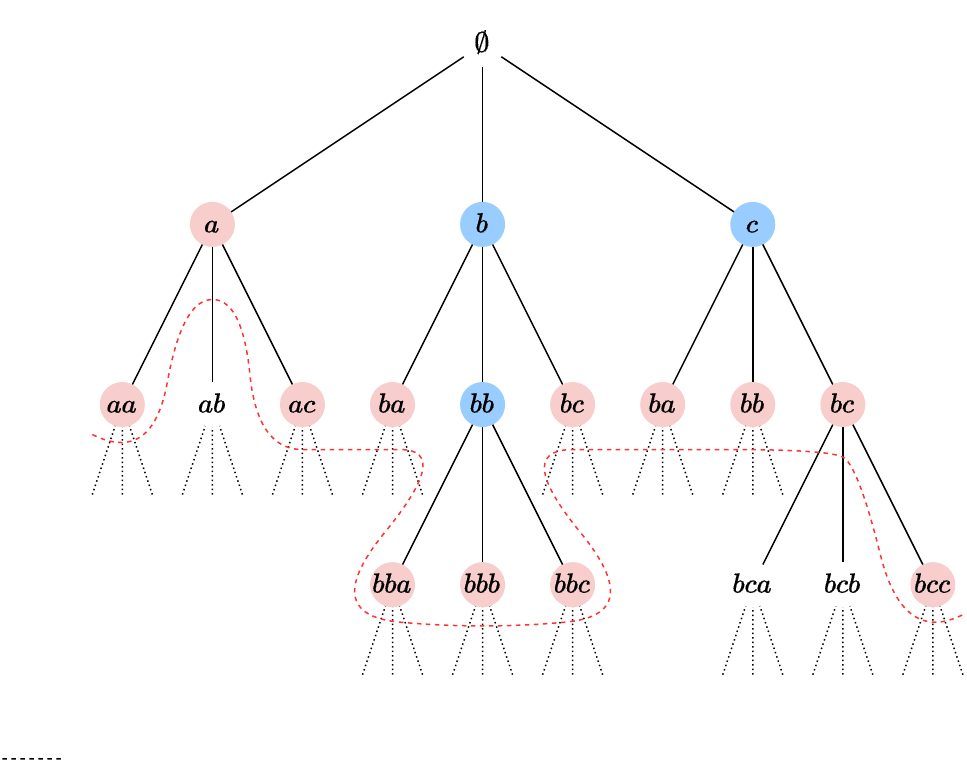}
    \caption{The circled nodes indicates substrings which are tokens in $\Dict$. Red nodes indicate the set of ``maximal tokens'', which are the set of tokens which the greedy encoder assigns, leaving out those which can only be assigned as the last token of some string. Tokens like ``$b$'' are never assigned by the greedy encoder (save as the last token of the encoding of a string) since any sufficiently long trajectory starting with $b$ must have a longer prefix which is also a token, namely, one of $ba$, $bc$, $bba$, $bbb$ or $bbc$. The vertices of the tree which are assigned by the greedy encoder as tokens (together with all their prefixes) forms a cut of the tree, which marks the dotted red line. The heavy hitting property asserts that this cut is uniformly far away from the root node $\emptyset$, and that every vertex $\bm{s}$ marked red has $P(\bm{s}) \le 1/d^\beta$.}
    \label{fig:LZW}
\end{figure}

\begin{definition}[$\beta$-heavy-hitting dictionary] \label{def:heavyhitter}
A token $\bm{t}$ of a dictionary is said to be maximal if there exists an arbitrary substring containing $\bm{t}$ as a strict prefix, and in addition, $\bm{t}$ is also the largest prefix of the substring which is a token. A dictionary $\Dict$ is said to be $\beta$-heavy hitting if the set of maximal tokens is a subset of $\{ \bm{s}' : \max_{a \in \mathcal{A}} P(\bm{s}'|a) \le 1/d^{\beta} \}$.
\end{definition}

\noindent A pictorial depiction of the heavy hitting property is illustrated in \Cref{fig:LZW}.

\begin{lemma} \label{lemma:HH}
For a $\beta$-heavy-hitting dictionary, with the greedy encoder, $H(\PMLEuni,P) \ge \beta \log(d)$.
\end{lemma}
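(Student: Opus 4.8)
The plan is to exploit the structure of the greedy encoder: every token it emits, with the sole exception of the final token of an encoding, is a \emph{maximal} token in the sense of \Cref{def:heavyhitter}. First I would establish this combinatorial fact. Suppose that while parsing a string $\bm{s}$, the greedy encoder emits a token $\bm{t}$ at a position that is not the last token. Then the suffix of $\bm{s}$ remaining at that position equals $\bm{t}$ followed by at least one further character, so it is a substring having $\bm{t}$ as a strict prefix; by the greedy rule, $\bm{t}$ is the longest prefix of this substring that is a token in $\Dict$. By \Cref{def:heavyhitter}, $\bm{t}$ is therefore maximal. Conversely, a non-maximal token can be emitted only as the last token of an encoding (otherwise the argument just given would make it maximal), hence it appears at most once in $\encgre{\bm{s}}$ for any $\bm{s}$.

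Next I would transfer this to the limiting distribution $\PMLEuni$. Since every token has length at most $\ell_{\max} < \infty$, the encoding of a length-$m$ string consists of at least $m/\ell_{\max}$ tokens, so $\sum_{\bm{t}'} n_{\bm{t}'} \to \infty$. For any non-maximal token $\bm{t}$ we have $n_{\bm{t}} \le 1$ surely, so $\mathbb{E}[n_{\bm{t}}/\sum_{\bm{t}'} n_{\bm{t}'}] \to 0$, and hence $\PMLEuni(\bm{t}) = 0$ (the limit defining $\PMLEuni(\bm{t})$ exists by \Cref{lemma:limit-exists}). Consequently $\PMLEuni$ is a probability distribution supported entirely on the set of maximal tokens; that it sums to one follows by exchanging the finite sum over $\bm{t}$ with the limit and the expectation in \cref{eq:PMLE-def}.

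Finally I would lower bound $H(\PMLEuni,P) = \mathbb{E}_{\bm{t}\sim\PMLEuni}[\log(1/P(\bm{t}))]$ termwise. For a maximal token $\bm{t}$, the $\beta$-heavy-hitting property gives $\max_{a \in \mathcal{A}} P(\bm{t}|a) \le 1/d^{\beta}$, and since $P(\bm{t}) = \mathbb{E}_{a\sim\pi}[P(\bm{t}|a)] \le \max_{a} P(\bm{t}|a)$ (as $\pi$ is a probability measure), we obtain $\log(1/P(\bm{t})) \ge \beta\log(d)$. Averaging this bound over $\bm{t}\sim\PMLEuni$, which charges only maximal tokens, yields $H(\PMLEuni,P) \ge \beta\log(d)\sum_{\bm{t}} \PMLEuni(\bm{t}) = \beta\log(d)$.

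The main obstacle is the second step: making rigorous that the single ``boundary'' token per encoding contributes nothing in the limit. This rests on the existence of $\PMLEuni(\bm{t}) = \lim_{m\to\infty}\mathbb{E}[n_{\bm{t}}/\sum_{\bm{t}'} n_{\bm{t}'}]$ guaranteed by \Cref{lemma:limit-exists} and on $\PMLEuni$ being a genuine probability distribution over tokens; once the encoder's emission behavior is pinned down, the first and third steps are essentially definitional.
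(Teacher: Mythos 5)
Your proof is correct and follows essentially the same route as the paper's: the greedy encoder emits only maximal tokens (except possibly the final one), the heavy-hitting property bounds $P(\bm{t}) \le \max_a P(\bm{t}|a) \le 1/d^\beta$ for these, and the cross-entropy bound follows termwise. The only difference is that you rigorously dispose of the single boundary token via $n_{\bm{t}} \le 1$ and $\sum_{\bm{t}'} n_{\bm{t}'} \ge m/\ell_{\max} \to \infty$, a point the paper dismisses with the parenthetical ``save for potentially the last token''; this is a welcome tightening rather than a different argument.
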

\begin{proof}
Note that the greedy encoder assigns tokens only among the set of maximal substrings (save for potentially the last token). If every maximal substring has $\max_{a \in \mathcal{A}} P (\bm{s}|a) \le 1/d^\beta$, by the heavy-hitting property, for any token $\bm{t}$,
\begin{align}
    P (\bm{t}) \le \max_{a \in \mathcal{A}} P (\bm{s}'|a) \le 1/d^\beta.
\end{align}
Therefore,
\begin{align}
    H (\PMLEuni, P) = \mathbb{E}_{\bm{t} \sim \PMLEuni} [ \log (1 / P (\bm{t}))] &\ge \beta \log(d).
\end{align}
\end{proof}

Define $\mathcal{M}_\beta = \{ \bm{t} : \max_{a \in \mathcal{A}} P (\bm{t}|a) \ge \delta / d^\beta \}$. These are the set of ``high-probability'' substrings under the stochastic source. We will show that for $\beta$ bounded away from $1$, with high probability, every substring in $\mathcal{M}_\beta$ is added as a token to the dictionary in a run of the LZW tokenizer (\Cref{def:LZW}). Note that if every substring in $\mathcal{M}_\beta$ is assigned as a token by LZW, then the algorithm must be $\beta$-heavy hitting since there always exists a maximal token on the ``boundary'' of the set $\mathcal{M}_\beta$ which is strictly contained in $\{ \bm{s}' : \max_{a \in \mathcal{A}} P (\bm{s}'|a) \le 1/d^\beta \}$.

\begin{lemma} \label{lemma:underS-length-bound}
Every substring in $\mathcal{M}_\beta$ has length at most $\ell_\star \triangleq \delta^{-1} (\beta \log(d) + \log(1/\delta))$.
\end{lemma}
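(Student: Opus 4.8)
The plan is to get both an upper and a lower bound on $\max_{a}P(\bm{t}|a)$ for $\bm{t}\in\calM_\beta$ and compare them. The lower bound is the definition of $\calM_\beta$: $\max_{a\in\mathcal{A}}P(\bm{t}|a)\ge \delta/d^\beta$. For the upper bound I would first observe that \Cref{ass:ss} forces every transition probability to be bounded \emph{away from $1$}: since $\sum_{a'}P(a'|a)=1$ and there are at least two characters, each with $P(a'|a)\ge\delta$, we get $P(a'|a)\le 1-\delta$ for every pair $a,a'$. Using the factorization $P(\bm{t}|a)=P(\bm{t}_1|a)\prod_{i=2}^{|\bm{t}|}P(\bm{t}_i|\bm{t}_{i-1})$, each of the $|\bm{t}|$ factors is at most $1-\delta$, so $\max_{a}P(\bm{t}|a)\le (1-\delta)^{|\bm{t}|}$.

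Combining the two bounds gives $(1-\delta)^{|\bm{t}|}\ge \delta/d^\beta$ for every $\bm{t}\in\calM_\beta$. Taking logarithms, $|\bm{t}|\log(1/(1-\delta))\le \beta\log(d)+\log(1/\delta)$, hence
\[
|\bm{t}|\le \frac{\beta\log(d)+\log(1/\delta)}{\log(1/(1-\delta))}.
\]
Finally I would use the elementary inequality $\log(1/(1-\delta))=-\log(1-\delta)\ge \delta$ (valid for $\delta\in[0,1)$, e.g.\ from the series $-\log(1-\delta)=\delta+\delta^2/2+\cdots$), which yields $|\bm{t}|\le \delta^{-1}\big(\beta\log(d)+\log(1/\delta)\big)=\ell_\star$, as claimed.

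There is no real obstacle here; the only point worth care is the derivation of the uniform upper bound $P(a'|a)\le 1-\delta$ on transition probabilities from \Cref{ass:ss}, which relies on $|\mathcal{A}|\ge 2$ (automatically true for any nondegenerate source, and certainly for the binary switching process of \Cref{fig:switch-mc}). Everything else is a one-line logarithm manipulation.
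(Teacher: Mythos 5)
Your proposal is correct and matches the paper's own argument essentially line for line: both derive $\max_a P(\bm{t}|a)\le(1-\delta)^{|\bm{t}|}$ from \Cref{ass:ss}, compare with the defining lower bound $\delta/d^\beta$, and conclude via $1-\delta\le e^{-\delta}$ (equivalently, $-\log(1-\delta)\ge\delta$). No gaps.
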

\begin{proof}
Note that $\min_{a,a' \in \mathcal{A}} P(a|a') = \delta$, which implies that the probability of any transition must be bounded away from $1$, i.e., $\max_{a,a' \in \mathcal{A}} P(a|a') \le 1-\delta$. This implies that,
\begin{align} \label{eq:discount}
    \max_{a \in \mathcal{A}} P(\bm{t}|a) \le (1-\delta)^{|\bm{t}|} \le e^{-\delta |\bm{t}|}.
\end{align}
By definition, for any substring $\bm{t} \in \mathcal{M}_\beta$, $\max_{a \in \mathcal{A}} P(\bm{t}|a) \ge \delta / d^{\beta}$. In conjunction with \cref{eq:discount}, this implies the statement of the lemma.
\end{proof}

In the remainder of this section, let $n$ be the size of the dataset on which LZW is run. We show that the number of tokens added to the dictionary by LZW, $d$, is $\widetilde{\Theta}_\delta (n)$. Rather than running the algorithm with early stopping (i.e., ceasing to add new tokens once the budget is hit), instead, we assume that the algorithm runs on a prefix of the dataset of length $d$. The number of tokens added this way cannot exceed $d$.

\begin{lemma} \label{lemma:lmax-bound}
With probability $\ge 1 - d^{- \Omega(\log (d/\delta)/\delta)}$, in a run of the LZW algorithm, no substring $\bm{t}$ added as a token to the dictionary satisfies $|\bm{t}| \ge \ell_{\max} \triangleq 4 \log (d |\mathcal{A}|) / \delta$.
\end{lemma}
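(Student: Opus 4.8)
The plan is to show that any token of length $\ge \ell_{\max}$ would require an unusually long ``fresh'' prefix of the dataset to be carved out by LZW, and then to bound the probability of such a long fresh stretch using the geometric decay of transition probabilities established in \Cref{eq:discount}. Recall how LZW operates: it repeatedly removes the shortest prefix of the remaining dataset that is \emph{not} already a token. Hence, for a substring $\bm{t}$ of length $\ell$ to be added as a token, it must appear as a contiguous block in the dataset, starting at one of the $\le d$ cut-points where a previous token ended, and moreover all of its proper prefixes must already be tokens (this is the LZW prefix-closure property noted after \Cref{def:LZW}). In particular, $\bm{t}$ occurs verbatim somewhere in the length-$d$ prefix of the dataset on which we run the algorithm.

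First I would union-bound over the $\le d$ starting positions (the cut-points), and at each fixed position bound the probability that the next $\ell_{\max}$ characters spell out \emph{any} string $\bm{t}$ that LZW could legally add. The key observation is that conditioned on the character just before the block, the probability of \emph{any} specific continuation of length $\ell$ is at most $(1-\delta)^{\ell} \le e^{-\delta \ell}$ by \Cref{eq:discount}. Summing over all $|\mathcal{A}|^{\ell}$ possible strings $\bm{t}$ of a given length $\ell$, the probability that the block of length $\ell$ starting at a fixed cut-point matches some addable token is at most $|\mathcal{A}|^{\ell} e^{-\delta \ell}$. Setting $\ell = \ell_{\max} = 4\log(d|\mathcal{A}|)/\delta$ makes this at most $|\mathcal{A}|^{\ell_{\max}} e^{-\delta \ell_{\max}} = e^{\ell_{\max}(\log|\mathcal{A}| - \delta)} \le e^{-3\delta\ell_{\max}/4} \le e^{-3\log(d|\mathcal{A}|)} = (d|\mathcal{A}|)^{-3}$, using that $\log|\mathcal{A}| \le \delta/4 \cdot \log(d|\mathcal{A}|)$ once $d$ is large enough (equivalently, after absorbing the $\log|\mathcal{A}|$ term into the constant in the definition of $\ell_{\max}$). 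Multiplying by the $\le d$ cut-points gives a failure probability of at most $d \cdot (d|\mathcal{A}|)^{-3} \le d^{-2}$, which one then sharpens to the claimed $d^{-\Omega(\log(d/\delta)/\delta)}$ by noting that we are free to pick the length threshold $\ell_{\max}$ to be a constant multiple of $\log(d/\delta)/\delta$ and tracking the exponent $\ell_{\max}(\delta - \log|\mathcal{A}|) = \Omega(\log(d/\delta)/\delta)$ in the exponential bound, rather than stopping at the crude $d^{-2}$.

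The one subtlety to handle with care is the conditioning on the character preceding each block: the positions of the cut-points are themselves random and depend on the data, so I would argue via the Markov property that, conditioned on everything up to and including the last character of the previously emitted token, the next $\ell_{\max}$ characters are distributed as a Markov chain started from that character, and the per-string bound $(1-\delta)^{\ell}$ holds uniformly over the conditioning. A clean way to make this rigorous is to fix the (deterministic, data-independent) set of \emph{all} positions $i \in \{1,\dots,d\}$ rather than just cut-points, bound $\Pr(\bm{s}_{i:i+\ell_{\max}-1} = \bm{t}) \le (1-\delta)^{\ell_{\max}}$ for each fixed $i$ and $\bm{t}$ via the Markov property, and then union bound over all $d$ positions and all strings $\bm{t}$ of length exactly $\ell_{\max}$ — since any token of length $\ge \ell_{\max}$ contains a length-$\ell_{\max}$ token as a prefix, it suffices to rule out length-$\ell_{\max}$ tokens. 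I expect this conditioning/measurability bookkeeping to be the only real obstacle; the counting and the geometric-decay estimate are routine once \Cref{eq:discount} is in hand.
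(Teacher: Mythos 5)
There is a genuine gap in the central probability estimate. Your plan is to fix a position, bound the probability that the length-$\ell_{\max}$ block starting there equals a \emph{specific} string by $(1-\delta)^{\ell_{\max}}$, and then union bound over all $|\mathcal{A}|^{\ell_{\max}}$ strings to get $|\mathcal{A}|^{\ell_{\max}}e^{-\delta \ell_{\max}} = e^{\ell_{\max}(\log|\mathcal{A}|-\delta)}$. But this quantity is always at least $1$: since $\sum_{a'}P(a'|a)=1$ and every entry is at least $\delta$, we have $\delta \le 1/|\mathcal{A}| \le 1/2 < \log|\mathcal{A}|$ for any $|\mathcal{A}|\ge 2$, so $\log|\mathcal{A}|-\delta>0$ and the bound grows with $\ell_{\max}$. (This is no accident — the block starting at a fixed position always spells out \emph{some} string of length $\ell_{\max}$, so the union over all strings of the single-occurrence events has probability exactly $1$.) Consequently the step ``$e^{\ell_{\max}(\log|\mathcal{A}|-\delta)}\le e^{-3\delta\ell_{\max}/4}$'' is false: it would require $\log|\mathcal{A}|\le \delta/4$, which never holds. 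No choice of the constant in $\ell_{\max}$ rescues a bound of the form $|\mathcal{A}|^{\ell}(1-\delta)^{\ell}$.

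The missing idea is to use the prefix-closure property \emph{quantitatively}, not just to locate the token. You correctly note that all proper prefixes of $\bm{t}$ must already be tokens, but the probabilistic leverage comes from the fact that each of the $s=|\bm{t}|$ prefixes $\bm{t}_{1:1},\bm{t}_{1:2},\dots,\bm{t}_{1:s}$ must have occurred \emph{disjointly} (at distinct cut-points) in the training prefix. The paper bounds the probability that $\bm{t}$ becomes a token by
\begin{align}
    \binom{d}{s}\prod_{i=1}^{s}\max_{a}P(\bm{t}_{1:i}\mid a) \;\le\; \binom{d}{s}\,(1-\delta)^{s(s-1)/2} \;\le\; e^{s\log d - \delta s(s-1)/2},
\end{align}
where the exponent is now \emph{quadratic} in $s$ because prefix $i$ contributes a factor $(1-\delta)^{i}$. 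This quadratic decay is what absorbs both the $\binom{d}{s}$ factor for the choice of disjoint locations and the $|\mathcal{A}|^{s}$ factor from the union over strings once $s \ge \ell_{\max} = 4\log(d|\mathcal{A}|)/\delta$, yielding $e^{-\Omega(\delta\ell_{\max}^2)} = d^{-\Omega(\log(d/\delta)/\delta)}$. Your conditioning/measurability discussion is fine as far as it goes, but without the product over all prefixes the exponent stays linear in $\ell_{\max}$ and the counting cannot close.
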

\begin{proof}
Consider any $s \in \mathbb{N}$ and any substring $\bm{t}$ of length $s$. In order for $\bm{t}$ to be assigned as a token, each of its prefixes must disjointly appear at least once in the string. Since there are at most $d$ tokens, we can upper bound the probability that $\bm{t}$ is assigned as a token as,
\begin{align}
    P( \bm{t} \text{ is assigned as a token}) &\le \binom{d}{s} \prod_{i=1}^{s} \max_{a \in \mathcal{A}} P(\bm{t}_{1:i}|a) \\
    &\overset{(i)}{\le} \binom{d}{s} (1 - \delta)^{s(s-1)/2} \\
    &\le e^{s \log (d) - \delta s(s-1)/2},
\end{align}
where $(i)$ uses the fact that $\max_{a \in \mathcal{A}} P(\bm{t}_{1:i}) \le \prod_{j=1}^i \max_{a \in \mathcal{A}} P(\bm{t}_j | a) \le (1 - \delta)^i$. By union bounding across the $|\mathcal{A}|^s$ strings of length $s$,
\begin{align}
    P (\text{any length } s \text{ string is assigned as a token}) \le e^{s \log (|\mathcal{A}|) + s \log (d) - \delta s(s-1)/2}.
\end{align}
When $s = 4 \log (d |\mathcal{A}|) / \delta + 1 \triangleq \ell_{\max} + 1$, the RHS is upper bounded by $e^{-\delta \ell_{\max}^2 /4} \le d^{-\Omega(\log(d/\delta)/\delta)}$. With the same small probability, no substring of length $s' > s$ can become a token, since their length-$s$ prefixes are never assigned as tokens.
\end{proof}

\begin{corollary} \label{corr:1}
With probability $\ge 1 - d^{-\Omega_\delta (\log(d))}$, learns a dictionary with at least $d^\star = d / \ell_{\max}$ tokens when run on a training sequence of length $n$ drawn from a stochastic source satisfying \Cref{ass:ss}.
\end{corollary}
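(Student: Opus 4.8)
\emph{Proof proposal for \Cref{corr:1}.} The plan is to deduce this statement directly from \Cref{lemma:lmax-bound} by a short deterministic counting argument; no new probabilistic ingredient is needed. Following the convention fixed just before \Cref{lemma:lmax-bound}, I would run the LZW tokenizer (\Cref{def:LZW}) to completion on the length-$d$ prefix of the training sequence (which only requires $n \ge d$, as holds in the regime of interest). Let $\mathcal{E}$ be the event, supplied by \Cref{lemma:lmax-bound}, that no token added to the dictionary has length $\ge \ell_{\max} = 4\log(d|\mathcal{A}|)/\delta$; that lemma gives $\Pr(\mathcal{E}) \ge 1 - d^{-\Omega(\log(d/\delta)/\delta)}$, which, treating $\delta$ as a constant, is the $1 - d^{-\Omega_\delta(\log d)}$ appearing in the corollary. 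It then remains to show that $\mathcal{E}$ forces the learned dictionary to contain at least $d^\star = d/\ell_{\max}$ tokens, and this part is what I would spell out in detail.

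On $\mathcal{E}$, the greedy LZW parse writes the length-$d$ prefix as a concatenation $\bm{t}^{(1)} \cdots \bm{t}^{(d')} \bm{r}$, where $\bm{t}^{(1)},\dots,\bm{t}^{(d')}$ are the tokens added in order and $\bm{r}$ is the suffix on which the algorithm halts. The key observation I would use is that, by the halting rule of LZW, every prefix of $\bm{r}$ --- including $\bm{r}$ itself --- is already present as a token at termination, so $|\bm{r}|$ is at most the length of the longest token, hence $|\bm{r}| < \ell_{\max}$ on $\mathcal{E}$. Since each $\bm{t}^{(i)}$ also has length $< \ell_{\max}$ on $\mathcal{E}$, summing lengths gives $d = \sum_{i=1}^{d'}|\bm{t}^{(i)}| + |\bm{r}| < (d'+1)\ell_{\max}$, i.e.\ $d' > d/\ell_{\max} - 1$. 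The stray $-1$ is immaterial: one can either run LZW on a prefix of length $d + \ell_{\max}$ instead, or simply note that $d/\ell_{\max} - 1 = \Omega_\delta(d/\log d)$ and redefine $d^\star$ up to a constant. Finally $d' \le d$ holds trivially because each token consumes at least one character, so the process never overshoots its token budget while parsing the prefix.

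I do not expect any real obstacle here: all of the quantitative work --- the tail bound ruling out anomalously long tokens --- is already done in \Cref{lemma:lmax-bound}, whose proof relies only on the uniform transition lower bound $\delta$ from \Cref{ass:ss}. The only steps that call for a touch of care are (i) the bookkeeping observation that the halting suffix $\bm{r}$ is itself a dictionary token and therefore inherits the $\ell_{\max}$ length bound, and (ii) the cosmetic off-by-one in the definition of $d^\star$, handled as above.
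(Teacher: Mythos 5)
Your proposal is correct and matches the paper's (implicit) argument: the corollary is stated as an immediate consequence of \Cref{lemma:lmax-bound}, obtained exactly as you do by bounding every parsed chunk's length by $\ell_{\max}$ on the high-probability event and dividing the length-$d$ prefix by $\ell_{\max}$. Your extra care about the halting suffix and the off-by-one is sound but not something the paper itself spells out.
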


\begin{lemma} \label{lemma:LZW-lb}
For any constant $\beta < 1$, with probability $\ge 1 - d^{-\Omega( \log(d/\delta)/\delta)} - \exp(-\widetilde{\Omega}_{\delta} (d^{1-\beta}))$ over the source dataset, every substring in $\mathcal{M}_\beta$ is added as a token to the dictionary in a run of the LZW algorithm. In other words, with the same probability, the LZW tokenizer results in a $\beta$-heavy hitting dictionary.
\end{lemma}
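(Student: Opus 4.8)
The plan is to reduce the statement to a purely deterministic fact about the LZW parse together with a routine concentration estimate, rather than invoking the general theory of digital search tree profiles. Throughout, condition on the high-probability event $\mathcal{E}$ of \Cref{lemma:lmax-bound} that no phrase emitted by LZW has length $\ge \ell_{\max}$; on $\mathcal{E}$ consecutive phrase boundaries are at most $\ell_{\max}$ apart, so a length-$d$ run has at least $d/\ell_{\max}$ boundaries. First I would record the elementary structure of the target set: $\mathcal{M}_\beta$ is prefix-closed (by the Markov factorization any prefix has weakly larger conditional probability), every $\bm{t}\in\mathcal{M}_\beta$ has $|\bm{t}|\le\ell_\star$ by \Cref{lemma:underS-length-bound}, and since $P(\bm{t})\ge\min_a P(\bm{t}|a)\ge\delta\max_a P(\bm{t}|a)\ge\delta^2/d^\beta$ while $\sum_{|\bm{t}|=\ell}P(\bm{t})=1$, we get $|\mathcal{M}_\beta|\le \ell_\star d^\beta/\delta^2=\widetilde{O}_\delta(d^\beta)$. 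By \Cref{def:heavyhitter} it suffices to show that, with the claimed probability, every $\bm{t}\in\mathcal{M}_\beta$ is added as a token.

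The deterministic heart is the claim: \emph{if a string $\bm{w}$ occurs at $\ge|\bm{w}|$ positions that are phrase boundaries of the LZW parse, then $\bm{w}$ is a token.} Since the LZW dictionary is prefix-closed, the token-prefixes of $\bm{w}$ form a chain $\bm{w}_{1:1},\dots,\bm{w}_{1:h}$ for some non-decreasing "depth" $h$. Suppose $\bm{w}$ is never a token, so $h\le|\bm{w}|-1$ throughout. At a boundary $b$ with $D_{b:b+|\bm{w}|-1}=\bm{w}$, the longest token-prefix of $D_{b:}$ is exactly $\bm{w}_{1:h}$ — a longer token-prefix of length $\le|\bm{w}|$ would contradict maximality of $h$, and one of length $>|\bm{w}|$ would have $\bm{w}$ as a prefix, making $\bm{w}$ a token — so the phrase emitted is $\bm{w}_{1:h}$ followed by the next character $\bm{w}_{h+1}$, i.e.\ $\bm{w}_{1:h+1}$ is added. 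Thus each such boundary-occurrence raises $h$ by at least one, and after at most $|\bm{w}|$ of them $h=|\bm{w}|$, a contradiction. Since all $\bm{t}\in\mathcal{M}_\beta$ have $|\bm{t}|\le\ell_\star$, it remains to show that with the stated probability every $\bm{t}\in\mathcal{M}_\beta$ occurs at $\ge\ell_\star$ phrase boundaries.

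For the probabilistic step I would use the strong Markov property. Fix $\bm{w}\in\mathcal{M}_\beta$ and let $b_1<b_2<\cdots$ be the boundaries; the location of the $k$-th boundary is determined by $D_{1:b_k-1}$, so each $b_k$ is a stopping time, and conditionally on $\mathcal{F}_{b_k-1}$ the block $D_{b_k:b_k+|\bm{w}|-1}$ is a fresh Markov continuation, whence $\Pr[D_{b_k:b_k+|\bm{w}|-1}=\bm{w}\mid \mathcal{F}_{b_k-1}]=P(\bm{w}\mid D_{b_k-1})\ge\min_a P(\bm{w}|a)\ge\delta^2/d^\beta=:p_0$. On $\mathcal{E}$ the first $N:=\lfloor d/\ell_{\max}\rfloor$ boundaries lie in $[1,d]$, so the number of boundary-occurrences of $\bm{w}$ is at least $\sum_{k=1}^N \mathbf{1}\{D_{b_k:b_k+|\bm{w}|-1}=\bm{w}\}$, a sum of $N$ indicators each conditionally at least $p_0$. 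A standard supermartingale lower-tail bound gives $\Pr\big[\sum_{k\le N}(\cdot)<\tfrac12 Np_0\big]\le e^{-\Omega(Np_0)}$ with $Np_0=\delta^2 d^{1-\beta}/\ell_{\max}=\widetilde{\Omega}_\delta(d^{1-\beta})\ge\ell_\star$ for large $d$. Union-bounding this event over the $\widetilde{O}_\delta(d^\beta)$ strings of $\mathcal{M}_\beta$ costs a factor $e^{\widetilde{O}_\delta(1)+\beta\log d}$, absorbed into $e^{-\widetilde{\Omega}_\delta(d^{1-\beta})}$; adding the failure probability $d^{-\Omega(\log(d/\delta)/\delta)}$ of $\mathcal{E}$ yields the bound in the statement, and by \Cref{def:heavyhitter} the resulting dictionary is $\beta$-heavy-hitting.

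The main obstacle is pinning down the deterministic lemma: the assertion that $\ge|\bm{w}|$ boundary-occurrences force $\bm{w}$ into the dictionary is what collapses the whole problem, and it depends essentially on prefix-closedness of the LZW dictionary and the ``shortest new prefix'' rule. The secondary delicacy is that phrase boundaries form a data-dependent random set, so the concentration must be run against the natural filtration (treating each boundary as a stopping time) rather than for a fixed index set, and one must ensure the good event $\mathcal{E}$ only supplies a lower bound on the number of boundaries without entering the martingale estimate itself. An alternative, closer to the paper's narrative, is to argue directly that the LZW dictionary tree — a digital search tree built from the source — contains every node of probability $\gg\mathrm{polylog}(d)/d$ with high probability, via the profile estimates of \citep{jacquet2001average,drmota2011expected,hun2014typical,drmota2021node}; the elementary route above sidesteps importing those results.
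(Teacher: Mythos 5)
Your proposal is correct and is essentially the paper's own argument: the deterministic claim that $|\bm{w}|$ phrase-boundary occurrences of $\bm{w}$ force $\bm{w}$ into the dictionary (each occurrence extending the chain of token-prefixes by one, via prefix-closedness and the shortest-new-prefix rule) is exactly the paper's key observation, the lower bound of $\widetilde{\Omega}_\delta(d)$ on the number of phrases comes from the same \Cref{lemma:lmax-bound}, and the proof concludes with the same concentration-plus-union-bound over $\mathcal{M}_\beta$ (the paper union-bounds over all $|\mathcal{A}|^{\ell_\star}$ strings of length $\le \ell_\star$, which is likewise $\mathrm{poly}(d)$). The only packaging difference is that the paper samples $d^\star$ truncated trajectories and uses a binomial tail bound, which incidentally sidesteps the one point where your version needs extra care: the window $D_{b_k:b_k+|\bm{w}|-1}$ can extend past $b_{k+1}$, so consecutive indicators overlap and the supermartingale bound should be run on non-overlapping subsampled boundaries (or via the paper's trajectory coupling).
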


By \Cref{corr:1}, note that with high probability the LZW tokenizer adds at least $d^\star$ tokens to the dictionary when processing a length $d$ training sequence in entirety. In this proof, instead of generating $d$ samples, we sequentially sample $d^\star$ tokens from their joint distribution, and generate a dictionary from these samples. From \Cref{corr:1}, with high probability this results in at most $d$ samples being generated, implying that the dictionary generated by sampling $d^\star$ tokens is a subset of the dictionary generated by a full run of the LZW tokenizer. Here, we use the fact that the LZW tokenizer adds tokens to the dictionary in a left to right fashion, and therefore a subset of the dictionary learnt by the LZW tokenizer can be generated by processing a portion of the dataset.

Next we consider a joint view for generating the dataset from the stochastic source and the dictionary learnt by LZW simultaneously. The stochastic source is sampled as a sequence of tokens. Suppose the last character of the previous token was $a'$. Sample a character $a \sim P(\cdot|a')$ and an infinite trajectory on the tree $\mathcal{T}_a^\star$. Consider the first node visited in this trajectory which does not already exist as a token in the dictionary. The substring corresponding to this node is added as a token in the dictionary. By repeating this process, the dictionary and the source dataset are constructed sequentially and simultaneously. As alluded to before, we truncate this token sampling process to repeat at most $d^\star$ times, which results in a subset of the dictionary output by the LZW algorithm with high probability (\Cref{corr:1}). This is simply a variant of the ``Poissonization'' trick to avoid statistical dependencies across tokens. Denote the set of infinite trajectories generated on the forest $\{ \mathcal{T}^\star_a : a \in \mathcal{A} \}$ as $\{ \textsf{traj}_i : i \in [d^\star] \}$.

With this view of the sampling process, observe that if the substring $\bm{t}$ sampled was a prefix of $\textsf{traj}_i$ at least $|\bm{t}|$ times across different values of $i$, then $\bm{t}$ must be assigned as a token. In particular, in each of these $|\bm{t}|$ trajectories, each of the prefixes of $\bm{t}$ is assigned as a token. With this observation, the event that $\bm{t}$ is not assigned as a token is contained in the event that $\bm{t}$ is visited at most $|\bm{t}|-1$ times across the $d^\star$ trajectories. Observe that,
\begin{align}
    P( \bm{t} \text{ is not assigned as a token}) &\le \sum_{i=0}^{|\bm{t}|-1} \binom{d^\star}{i} \max_{a \in \mathcal{A}} (P(\bm{t}|a))^i (1 - P(\bm{t}|a))^{d^\star-i}.
\end{align}
Since we aim to upper bound this probability across the substrings in $\bm{t} \in \mathcal{M}_\beta$, note that $(i)$ $\max_{a \in \mathcal{A}} P(\bm{t}|a) \ge \delta/ d^{\beta}$, and $(ii)$ tokens in $\mathcal{M}_\beta$ have length at most $\ell_\star = \delta^{-1} (\beta \log(d) + \log(1/\delta))$ (\Cref{lemma:underS-length-bound}), implying there are at most $2 |\mathcal{A}|^{\ell_\star}$ substrings in this set. By union bounding,
\begin{align}
    P( \exists \bm{t} \in \mathcal{M}_\beta \text{ not assigned as a token}) &\le 2 |\mathcal{A}|^{\ell_\star} \sum_{i=0}^{\ell_\star-1} \binom{d^\star}{i} \max_{ x \ge \delta/d^{\beta}} x^i (1 - x)^{d^\star-i}. \label{eq:Sunder-bound0}
\end{align}
\textbf{Case I.} For $i \le \ell_\star$ and $x \ge 1/2$,
\begin{align}
    |\mathcal{A}|^{\ell_\star} \binom{d^\star}{i} x^i (1 - x)^{d^\star-i} &\le |\mathcal{A}|^{\ell_\star} \frac{ (d^\star)^{\ell_\star}}{2^{d^\star/2}} \\
    &\le 2^{{\ell_\star} \log(d^\star |\mathcal{A}|) - d^\star/2} \\
    &\le 2^{- \Omega_{\beta,\delta} (d^\star)}, \label{eq:Sunder-bound1}
\end{align}
where the last inequality uses the fact that $\ell_\star = O_{\beta,\delta}(\log(d))$.\\

\noindent \textbf{Case II.} For $i \le \ell_\star$ and $\delta/d^{\beta} \le x \le 1/2$,
\begin{align}
    |\mathcal{A}|^{\ell_\star} \binom{d^\star}{i} x^i (1 - x)^{d^\star-i} &\le |\mathcal{A}|^{\ell_\star} \binom{d^\star}{i} (1 -x)^{d^\star} \\
    &\le |\mathcal{A}|^{\ell_\star} (d^\star)^{\ell_\star} e^{- d^\star x } \\
    &\le e^{{\ell_\star} \log (|\mathcal{A}|) + {\ell_\star} \log (d^\star)  - d^\star x} \\
    &\le e^{- \Omega (\delta^2 n/d^{\beta} / \log (d/\delta))} \\
    &\le e^{- \Omega (\delta^2 d^{1-\beta} / \log (d/\delta))}, \label{eq:Sunder-bound2}
\end{align}
where the last inequality uses the fact that ${\ell_\star} = O(\log (d))$, $x \ge \delta/d^{\beta}$, $d^\star = \Omega(d \delta/\log(d/\delta))$. By combining \cref{eq:Sunder-bound1} and \cref{eq:Sunder-bound2} with \cref{eq:Sunder-bound0} completes the proof, as long as $\beta$ is a constant bounded away from $1$.

\begin{lemma} \label{lemma:LZW-ub}
Fix a constant $\gamma > 0$. Then, with probability $\ge 1 - d^{-\Omega_{\gamma,\delta} (\log(d))}$, none of the substrings in the set $\mathcal{N}_\gamma = \left\{ \mathbf{s}' : \max_{a \in \mathcal{A}} P(\mathbf{s}'|a) \le \delta / d^{1+\gamma} \right\}$ are assigned as tokens in a run of LZW.
\end{lemma}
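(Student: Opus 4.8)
\emph{Proof plan.} The claim is equivalent to: LZW never allocates a token $\bm{t}$ with $\max_{a}P(\bm{t}\mid a)\le\delta/d^{1+\gamma}$, i.e.\ every token lies in $\mathcal{M}_{1+\gamma}$. First I would reduce to a thin ``boundary shell.'' Since the LZW dictionary is prefix-closed (every prefix of a token is a token, by \Cref{def:LZW}), if some token were in $\mathcal{N}_\gamma$ then its shortest prefix $\bm{u}$ lying in $\mathcal{N}_\gamma$ would also be a token; writing $\bm{u}=\bm{v}c$ with the parent $\bm{v}\notin\mathcal{N}_\gamma$ and using the identity $\max_{a}P(\bm{u}\mid a)=P(c\mid\bm{v}_{|\bm{v}|})\cdot\max_{a}P(\bm{v}\mid a)$ together with $P(c\mid\cdot)\ge\delta$, such a $\bm{u}$ lies in the shell $\mathcal{B}\triangleq\{\bm{u}:\ \delta^{2}/d^{1+\gamma}<\max_{a}P(\bm{u}\mid a)\le\delta/d^{1+\gamma}\}$. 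By \Cref{lemma:lmax-bound} we may further restrict to $|\bm{u}|\le\ell_{\max}$, and the sandwich $\delta^{|\bm{u}|}\le\max_{a}P(\bm{u}\mid a)\le(1-\delta)^{|\bm{u}|}$ forces $|\bm{u}|\ge\ell_{\min}\triangleq\Theta\big((1+\gamma)\log(d)/\log(1/\delta)\big)$. Finally, $\sum_{|\bm{u}'|=\ell}\max_{a}P(\bm{u}'\mid a)\le|\mathcal{A}|$ for every $\ell$ (a union bound over the conditioning character of $\sum_{\bm{u}'}P(\bm{u}'\mid a)=1$), so the shell intersected with $\{|\cdot|\le\ell_{\max}\}$ contains at most $\widetilde{O}_{\delta}(d^{1+\gamma})$ strings. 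It therefore suffices to establish a per-string bound $\Pr[\bm{u}\text{ is a token}]\le d^{-\Omega_{\gamma,\delta}(\log d)}$ uniformly over $\bm{u}\in\mathcal{B}$ and union bound.

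For the per-string bound, the naive estimate, obtained exactly as in the proof of \Cref{lemma:lmax-bound} (each prefix $\bm{u}_{1:j}$ must occur disjointly in the dataset of length $d$), is $\Pr[\bm{u}\text{ is a token}]\le\binom{d}{|\bm{u}|}\prod_{j\le|\bm{u}|}\max_{a}P(\bm{u}_{1:j}\mid a)$. A union bound over $\mathcal{B}$ using this already closes the argument when $\gamma$ is not too small, but for $\gamma$ close to $0$ it is too lossy — the prefix probabilities decay only geometrically per level while the combinatorial factor $\binom{d}{|\bm{u}|}$ grows too quickly — and one instead has to exploit that the profile of the tree of tokens concentrates. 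Passing to the Poissonized joint view of \Cref{lemma:LZW-lb}, the token set is the node set of a Digital Search Tree grown from (essentially independent) source trajectories, and the expected number of occupied nodes $\bm{u}$ with $\max_{a}P(\bm{u}\mid a)\le\delta/d^{1+\gamma}$ is super-polynomially small: for the two-sided bounded Markov kernel of \Cref{ass:ss} (with the relevant entropy-rate quantities pinned between $\Theta(\delta)$ and $\Theta(\log(1/\delta))$), occupied nodes have conditional probability $\gtrsim d^{-1}e^{-O(\sqrt{\log d})}$, which is far above $\delta/d^{1+\gamma}$. I would derive this either by specializing the analytic DST-profile estimates of \citet{jacquet2001average,drmota2011expected} to this kernel, or by a self-contained amortization that charges every ``long descent along a probabilistically light path'' against the global budget of at most $d$ inserted trajectories, forcing the deepest occupied node to stay on the shallow side of $\ell_{\min}$ (up to the $O(\sqrt{\log d\,\log(1/\delta)})$ correction). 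A first-moment bound then gives that, whp, no node of $\mathcal{B}$ is occupied.

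The hard part is precisely this last step — beating the naive union bound over deep/light nodes. Individually each such node is occupied with astronomically small probability, but there are enough of them that this alone is not decisive; what is needed is that the profile of the token tree has no heavy upper tail that reaches into the light region $\mathcal{N}_\gamma$, i.e.\ that LZW does not, even atypically, descend well past the point where substrings become lighter than $\delta/d^{1+\gamma}$. Everything else — the prefix-closure reduction, the $|\mathcal{A}|$-bound controlling the shell size, the Poissonization, and the concluding union bound of $\widetilde{O}_{\delta}(d^{1+\gamma})$ candidates against a $d^{-\Omega_{\gamma,\delta}(\log d)}$ per-node estimate — is routine.
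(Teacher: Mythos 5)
Your setup (prefix-closure reduction to a thin shell just inside $\mathcal{N}_\gamma$, the $|\mathcal{A}|$-per-level bound on the shell size, the length sandwich, and the Poissonized trajectory view) matches the paper's framework, and you correctly diagnose that the naive per-string estimate $\binom{d}{|\bm{u}|}\prod_j \max_a P(\bm{u}_{1:j}\mid a)$ from \Cref{lemma:lmax-bound} is too weak here: shell strings have length only $\Theta_{\gamma,\delta}(\log d)$, far short of the $\gg \log(d)/\delta$ needed for that bound to bite. But at exactly that point your proof stops being a proof. You defer the decisive step to ``DST-profile estimates'' or an unspecified ``amortization that charges every long descent against the global budget,'' and you yourself flag this as ``the hard part.'' That is a genuine gap: the entire content of the lemma is the super-polynomial bound on deep occupancy, and neither of your two suggested routes is carried out or even sketched in enough detail to check.

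The paper closes this gap with an elementary argument that is structurally different from a per-deep-node bound. It introduces an \emph{intermediate} band $S_\gamma=\{\bm{t}:\delta/d^{1+\gamma/2}\le\max_a P(\bm{t}\mid a)\le 1/d^{1+\gamma/2}\}$ and observes that any $\bm{t}\in\mathcal{N}_\gamma$ is at least $\Delta=\frac{\gamma\log d}{2\log(1/\delta)}$ characters longer than its longest prefix $\bm{t}_\gamma\in S_\gamma$. Because LZW must allocate every one of those $\Delta$ intermediate prefixes before it can allocate $\bm{t}$, and each allocation consumes a \emph{disjoint} occurrence of $\bm{t}_\gamma$ as a trajectory prefix, the event ``some string in $\mathcal{N}_\gamma$ becomes a token'' forces some fixed $\bm{t}_\gamma$ with probability $\le d^{-(1+\gamma/2)}$ to appear $\ge\Delta+1$ times among $\le d$ trajectories. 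That single event has probability $\le\binom{d}{\Delta}d^{-(1+\gamma/2)\Delta}\le e^{-\frac{\gamma}{2}\Delta\log d}=d^{-\Omega_{\gamma,\delta}(\log d)}$, and there are only $O(d^{1+\gamma/2}/\delta^2)$ candidates for $\bm{t}_\gamma$ to union bound over. So the ``heavy upper tail of the profile'' you were worried about is ruled out not by profile concentration but by the trivial counting fact that descending $\Delta$ levels past a node costs $\Delta$ separate visits to that node. If you replace your hand-waved amortization with this observation, your argument goes through; as written, it does not.
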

\begin{proof}
Define the following set of substrings,
\begin{align}
    S_\gamma = \left\{ \mathbf{t} : \delta/d^{1 + \gamma/2} \le \max_{a \in \mathcal{A}} P(\mathbf{t}|a) \le 1/ d^{1+\gamma/2} \right\}
\end{align}
Since the width of this band is sufficiently large, by \Cref{ass:ss} every substring $\bm{t}$ such that $\max_{a \in \mathcal{A}} P(\bm{t}|a) \le \delta/d^{1+\gamma/2}$ has at least one prefix which falls in $S_\gamma$, and denote the longest such prefix $\bm{t}_\gamma$. Define $T_\gamma = \{ \bm{t}_\gamma : \bm{t} \in \mathcal{N}_\gamma \}$ as the set of longest prefixes in $S_\gamma$. Intuitively, if we think of the strings in $S_\gamma$ (or $T_\gamma$) as being intermediate in length, the strings in $\mathcal{N}_\gamma$ can be thought of as being particularly long: the value of $\max_{a \in \mathcal{A}} P(\bm{t}|a)$ for any $\bm{t} \in T_\gamma$ and for any $\bm{t} \in \mathcal{N}_\gamma$ are separated by a factor of at least $1/d^{\gamma/2}$. In particular, since the probability of any character is lower bounded by $\delta$, each substring in $\bm{t} \in \mathcal{N}_\gamma$ must be at least $\Delta = \frac{\gamma \log (d)}{2 \log(1/\delta)}$ symbols longer than its corresponding longest prefix in $T_\gamma$, $\bm{t}_\gamma$. An implication of this is that for $\mathbf{t}$ to be assigned as a token, $\bm{t}_\gamma$ must be observed at least $\Delta+1$ times disjointly in $\bm{s}$. However, note that $\bm{t}_\gamma$ already has low marginal probability to begin with ($\ll 1/d$) so the odds of seeing this substring so many times disjointly is very small. Furthermore, note that $T_\gamma$ has at most $d^{1+\gamma/2}/\delta$ substrings, which allows the probability of this event occurring simultaneously across all substrings in $T_\gamma$ to be controlled by union bound. Under this condition, none of the substrings in $\mathcal{N}_\gamma$ are made into tokens.

In order to argue that the dictionary \textit{does not} contain certain tokens, we may argue this property about any superset of the dictionary. In contrast, in \Cref{lemma:LZW-lb}, we construct a subset of the dictionary by running LZW on the concatenation of $d^\star$ tokens sampled from their joint distribution. The superset we consider here is just to sample $d$ tokens from their joint distribution and concatenate them together to result in a string of length $\ge d$, and running LZW on this sequence (which simply would result in these $d$ tokens).
As in \Cref{lemma:LZW-lb}, let $\{ \textsf{traj}_i : i \in [d] \}$ denote the infinite trajectories generated from the Markov chain which are truncated to result in tokens. A sufficient condition for the event that no substring $\bm{t} \in \mathcal{N}_\gamma$ is assigned as a token by LZW is to the event that every substring $\bm{t}' \in T_\gamma$ is observed as a prefix of $\textsf{traj}_i$ for $\Delta$ or fewer choices of $i \in [d]$.
To this end define $\mathcal{E} (\bm{t}')$ as the event that $|i \in [d]: \bm{t}' \text{ is a prefix of } \textsf{traj}_i | \le \Delta$. Then,
\begin{align}
    \Pr ( \mathcal{E} (\bm{t}') ) &\le \binom{n}{\Delta} ( \max_{a \in \mathcal{A}} P ( \bm{t}'|a))^\Delta \\
    &\overset{(i)}{\le} e^{\Delta \log(n)} \left( \frac{1}{d^{1+\gamma/2}} \right)^\Delta \\
    &\le e^{- \frac{\gamma}{2} \Delta \log(d)}, \label{eq:conjunction}
\end{align}
where $(i)$ uses the fact that $\max_{a \in \mathcal{A}} P(\bm{t}'|a) \le 1/d^{1+\gamma/2}$ since the substring $\bm{t}'$ belongs to $T_\gamma$.\\

\noindent Note that the number of substrings in $S_\gamma$ (and by extension, $T_\gamma$) is at most $O_\delta (d^{1+\gamma/2})$. Recall that these substrings satisfy the condition $\max_{a \in \mathcal{A}} P(\bm{t}|a) \ge \delta/d^{1+\gamma/2}$. Observe that,
\begin{align}
    \frac{\delta |S_\gamma|}{d^{1+\gamma/2}} &\le \sum_{\bm{t} \in S_\gamma} \max_{a \in \mathcal{A}} P (\bm{t}|a) \\
    &\le \sum_{\bm{t} \in S_\gamma} \sum_{a \in \mathcal{A}} P (\bm{t}|a) \le |\mathcal{A}| \le \frac{1}{\delta}.
\end{align}
This implies that there are at most $d^{1+\gamma/2}/\delta^2$ substrings in $S_\gamma$. Finally, in conjunction with \cref{eq:conjunction},
\begin{align}
    P ( \exists \ \bm{t}' \in S_\gamma : \mathcal{E} (\bm{t}')) \le \frac{d^{1+\gamma/2}}{\delta^2} e^{-\frac{\gamma}{2} \Delta \log (d)} = d^{-\Omega_{\gamma,\delta} (\log(d))},
\end{align}
which implies that with high probability, no token in $S_\gamma$ is observed as a prefix of $\bm{s}^i$ for more than $\Delta$ choices of the index $i \in [d]$. Under this event, no substring in $\mathcal{N}_\gamma$ is assigned as a token.
\end{proof}

\subsubsection{Proof of \Cref{theorem:LZW}}

Choosing $\beta = 0.99$ in \Cref{lemma:LZW-lb}, with probability $\ge 1 - d^{-\Omega_\delta (\log(d))}$, the LZW tokenizer results in a $0.99$-heavy-hitting dictionary. As a consequence of \Cref{lemma:HH}, this implies that under the same event, 
\begin{align} \label{eq:HPMLEuniPbound}
    H (\PMLEuni,P) \ge 0.99 \log (d).
\end{align}
Finally, combining with \Cref{theorem:generalization} completes the proof.

\section{Analysis of a sequential variant of BPE}

In this section we analyze the sequential variant of BPE introduced in \Cref{alg:BPE-variant}. We prove a rephrased version of \Cref{theorem:BPE} which implies the statement in the main paper. Define $d_0 = \frac{\epsilon d}{2 \log (4 |\mathcal{A}|)}$.

\begin{theorem}[Rephrased \Cref{theorem:BPE}] \label{theorem:BPE-rephrased}
Run \Cref{alg:BPE-variant} on a dataset of $n = \Theta(  d^2 )$ characters to learn a dictionary with at most $d$ tokens. The resulting tokenizer $\mathcal{T}$ satisfies one of the following $3$ conditions,
\begin{enumerate}
    \item Either, $|\Dict| > d_0$, and,
    \begin{align}
        \min_{Q \in \Qgram{1}} \mathcal{L} (Q \circ \enc{\cdot}) \le \frac{H_\infty}{1 - \varepsilon}.
    \end{align}
    Here, $\varepsilon = O \left( \frac{\log^3(1/\delta) \log (1/\epsilon)}{\epsilon \delta^9 \log(d)} \right)$.
    \item $\Pr(|\Dict| < d_0) = e^{- \Omega (\epsilon^2 d / \log^2(1/\delta))}$, or,
    \item Conditional on $|\Dict| < d_0$, with probability $\ge 1 - e^{- \Omega (\epsilon^2 d / \log^2(1/\delta))}$,
    \begin{align}
        \min_{Q \in \Qgram{1}} \mathcal{L} (Q \circ \enc{\cdot}) \le  \left(1 - \frac{2d_0}{d} \right) \left( 2 H_\infty + O \left(\frac{1}{\log(d)} \right) \right) + \frac{2 d_0}{d} \log (4 |\mathcal{A}|).
    \end{align}
    With the choice of $d_0 = \epsilon d / 2 \log (4|\mathcal{A}|)$ we get the statement of \Cref{theorem:BPE}.
\end{enumerate}
\end{theorem}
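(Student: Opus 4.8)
The plan is to run a dichotomy on the size of the dictionary that \Cref{alg:BPE-variant} produces, relative to $d_0 = \epsilon d/(2\log(4|\mathcal{A}|))$: when $|\Dict|>d_0$ the tokenizer carries the greedy encoder and we route through \Cref{theorem:generalization}; when $|\Dict|<d_0$ it carries $\encseqsplit{\cdot}$ and we argue chunk by chunk. All the probability statements come from the fact that $\textsf{text}_1,\dots,\textsf{text}_d$ are independent length-$\Theta(d)$ samples from the source, so Chernoff-type concentration across the $d$ chunks produces the $e^{-\Omega(\epsilon^2 d/\log^2(1/\delta))}$ slack; the alternative that the event $|\Dict|<d_0$ is itself exponentially rare is alternative (2).

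\emph{The case $|\Dict| > d_0$ (alternative 1).} Since the greedy encoder is used, by \Cref{theorem:generalization} it suffices to prove $H(\PMLEuni,P)\ge\tfrac1\varepsilon\log(1/\delta)$ for the stated $\varepsilon$. Writing $\rho=\lim_{m\to\infty}\Ex[\,|\encgre{\bm{s}}|\,]/m$ for the asymptotic number of tokens per character, one has $\Ex_{\bm{t}\sim\PMLEuni}[|\bm{t}|]=1/\rho$, and $P(\bm{t})\le(1-\delta)^{|\bm{t}|}$ gives $H(\PMLEuni,P)\ge\delta/\rho$; so the target reduces to $\rho=O\!\left(\tfrac{\log^2(1/\delta)\log(1/\epsilon)}{\epsilon\,\delta^{8}\log d}\right)$, i.e.\ to showing the greedy encoder typically emits \emph{long} tokens on a fresh string. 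At least $d_0/2$ merges have been made, each triggered by a digram appearing $\ge\log d$ times in a length-$\Theta(d)$ sample, and the plan is to promote this into: with the stated probability over the training data, the learned dictionary covers every "long" high-probability substring, so that — exactly as in the heavy-hitter picture of \Cref{lemma:HH} — every token the greedy encoder places, save on a vanishing fraction of positions, lies on the boundary of that set and hence has $P(\bm{t})\le d^{-\Omega_{\delta,\epsilon}(1)}$. The catch is that a \textsf{BPE} dictionary can harbour a short \emph{maximal} token (a prefix of no other token), precisely the behaviour of the token $1$ on the pathological dataset \cref{eq:toydataset}, and such a token would be emitted at a constant fraction of positions and collapse $H(\PMLEuni,P)$. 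The resolution is that this happens only on atypical training sequences: a union bound over substrings rules out, except with the stated probability, any high-probability long substring being left uncovered by the learned merges, so the short-maximal-token pathology does not arise on genuine Markov data, and the residual positions on which greedy still emits a short token form an $o(1)$ fraction.

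\emph{The case $|\Dict| < d_0$ (alternatives 2 and 3).} Here $\encseqsplit{\cdot}$ is used, and since fewer than $d_0$ rules were added, at least $d-d_0$ of the $d$ iterations added none; for each such "good" iteration $i$, the \textsf{BPE} re-encoding of $\textsf{text}_i$ under the ruleset $\DS_i$ present at that point has no digram occurring $\ge\log d$ times. Step one is a monotonicity lemma: if a token sequence has no digram occurring $\ge f$ times, then applying one further \textsf{BPE} merge rule still yields no digram occurring $\ge f$ times (every new digram count is dominated by an old one), so this property, once it holds for some $\DS_i$ on a string, persists for the final ruleset $\DS_d\supseteq\DS_i$ on that string. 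Step two: conditionally on $\DS_i$ the raw string $\textsf{text}_i$ is a fresh length-$\Theta(d)$ sample, so a martingale/Chernoff bound over $i\in[d]$ upgrades "$\ge d-d_0$ good iterations" into "with probability $1-e^{-\Omega(\epsilon^2 d/\log^2(1/\delta))}$, applying $\DS_d$ to a fresh length-$d$ sample leaves a digram occurring $\ge\log d$ times with probability at most $2d_0/d$ over that sample", the failure of which to ever hold being alternative (2). Step three: encode a fresh length-$m$ test string with $\encseqsplit{\cdot}$, splitting it into $m/d$ length-$d$ chunks each \textsf{BPE}-encoded under $\DS_d$; by step two a fraction at most $2d_0/d$ of chunks are "bad" (their encoding has a $\ge\log d$-frequent digram). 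A bad chunk contributes at most $\log(4|\mathcal{A}|)$ per character to the cross-entropy of the optimal unigram model (an elementary worst-case bound — this is why $d_0$ is calibrated so that $\tfrac{2d_0}{d}\log(4|\mathcal{A}|)=\epsilon$), while for a good chunk the "no frequent digram" property is precisely the regime in which, by the \textsc{RePair} estimates of \citet{navarro2008re}, the \textsf{BPE} parse is within a factor $2$ of the greedy parse on the same (mature, hence heavy-hitter-like) dictionary, which by the argument of \Cref{theorem:generalization} applied to $\encseqsplit{\cdot}$ is near-optimal; hence a good chunk contributes $\le 2H_\infty + O(1/\log d)$ per character. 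Averaging over the two classes with weights $1-2d_0/d$ and $2d_0/d$ gives alternative (3), and substituting $d_0=\epsilon d/(2\log 4|\mathcal{A}|)$ recovers \Cref{theorem:BPE}.

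\textbf{Where the difficulty lies.} The small-dictionary analysis, though lengthy, is essentially mechanical once the monotonicity lemma and the \textsc{RePair} estimates are in hand; the real obstacle is the large-dictionary case, since \textsf{BPE} — unlike LZW — does not visibly build a heavy-hitting dictionary, and one must simultaneously rule out (with exponentially small slack) training datasets resembling \cref{eq:toydataset} and control the fraction of fresh positions at which the greedy encoder still emits a short maximal token, in order to convert $|\Dict|>d_0$ into a genuine lower bound on $H(\PMLEuni,P)$.
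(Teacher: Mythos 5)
Your treatment of the small-dictionary case tracks the paper's argument closely: the monotonicity of the ``no digram occurs $\ge \log d$ times'' property under further merges, the Azuma/martingale step converting ``at least $d-d_0$ iterations added no rule'' into the dichotomy between alternative (2) and the bound $\mathbb{E}[X(\bm{s}',\Dict)\mid\Dict]\le 2d_0/d$, the chunking of the test string, the worst-case $\log(4|\mathcal{A}|)$ per character on bad chunks, and the use of the \textsc{RePair} estimate of \citet{navarro2008re} on good chunks are all exactly the paper's route (the paper implements the good-chunk bound by directly bounding $|\encseq{\textsf{chunk}_i}|$ via Theorem 1 of \citet{navarro2008re} and charging $\log(2d)$ per token against a mixture unigram model, rather than by comparing the BPE parse to a greedy parse, but this is a difference of packaging, not of substance).

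The large-dictionary case is where your plan has a genuine gap. You propose to show that the learned merges ``cover every long high-probability substring,'' so that the dictionary is heavy-hitter-like and the greedy encoder emits a short maximal token only on an $o(1)$ fraction of positions. No union bound delivers this for BPE: unlike LZW, where a substring becomes a token as soon as its prefixes are each seen once, the event that BPE allocates a given substring depends on the global digram statistics after all prior merges, and the $d_0$ merges may well be spent on a narrow family of substrings (this is precisely the pathology of the dataset in \cref{eq:toydataset}, which you cannot dismiss merely by conditioning on $|\Dict|>d_0$). Note also that if your coverage claim held, it would yield $H(\PMLEuni,P)\ge(1-o(1))\log d$ and hence an $\varepsilon$ independent of $\epsilon$ and $\delta$, which is strictly stronger than what \Cref{theorem:BPE-rephrased} asserts. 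The paper instead settles for much less: \Cref{lemma:Sj*-not-token} shows every assigned token has $\max_a P(\bm{t}|a)\ge 1/Cd$, a counting argument (\Cref{lemma:nosuffix}) then extracts a set $D_{\text{valid}}$ of $\Omega(d_0/n_D)$ tokens that are neither prefixes nor suffixes of other tokens, and a rejection-sampling analysis (\Cref{lemma:tokval,lemma:Pt-bound}) shows that, conditioned on any history, the next greedily emitted token lands on a low-probability token of $D_{\text{valid}}$ with probability $\Omega(\epsilon\delta^9/\log^3(1/\delta)\log(1/\epsilon))$ --- a constant in $d$ but small in $\epsilon,\delta$. Short maximal tokens are never ruled out; they are simply outweighed by this guaranteed $\textsf{poly}(\epsilon,\delta)$ mass on long tokens, which is exactly why $\varepsilon$ in alternative (1) carries the $\epsilon^{-1}\delta^{-9}$ dependence. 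Your reduction to bounding the token rate $\rho$ is sound as far as it goes, but without a mechanism replacing the $D_{\text{valid}}$ argument the bound on $\rho$ (equivalently on $H(\PMLEuni,P)$) is unsupported.
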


\subsection{Analysis for the large dictionary case: $|\Dict| > d_0$}

In the large dictionary case, \Cref{alg:BPE-variant} uses the greedy encoder/decoder pair in conjunction with the dictionary. The proof of \Cref{theorem:BPE} relies on establishing that the cross-entropy $H(\PMLEuni,P)$ of the tokenizer is large. Namely, we prove that,

\begin{lemma} \label{lemma:BPE-high-jointent}
In \Cref{alg:BPE-variant}, assuming at least $d_0$ tokens are allocated,
\begin{align}
    H(\PMLEuni,P) = \Omega \left( \frac{\epsilon \delta^9 \log (d)}{\log(1/\epsilon) \log^3 (1/\delta)} \right).
\end{align}
\end{lemma}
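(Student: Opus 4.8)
The target quantity is $H(\PMLEuni,P)=\mathbb{E}_{\bm{t}\sim\PMLEuni}[\log(1/P(\bm{t}))]$, where $\PMLEuni$ is the token distribution induced by the greedy encoder on a fresh source string. Since every transition probability is at most $1-\delta$ under \Cref{ass:ss}, every token satisfies $P(\bm{t})\le (1-\delta)^{|\bm{t}|-1}\le e^{-\delta(|\bm{t}|-1)}$, hence $\log(1/P(\bm{t}))\ge\delta(|\bm{t}|-1)\ge 0$, and so for any length $\ell_0$,
\begin{align}
    H(\PMLEuni,P)\ \ge\ \delta\,(\ell_0-1)\cdot\Pr\nolimits_{\bm{t}\sim\PMLEuni}\!\big(|\bm{t}|\ge\ell_0\big).
\end{align}
The plan is therefore to produce a length scale $\ell_0=\widetilde{\Omega}_\delta(\log(\epsilon d))$ on which the greedy encoder, run on a fresh string, emits a token of length at least $\ell_0$ on a $\widetilde{\Omega}_\delta(\epsilon\delta)$ fraction of its jumps; plugging these in gives a bound of the claimed order $\widetilde{\Omega}_\delta(\epsilon\delta^{O(1)}\log d)$, and the $\delta$-exponent and the $1/\log(1/\epsilon)$, $1/\log^3(1/\delta)$ factors are what one recovers by tracking constants through the two ingredients below.

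\textbf{Step 1: the learned dictionary contains many long tokens of non-negligible probability.} Every non-character token $\bm{t}\gets(\bm{t}',\bm{t}'')$ produced by \Cref{alg:BPE-variant} is allocated only because the pair $(\bm{t}',\bm{t}'')$ — and hence the raw substring obtained by decoding it — occurs at least $\log(d)$ times disjointly in a chunk of $\Theta(d)$ fresh Markov characters. First I would establish, by a concentration argument for disjoint substring occurrences of the kind used in \Cref{lemma:lmax-bound} (restricting attention to substrings of length $O_\delta(\log d)$, which is all that can ever be allocated), that with probability $\ge 1-d^{-\Omega_\delta(1)}$ every allocated token has $P(\bm{t})\ge p_0:=\widetilde{\Omega}_\delta(1/d)$. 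Second, a counting bound — for any $p$ there are at most $\widetilde{O}_\delta(1/p)$ substrings with $\max_{a}P(\cdot\,|a)\ge p$ — shows that among the $\ge d_0$ allocated tokens, at least $d_0/2$ have $P(\bm{t})\le p_1:=\widetilde{\Theta}_\delta(1/d_0)$, and therefore length $\ge \log(1/p_1)/\log(1/\delta)=\widetilde{\Omega}_\delta\!\big(\log(\epsilon d)/\log(1/\delta)\big)$. Call this family of long learned tokens $\mathcal{L}$; combining the two estimates, $\sum_{\bm{t}\in\mathcal{L}}P(\bm{t})\ge |\mathcal{L}|\,p_0=\widetilde{\Omega}_\delta(\epsilon)$, and since $\mathcal{L}$ spans only $O_\delta(\log d)$ distinct token lengths, some single length $\ell_0=\widetilde{\Omega}_\delta(\log(\epsilon d)/\log(1/\delta))$ carries probability mass $c_{\ell_0}:=\sum_{\bm{t}\in\mathcal{L},\,|\bm{t}|=\ell_0}P(\bm{t})=\widetilde{\Omega}_\delta(\epsilon\delta)$.

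\textbf{Step 2: transfer to the greedy encoder via ergodicity.} Consider the greedy encoding of a length-$m$ source string into tokens $\bm{t}_1,\bm{t}_2,\dots,\bm{t}_N$, $N=|\encgre{\bm{s}}|$. Conditioned on the first $j-1$ tokens, the only relevant information about the remaining string is the preceding character $a'$, which is $\mathcal{F}_{j-1}$-measurable; the next $\ell_0$ characters equal a fixed string $\bm{t}$ with conditional probability $P(\bm{t}\,|a')\ge\delta P(\bm{t})$, so the probability they coincide with some token of $\mathcal{L}$ of length $\ell_0$ — which forces the greedy match to have length $\ge\ell_0$ regardless of how many short ``maximal'' tokens the dictionary also contains — is at least $\delta c_{\ell_0}$. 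Optional stopping applied to the submartingale $\sum_{i\le j}\big(\indc{|\bm{t}_i|\ge\ell_0}-\delta c_{\ell_0}\big)$ then gives $\mathbb{E}\big[\sum_{i\le N}\indc{|\bm{t}_i|\ge\ell_0}\big]\ge\delta c_{\ell_0}\,\mathbb{E}[N]$, and since \Cref{lemma:limit-exists} guarantees that token frequencies (and $N/m$) converge almost surely — the greedy token process is Markov of order $O_\delta(\log d)$ — we conclude $\Pr_{\bm{t}\sim\PMLEuni}(|\bm{t}|\ge\ell_0)\ge\delta c_{\ell_0}=\widetilde{\Omega}_\delta(\epsilon\delta^2)$. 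Substituting into the display from the first paragraph yields $H(\PMLEuni,P)\ge\delta(\ell_0-1)\cdot\widetilde{\Omega}_\delta(\epsilon\delta^2)=\widetilde{\Omega}_\delta(\epsilon\delta^3\log d)$, which matches the stated $\Omega\!\big(\epsilon\delta^9\log(d)/(\log(1/\epsilon)\log^3(1/\delta))\big)$ once the polynomial-in-$1/\delta$ and logarithmic-in-$1/\epsilon$ losses in Step 1 are accounted for.

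\textbf{Main obstacle.} Step 2 is essentially routine given \Cref{lemma:limit-exists}, apart from bookkeeping the dependence between consecutive jumps. The difficulty is concentrated in Step 1: one must control which tokens the sequential BPE procedure allocates. The concentration lower bound $P(\bm{t})\ge p_0$ for allocated tokens is delicate because a naive union bound over candidate substrings costs an $e^{O(1/\delta)}$ factor, whereas the final bound requires only a $\mathrm{poly}(1/\delta)$ loss; and the counting step must be coupled with a Re-Pair-style argument (in the spirit of \citep{navarro2008re}) to exclude the degenerate situation — exemplified by the pathological dataset in \cref{eq:toydataset}, where the short token $1$ stays maximal — in which all $d_0$ allocated tokens are short despite the dictionary being large.
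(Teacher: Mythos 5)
Your Step 1 is essentially sound (it is the paper's \Cref{lemma:Sj*-not-token} combined with the level-set count of \Cref{lemma:Sja-sizebound}), but Step 2 — the part you call ``essentially routine'' — contains the real gap. The assertion that ``conditioned on the first $j-1$ tokens, the only relevant information about the remaining string is the preceding character $a'$'' is false for the greedy encoder. Conditioning on the event that $\bm{t}_{j-1}$ was actually emitted means conditioning on the event that no longer prefix matched at its starting position, i.e.\ on $\{\forall k,\,j'\colon \bm{t}_{j'}\cdots\bm{t}_{j-1}\bm{s}_{p:p+k}\notin\Dict\}$, which constrains the \emph{future} characters. Concretely, if some prefix $\bm{t}'$ of a candidate long token $\bm{t}\in\mathcal{L}$ satisfies $\bm{t}_{j-1}\bm{t}'\in\Dict$, then the event that the continuation begins with $\bm{t}$ is incompatible with the history and has conditional probability zero — the greedy encoder would have emitted the longer token $\bm{t}_{j-1}\bm{t}'$ instead of $\bm{t}_{j-1}$. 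Nothing in your construction of $\mathcal{L}$ prevents every member of $\mathcal{L}$ from being blocked in this way for some histories. This is exactly why the paper restricts to the set $D_{\text{valid}}$ of tokens that are neither prefixes \emph{nor suffixes} of any other token (\Cref{lemma:nosuffix}): the ``not a suffix'' condition makes the conditioning vacuous, and proving $|D_{\text{valid}}|=\Omega(d_0/n_D)$ is a separate, nontrivial counting argument (short prefixes/suffixes are few by \Cref{lemma:Sj*-not-token} and \Cref{lemma:Sja-sizebound}; long ones number at most $n_D$ per valid token). Your proposal has no substitute for this ingredient, so the displayed submartingale inequality $\Pr(|\bm{t}_j|\ge\ell_0\mid\mathcal{F}_{j-1})\ge\delta c_{\ell_0}$ does not follow.

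A secondary, quantitative problem: your pigeonhole over the $O_\delta(\log d)$ possible token lengths costs a factor $\Theta(\delta/\log d)$ in the mass $c_{\ell_0}$, which exactly cancels the factor $\ell_0=\Theta(\log d/\log(1/\delta))$ you later multiply by. The argument as written therefore yields only $H(\PMLEuni,P)=\Omega(\epsilon\,\mathrm{poly}(\delta))$ with no growth in $d$, strictly weaker than the claimed $\Omega\bigl(\epsilon\delta^9\log(d)/(\log(1/\epsilon)\log^3(1/\delta))\bigr)$. The paper avoids this by an averaging argument over bands of only $\Delta=\Theta(\log(1/\delta)/\delta)$ consecutive level sets located at depth $\ge j^\star/2$ (so every token reached already has $\log(1/P(\bm{t}))\ge\tfrac12\log(C\delta d)$), which keeps the hitting probability $\mathrm{poly}(\epsilon,\delta)$ independent of $d$ while retaining the full $\log d$ in the entropy term (\Cref{lemma:tokval}).
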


To show this, it suffices to argue that conditioned on any previous set of tokens, with nontrivial probability over the underlying string generated from the stochastic source, the next token is long (i.e. having conditional probability at most $O(1/\sqrt{d})$).

\begin{lemma} \label{lemma:Pt-bound}
Suppose that in a run of \Cref{alg:BPE-variant}, at least $d_0$ tokens are allocated. Suppose a set of tokens $\bm{t}_1,\cdots,\bm{t}_k$ have been sampled so far by the greedy encoder. Let $T_{i+1}$ be the random variable which denotes the next token returned by the greedy encoder, where the randomness comes from the underlying string being tokenized. Then,
\begin{align} \label{eq:Pt-bound}
    \Pr \left( P(T_{i+1}| \bm{t}_{i}) \le 1/\sqrt{C \delta d} \middle| \bm{t}_1,\cdots,\bm{t}_i \right) \ge \frac{d_0 \delta^6 (1-\delta)^2}{8 C d \Delta |\mathcal{A}| \log (2|\mathcal{A}|) n_D} = \Omega \left( \frac{\epsilon \delta^9}{\log^3 (1/\delta) \log (1/\epsilon)} \right)
\end{align}
\end{lemma}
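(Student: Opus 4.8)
Write $a_i$ for the last character of $\bm{t}_i$. Conditioning on the history $\bm{t}_1,\dots,\bm{t}_i$ leaves the fresh suffix of the source distributed as a trajectory started from state $a_i$, up to a mild conditioning that only biases it \emph{away} from extending $\bm{t}_i$ into a dictionary token and costs an $O_\delta(1)$ factor. If this suffix begins with \emph{any} token $\bm{t}\in\Dict$ with $\max_{a}P(\bm{t}\mid a)\le \delta/((1-\delta)\sqrt{C\delta d})$, then the greedy encoder emits $T_{i+1}\succeq\bm{t}$, and since conditional probabilities only shrink along extensions and change by at most a factor $(1-\delta)/\delta$ when the conditioning character is swapped, $P(T_{i+1}\mid a_i)\le 1/\sqrt{C\delta d}$. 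Call such $\bm{t}$ \emph{long} (any length $\gtrsim\log(d)/\delta$ suffices). So it is enough to lower bound the probability that the fresh suffix begins with a long dictionary token.

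First I would show the dictionary contains \emph{many} long tokens. Since $|\Dict|>d_0$, at least $d_0$ tokens were allocated beyond the alphabet; each was the most frequent consecutive (sub)token pair in a chunk of length $\Theta(d)$, occurring there $\ge\log d$ times, so its underlying raw substring occurs $\ge\log d$ times \emph{disjointly} in a length-$\Theta(d)$ window of the source. A union bound over occurrence positions, exactly as in \Cref{lemma:lmax-bound} and made uniform over all substrings of length $O_\delta(\log d)$ by taking the probability threshold $c_1=e^{-\Theta(1/\delta)}\log(d)/d$, forces $P(\bm{t})\ge c_1$ for every allocated token (in particular $|\bm{t}|=O_\delta(\log d)$). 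Counting \emph{by probability} — at each fixed length the strings partition the mass, so at most $\sqrt{Cd/\delta}$ length-$\ell$ strings have $P\ge\sqrt{\delta/(Cd)}$, and only $O_\delta(\log d)$ lengths are relevant — at most $O_\delta(\sqrt{d}\log d)\ll d_0$ of the allocated tokens can fail to be long. Hence a set $\mathcal{G}$ of $\ge d_0/2$ allocated tokens is long, each with $P(\bm{t})\in[c_1,1/\sqrt{C\delta d}]$.

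Finally I would turn this into the quantitative bound. Restricting to the prefix-free family of minimal elements of $\mathcal{G}$ makes the events ``the fresh suffix begins with $\bm{t}$'' disjoint, so the target probability is at least a $\delta$-multiple of the total mass of that family; transferring the training-chunk occurrence counts of the tokens in $\mathcal{G}$ to a fresh length-$\Theta(d)$ window and using that at any position the tokens of $\mathcal{G}$ beginning there form a chain of length $\le\ell_{\max}=O_\delta(\log d)$ shows that a $\Omega_\delta(\epsilon)$ fraction of the character positions of a fresh source string begin a token of $\mathcal{G}$. The remaining step, which I expect to be the main obstacle, is to pass from ``a constant fraction of \emph{character} positions'' to ``a constant probability that the greedy encoder, at the boundary following $\bm{t}_i$, emits a long token'': one must rule out a systematic misalignment between the greedy encoder's history-dependent token boundaries and the good positions, despite the possible presence of many short maximal tokens. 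Here the hypothesis $|\Dict|>d_0$ is essential — it forces the chunks to be far from the hyper-compressible pathology of \eqref{eq:toydataset}, on which the statement genuinely fails — and one argues a monotonicity/comparison between the greedy and the sequential-BPE encodings of the already-reduced chunks, in the spirit of \citep{navarro2008re}, together with the ergodic averaging of \Cref{lemma:limit-exists} and a union bound over the starting states. The losses accrued in these reductions — the merge-depth factor $\Delta$, the chunk factor $n_D$, the alphabet factor $|\mathcal{A}|\log(2|\mathcal{A}|)$, and the powers of $\delta$ — are precisely what produce the stated lower bound $\Omega(\epsilon\delta^9/(\log^3(1/\delta)\log(1/\epsilon)))$.
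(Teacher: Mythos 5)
Your plan correctly identifies the two quantitative ingredients (no allocated token has probability below $\Omega_\delta(1/d)$, and at most $O_\delta(\sqrt{d}\,\mathrm{poly}\log d)\ll d_0$ allocated tokens can have $P(\bm{t})\ge \sqrt{\delta/(Cd)}$, so most allocated tokens are ``long''), and these do match the counting in the paper's \Cref{lemma:Sj*-not-token,lemma:Sja-sizebound} and the level-set bookkeeping inside \Cref{lemma:tokval}. But the step you yourself flag as ``the main obstacle'' is a genuine gap, and it is exactly the step the lemma is about. Two things go wrong. First, your opening claim that the history-conditioning (the constraint $\bm{t}_j\cdots\bm{t}_i T_{i+1}\not\in\Dict$ for all $j\le i$) ``only costs an $O_\delta(1)$ factor'' is unsubstantiated: this conditioning could in principle systematically redirect mass onto short tokens, and nothing in your sketch controls it. Second, having many long tokens in $\Dict$ does not imply the greedy encoder emits one with constant probability — a long token that is a strict prefix of nothing is needed for the greedy match to actually stop there, and a long token that is a suffix of another token may be illegal given the history. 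Your fallback — a monotonicity/comparison with the sequential BPE encoding à la \citet{navarro2008re} — is the paper's tool for the \emph{small}-dictionary branch ($|\Dict|<d_0$, \Cref{lemma:low-prob-event}); it does not yield the per-step conditional bound \eqref{eq:Pt-bound} required here.

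The paper's resolution, which your proposal is missing, is the construction of $D_{\text{valid}}$ in \Cref{lemma:nosuffix}: among the $\ge d_0$ allocated tokens, at least $d_0/4n_D$ are neither prefixes nor suffixes of any other token in $\Dict$. This kills both problems at once. Because such a token is not a suffix of any other token, it is always legal to emit regardless of the history, so the conditioning is vacuous on this set (handled via a rejection-sampling view of token generation: if the \emph{first} unconditioned trajectory reaches a $D_{\text{valid}}$ token, it is accepted). Because it is not a prefix of any other token, the greedy encoder must terminate its match there, so ``the trajectory passes through a long $D_{\text{valid}}$ token'' directly implies ``the emitted token is long.'' The remaining work (\Cref{lemma:tokval}) is then a density argument: by the probabilistic method there is a probability band $S_{i^\star+1}\cup\cdots\cup S_{i^\star+\Delta}$ below the $\sqrt{\delta/(Cd)}$ threshold in which $D_{\text{valid}}$ occupies an $\Omega(\delta d_0/(d|\mathcal{A}|n_D))$ fraction of the substrings, and every trajectory truncated at the next band down crosses it; converting the count into trajectory probability costs the $\mathrm{poly}(\delta)$ factors in \eqref{eq:Pt-bound}. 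Without \Cref{lemma:nosuffix} (or an equivalent prefix-free/suffix-free extraction), your argument does not close.
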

\begin{proof}[Proof sketch of \Cref{lemma:Pt-bound}]
The proof will proceed in $2$ parts. We first show in \Cref{lemma:nosuffix} that there is a set $D_{\text{valid}}$ of $\Omega (d)$ tokens in the dictionary which are neither prefixes nor suffixes of any other token in $\Dict$. The reason for considering this set of tokens is twofold,
\begin{enumerate}
    \item Irrespective of what the previous set of tokens were, it is legal for a token $D_{\text{valid}}$ to be sampled in the current step by the greedy encoder, since for any candidate $\bm{t} \in D_{\text{valid}}$, by definition, $\bm{t}_j\cdots \bm{t}_i \bm{t} \not\in \Dict$ for every $j \le i$.
    \item Suppose a sequence of tokens $\bm{t}_1,\cdots,\bm{t}_i$ have already been sampled, ending with the character $a$. Then, we may sample the next token using rejection sampling. Sample $a' \sim P(\cdot|a)$ and an infinitely long trajectory on $\mathcal{T}_{a'}^\star$. Return the last token on this trajectory which belongs to $\Dict$, and if it so happens that $\exists j \in [i]$ such that $\bm{t}_j \cdots \bm{t}_{i} \bm{t} \in \Dict$, then reject this trajectory and repeat. Since all the tokens in $D_{\text{valid}}$ are not prefixes of another token, any trajectory which reaches a token in $D_{\text{valid}}$ must terminate the rejection sampling process.
\end{enumerate}
Next, in \Cref{lemma:tokval}, we show that since the number of tokens in $D_{\text{valid}}$ is sufficiently large, $\Omega(d)$, with constant (in $d$) probability, a trajectory rolled out in the first round of the rejection sampling process will reach a token $\bm{t} \in D_{\text{valid}}$ which has small probability, i.e. $\max_{a \in \mathcal{A}} P(\bm{t}|a) \le 1/\textsf{poly}(d)$. By the previous arguments, this must mean that the rejection sampling process terminates on this ``low probability'' token, resulting in the statement of the lemma.
\end{proof}

\begin{figure}[ht!]
    \centering
    \includegraphics[trim={0.52cm 0 0.33cm 0},clip,scale=0.69]{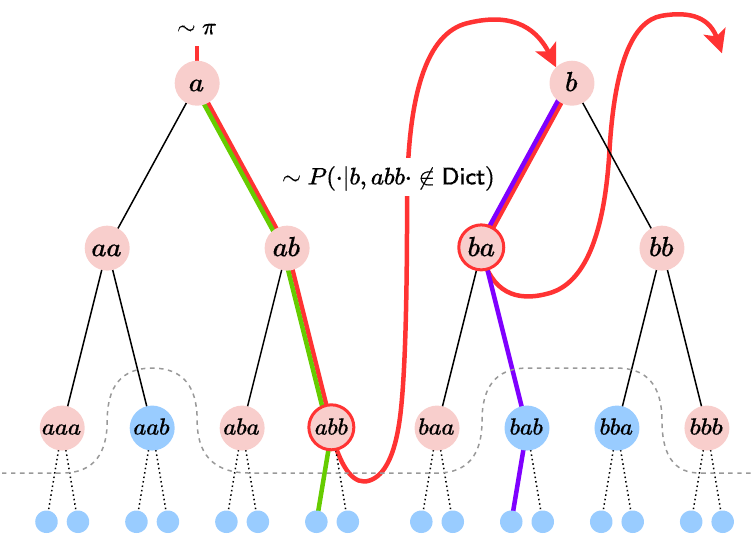}
    \caption{\textit{Jointly generating a sequence and its greedy encoding:} In this example we use the greedy encoder under the dictionary composed of all the substrings shadowed red. The first character ($a$) is sampled from the stationary distribution. Then an infinite string is sampled on the tree with $a$ as root (green path). The last substring on this path which is a token ($\bm{t}_1 = abb$) is returned by the greedy encoder. Then the next character $x=b$ is sampled from  the source conditioned on the previous character ($b$) and further conditioned on $\bm{t}_1 x \not\in \Dict$. Finally, another infinite string is sampled on the tree with $x=b$ as root (purple path) and the last substring on this path which is a token ($\bm{t}_2 = ba$) is returned by the greedy encoder. Repeating this process, we can generate a string, here, $abbba\cdots$, as well as its greedy encoding, $(abb,ba,\cdots)$. }
    \label{fig:alt-view}
\end{figure}

\paragraph{Proof of \Cref{theorem:BPE-rephrased}.1 and \Cref{lemma:BPE-high-jointent}}
It is easy to see why \Cref{lemma:Pt-bound} implies a lower bound on the cross entropy $H(\PMLEuni,P)$ of the tokenizer. By \Cref{lemma:limit-exists} for the greedy encoder,
\begin{align} \label{eq:PMLEview}
    \PMLEuni (\bm{t}) = \lim_{m \to \infty} \mathbb{E} \left[ \frac{n_{\bm{t}}}{\sumntprime} \right] \overset{\text{a.s.}}{=} \lim_{m \to \infty} \frac{n_{\bm{t}}}{\sumntprime}.
\end{align}
Since the limit $m \to \infty$ of the RHS exists by \Cref{lemma:limit-exists}, we may let $m \to \infty$ in any way we like, and in particular we may simply sample $i^\star$ tokens, $\bm{t}_1,\cdots,\bm{t}_{i^\star}$ sequentially according to the process in \Cref{fig:alt-view}. Here, the first token sampled is returned by generating an infinitely long string on $\mathcal{T}_a^\star$ where $a \sim \pi$ and then truncating this trajectory to the longest token which belongs to $\Dict$. Subsequently for every $i > 1$, $\bm{t}_i$ is generated by sampling a fresh infinitely long string from $\mathcal{T}_a^\star$ where $a$ is sampled from the $P(\cdot|a')$ where $a'$ is the last character $\bm{t}_{i-1}$ and then returning the largest prefix of this string which is a token in $\Dict$, conditioned on $\bm{t}_j \cdots \bm{t}_{i-1} \bm{t}_{i} \not\in \Dict$ for any $j < i$.

and concatenate the corresponding substrings to get an $m = \sum_{i=1}^{i^\star} |\bm{t}_i|$ length character string. Letting $i^\star \to \infty$, we must have $m \to \infty$ surely since $m \ge i^\star$. In this view, \cref{eq:PMLEview} can be rewritten as,
\begin{align}
    \PMLEuni (\bm{t}) = \lim_{i^\star \to \infty} \frac{n_{\bm{t}}}{i^\star} = \lim_{i^\star \to \infty} \frac{1}{i^\star} \sum_{i=1}^{i^\star} \mathbb{I} (\bm{t}_i = \bm{t}) \overset{\text{a.s.}}{=} \lim_{i^\star \to \infty} \frac{1}{i^\star} \sum_{i=1}^{i^\star} \mathbb{E} [ \mathbb{I} (\bm{t}_i = \bm{t}) | \bm{t}_1,\cdots,\bm{t}_{i-1} ] \label{eq:iiekkw}
\end{align}
where the last inequality follows by the sequential nature of the token sampling process and a martingale argument.  Consider the set of tokens $T$ such that $\bm{t} \in T$ satisfies $\max_{a \in \mathcal{A}} P(\bm{t}|a) \le \sqrt{1 / C \delta^3 d}$. From \cref{eq:iiekkw}, summing across $\bm{t} \in T$, we have that,
\begin{align}
    \sum_{\bm{t} \in T} \PMLEuni (\bm{t}) \overset{\text{a.s.}}{=} \lim_{i^\star \to \infty} \frac{1}{i^\star} \sum_{i=1}^{i^\star} \Pr \left( \bm{t}_i \in T  \middle| \bm{t}_1,\cdots,\bm{t}_{i-1} \right) = \Omega \left( \frac{\epsilon\delta^9}{\log^3 (1/\delta) \log (1/\epsilon)} \right) \label{eq:190129}
\end{align}
where in the last inequality, we use \Cref{lemma:Pt-bound} and the fact that $\delta \max_{a \in \mathcal{A}} P (\bm{t}|a) \ge \min_{a \in \mathcal{A}} P (\bm{t}|a)$. Therefore,
\begin{align}
    H(\PMLEuni, P) \ge \sum_{\bm{t} \in T} \PMLEuni (\bm{t}) \log (1 / P(\bm{t})) \ge \sum_{\bm{t} \in T} \PMLEuni (\bm{t}) \log ( \sqrt{C \delta^3 d} )
\end{align}
where in $(i)$ we use the fact that for $\bm{t} \in T$, $\max_{a \in \mathcal{A}} P(\bm{t}|a) \le 1/\sqrt{C \delta^3 d}$, which implies that $P(\bm{t}) \le 1/\sqrt{C \delta^3 d}$. Finally, combining with \cref{eq:190129} completes the proof of \Cref{lemma:BPE-high-jointent}. Furthermore, since the cross-entropy $H(\PMLEuni,P)$ was established to be large, by invoking the reduction in \Cref{theorem:generalization}, we complete the proof of \Cref{theorem:BPE-rephrased}.1.

\begin{figure}[h]
    \hspace*{2.7cm} \includegraphics[scale=0.7, trim={9em 0 0 0}]{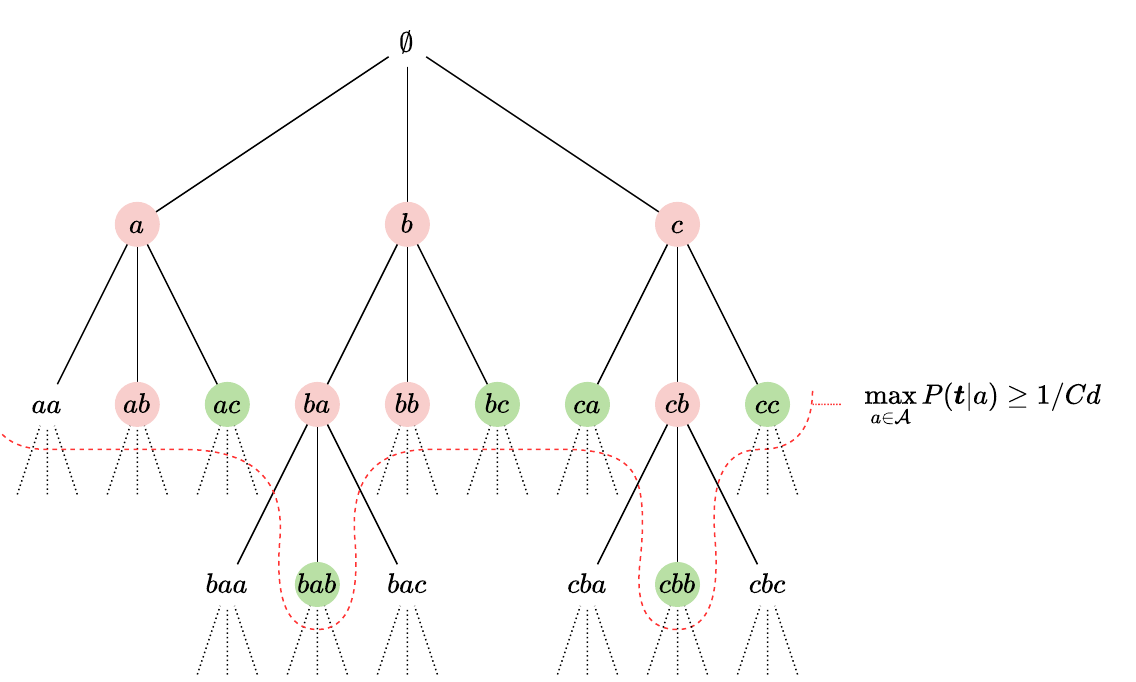}
    \caption{The circled nodes indicate substrings which are tokens in $\Dict$. The red boundary is the set of substrings $\bm{t}$ such that $\max_{a \in \mathcal{A}} P(\bm{t}|a) \ge 1/C d$. By \Cref{lemma:Sj*-not-token}, none of the nodes which fall outside this boundary are assigned as tokens in a run of \Cref{alg:BPE-variant}. The set of circled substrings are the set of tokens in $\Dict$. Among them, the ones circled green are the tokens in $D_{\text{valid}}$, which are not prefixes or suffixes of any other tokens in $\Dict$. Substrings such as $cb$ or $ba$ which are tokens in $\Dict$ do not belong to $D_{\text{valid}}$ because they are prefixes of longer tokens (in this case, $cbb$ and $bab$ respectively). On the other hand, substrings like $ab$ do not belong to $D_{\text{valid}}$ since they are suffixes of tokens in $\Dict$, in this case, $bab$. \Cref{lemma:nosuffix} asserts that the number of tokens in $D_{\text{valid}}$ are $\Omega (d)$ in number, assuming that $\Dict$ has $\Omega( d )$ tokens to begin with.}
    \label{fig:BPE}
\end{figure}

\paragraph{Notation.} For each $a \in \mathcal{A}$ and $j \in \mathbb{N} \cup \{ 0 \}$, define a \textit{level set} of substrings,
\begin{align}
    S_j^a = \left\{ (1-\delta)^{j+1} < P(\bm{t}| \bm{t}_{1} = a) \le (1-\delta)^j \right\}
\end{align}
where $\bm{t}_{1}$ denotes the first character of $\bm{t}$.
And likewise, define the sets $S_j = \cup_{a \in \mathcal{A}} S_j^a$, $S_{\le j}^a$ and $S_{\ge j}^a$ as the union of $S_{j'}^a$ over $j' \ge j$, $j' \le j$ and $S_{\le j}$ and $S_{\ge j}$ as the union of $S_{\le j}^a$ and $S_{\ge j}^a$ over $a \in \mathcal{A}$. Furthermore for a large universal constant $C>0$, define parameters,
\begin{align}
    \Delta &= \frac{\log(\delta)}{\log(1-\delta)} \asymp \Theta \left( \frac{\log(1/\delta)}{\delta} \right);\quad
    n_D = 1 - \frac{2\log(4C d/ \delta d_0)}{\log(1-\delta)} \asymp \Theta \left( \frac{\log(1/\epsilon\delta)}{\delta} \right). \label{eq:nD}
\end{align}

We first begin by stating a folklore result: every pair of tokens assigned by a merging-based dictionary generation algorithm have distinct character representations.

\begin{figure}
\centering
\scalebox{.9}{
\begin{tikzpicture}[every node/.style={draw, rectangle, align=center},
    >={Latex[width=2mm,length=2mm]},
    dotstyle/.style={circle,fill=black,minimum size=4pt,inner sep=0pt}]
    
    \node[draw=none] (dict) at (0,0) {$\text{Dictionary}$};
    \node[draw=none,below=0.1cm of dict] (vd0) {$\vdots$};
    \node[below=0.5cm of vd0] (t1) {\ \ \( \bm{t}_1  \gets (\cdot,\cdot) \) \ };
    \node[below=0.4cm of t1] (t2) {\ \ \( \bm{t}_2  \gets (\cdot,\cdot) \) \ };
    \node[below=0.33cm of t2,draw=none] (vd1) { \( \vdots \) };
    \node[below=0.33cm of vd1] (tprime) {\ \ \( \bm{t}' \gets (\cdot,\cdot)\) \ };
    \node[below=0.33cm of tprime,draw=none] (vd2) { \( \vdots \) };
    \node[below=0.33cm of vd2] (tpprime) {\ \ \( \bm{t}'' \gets (\cdot,\cdot) \) \ };

    \node[fit=(dict)(tpprime), inner sep=12.5pt, draw] {};

    \node[draw, rectangle, minimum width=1cm, minimum height=0.5cm] (rect1) at (4.5,0) {$\cdots$};
    \node[draw, rectangle, minimum width=1cm, minimum height=0.5cm] (rect2) at (5.5,0) {$\bm{s}'$};
    \node[draw, rectangle, minimum width=1cm, minimum height=0.5cm] (rect3) at (6.5,0) {$\cdots$};
    \node[draw, rectangle, minimum width=1cm, minimum height=0.5cm] (rect4) at (7.5,0) {$\bm{s}'$};
    \node[draw, rectangle, minimum width=1cm, minimum height=0.5cm] (rect5) at (8.5,0) {$\cdots$};

    \node[draw, rectangle, minimum width=1cm, minimum height=0.5cm] (rects1s7p1) at (5,-1) {$s_1 s_2 s_1 \cdots s_7$};
    \node[draw, rectangle, minimum width=1cm, minimum height=0.5cm] (rects1s7p2) at (8,-1) {$s_1 s_2 s_1 \cdots s_7$};

    \draw[dotted] (rect2.south west) -- (rects1s7p1.north west);
    \draw[dotted] (rect2.south east) -- (rects1s7p1.north east);
    \draw[dotted] (rect4.south west) -- (rects1s7p2.north west);
    \draw[dotted] (rect4.south east) -- (rects1s7p2.north east);

    \node[draw, rectangle, minimum width=1cm, minimum height=0.5cm] (dotst1dotsp1) at (4.66,-2) {$\cdots \bm{t}_1 \cdots \bm{t}_1 \cdots$};
    \node[draw, rectangle, minimum width=1cm, minimum height=0.5cm] (dotst1dotsp2) at (8.33,-2) {$\cdots \bm{t}_1 \cdots \bm{t}_1 \cdots$};

    \draw[dotted] (dotst1dotsp1.north west) -- (rects1s7p1.south west);
    \draw[dotted] (dotst1dotsp1.north east) -- (rects1s7p1.south east);
    \draw[dotted] (dotst1dotsp2.north west) -- (rects1s7p2.south west);
    \draw[dotted] (dotst1dotsp2.north east) -- (rects1s7p2.south east);

    \node[draw, rectangle, minimum width=1cm, minimum height=0.5cm] (dotst1t2p1) at (4.5,-3) {$\cdots \bm{t}_1 \bm{t}_2 \cdots \bm{t}_1 \cdots \bm{t}_2 \cdots$};
    \node[draw, rectangle, minimum width=1cm, minimum height=0.5cm] (dotst1t2p2) at (8.5,-3) {$\cdots \bm{t}_1 \bm{t}_2 \cdots \bm{t}_1 \cdots \bm{t}_2 \cdots$};

    \draw[dotted] (dotst1dotsp1.south west) -- (dotst1t2p1.north west);
    \draw[dotted] (dotst1dotsp1.south east) -- (dotst1t2p1.north east);
    \draw[dotted] (dotst1dotsp2.south west) -- (dotst1t2p2.north west);
    \draw[dotted] (dotst1dotsp2.south east) -- (dotst1t2p2.north east);

    \node[draw, rectangle, minimum width=1cm, minimum height=0.5cm] (tprimeboxp1) at (4.25,-5) {$\bm{t}'$};
    \node[draw, rectangle, minimum width=1cm, minimum height=0.5cm] (tprimeboxp2) at (8.75,-5) {$\bm{t}'$};

    \draw[dotted] (tprimeboxp1.north west) -- (dotst1t2p1.south west);
    \draw[dotted] (tprimeboxp1.north east) -- (dotst1t2p1.south east);
    \draw[dotted] (tprimeboxp2.north west) -- (dotst1t2p2.south west);
    \draw[dotted] (tprimeboxp2.north east) -- (dotst1t2p2.south east);

    \node[draw, rectangle, minimum width=1cm, minimum height=0.5cm] (tprimeprimebox) at (8.75,-7) {$\bm{t}''$};

    \draw[dotted] (tprimeprimebox.north west) -- (dotst1t2p2.south west);
    \draw[dotted] (tprimeprimebox.north east) -- (dotst1t2p2.south east);

    \path (8.75,-7) pic {cross=15pt};

    \draw [decorate,decoration={brace,amplitude=10pt},xshift=-4pt,yshift=0pt] (10.75,-0.5) -- (10.75,-3.5) node[draw=none] [black,midway,xshift=2cm]  {at each step, both \\ copies of $\bm{s}'$ encode\\ to the same\\ sequence of tokens};

    \node[above left=0.75cm and 0.25cm of rect3, draw=none] (eventually1) {suppose it\\encodes to $\bm{t}'$};
    \node[above right=0.75cm and 0.25cm of rect3, draw=none] (eventually2) {suppose it\\encodes to $\bm{t}''$};

    \node[right = 0.33cm of eventually2,draw=none] (implies) {$\implies$};
    \node[right = 1.25cm of eventually2,draw=none] (implication) {At each iteration,\\ both copies of $\bm{s}'$\\ perfectly encode to\\ a sequence of tokens };

    \node[right = 2cm of tprimeboxp2,draw=none] (implies) {both copies of $\bm{s}'$\\map to the same\\ token at this step,\\a contradiction};

    \draw[densely dotted] (eventually1) -- (rect2);
    \draw[densely dotted] (eventually2) -- (rect4);
    
\end{tikzpicture}}
\vspace{1em}
\caption{A pictorial representation of the proof of \Cref{lemma:distinct}}
\label{fig:lemma-1}
\end{figure}

\begin{lemma} \label{lemma:distinct}
If \Cref{alg:BPE-variant} assigns a new token in some round, it's character representation must be distinct from that of all previously assigned tokens.
\end{lemma}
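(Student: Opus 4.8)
The plan is to argue by contradiction, tracking occurrences of a single character string through the run of the algorithm. Suppose two distinct tokens created by \Cref{alg:BPE-variant} have the same character representation, let $\bm{t}'$ be the one created first and $\bm{t}''$ the one created later, say via the $k$-th rule $\bm{t}' \gets (\bm{t}'_1,\bm{t}'_2)$ and the $k'$-th rule $\bm{t}'' \gets (\bm{t}''_1,\bm{t}''_2)$ appended to $\DS$ with $k < k'$, and write $\bm{s}' = \dec{\bm{t}'} = \dec{\bm{t}''}$. Since each $\textsf{text}_i$ contributes at most one rule and rules are appended in order of increasing $i$, the two rules are read off two \emph{different} texts, say $\textsf{text}_{i_k}$ and $\textsf{text}_{i_{k'}}$ with $i_k < i_{k'}$; and by the time $\textsf{text}_{i_k}$ is processed, rules $1,\dots,k-1$ have been applied to it in that order, while $\textsf{text}_{i_{k'}}$ has additionally had rules $k,\dots,k'-1$ applied. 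Because the $k$-th rule is appended, the pair $(\bm{t}'_1,\bm{t}'_2)$ occurs consecutively in $\textsf{text}_{i_k}$ at that stage; fix one such occurrence and let $O_1$ be the underlying window of $|\bm{s}'|$ characters — it is covered there exactly by the two whole tokens $[\bm{t}'_1][\bm{t}'_2]$. Similarly fix $O_2$, a window of characters in $\textsf{text}_{i_{k'}}$ that after rules $1,\dots,k'-1$ is covered exactly by $[\bm{t}''_1][\bm{t}''_2]$. Note $O_1$ and $O_2$ are both occurrences of the character string $\bm{s}'$.

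I would establish two supporting facts. First, a \emph{no-straddle / monotone-subtree property}: view each applied merge as building a binary forest over the characters of a text, where merges only enlarge subtrees and never split them. Since at the stage it is read $O_1$ is covered cleanly by two whole tokens whose character spans are exactly the two halves of $\bm{s}'$, no token can have straddled the boundary of $O_1$ at any earlier stage — a straddle would put a character outside $O_1$ in the same subtree as a boundary character of $O_1$, and that would persist, contradicting cleanness later. The same holds for $O_2$ at every stage up to and including the application of rule $k'-1$. Second, a \emph{determinism property}: fixing the convention that $\textsf{APPLY}$ scans left to right (this pins down ``break ties arbitrarily''), if a window of characters is never straddled then the effect of a single $\textsf{APPLY}$ on that window depends only on the current tokenization of the window and the merge rule, since the scan effectively restarts at the clean left boundary and terminates at the clean right boundary without pulling in a neighbor. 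By induction on the rule sequence, the tokenization of such a window after applying a fixed ordered list of merge rules is a function of the character string alone.

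Combining these: both $\textsf{text}_{i_k}$ and $\textsf{text}_{i_{k'}}$ have rules $1,\dots,k-1$ applied in the same order before $O_1$, resp.\ $O_2$, is read, and by the first property both windows stay clean throughout this common prefix, so by the second property their tokenizations agree after rules $1,\dots,k-1$; since $O_1$ is then exactly $[\bm{t}'_1][\bm{t}'_2]$, so is $O_2$. Applying rule $k$ — which $\textsf{text}_{i_{k'}}$ does see, as $i_{k'} > i_k$ — collapses the clean window $O_2$ to the single whole token $[\bm{t}']$. But a clean window that is a single whole token must remain that single token forever afterward, since merging it with either neighbor would create a straddle, and $O_2$ is clean at every stage through $k'-1$. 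Hence at stage $k'-1$ the window $O_2$ still reads $[\bm{t}']$, contradicting that it reads the two-token pair $[\bm{t}''_1][\bm{t}''_2]$ there. Therefore no two tokens created by the algorithm share a character representation; distinctness from the base characters is automatic since every created token has length at least $2$.

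The step I expect to be the main obstacle is the determinism property: making rigorous that $\textsf{APPLY}$, restricted to a never-straddled window, is a purely local rewrite independent of the surrounding text and of which text the window sits in. This is where the informal ``ties broken arbitrarily'' must be interpreted carefully (as a fixed left-to-right scan), and where one must verify that cleanness of the window's boundaries genuinely prevents the scan from merging a boundary token with an outside neighbor — which is exactly what the monotone-subtree argument rules out, so the two facts are used in tandem.
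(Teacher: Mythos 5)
Your proposal is correct and follows essentially the same route as the paper's proof: argue by contradiction, observe that any occurrence of $\bm{s}'$ destined to become a single token is never straddled by a merge (the paper's ``we can isolate each of these occurrences''), and conclude that the merge sequence acts identically on all such clean occurrences, so the two windows cannot end up as distinct tokens. Your write-up just makes explicit the two supporting facts (no-straddle persistence and locality/determinism of $\textsf{APPLY}$ on never-straddled windows, under a fixed tie-breaking convention) that the paper's terser proof leaves implicit.
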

\begin{proof}
A pictorial proof is in \Cref{fig:lemma-1}. We will prove this result by contradiction. Suppose $\bm{t}$ and $\bm{t}'$ are tokens which decode to the same character substring, $\mathbf{s}'$. Consider all occurrences of $\mathbf{s}'$ in the dataset which in some iteration encode into $\bm{t}'$ or $\bm{t}''$, and denote these disjoint locations $\mathcal{S}$. Recall that at these locations, $\mathbf{s}'$ eventually is assumed to map to a singular token $\bm{t}'$ or $\bm{t}''$. Therefore, at every step in the merging process these occurrences of $\mathbf{s}'$ must perfectly map to a sequence of tokens. 

Now consider the merging process at the first time before any of the rules corresponding to tokens in $\bm{t}'$ or $\bm{t}''$ are implemented. Prior to this time, all the occurrences of $\mathbf{s}'$ corresponding to the locations in $\mathcal{S}$ have not been tokenized yet. When the first rule corresponding to one of the tokens in $\{ \bm{t}', \bm{t}'' \}$ is implemented, all the strings in $\mathcal{S}$ must be modified identically. This uses the fact that we can isolate each of these occurrences of $\mathbf{s}'$ while carrying out the merging process, since each location must be distinct. At every step, the encodings of these copies of $\bm{s}'$ must be the same, and therefore $\bm{t}'$ and $\bm{t}''$ cannot be two distinct tokens.
\end{proof}

\begin{lemma} \label{lemma:Sja-sizebound}
The size of the level set $S_j^a$ is bounded by $(1-\delta)^{-(j+1)}$.
\end{lemma}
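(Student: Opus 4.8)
The plan is to realize $S_j^a$ as a prefix-antichain of strings and then bound its size by a normalization (disjoint-cylinders) argument, using only the lower endpoint of the level-set band and the fact that every transition probability is bounded away from $1$.

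First I would record two elementary facts. (i) Every $\bm{t}\in S_j^a$ starts with the character $a$: if $\bm{t}_1\neq a$ then $P(\bm{t}\mid\bm{t}_1=a)=0$, which is not strictly larger than $(1-\delta)^{j+1}$, so $\bm{t}\notin S_j^a$; for $\bm{t}$ starting with $a$ we have $P(\bm{t}\mid\bm{t}_1=a)=\prod_{i=2}^{|\bm{t}|}P(\bm{t}_i\mid\bm{t}_{i-1})$ (empty product when $|\bm{t}|=1$). (ii) By \Cref{ass:ss} and $|\mathcal{A}|\ge 2$ we have $\max_{a,a'}P(a'\mid a)\le 1-\delta$, since $\min_{a,a'}P(a'\mid a)=\delta$ and the transition probabilities out of any state sum to $1$. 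Hence each factor in the product for $P(\bm{t}\mid\bm{t}_1=a)$ is at most $1-\delta$.

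Next I would establish the antichain property: $S_j^a$ contains no string that is a strict prefix of another string in $S_j^a$. Indeed, if $\bm{t}$ is a strict prefix of $\bm{t}'$ (both then starting with $a$), then $P(\bm{t}'\mid\bm{t}_1=a)=P(\bm{t}\mid\bm{t}_1=a)\prod_{i=|\bm{t}|+1}^{|\bm{t}'|}P(\bm{t}'_i\mid\bm{t}'_{i-1})\le (1-\delta)\,P(\bm{t}\mid\bm{t}_1=a)$. If moreover $\bm{t}\in S_j^a$, then $P(\bm{t}\mid\bm{t}_1=a)\le (1-\delta)^j$, so $P(\bm{t}'\mid\bm{t}_1=a)\le (1-\delta)^{j+1}$ and $\bm{t}'\notin S_j^a$.

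Finally I would pass to the probability space of infinite strings generated by the source conditioned on the first character being $a$: the cylinder of strings having $\bm{t}$ as a prefix has measure exactly $P(\bm{t}\mid\bm{t}_1=a)$. Since $S_j^a$ is a prefix-antichain, these cylinders are pairwise disjoint, so $\sum_{\bm{t}\in S_j^a}P(\bm{t}\mid\bm{t}_1=a)\le 1$. Every term exceeds $(1-\delta)^{j+1}$ by definition of $S_j^a$, giving $|S_j^a|\,(1-\delta)^{j+1}<1$, i.e. $|S_j^a|\le (1-\delta)^{-(j+1)}$. The proof is short; the only step needing any care is the antichain claim (equivalently, that probability mass is never double-counted), which is precisely where the lower band endpoint and the $\le 1-\delta$ transition bound are used together.
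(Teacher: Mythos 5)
Your proof is correct and is essentially the paper's own argument: the paper observes that any infinite trajectory on $\mathcal{T}_a^\star$ can intersect at most one vertex of $S_j^a$ (your prefix-antichain claim, which relies on the band width matching the $\le 1-\delta$ per-transition bound), concludes $\sum_{\bm{t}\in S_j^a}P(\bm{t}\mid\bm{t}_1=a)\le 1$, and divides by the lower endpoint $(1-\delta)^{j+1}$. You have merely spelled out the antichain and disjoint-cylinder steps that the paper leaves implicit.
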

\begin{proof}
Since the probability of any transition is at most $1-\delta$, this implies that any infinite trajectory on the tree $\mathcal{T}^\star_a$ can intersect at most one vertex in $S_j^a$. Therefore, $\sum_{\bm{t} \in S_j^a} P(\bm{t}| \bm{t}_{1} = a) \le 1$. By the lower bound on $P(\bm{t}| \bm{t}_{1} = a)$ for $\bm{t} \in S_j^a$, this implies the statement of the lemma.
\end{proof}

Next we show that with high probability none of the substrings $\bm{t}$ having probability mass (under $P$) of at most $\delta/C d$ conditioned on the first character, are assigned as tokens by \Cref{alg:BPE-variant}.

\begin{lemma} \label{lemma:Sj*-not-token}
    In a run of \Cref{alg:BPE-variant}, for a sufficiently large constant $C > 0$, with probability $d^{-\Omega(1)} \textsf{poly}(1/\delta)$ all assigned tokens $\bm{t} \in \Dict$ satisfy $\max_{a \in \mathcal{A}} P(\bm{t}|a) \ge 1 / C d$. In other words, none of the substrings in $S_{\ge j^\star}$ are added as tokens to the dictionary in a run of \Cref{alg:BPE-variant}, where,
    \begin{align}
        j^\star \triangleq \log( \delta/C d)/\log(1-\delta)
    \end{align}
\end{lemma}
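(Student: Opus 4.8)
The plan is to run a union bound over all ``long'' (low-probability) substrings, leveraging the structural fact that \Cref{alg:BPE-variant} can only turn a substring into a token if that substring's character representation occurs many times inside a single length-$\Theta(d)$ text block. Concretely, I will first argue the reduction: every token other than a base character $a \in \mathcal{A}$ (which trivially has $\max_{a' \in \mathcal{A}} P(a|a') \ge \delta$, hence $\ge 1/Cd$ once $C$ is a large enough constant) is created by appending a rule $\bm t \gets (\bm t', \bm t'')$, and this only happens when the pair $(\bm t', \bm t'')$ occurs consecutively at least $\log(d)$ times in the partially re-tokenized block $\textsf{text}_i$. Two consecutive-pair occurrences can share at most one token slot, and only when $\bm t' = \bm t''$; so at least $\Delta \triangleq \lceil \log(d)/2 \rceil$ of them are disjoint, and each disjoint occurrence decodes to a disjoint occurrence of the character string $\dec{\bm t} = \dec{\bm t'}\dec{\bm t''}$ inside the original block $\textsf{text}_i$ (of length $L = \Theta(d)$). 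Thus it suffices to prove that, with the claimed probability, no substring $\bm s$ with $\max_{a \in \mathcal{A}} P(\bm s|a) \le \delta/Cd = (1-\delta)^{j^\star}$ occurs $\ge \Delta$ times disjointly in any of the $d$ blocks.

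Next I group the offending substrings by probability scale: for $j \ge j^\star$ let $\Lambda_j = \{\bm s : (1-\delta)^{j+1} < \max_{a \in \mathcal{A}} P(\bm s|a) \le (1-\delta)^j\}$. Since an infinite trajectory on any tree $\mathcal{T}_a^\star$ meets at most one vertex $\bm t$ with $P(\bm t|a) \in ((1-\delta)^{j+1},(1-\delta)^j]$, summing over $a$ gives $|\Lambda_j| \le |\mathcal{A}|(1-\delta)^{-(j+1)}$, exactly as in \Cref{lemma:Sja-sizebound}. For $\bm s \in \Lambda_j$ and any position $p$, $\Pr[X_{p:p+|\bm s|-1} = \bm s \mid X_{1:p-1}] \le \max_{a} P(\bm s|a) \le (1-\delta)^j$, so by iterated conditioning the probability that $\bm s$ occupies $\Delta$ prescribed disjoint positions of a block is at most $(1-\delta)^{j\Delta}$. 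Union bounding over the $\le (eL/\Delta)^\Delta$ such position sets and over $\bm s \in \Lambda_j$,
\begin{align*}
\Pr\bigl[\exists\, \bm s \in \Lambda_j \text{ occurring }\ge\Delta\text{ times disjointly in a fixed block}\bigr] \;\le\; |\mathcal{A}|\,(1-\delta)^{-(j+1)}\Bigl(\tfrac{eL}{\Delta}\Bigr)^{\Delta}(1-\delta)^{j\Delta} \;=\; |\mathcal{A}|\,(1-\delta)^{-1}\,\tfrac{eL}{\Delta}\Bigl((1-\delta)^j\tfrac{eL}{\Delta}\Bigr)^{\Delta-1}.
\end{align*}
Using $(1-\delta)^j \le (1-\delta)^{j^\star} = \delta/Cd$ for $j \ge j^\star$ turns the bracket into $eL/(Cd\Delta) = \Theta(1/(C\log d))$, so, keeping a geometric remainder, the bound becomes $\Theta(d/\log d)\cdot(\Theta(1/(C\log d)))^{\Delta-1}\cdot(1-\delta)^{(j-j^\star)(\Delta-1)}\cdot\textsf{poly}(1/\delta)$. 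Summing the geometric tail over $j \ge j^\star$ costs $O(1/\delta)$ and the union over the $d$ blocks costs a factor $d$; since $\Delta = \Theta(\log d)$ we have $(\Theta(1/(C\log d)))^{\Delta-1} = d^{-\Theta(\log\log d)}$, which for $d$ large (and $C$ a large constant) overwhelms the $d^2/\log d$ prefactor. Hence the total failure probability is $d^{-\Omega(1)}\,\textsf{poly}(1/\delta)$, and on the complement every token $\bm t$ has $\max_a P(\bm t|a) \ge 1/Cd$, i.e. no substring of $S_{\ge j^\star}$ is made into a token.

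The main obstacle is the union bound in the second step: there are infinitely many candidate low-probability substrings, and the sum converges only because the level-set cardinality $\sim (1-\delta)^{-j}$ is outrun by the $\Delta$-fold occurrence probability $\sim (1-\delta)^{j\Delta}$ — this needs $\Delta \ge 2$, which is precisely what the algorithm's $\log(d)$-occurrence merge threshold supplies, and $C$ must be taken large enough to absorb the $\binom{L}{\Delta}$ positional factor with $L = \Theta(d)$. Two further points require care: one must exploit that merges act within a single length-$\Theta(d)$ block rather than union-bounding over the whole $\Theta(d^2)$-character dataset (which would be far too lossy), and one should phrase the level sets through $\max_a P(\cdot|a)$ rather than a fixed conditioning character, so that no spurious $1/\delta$ factors pile up in the per-occurrence probability. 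Everything else — the decoding of pair occurrences, the geometric summation, and relating ``$\max_a P(\bm t|a)\ge1/Cd$'' to ``$\bm t\notin S_{\ge j^\star}$'' — is routine.
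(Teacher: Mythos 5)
Your proposal is correct and follows essentially the same route as the paper's proof: a union bound over probability level sets of candidate substrings (using the $(1-\delta)^{-(j+1)}$ cardinality bound), combined with the fact that a merge requires $\Theta(\log d)$ disjoint occurrences within a single length-$\Theta(d)$ block, each occurrence costing at most $\max_a P(\bm{s}|a)$ by iterated conditioning. You are in fact slightly more careful than the paper on two points it glosses over — that consecutive pair occurrences with $\bm{t}'=\bm{t}''$ can overlap (hence only $\lceil\log(d)/2\rceil$ disjoint occurrences are guaranteed) and that disjoint token-pair occurrences decode to disjoint character-substring occurrences — neither of which changes the conclusion beyond constants.
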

\begin{proof}
Consider some $j \ge j^\star$ and $a \in \mathcal{A}$ and substring $\bm{t} \in S_j^a$. In the $i^{th}$ stage of the algorithm where $\textsf{text}_i$ is being processed, for $\bm{t}$ to be assigned as a token, at the very least, $\bm{t}$ must appear at least $\log(d)$ times disjointly in $\textsf{text}_i$. Therefore,
    \begin{align}
        P (\bm{t} \in S_j^a \text{ is assigned as a token in } \textsf{text}_i ) &\le \binom{d}{\log(d)} \left( \max_{a \in \mathcal{A}} P(\bm{t} | a) \right)^{\log(d)} \\
        &\le d^{\log(d)} \left( \frac{1}{Cd} (1 - \delta)^{j - j^\star} \right)^{\log (d)} \\
        &\le d^{-\log(C)} (1-\delta)^{(j - j^\star) \log(d)}
    \end{align}
    Union bounding over $S_j^a$ over $j \ge j^\star$ using the bound on $|S_j^a|$ in \Cref{lemma:Sja-sizebound}, and over $a \in \mathcal{A}$ and $i \in [d]$ results in the bound,
    \begin{align}
        P (\bm{t} \in S_{\ge j^\star} \text{ is assigned as a token in step } i \text{ for some } i \in [d] ) \le d^{- \Omega (1)} \sum_{j \ge j^\star} \frac{(1 - \delta)^{(j - j^\star) \log (d)}}{(1-\delta)^{j+1}} \le \frac{d^{-\Omega (1)}}{\delta(1-\delta)}
     \end{align}
\end{proof}

\begin{lemma} \label{lemma:nosuffix}
Consider the set of tokens $D_{\text{valid}}$ which are not a prefix or a suffix of any other token in $\Dict$. That is, $D_{\text{valid}} = \{ \bm{t} \in \Dict : \not\exists \bm{s} : \bm{s}\bm{t} \in \Dict\} \cap \{ \bm{t} \in \Dict : \not\exists \bm{s} : \bm{t}\bm{s} \in \Dict\}$. If $|\Dict| \ge d_0$, then,
\begin{align}
    |D_{\text{valid}}| \ge \frac{d_0}{4n_D}.
\end{align}
where $n_D$ is defined in \cref{eq:nD}.
\end{lemma}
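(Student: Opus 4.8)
The strategy is to (i) record that the relations ``is a proper prefix of'' and ``is a proper suffix of'' are strictly monotone in the level index, (ii) use the level‑set size bounds to confine the bulk of $\Dict$ to a band of $O(n_D)$ consecutive levels, and (iii) inside that band extract a large antichain for these relations whose elements are moreover maximal, hence lie in $D_{\text{valid}}$.

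First I would record the monotonicity. If $\bm{t}$ is a proper prefix or a proper suffix of $\bm{t}'$, then $\bm{t}'$ is obtained from $\bm{t}$ by appending at least one character on one side, and by \Cref{ass:ss} every transition probability of the chain is at most $1-\delta<1$; hence passing from $\bm{t}$ to $\bm{t}'$ multiplies the product of internal transition probabilities (which determines the level of a token) by a factor at most $1-\delta$, so $\bm{t}'$ sits at a strictly higher level than $\bm{t}$. Combined with \Cref{lemma:Sj*-not-token}, every token lives in levels $0,\dots,j^\star$ with $j^\star = \log(\delta/Cd)/\log(1-\delta)$, and by \Cref{lemma:Sja-sizebound} level $j$ contains at most $|\mathcal{A}|(1-\delta)^{-(j+1)}$ tokens, so the number of tokens at levels $\le j$ is $O(|\mathcal{A}|(1-\delta)^{-(j+1)}/\delta)$ after summing the geometric series. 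Letting $J_0$ be the largest level for which this bound is below $d_0/2$, one gets $(1-\delta)^{-(J_0+1)} = \Theta(\delta d_0/|\mathcal{A}|)$, and a short computation — which is what fixes the constants in the definition of $n_D$ in \cref{eq:nD}, using $|\mathcal{A}|\le 1/\delta$ and $C$ large — gives $j^\star - J_0 \le n_D$. Since levels $\le J_0$ jointly hold fewer than $d_0/2$ tokens while $|\Dict|\ge d_0$, at least $d_0/2$ tokens lie in the band $B = \{J_0+1,\dots,j^\star\}$, of width at most $n_D$.

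It then remains to pull a valid antichain out of $B$. Ordering the $\ge d_0/2$ band‑tokens by the ``proper prefix or proper suffix'' relation, the monotonicity step shows every chain visits a strictly increasing sequence of levels of $B$, hence has length at most $n_D$; grouping tokens by the length of the longest chain ending at them partitions them into at most $n_D$ classes, each an antichain, so the largest class $A$ has $|A| \ge (d_0/2)/n_D$. Starting from $A$, I would iteratively replace any token in the current set that is still a proper prefix or suffix of some dictionary token by one such extension; by \Cref{lemma:Sj*-not-token} the extension again lies in $B$, and since each replacement strictly raises that token's level (bounded by $j^\star$) the process terminates, after which no member of the set is a proper prefix or suffix of any token, i.e. the set is contained in $D_{\text{valid}}$. \Cref{lemma:distinct} ensures that distinct tokens stay distinct as strings throughout, so the final set witnesses $|D_{\text{valid}}| \ge (d_0/2)/n_D \ge d_0/(4n_D)$.

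The main obstacle is showing that the replacement step preserves the antichain property. When a token $\bm{t}$ is replaced by a prefix‑extension $\bm{t}'=\bm{t}\bm{u}$, a token previously incomparable to $\bm{t}$ can only fail to stay incomparable to $\bm{t}'$ by matching a suffix lying entirely inside $\bm{u}$, and symmetrically on the prefix side when $\bm{t}$ is only a proper suffix of $\bm{t}'=\bm{u}\bm{t}$. Ruling these out is where the fine structure must be used: one option is to first refine $A$ to tokens of roughly equal length, so that such a short collision would force $\bm{t}'$ to be far longer than $\bm{t}$ and hence far above it in level, outside the band; another is to charge each collision to a bounded number of deletions and absorb the loss into the $(d_0/2)/n_D$ budget. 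Carrying this bookkeeping through, together with the sample‑splitting guarantees of \Cref{alg:BPE-variant} that underlie \Cref{lemma:Sja-sizebound,lemma:Sj*-not-token}, is the technical heart of the argument.
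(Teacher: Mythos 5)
Your band-and-antichain strategy is a genuinely different route from the paper's, but it has a real gap exactly where you flag it, and the gap is not a formality. After extracting the antichain $A$ with $|A| \ge (d_0/2)/n_D$, you need the iterative maximalization to produce $\Omega(|A|)$ \emph{distinct} elements of $D_{\text{valid}}$, and nothing in the proposal delivers this. Your appeal to \Cref{lemma:distinct} does not apply: that lemma says distinct BPE tokens have distinct character representations; it says nothing about your replacement map being injective. Concretely, two incomparable antichain elements such as $ab$ and $bc$ can both be absorbed into the same maximal token $abc$, and because the replacement is iterated, the preimages in $A$ of a single $\bm{t}' \in D_{\text{valid}}$ are (tokens among the) substrings of $\bm{t}'$ reachable by alternating prefix/suffix extensions --- potentially $\Theta(|\bm{t}'|^2)$ of them, pairwise incomparable, hence all eligible to sit in $A$ simultaneously. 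Since token lengths can be as large as $\Theta(\log(d)/\delta)$, the collapse factor is not an absolute constant, and neither of your two proposed fixes closes this: equal-length refinement does not force a common extension out of the level band (level tracks probability, not length, so a length-$2L$ extension need not leave $\{J_0+1,\dots,j^\star\}$), and the ``charge collisions to deletions'' option is unspecified. Without a new idea here the argument only yields $|D_{\text{valid}}| \ge d_0/(n_D \cdot \mathrm{poly}(\log d, 1/\delta))$, which is weaker than the claimed $d_0/4n_D$.

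For contrast, the paper sidesteps maximalization entirely with a covering count run in the opposite direction: every token of $\Dict$ outside $D_{\text{valid}}$ is charged as a prefix or suffix of some member of $D_{\text{valid}}$, and these are counted at two scales. Prefixes/suffixes shorter than $|\bm{t}| - n_D$ are ``high-probability'' substrings (their conditional probability exceeds that of $\bm{t}$ by a factor $(1-\delta)^{-n_D}$, given \Cref{lemma:Sj*-not-token}), so \Cref{lemma:Sja-sizebound} caps their total number by $2Cd(1-\delta)^{n_D-1}/\delta \le d_0/2$ by the choice of $n_D$; the remaining ones number at most $n_D$ per element of $D_{\text{valid}}$ per side, giving $2n_D|D_{\text{valid}}| + d_0/2 \ge d_0$ and hence the bound. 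If you want to salvage your route, you would need an analogous two-scale accounting of how many antichain elements can merge into one maximal token; as written, the proposal is incomplete at its decisive step.
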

\begin{proof}
For any token $\bm{t} \in D_{\text{valid}}$, there may be at most $2 |\bm{t}|$ tokens which are suffixes or prefixes of it and belong to $\Dict$. More importantly, every token in $\Dict$ not belonging to $D_{\text{valid}}$ must either be a prefix or a suffix of some token in $D_{\text{valid}}$. Split the suffixes and prefixes of the tokens in $D_{\text{valid}}$ into four sets,
\begin{enumerate}
    \item $S_{\text{suff},\min} = \bigcup_{\bm{t} \in D_{\text{valid}}} \{ \bm{t}' \in \Dict : \bm{t}' \in \textsf{suff} (\bm{t}),\ |\bm{t}'| \le |\bm{t}| - n_D \}$,
    \item $S_{\text{suff},\max} = \bigcup_{\bm{t} \in D_{\text{valid}}} \{ \bm{t}' \in \Dict : \bm{t}' \in \textsf{suff} (\bm{t}),\ |\bm{t}'| > |\bm{t}| - n_D \}$,
    \item $S_{\text{pre},\min} = \bigcup_{\bm{t} \in D_{\text{valid}}} \{ \bm{t}' \in \Dict : \bm{t}' \in \textsf{pre} (\bm{t}),\ |\bm{t}'| \le |\bm{t}| - n_D \}$,
    \item $S_{\text{pre},\max} = \bigcup_{\bm{t} \in D_{\text{valid}}} \{ \bm{t}' \in \Dict : \bm{t}' \in \textsf{pre} (\bm{t}),\ |\bm{t}'| > |\bm{t}| - n_D \}$.
    \end{enumerate}
    where $n_D$ is defined in \cref{eq:nD}. Note from \Cref{lemma:Sj*-not-token} that all the tokens $\bm{t} \in \Dict$ all satisfy $\max_{a \in \mathcal{A}} P(\bm{t}|a) \ge 1/C d$. Therefore, the tokens in $S_{\text{pre},\min}$ and $S_{\text{suff},\min}$ all satisfy, $\max_{a \in \mathcal{A}} P(\bm{t}|a) \ge d/C (1-\delta)^{n_D}$. By summing \Cref{lemma:Sja-sizebound} over appropriate $j$, we get that $|S_{\text{pre},\min}| + |S_{\text{suff},\min}| \le 2C d (1-\delta)^{n_D-1}/\delta$.

    On the other hand, corresponding to any $\bm{t} \in D_{\text{valid}}$, there are at most $n_D$ tokens in $S_{\text{pre},\max}$ or $S_{\text{suff},\max}$ and and therefore $|S_{\text{pre},\max}|, |S_{\text{suff},\max}| \le n_D \cdot |D_{\text{valid}}|$. Since every token in $\Dict$ either belongs to $D_{\text{valid}}$ or is a suffix of some token in $D_{\text{valid}}$, $S_{\text{pre},\min} \cup S_{\text{pre},\max} \cup S_{\text{suff},\min} \cup S_{\text{suff},\max} = |\Dict|$ and,
    \begin{align}
        2 n_D \cdot |D_{\text{valid}}| + \frac{2 C (1-\delta)^{n_D - 1} d}{\delta} \ge d_0
    \end{align}
    Recalling the choice of $n_D = 1 - \frac{2\log(4Cd/ \delta d_0)}{\log(1-\delta)}$, we get that,
    \begin{align}
        |D_{\text{valid}}| \ge \frac{d_0}{4 n_D}.
    \end{align} 
\end{proof}

\begin{lemma} \label{lemma:tokval}
Suppose \Cref{alg:BPE-variant} assigns at least $d_0$ tokens. For any character $a \in \mathcal{A}$, sample an $a' \sim P(\cdot|a)$ and an infinite trajectory on the tree $\mathcal{T}^\star_{a'}$, denoted $\textsf{traj}$. Then,
\begin{align}
    \mathbb{E}_{a' \sim P(\cdot|a)} \left[ \Pr_{\textsf{traj} \sim \mathcal{T}_{a'}^\star} \left( \min_{\bm{t} \in \textsf{traj} \cap D_{\text{valid}}} P ( \bm{t} | a) \le \sqrt{\delta/C d} \middle| a' \right) \right] \ge \frac{d_0 \delta^6 (1-\delta)^2}{8 C d \Delta |\mathcal{A}| n_D}.
\end{align}
where the notation $\mathcal{T}_{a'}^\star$ is used to overload the distribution over infinite trajectories on $\mathcal{T}_{a'}^\star$. The parameters $n_D$ and $\Delta$ are defined in \cref{eq:nD}.
\end{lemma}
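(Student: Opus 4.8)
The plan is to lower-bound the probability in question by the probability that $\textsf{traj}$ has a \emph{prefix} lying in the subset $D_{\text{valid}}^{\mathrm{long}} \triangleq \{\bm t \in D_{\text{valid}} : \max_{\tilde a \in \mathcal{A}} P(\bm t\mid\tilde a) \le \sqrt{\delta/Cd}\}$, and then to show this set is large. The key structural point is that $D_{\text{valid}}$ is prefix-free — no token of it is a prefix of another — so at most one token of $D_{\text{valid}}$ can be a prefix of the infinite string $\textsf{traj}$; hence $|\textsf{traj}\cap D_{\text{valid}}|\le 1$, and the event $\{\min_{\bm t\in\textsf{traj}\cap D_{\text{valid}}}P(\bm t\mid a)\le\sqrt{\delta/Cd}\}$ contains the event ``$\textsf{traj}$ has a prefix in $D_{\text{valid}}^{\mathrm{long}}$'' (using $\max_{\tilde a}P(\bm t\mid\tilde a)\le\sqrt{\delta/Cd}\Rightarrow P(\bm t\mid a)\le\sqrt{\delta/Cd}$). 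Since the events ``$\bm t$ is a prefix of $\textsf{traj}$'' over $\bm t\in D_{\text{valid}}$ are pairwise disjoint, for a fixed $a'$ this probability is exactly $\sum_{\bm t\in D_{\text{valid}}^{\mathrm{long}}}P(\bm t\mid a')$.

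Next I would average over $a'\sim P(\cdot\mid a)$. Writing $P(\bm t\mid a')=P(\bm t_1\mid a')\prod_{i\ge 2}P(\bm t_i\mid\bm t_{i-1})$, only the first factor depends on $a'$, and \Cref{ass:ss} gives $\sum_{a'}P(a'\mid a)P(\bm t_1\mid a')\ge\delta\max_{\tilde a}P(\bm t_1\mid\tilde a)$, hence $\mathbb{E}_{a'\sim P(\cdot\mid a)}[P(\bm t\mid a')]\ge\delta\max_{\tilde a}P(\bm t\mid\tilde a)$. By \Cref{lemma:Sj*-not-token} every assigned token satisfies $\max_{\tilde a}P(\bm t\mid\tilde a)\ge 1/Cd$, so each term is at least $\delta/Cd$ and the quantity to bound is at least $|D_{\text{valid}}^{\mathrm{long}}|\cdot\delta/Cd$.

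It then remains to show $|D_{\text{valid}}^{\mathrm{long}}|=\Omega(d_0/n_D)$. \Cref{lemma:nosuffix} gives $|D_{\text{valid}}|\ge d_0/4n_D$. A token in $D_{\text{valid}}\setminus D_{\text{valid}}^{\mathrm{long}}$ has $\max_{\tilde a}P(\bm t\mid\tilde a)>\sqrt{\delta/Cd}=(1-\delta)^{j^\star/2}$ where $j^\star\triangleq\log(\delta/Cd)/\log(1-\delta)$, so it lies in a level set $S^a_j$ with $j<j^\star/2$; summing the bound $|S^a_j|\le(1-\delta)^{-(j+1)}$ of \Cref{lemma:Sja-sizebound} over $a\in\mathcal{A}$ and $0\le j<j^\star/2$ gives a geometric series of size $O\!\bigl(|\mathcal{A}|\sqrt{Cd/\delta}\,/\,\delta(1-\delta)\bigr)$. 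Since $d_0/n_D=\Theta\!\bigl(\epsilon\delta d/(\log|\mathcal{A}|\cdot\log(1/\epsilon\delta))\bigr)$ is linear in $d$ while this ``short-token'' count grows only like $\sqrt d$, for $d$ larger than a fixed polynomial in $|\mathcal{A}|,1/\epsilon,1/\delta$ the short-token count is below $d_0/8n_D$, leaving $|D_{\text{valid}}^{\mathrm{long}}|\ge d_0/8n_D$. Chaining the three steps yields $\mathbb{E}_{a'}[\Pr_{\textsf{traj}}(\cdots)]\ge\delta d_0/8Cd\,n_D$, which is already stronger than the stated bound (the extra factor $\delta^5(1-\delta)^2/\Delta|\mathcal{A}|$ is harmless slack, presumably reflecting a coarser level-set argument in the original proof). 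I expect the delicate part to be precisely this counting step — making the ``$d$ large'' requirement explicit and checking it follows from the hypotheses of \Cref{theorem:BPE} — while the structural heart of the argument (prefix-freeness of $D_{\text{valid}}$ turning a union into a disjoint sum, plus the one-step $\delta$-smoothing) is essentially immediate once \Cref{lemma:nosuffix,lemma:Sj*-not-token} are in hand.
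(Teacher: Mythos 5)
Your proof is correct, and it takes a genuinely different and more economical route than the paper's. The paper's argument never exploits the cleanest structural consequence of \Cref{lemma:nosuffix}: since no token of $D_{\text{valid}}$ is a prefix of any other token in $\Dict$, the set $D_{\text{valid}}$ is prefix-free, so the events $\{\bm{t} \text{ is a prefix of } \textsf{traj}\}$ over $\bm{t} \in D_{\text{valid}}$ are pairwise disjoint and the probability of hitting $D_{\text{valid}}^{\mathrm{long}}$ is \emph{exactly} $\sum_{\bm{t} \in D_{\text{valid}}^{\mathrm{long}}} P(\bm{t}\mid a')$; each summand is then $\ge \delta/Cd$ after averaging over $a'$, by \Cref{lemma:Sj*-not-token} and the one-step $\delta$-smoothing. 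The paper instead localizes to a single band of level sets: it applies the probabilistic method to \cref{eq:Dictbound} to find a window $S_{i^\star+1}\cup\cdots\cup S_{i^\star+\Delta}$ in which $D_{\text{valid}}$ has density $\Omega(\delta(1-\delta)d_0/Cd|\mathcal{A}|n_D)$, truncates trajectories at the leaf set $L_{a'}$ of a deeper band, counts prefixes (losing a $1/\Delta$ and a $\delta^2$ from the $|L_{a'}|$-to-band comparison), and converts uniform counts back to trajectory probabilities (losing further $\delta^2(1-\delta)$ and $\delta$ factors). Your version dispenses with all of that and consequently yields the strictly stronger constant $\delta d_0/8Cd\,n_D$, which dominates the stated $d_0\delta^6(1-\delta)^2/8Cd\Delta|\mathcal{A}|n_D$. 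The counting step you flag as delicate — discarding tokens with $\max_{\tilde a}P(\bm{t}\mid\tilde a) > \sqrt{\delta/Cd}$, of which there are only $O(|\mathcal{A}|\sqrt{Cd/\delta}/\delta)$ by \Cref{lemma:Sja-sizebound} — is present verbatim in the paper (its bound $\sum_{j\le j^\star/2}|S_j|\le d_0/8n_D$ under the hypothesis $d = \Omega_{\epsilon,\delta}(1)$), so you are on equal footing there; both arguments also condition implicitly on the high-probability event of \Cref{lemma:Sj*-not-token}. In short: same ingredients (\Cref{lemma:nosuffix,lemma:Sj*-not-token,lemma:Sja-sizebound}), but your disjoint-union-over-a-prefix-free-set decomposition replaces the paper's band-selection and leaf-counting machinery and buys a cleaner proof with a better constant.
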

\begin{proof}
By \Cref{lemma:Sj*-not-token}, recall that the $\ge d_0$ tokens assigned in a run of \Cref{alg:BPE-variant}, with high probability, are substrings in $S_{\le j^\star}$. For any $a \in \mathcal{A}$, the total number of substrings in $S_{\le j^\star}$ can be bounded as,
\begin{align} \label{eq:nj*}
    |S_{\le j^\star}| \le \sum_{a \in \mathcal{A}} \sum_{j = 0}^{j^\star} |S_j^a| \le \sum_{a \in \mathcal{A}} \sum_{j=0}^{j^\star} \frac{1}{(1-\delta)^{j+1}} \le \frac{C |\mathcal{A}| d}{\delta (1-\delta)}.
\end{align}
In order to prove this result, we use a counting argument and the fact that no tokens in $S_{> j^\star}$ are assigned. Consider some character $a$ and all the leaves in the forest $S_{\le j^\star}$. Since every transition has $\ge \delta$ probability of occurring, across all leaf nodes $\bm{t} \in S_{\le j^\star}$, $P(\bm{t}|a')$ are within a $\delta^2 (1-\delta)$ factor of each other across different $a' \in \mathcal{A}$. In particular, by counting the number of paths in $\mathcal{T}^\star$ (i.e. paths in $\mathcal{T}^\star_a$ from $\emptyset$ to leaf nodes in $S_{\le j^\star}^a$ across $a \in \mathcal{A}$) along which a token in $\Dict$ exists in $S_{\ge j^\star/2}$, we can also compute the probability mass across such trajectories up to a factor of $\delta^2 (1-\delta)$.

Taking the union across $a \in \mathcal{A}$, consider the paths in $\mathcal{T}^\star_a$ from $\emptyset$ to leaf nodes in $S_{ \le j^\star}^a$. From \Cref{lemma:nosuffix}, $\sum_{j \le j^\star} |D_{\text{valid}} \cap S_j| \ge d_0 / 4n_D$, where $n_D = 1 - 2 \log(4C d/\delta d_0)/\log(1-\delta)$. Note that for sufficiently large $d = \Omega( \log (1/\epsilon\delta)/\delta^5)$, by \Cref{lemma:Sja-sizebound}, $\sum_{j \le j^\star/2} |S_j| = \sqrt{Cd/\delta}/\delta (1-\delta) \le d_0 /8 n_D$. Therefore,
\begin{align} \label{eq:13}
    \sum_{j^\star/2 < j \le j^\star} |D_{\text{valid}} \cap S_j| \ge \frac{d_0}{8 n_D}.
\end{align}
Define $\Delta = \log(\delta)/\log(1-\delta)$. Combining \cref{eq:13} with \cref{eq:nj*} and applying the probabilistic method, there exists an $i^\star \ge j^\star/2$ such that,
\begin{align}
    \frac{|D_{\text{valid}} \cap (S_{i^\star+1} \cup \cdots \cup S_{i^\star+\Delta})|}{|S_{i^\star+1} \cup \cdots \cup S_{i^\star+\Delta}|} \ge \frac{ \delta (1-\delta) d_0}{8C d |\mathcal{A}| n_D}. \label{eq:Dictbound}
\end{align}
Note that $\Delta$ is chosen to be sufficiently large, so that every infinite trajectory on $\mathcal{T}^\star_{a'}$ must intersect at least once with the band of vertices $S_{i^\star+\Delta+1}^{a'} \cup \cdots \cup S_{i + 2\Delta}^{a'}$. Note that this band is different from the one considered in \cref{eq:Dictbound}. Define $L_{a'}$ as the set of longest prefixes across infinite trajectories in $\mathcal{T}_{a'}^\star$ which belong to $S_{i^\star+\Delta+1}^{a'} \cup \cdots \cup S_{i + 2\Delta}^{a'}$.

Note that our objective is to show that an infinite trajectory sampled on $\mathcal{T}_{a'}^\star$ where $a' \sim P(\cdot|a)$, has a long prefix in $\Dict$. We can truncate this trajectory to lower bound this probability, and therefore, we assume that the infinite trajectories on $\mathcal{T}^\star_{a'}$ terminate once they reach a substring in $L_{a'}$. Furthermore, note that although $\Delta$ is large, it is still a constant depending on $\delta$. Therefore, the band of states $S_{i^\star+\Delta+1}^{a'} \cup \cdots \cup S_{i^\star+2\Delta}^{a'}$ is not too wide, and all the substrings in $L_{a'}$ have approximately similar probabilities to each other. In particular, for any character $a \in \mathcal{A}$, and for any $a' \in \mathcal{A}$ and $\bm{t} \in L_{a'}$, decomposing $P(\bm{t} | a)$ as $P(\bm{t}|\bm{t}_1=a') P(a'|a)$,
\begin{align} \label{eq:almostuniform}
    \delta^2 (1-\delta) \cdot (1-\delta)^{i+\Delta} \overset{(i)}{\le} P(\bm{t} | a) \overset{(ii)}{\le} (1-\delta)^{i+\Delta}.
\end{align}
Inequality $(i)$ follows from the fact that all transition probabilities are at least $\delta$, so every leaf node in $L_{a'}$ must have $P(\bm{t}|\bm{t}_1 = a') \ge (1-\delta)^{i+2\Delta+1}$, and the fact that $P(a'|a) \ge \delta$. Inequality $(ii)$ follows similarly from the fact that $\bm{t}$ is a leaf node of $L_{a'}$ and therefore $P(\bm{t}|\bm{t}_1=a') \le (1-\delta)^{i + \Delta}$. 
Therefore, instead of bounding the probability of any event under the distribution over substrings in $L_{a'}$ induced by truncating the infinite strings sampled on $\mathcal{T}_{a'}^\star$, it suffices to count the fraction of substrings in $L_{a'}$ satisfying the event (which are equivalent up to a $\delta (1-\delta)$ factor). Define,
\begin{align}
    \textsf{pre} (\bm{t}) = (\bm{t}_1, \bm{t}_{1:2}, \bm{t}_{1:3},\cdots,\bm{t}_{1:|\bm{t}|})
\end{align}
As the set of prefixes of $\bm{t}$ (including $\bm{t}$).
Note that at most $\Delta$ of the prefixes of any substring $\bm{t}$ can intersect with $S_{i^\star+1}^a \cup \cdots \cup S_{i^\star+\Delta}^a$. Therefore,
\begin{align}
    &\hspace{-2em}\sum_{a' \in \mathcal{A}} \sum_{\bm{t} \in L_{a'}} \bm{1} (\textsf{pre}(\bm{t}) \cap D_{\text{valid}} \cap (S_{i^\star+1}^{a'} \cup \cdots \cup S_{i^\star + \Delta}^{a'}) \ne \emptyset) \\
    &\ge \sum_{a' \in \mathcal{A}} \sum_{\bm{t} \in L_{a'}} \frac{|\textsf{pre}(\bm{t}) \cap D_{\text{valid}} \cap (S_{i^\star+1}^{a'} \cup \cdots \cup S_{i^\star + \Delta}^{a'})|}{\Delta} \\
    &\overset{(i)}{\ge} \sum_{a' \in \mathcal{A}} \frac{|D_{\text{valid}} \cap (S_{i^\star+1}^{a'} \cup \cdots \cup S_{i^\star + \Delta}^{a'})|}{\Delta} \\
    &\overset{(ii)}{\ge} \frac{\delta d_0 (1-\delta)}{8 C d \Delta |\mathcal{A}| n_D} \sum_{a' \in \mathcal{A}}  |S_{i^\star+1}^{a'} \cup \cdots \cup S_{i^\star + \Delta}^{a'}| \\
    &\overset{(iii)}{\ge} \frac{\delta^3 d_0 (1-\delta)}{8 C d \Delta |\mathcal{A}| n_D} \sum_{a' \in \mathcal{A}} |L_{a'}|,
\end{align}
where $(i)$ uses the fact that the prefixes of $\bm{t} \in L_{a'}$ cover all the substrings in $S_{\le i^\star + \Delta}^{a'}$, and therefore $\cup_{\bm{t} \in L_{a'}} \textsf{pre} (\bm{t}) \supset S_{i^\star+1}^{a'} \cup \cdots \cup S_{i^\star+\Delta}^{a'}$, and $(ii)$ uses \cref{eq:Dictbound}. Finally, $(iii)$ uses the fact that $\Delta$ is not too large, and therefore, for any substring $\bm{t}' \in S_{i^\star+1}^{a'} \cup \cdots \cup S_{i^\star+\Delta}^{a'}$, there are at most $1/(1-\delta)^{2 \Delta} = 1/\delta^2$ substrings $\bm{t} \in L_{a'}$ which contain it as a prefix. This means, $|L_{a'}| \le |S_{i^\star+1}^{a'} \cup \cdots \cup S_{i^\star+\Delta}^{a'}|/\delta^2$. After dividing by $\sum_{a' \in \mathcal{A}} |L_{a'}|$ on both sides, this implies,
\begin{align} \label{eq:pre-lb}
    \mathbb{E}_{a' \sim \mathrm{Unif} (\mathcal{A})} \left[ \Pr_{\bm{t} \sim \mathrm{Unif} (L_{a'})} \left( \textsf{pre} (\bm{t}) \cap D_{\text{valid}} \cap (S_{i^\star+1}^{a'} \cup \cdots \cup S_{i^\star+\Delta}^{a'}) \ne \emptyset \middle| a' \right) \right] \ge \frac{\delta^3 d_0 (1-\delta)}{8 C d \Delta |\mathcal{A}| n_D}.
\end{align}
The event inside the inner probability term is the event that an infinitely long string (truncated at $L_{a'}$) has a prefix which lies in $D_{\text{valid}}$ and which intersects with $S_{i^\star+1}^{a'} \cup \cdots \cup S_{i^\star + \Delta}^{a'}$, which implies that it has probability $P(\bm{t}|a) \le \sqrt{\delta/Cd}$.
Therefore, we have that for any $a \in \mathcal{A}$, sampling an $a' \sim P(\cdot|a)$ and an infinite trajectory $\textsf{traj} \sim \mathcal{T}_{a'}^\star$,
\begin{align}
    &\hspace{-2em}\mathbb{E}_{a' \sim P(\cdot|a)} \left[ \Pr_{\textsf{traj} \sim \mathcal{T}_{a'}^\star} \left( \min_{\bm{t} \in \textsf{traj} \cap D_{\text{valid}}} P ( \bm{t} | a) \le \sqrt{\delta/C d} \middle| a' \right) \right] \\
    &\overset{(i)}{\ge} \delta^2 (1-\delta) \cdot \mathbb{E}_{a' \sim P(\cdot|a)} \left[ \Pr_{\bm{t}' \sim \mathrm{Unif} (L_{a'})} \left( \min_{\bm{t} \in \textsf{pre} (\bm{t}') \cap D_{\text{valid}}} P ( \bm{t} | a) \le \sqrt{\delta/C d} \middle| a' \right) \right] \\
    &\overset{(ii)}{\ge} \delta^2 (1-\delta) \cdot \mathbb{E}_{a' \sim P(\cdot|a)} \left[  \Pr_{\bm{t}' \sim \mathrm{Unif} (L_{a'})} \left( \textsf{pre} (\bm{t}') \cap D_{\text{valid}} \cap (S_{i^\star+1}^{a'} \cup \cdots \cup S_{i^\star+\Delta}^{a'}) \ne \emptyset \middle| a' \right) \right] \\
    &\overset{(iii)}{\ge} \delta^3 (1-\delta) \cdot \mathbb{E}_{a' \sim \text{Unif} (\mathcal{A})} \left[  \Pr_{\bm{t}' \sim \mathrm{Unif} (L_{a'})} \left( \textsf{pre} (\bm{t}') \cap D_{\text{valid}} \cap (S_{i^\star+1}^a \cup \cdots \cup S_{i^\star+\Delta}^a) \ne \emptyset \middle| a' \right) \right] \\
    &\ge \delta^3 (1-\delta) \cdot \frac{\delta^3 d_0 (1-\delta)}{8 C d \Delta |\mathcal{A}| n_D}.
\end{align}
Here $(i)$ follows by truncating the trajectory $\textsf{traj}$ to terminate at a node in $\cup_{a' \in \mathcal{A}} L_{a'}$ and from \cref{eq:almostuniform}, $(ii)$ follows by arguing that $i^\star \le j^\star/2$ and therefore if a prefix of $\bm{t}'$ lies in $S^{a'}_{i^\star+1} \cup \cdots \cup S_{i^\star+\Delta}^{a'}$, then it must have $P(\bm{t}|a) \le \sqrt{\delta/C d}$. Inequality $(iii)$ follows by noting that all the transitions $P(a'|a)$ have probability $\ge \delta$, and the last inequality follows from \cref{eq:pre-lb}.
\end{proof}

\subsubsection*{Proof of \Cref{lemma:Pt-bound}}

\Cref{lemma:tokval} concludes that given any previous sequence of tokens terminating in a character $a$, with constant probability, an infinite trajectory sampled from $\mathcal{T}_{a'}^\star$ with $a' \sim P(\cdot|a)$ has as prefix, a substring $\bm{t}$, which not only has low probability, with $P(\bm{t}|a) \le \sqrt{\delta/C d}$, but also belongs to the subset of tokens $D_{\text{valid}}$. Note that regardless of the previously sampled tokens, it is legal to sample any token in $D_{\text{valid}}$ as the current token, since by definition, these tokens are not the suffixes of any other tokens in $\Dict$. Moreover, if any trajectory on $\mathcal{T}_{a'}^\star$ reaches a token in $D_{\text{valid}}$, then it must be largest token along that trajectory, since none of the tokens in $D_{\text{valid}}$ are prefixes of another token in $\Dict$.

Consider generating a new token by rejection sampling. Suppose the set of previous tokens $\bm{t}_1,\cdots,\bm{t}_i$ end in some character $a$. Sample the next character $a' \sim P(\cdot | a)$ and an infinite trajectory on $\mathcal{T}^\star_{a'}$. If it reaches an illegal token $\bm{t}$ such that $\bm{t}_j \bm{t}_{j+1}\cdots\bm{t}_i \bm{t}$ already exists in $\Dict$, this token is rejected and the trajectory is resampled. By the prefix-free property of these tokens, if this trajectory visits a token in $D_{\text{valid}}$, it must immediately be output as the next token. Note that this probability is lower bounded by,
\begin{align}
    \mathbb{E}_{a' \sim P(\cdot|a)} \left[ \Pr_{\textsf{traj} \sim \mathcal{T}_{a'}^\star} \left( \min_{\bm{t} \in \textsf{traj} \cap D_{\text{valid}}} P ( \bm{t} | a) \le \sqrt{\delta/C d} \middle| a' \right) \right]
\end{align}
which is lower bounded by $\textsf{poly}(\epsilon,\delta)$, the subject of \Cref{lemma:tokval}. Therefore with this probability, the process terminates in the first step with a token in $D_{\text{valid}}$ being sampled.

\subsection{Analysis in the small dictionary case}

In this section, we will prove \Cref{theorem:BPE-rephrased}.2 and \Cref{theorem:BPE-rephrased}.3. In particular we show that, either,
\begin{enumerate}
    \item The dictionary is small with low probability. i.e., $\Pr(|\Dict| < d_0) = e^{- \Omega ( \epsilon^2 d/ \log^2(1/\delta))}$, or,
    \item Or conditioned on the dictionary being small, $|\Dict| < d_0$, with high probability $\ge 1 - e^{-\Omega (\epsilon^2 d/ \log^2(1/\delta))}$,
    \begin{align}
        \min_{Q \in \Qgram{1}} \mathcal{L} (Q \circ \enc{\cdot}) \le 4 \left( 1 - \frac{2d_0}{d} + O \left( \frac{1}{\log(d) }\right) \right) H_\infty + \frac{2d_0}{d} \cdot \log(2|\mathcal{A}|).
    \end{align}
\end{enumerate}

For $i \in [d]$, define the indicator random variable,
\begin{align}
    X(\bm{s}', \Dict) = \bm{1} (\exists \text{a pair of tokens in } \encseq{\bm{s}'} \text{ under } \Dict \text{ appears at least } \log(d) \text{ times}).
\end{align}
which captures the event that the string $\bm{s}'$ is compressed well by the dictionary $\Dict$ under the sequential encoder.

Let $\Dict_i$ denote the dictionary stored by \Cref{alg:BPE-variant} right after $\textsf{text}_i$ is processed.
The key insight behind this lemma is the following statement, asserting that the sequential encoder satisfies a ``monotonicity'' property: for any $j$ and string $\bm{s}'$, if there exists a pair of tokens appearing more than $\log(d)$ times consecutively in the sequential encoding of $\bm{s}'$ under $\Dict_j$, then there must exist a pair of tokens appearing more than $\log(d)$ times consecutively in the greedy encoding of $\bm{s}'$ under $\Dict_i$ for any $i < j$. This implies that $X(\bm{s}',\Dict_j) \le X(\bm{s}',\Dict_i)$ if $i < j$ for any string $\bm{s}'$. This monotonicity property implies that the last dictionary output by the learner, $\Dict_d$ sequentially encodes a $1-\epsilon$ fraction of the previously seen texts, $\textsf{text}_i$ in a way where every pair of tokens appears at most $\log(d)$ times. While $\Dict_d$ is correlated with these texts, we can circumvent this correlation by using a martingale argument to prove the statement of the lemma.

\begin{lemma} \label{lemma:low-prob-event}
Let $\Dict$ be the dictionary returned by \Cref{alg:BPE-variant}. Then,
\begin{align}
    \min \left\{ \Pr \left( \mathbb{E} \big[ X \big( \bm{s}',\Dict \big) \big| \Dict \big] \ge 2d_0/d \middle| |\Dict| < d_0 \right) , \Pr \left( |\Dict| < d_0 \right)\right\} \le e^{- \epsilon^2 d/8 \log^2(1/\delta)}.
\end{align}
where $\bm{s}'$ is a fresh substring of length $d$ sampled from the stochastic source.
\end{lemma}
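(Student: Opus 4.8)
The plan is to sidestep the correlation between the learned dictionary $\Dict$ and the texts it was trained on by recasting the dictionary-growth process as a martingale and combining this with the monotonicity of the sequential encoder stated just above \Cref{lemma:low-prob-event}. Let $\mathcal{F}_i = \sigma(\textsf{text}_1,\dots,\textsf{text}_i)$, let $\Dict_i$ be the dictionary maintained by \Cref{alg:BPE-variant} just after $\textsf{text}_i$ is processed (so $\Dict = \Dict_d$), and let $Y_i = \mathbf{1}(\text{a rule is appended in iteration } i)$, so that $\sum_{i=1}^d Y_i < |\Dict|$. The key bookkeeping observation is that $Y_i = X(\textsf{text}_i,\Dict_{i-1})$: a rule is appended precisely when, after the $\textsf{APPLY}$ operations from rounds $1,\dots,i-1$ have rewritten $\textsf{text}_i$ into its BPE encoding under $\Dict_{i-1}$, some pair of consecutive tokens occurs $\ge\log d$ times. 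Since $\Dict_{i-1}$ is $\mathcal{F}_{i-1}$-measurable while $\textsf{text}_i$ is a fresh source sample independent of $\mathcal{F}_{i-1}$, this gives $\mathbb{E}[Y_i\mid\mathcal{F}_{i-1}] = p_{i-1}$, where I abbreviate $p_j := \mathbb{E}[X(\bm{s}',\Dict_j)\mid\Dict_j]$ for a fresh length-$d$ sample $\bm{s}'$. (If one insists on the contiguous-chunk model rather than i.i.d.\ texts, only a geometric-mixing correction is needed here, since each text has length $\Theta(d)$ far exceeding the mixing time.)

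Next I would invoke the monotonicity property: for any fixed $\bm{s}'$ and $i\le j$, because $\Dict_j$ contains every rule of $\Dict_i$, a pair appearing $\ge\log d$ times in $\encseq{\bm{s}'}$ under $\Dict_j$ implies the same under $\Dict_i$, i.e.\ $X(\bm{s}',\Dict_j)\le X(\bm{s}',\Dict_i)$ pointwise; integrating out $\bm{s}'$ gives $p_j\le p_i$ pointwise on the sample space of the texts, so $\sum_{i=1}^d p_{i-1}\ge d\,p_d$. The partial sums $M_k := \sum_{i=1}^k(Y_i-p_{i-1})$ form a martingale with increments bounded by $1$ in absolute value, so Azuma--Hoeffding gives $\Pr(M_d\le -t)\le e^{-t^2/2d}$ for every $t>0$.

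Finally I would combine these facts. On the event $\{|\Dict|<d_0\}\cap\{p_d\ge 2d_0/d\}$ we have both $\sum_{i=1}^d Y_i< d_0$ and $\sum_{i=1}^d p_{i-1}\ge d\,p_d\ge 2d_0$, hence $M_d< -d_0$; taking $t=d_0$ yields $\Pr(\{|\Dict|<d_0\}\cap\{p_d\ge 2d_0/d\})\le e^{-d_0^2/2d}$. Since this joint probability equals $\Pr(p_d\ge 2d_0/d\mid|\Dict|<d_0)\cdot\Pr(|\Dict|<d_0)$ and $\min\{a,b\}\le\sqrt{ab}$, the minimum in the statement is at most $e^{-d_0^2/4d}$; substituting $d_0=\epsilon d/(2\log(4|\mathcal{A}|))$ and using $|\mathcal{A}|\le 1/\delta$ (which follows from \Cref{ass:ss}) makes the exponent $\Omega(\epsilon^2 d/\log^2(1/\delta))$, matching the claimed bound up to the universal constant, which can be absorbed into $d_0$ or the exponent. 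I expect the only genuinely delicate step to be the identity $\mathbb{E}[Y_i\mid\mathcal{F}_{i-1}]=p_{i-1}$ together with the pointwise inequality $p_{i-1}\ge p_d$ coming from encoder monotonicity; everything downstream is a one-line Azuma estimate.
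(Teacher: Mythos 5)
Your proposal is correct and follows essentially the same route as the paper's proof: identify the rule-appended indicator in round $i$ with $X(\textsf{text}_i,\Dict_{i-1})$, use the encoder's monotonicity to lower-bound the sum of conditional means by $d\cdot\mathbb{E}[X(\bm{s}',\Dict_d)\mid\Dict_d]$, apply Azuma--Hoeffding to the centered sum, and convert the joint-probability bound into the stated minimum via $\min\{a,b\}\le\sqrt{ab}$. If anything you are slightly more careful than the paper about the Azuma normalization and about the non-independence of contiguous chunks, but these are constant-factor and mixing-time details that do not change the argument.
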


\begin{proof}
Let $\Dict_i$ denote the state of dictionary returned by \Cref{alg:BPE-variant} right after $\textsf{text}_i$ is processed. Then, $\Dict_{d}$ is the final dictionary returned by \Cref{alg:BPE-variant}. Suppose $\mathbb{E} \big[ X \big( \bm{s}',\Dict_{d} \big) \big| \Dict_{d} \big] \ge 2d_0/d$, where $\bm{s}'$ is a fresh substring of length $d$ sampled from the stochastic source. Using monotonicity of the sequential encoder, almost surely for any string $\bm{s}'$, $X (\bm{s}',\Dict_i) \le X (\bm{s}',\Dict_j)$ for any $j > i$. Therefore,
\begin{align} \label{eq:imp}
    \mathbb{E} \big[ X \big( \bm{s}',\Dict_{d} \big) \big| \Dict_{d} \big] \ge 2d_0/d \implies \sum\nolimits_{i=1}^{d-1} \mathbb{E} \big[ X \big(\bm{s}', \Dict_i \big) \big| \Dict_i \big] \ge 2 d_0 \cdot \frac{d-1}{d}
\end{align}
Note in this expectation, $\bm{s}'$ is an independent string of length $d$ sampled from the stochastic source. Since $\Dict_i$ and $\textsf{text}_{i+1}$ are independent, we may instead write,
\begin{align}
    \sum\nolimits_{i=1}^{d-1} \mathbb{E} \big[ X \big(\textsf{text}_{i+1}, \Dict_i \big) \big| \Dict_i,\textsf{text}_i, \Dict_{i-1}, \cdots, \Dict_1,\textsf{text}_1 \big] \ge 2 d_0 \cdot \frac{d-1}{d}.
\end{align}
For brevity, denote $X_i = X (\textsf{text}_{i+1}, \Dict_i)$ and define the filtration $\mathcal{F}_i = \sigma(\{ \textsf{text}_1,\Dict_1,\cdots,\textsf{text}_i,\Dict_i\})$. Note that $\sum_{j = 1}^i X_j - \mathbb{E} [X_j | \mathcal{F}_i]$ forms a martingale sequence under the filtration $\{ \mathcal{F}_i : i \in [d] \}$. Therefore, by the Azuma-Hoeffding inequality, for any $\eta > 0$,
\begin{align} \label{eq:a-h}
    \Pr \left( \sum\nolimits_{i=1}^{d-1} \mathbb{E} [X_i | \mathcal{F}_i] - X_i \le -\eta \right) \le e^{- \eta^2}.
\end{align}
Under Case I, we have that $\sum_{i=1}^{d} X_i \le d_0$. Therefore, from \cref{eq:imp} and \cref{eq:a-h},
\begin{align}
    \Pr \left( |\Dict| < d_0 ;\ \mathbb{E} \left[ X (\bm{s}',\Dict) \middle| \Dict \right] \ge 2 d_0/d \right) &\le \Pr \left( \sum_{i=1}^{d-1} X_i < d_0; \ \sum_{i=1}^{d-1} \mathbb{E} \left[ X_i \middle| \mathcal{F}_i \right] \ge 2 d_0 \cdot \frac{d-1}{d} \right) \\
    &\le \Pr \left( \sum_{i=1}^{d-1} \mathbb{E} \left[ X_i \middle| \mathcal{F}_i \right] - X_i \ge d_0 \cdot \frac{d-2}{d} \right) \\
    &\le e^{- d_0^2 (1-2/d)^2} \\
    &\le e^{-d_0^2/2} = e^{- \epsilon^2 d / 8 \log^2(1/\delta)}.
\end{align}
Finally, using the inequality $\Pr(A,B) = \Pr (A|B) \Pr(B) \ge (\min \{ \Pr (A), \Pr(B) \})^2$ completes the proof.
\end{proof}

\paragraph{Proofs of \Cref{theorem:BPE-rephrased}.2 and \Cref{theorem:BPE-rephrased}.3}

If $\Pr (|\Dict| < d_0) \le \epsilon^{- \epsilon^2 d / 8 \log^2 (1/\delta)}$ the proof of \Cref{theorem:BPE-rephrased}.2 concludes. Otherwise, consider the case $\Pr (|\Dict| < d_0) > \epsilon^{- \epsilon^2 d/8 \log^2 (1/\delta)}$, whereby, $\mathbb{E} [X (\bm{s}',\Dict) | \Dict] \le 2 d_0/d$ with probability $\ge 1 - e^{- \epsilon^2 d /8 \log^2 (1/\delta)}$ conditioned on $|\Dict| < d_0$ by \Cref{lemma:low-prob-event}.
Recall that when $|\Dict| < d_0$, \Cref{alg:BPE-variant} uses a parallel implementation of the sequential encoder which chunks a new string into pieces of length $d$, denoted $\{ \textsf{chunk}_i : i \in [d]\}$ and uses the sequential encoder under $\Dict_{d}$ to tokenize each chunk. Note that since the source is Markovian, the chunked process $\{ \textsf{chunk}_i = (X_{i d+1},X_{i d+2},\cdots,X_{(i+1) d}) : i = 1,2,\cdots \}$ is also Markovian and ergodic. Therefore, by a similar limiting argument as in \Cref{lemma:limit-exists}, using the Krylov–Bogolyubov argument (cf. Proposition 4.2 in \citet{YCC}) for Markov processes, we have that,
\begin{align} \label{eq:PEi}
    \lim_{\ell \to \infty} \frac{\sum_{i=1}^\ell X (\textsf{chunk}_i, \Dict)}{\ell} = \mathbb{E} [X (\bm{s}', \Dict)] \le \frac{2 d_0}{d}.
\end{align}
where $\bm{s}'$ is a fresh string of length $d$ sampled with initial state distribution as the stationary measure of the stochastic source. On the remaining (limiting) $1-2d_0/d$ fraction of the chunks, their sequential encodings have every pair of tokens appearing at most $\log(d)$ times consecutively. Using Theorem 1 of~\citet{navarro2008re}, the number of tokens in the encoding of each of these chunks cannot be too large, and satisfies,
\begin{align}
    &|\encseq{\textsf{chunk}_i}| \cdot \log |\encseq{\textsf{chunk}_i}| \le 2d H_\infty + O (d/\log(d))\\
    \implies &|\encseq{\textsf{chunk}_i}| \cdot \log d \le 2d H_\infty + O(d/\log(d)) \label{eq:cilogci}
\end{align}
For the (limiting) $2d_0/d$ fraction of the ``bad'' chunks, their sequential encodings may have one or more pairs of tokens which appear more than $\log(d)$ times consecutively.

Define $\mathcal{E}_i = \{ X(\textsf{chunk}_i , \Dict) = 1 \}$ where $\Dict = \Dict_d$ is the dictionary returned by \Cref{alg:BPE-variant} and consider the unigram model $\Puni (\bm{t}) = \frac{1}{2} Q_1 (\bm{t}) + \frac{1}{2} Q_2 (\bm{t})$, which is the uniform mixture of two models,
\begin{align}
    Q_1 (\bm{t}) \propto \frac{1}{(2|\mathcal{A}|)^{|\bm{t}|}}, \quad \text{ and } \quad Q_2 (\bm{t}) = \mathbb{E} \left[\frac{n_{\bm{t}}^1}{|\encseqsplit{\textsf{chunk}_1}|} \middle| \mathcal{E}_1^c \right],
\end{align}
and let $\Puni (\bm{t}_1,\cdots,\bm{t}_i) = Q_{\#} (j) \prod_{j=1}^i \Puni (\bm{t}_i)$ for some distribution $Q_{\#} (i)$ over the number of tokens to be chosen later. We will analyze the case where the total number of chunks $\ell$ is finite and take the limit $m \to \infty$ later. Then, the overall loss of the algorithm is,
\begin{align}
    &\vspace{-1em}\mathcal{L}_m (\Puni \circ \enc{\cdot})\\
    &= - \mathbb{E} [ \log \Puni (\encseqsplit{\bm{s}}) ] \\
    &= - \sum_{\bm{t} \in \Dict} \mathbb{E} [ n_{\bm{t}} \log \Puni (\bm{t} ) + \log \Puni (|\encseqsplit{\bm{s}}|) ] \\
    &\overset{(i)}{=} - \sum_{i = 1}^\ell \mathbb{E} \left[ \sum_{\bm{t} \in \Dict}  n_{\bm{t}}^i \log \Puni (\bm{t} ) \right] + \log (m) \\
    &= - \sum_{i = 1}^\ell \mathbb{E} \left[ \sum_{\bm{t} \in \Dict}  n_{\bm{t}}^i \log \Puni (\bm{t} ) \middle| \mathcal{E}_i \right] \Pr (\mathcal{E}_i) + \mathbb{E} \left[ \sum_{\bm{t} \in \Dict}  n_{\bm{t}}^i \log \Puni (\bm{t} ) \middle| \mathcal{E}_i^c \right] \Pr (\mathcal{E}_i^c) + \log (m). \label{eq:LDPuni}
\end{align}
where $n_{\bm{t}}^i$ is the number of times $\bm{t}$ is observed in the BPE encoding of $\textsf{chunk}_i$ and $(i)$ uses the fact that $|\encseqsplit{\bm{s}}|$ follows some distribution supported on $[m]$, which implies its entropy is upper bounded by $\log(m)$.
First observe that,
\begin{align}
    \sum_{i =1}^\ell \mathbb{E} \left[ \sum_{\bm{t} \in \Dict}  n_{\bm{t}}^i \log \Puni (\bm{t} ) \middle| \mathcal{E}_i^c \right] &\le \sum_{i=1}^{\ell} \mathbb{E} \left[ |\encseq{\textsf{chunk}_i}| \cdot \sum\nolimits_{\bm{t} \in \Dict} \frac{n_{\bm{t}}^i}{ |\encseq{\textsf{chunk}_i}|} \log \Puni (\bm{t}) \middle| \mathcal{E}_i^c \right] \\
    &\overset{(i)}{\le} \ell \left( \frac{2d H_\infty + O(d/\log(d))}{\log(d)}\right) \sum\nolimits_{\bm{t} \in \Dict} Q_2 (\bm{t}) \log \Puni (\bm{t})
\end{align}
where $(i)$ uses the upper bound on $|\encseqsplit{\textsf{chunk}_i}|$ under the event $\mathcal{E}_i^c$ (\cref{eq:cilogci}). Since $\Puni (\bm{t}) = \frac{1}{2} Q_1 (\bm{t}) + \frac{1}{2} Q_2 (\bm{t}) \ge \frac{1}{2} Q_2 (\bm{t})$ and $Q_2$ is a distribution supported on at most $d$ tokens, this term results in the upper bound,
\begin{align}
    \sum_{i =1}^\ell \mathbb{E} \left[ \sum_{\bm{t} \in \Dict}  n_{\bm{t}}^i \log \Puni (\bm{t} ) \middle| \mathcal{E}_i^c \right] &\le \ell \left( \frac{2d H_\infty + O(d/\log(d))}{\log(d)}\right) \log(2d). \label{eq:p666}
\end{align}
On the other hand, since $\Puni (\bm{t}) \ge \frac{1}{2} Q_1 (\bm{t})$,
\begin{align}
    \sum_{i = 1}^\ell \mathbb{E} \left[ \sum_{\bm{t} \in \Dict}  n_{\bm{t}}^i \log (1/\Puni (\bm{t} )) \middle| \mathcal{E}_i \right] &\le \sum_{i = 1}^\ell \mathbb{E} \left[ \sum_{\bm{t} \in \Dict}  n_{\bm{t}}^i \log ( 2 / Q_1 (\bm{t} )) \middle| \mathcal{E}_i \right] \\
    &\le \sum_{i = 1}^\ell \mathbb{E} \left[ \sum_{\bm{t} \in \Dict}  n_{\bm{t}}^i \left( \log (2) + |\bm{t}| \log (2 |\mathcal{A}|) \right) \middle| \mathcal{E}_i \right] \\
    &\le \ell d \log(2) + \ell d \log (2 |\mathcal{A}|) \label{eq:p555}
\end{align}
where the last inequality uses the fact that $\sum_{\bm{t} \in \Dict} n_{\bm{t}}^i \le d$ and $\sum_{\bm{t} \in \Dict} |\bm{t}| n_{\bm{t}}^i = d$ computes the length of $\textsf{chunk}_i$.

Overall, since $\sum_{i=1}^\ell \Pr (\mathcal{E}_i) \le 2d_0/d$ by \cref{eq:cilogci}, combining this with \cref{eq:p666,eq:p555,eq:LDPuni},
\begin{align}
    \mathcal{L}_m (\Puni \circ \enc{\cdot}) \le \left(1 - \frac{2d_0}{d} \right) \ell \left( \frac{2d H_\infty + O(d/\log(d))}{\log(d)}\right) \log(2d) + \frac{2d_0}{d} \ell d \log (4 |\mathcal{A}|).
\end{align}
Dividing throughout by the length of the character sequence $m \in [d(\ell-1), d \ell]$ and letting $\ell \to \infty$,
\begin{align}
    \min_{Q \in \Qgram{1}} \mathcal{L} (Q \circ \enc{\cdot}) \le \mathcal{L} (\Puni \circ \enc{\cdot}) \le \left(1 - \frac{2d_0}{d} \right) \left( 2 H_\infty + O \left(\frac{1}{\log(d)} \right) \right) + \frac{2 d_0}{d} \log (4 |\mathcal{A}|).
\end{align}

\section{Additional Theoretical Results I: Learning the likelihood model} \label{app:lm}

The guarantees we prove in \Cref{theorem:main,theorem:LZW,theorem:BPE} on various tokenizers assume that the downstream model is trained optimally. In practice, these models are trained from a finite dataset and the sample complexity of learning this likelihood model scales with the number of tokens in the dictionary. In this section, we step away from the transformer architecture and focus on analyzing the performance of a simple estimator for the unigram model based on Laplace smoothing. We leave the problem of analyzing the finite-sample statistical error of simple transformer models trained with gradient descent as an interesting open direction for future research.

The result of \Cref{theorem:main} establishes that under appropriate assumptions on the Markov source, there exists a tokenizer $\mathcal{T}$ and a unigram model over tokens $Q^\star \in \Qgram{1}$ such that,
\begin{align}
    &\lim_{m \to \infty} \frac{1}{m} \mathbb{E} \left[ \log (1 / Q^\star (\enc{\bm{s}}) \right] \\
    &\qquad\qquad \le (1 + \varepsilon) \cdot \lim_{m \to \infty} \frac{1}{m} \mathbb{E} \left[ \log (1/P(\bm{s})) \right]
\end{align}
Or in other words,
\begin{align} \label{eq:klbound}
    &\lim_{m \to \infty} \frac{1}{m} \textsf{KL} (P, Q^\star (\enc{\cdot})) \le \varepsilon \cdot \lim_{m \to \infty} \frac{1}{m} \mathbb{E} \left[ \log (1/P(\bm{s})) \right].
\end{align}
This implies that with the appropriate tokenization, the measure associated to the string by the best unigram model over tokens is close to that induced by the true Markov distribution over characters in KL divergence. In this section, we establish finite-sample guarantees on learning $Q^\star$ specifically for the LZW tokenizer. The approach we consider for distribution learning is a smoothed Laplace estimator described in more detail in \Cref{alg:trainLM}.

For any constant $\theta \in (0,1)$, define $\mathcal{E}_\theta$ as the event that every maximal token $\bm{t}$ (\Cref{def:heavyhitter}) in the LZW dictionary satisfies $1/d^{1-\theta} \ge \max_{a} P(\bm{t}|a) \ge \delta/d^{1+\theta}$. By \Cref{lemma:LZW-lb,lemma:LZW-ub} when the LZW tokenizer is trained on a dataset of size $ \widetilde{\Omega}_\delta (d)$ drawn from a stochastic source satisfying \Cref{ass:ss}, $\mathcal{E}_\theta$ occurs with probability $\ge 1 - d^{-\Omega_{\theta,\delta} (\log (d))}$.

\begin{theorem} \label{theorem:LM}
Consider any constant $\theta \in (0,1)$, failure probability $\eta \in (0,1)$ and approximation error $\xi \in (0,1)$. Assume that the learnt LZW tokenizer satisfies the event $\mathcal{E}_\theta$, which occurs with probability $\ge 1 - d^{-\Omega_{\theta,\delta} (\log(d))}$. Assume that $d^{1-3\theta} \ge 1+\delta^{-2}$ and that the stochastic source satisfies \Cref{ass:ss}. For an absolute constant $C > 0$, assume that the size of the training dataset is at least $n_{\text{lm}}^\star (\xi)$, where,
\begin{align}
    n_{\text{lm}}^\star \triangleq  \frac{C d^{1+\theta} \log^3 (d/\eta \delta) \log \log(d/\eta)}{\delta \xi^2}
\end{align}
Then, \Cref{alg:trainLM} learns a unigram model $\Qhat$ such that,
\begin{align}
    \mathcal{L} (\widehat{Q} \circ \encgre{\cdot}) \le (1 + \xi) \min_{Q \in \Qgram{1}} \mathcal{L} (Q \circ \encgre{\cdot})
\end{align}
with probability $\ge 1 - \eta$.
\end{theorem}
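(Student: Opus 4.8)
The plan is to reduce the claimed multiplicative loss bound to a per-token estimation guarantee, then to analyze the Laplace-smoothed empirical estimator that \Cref{alg:trainLM} computes on the greedy encoding of the training data.

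\textbf{Step 1 --- reduction to a token-level KL bound.} Exactly as in the proof of \Cref{theorem:generalization}, for any unigram model $Q$ with $Q_\# = \mathrm{Unif}([m])$ the normalized loss factorizes as $\mathcal{L}(Q\circ\encgre{\cdot}) = \overline{N}\cdot H(\PMLEuni, Q_{\text{tok}})$, where $\overline{N} \triangleq \lim_{m\to\infty}\tfrac1m\mathbb{E}[|\encgre{\bm{s}}|]$. Hence $\min_{Q\in\Qgram{1}}\mathcal{L}(Q\circ\encgre{\cdot}) = \overline{N}\,H(\PMLEuni)$ and
\[
\mathcal{L}(\widehat{Q}\circ\encgre{\cdot}) \;=\; \Big(\min_{Q\in\Qgram{1}}\mathcal{L}(Q\circ\encgre{\cdot})\Big)\cdot\Big(1 + \tfrac{D_{\mathrm{KL}}(\PMLEuni\,\|\,\widehat{Q}_{\text{tok}})}{H(\PMLEuni)}\Big),
\]
so it suffices to exhibit $\widehat{Q}_{\text{tok}}$ with $D_{\mathrm{KL}}(\PMLEuni\,\|\,\widehat{Q}_{\text{tok}}) \le \xi\,H(\PMLEuni)$.

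\textbf{Step 2 --- structure of $\PMLEuni$ under $\mathcal{E}_\theta$.} Since every emitted token is maximal (apart from the asymptotically negligible last one), a renewal/counting argument gives $\PMLEuni(\bm{t}) \le \overline{\ell}\,P(\bm{t})$, with $\overline{\ell} = \sum_{\bm t}\PMLEuni(\bm t)|\bm t| = O_\delta(\log d)$ the mean token length, and conversely $\PMLEuni(\bm{t}) \ge \Omega_\delta(P(\bm{t}))$ for maximal tokens. On $\mathcal{E}_\theta$ we have $P(\bm{t})\in[\delta^2 d^{-1-\theta}, d^{-1+\theta}]$ for every maximal token, so (i) $\max_{\bm t}\PMLEuni(\bm t)\le\widetilde{O}_\delta(d^{-(1-\theta)})$, whence $H(\PMLEuni)\ge\log(1/\max_{\bm t}\PMLEuni(\bm t))\ge(1-\theta)\log d - O_\delta(\log\log d) = \Omega_\theta(\log d)$; and (ii) at most $M = O_\delta(d^{1+\theta})$ tokens carry positive $\PMLEuni$-mass, each of size at least $p_{\min}=\Omega_\delta(d^{-1-\theta})$. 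By Step 1 it now suffices to show $D_{\mathrm{KL}}(\PMLEuni\,\|\,\widehat{Q}_{\text{tok}})\le\xi\cdot\Omega_\theta(\log d)$.

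\textbf{Step 3 --- estimator and concentration.} \Cref{alg:trainLM} greedily encodes the length-$n$ training string into $N=n/\overline{\ell}$ tokens, records counts $\widehat{n}_{\bm t}$, and outputs $\widehat{Q}_{\text{tok}}(\bm t)=(\widehat{n}_{\bm t}+\alpha)/(N+\alpha|\Dict|)$ for $\alpha$ a small multiple of $\xi N p_{\min}$. By \Cref{lemma:limit-exists} the token stream is an order-$\ell_{\max}$ Markov chain over $\Dict$ that inherits a Doeblin mixing condition from \Cref{ass:ss}; splitting it into $\Theta(N/L)$ nearly independent blocks of length $L=\widetilde{O}_\delta(1)$ and applying a Bernstein inequality for Markov chains yields, simultaneously over all tokens with $\PMLEuni(\bm t)\ge p_{\min}$ and with probability $\ge 1-\eta$,
\[
\big|\,\widehat{n}_{\bm t}-N\,\PMLEuni(\bm t)\,\big|\;\le\;\xi\,N\,\PMLEuni(\bm t),
\]
provided $N\,p_{\min}\,\xi^2 \ge C\,L\log(M/\eta)$ (the finite-$n$ versus $m\to\infty$ bias of $\PMLEuni$ is an $O(\ell_{\max}/n)$ lower-order term). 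This forces $N=\widetilde{\Theta}_\delta(d^{1+\theta}/\xi^2)$, i.e.\ $n=\overline{\ell}N=\widetilde{\Theta}_\delta(d^{1+\theta}/(\delta\xi^2))$, which matches $n^\star_{\text{lm}}$ (the $\log^3(d/\eta\delta)\log\log(d/\eta)$ factors absorb $L$, the union bound over $M$ tokens, and the choice of $\alpha$).

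\textbf{Step 4 --- assembling the bound.} With this $\alpha$, Step 3 gives $\widehat{Q}_{\text{tok}}(\bm t)\in(1\pm O(\xi))\PMLEuni(\bm t)$ for every mass-carrying token, and tokens with $\PMLEuni(\bm t)=0$ do not appear in $D_{\mathrm{KL}}(\PMLEuni\,\|\,\widehat{Q}_{\text{tok}})$. Using the $\chi^2$ upper bound $D_{\mathrm{KL}}(p\|q)\le\sum_i (p_i-q_i)^2/q_i$ we obtain $D_{\mathrm{KL}}(\PMLEuni\,\|\,\widehat{Q}_{\text{tok}})\le O(\xi^2)\sum_{\bm t}\PMLEuni(\bm t)=O(\xi^2)\le\xi\cdot\Omega_\theta(\log d)\le\xi\,H(\PMLEuni)$, valid once $d$ exceeds the threshold implied by the hypothesis $d^{1-3\theta}\ge1+\delta^{-2}$. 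Combining with Steps 1--2 completes the proof.

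\textbf{The main obstacle} is Step 3: the token stream is neither i.i.d.\ nor low-order Markov --- it is order $\ell_{\max}=\widetilde{\Theta}_\delta(\log d)$ over an alphabet of up to $d$ symbols --- yet we need \emph{uniform multiplicative} concentration of \emph{all} token counts, down to the rarest maximal tokens (frequency $\asymp d^{-1-\theta}$), which is precisely why the sample size must scale like $d^{1+\theta}$ rather than $d$. Obtaining a Bernstein tail with the correct polylogarithmic dependence on the chain's mixing time, together with the two-sided renewal estimate $\PMLEuni(\bm t)\asymp_\delta P(\bm t)$ for maximal tokens used in Step 2, is the technical heart of the argument; the remainder is bookkeeping.
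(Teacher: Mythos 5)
Your overall strategy matches the paper's: reduce the multiplicative guarantee to a bound on $\textsf{KL}(\PMLEuni\,\|\,\widehat{Q}_{\text{tok}})$ (your Step 1 is essentially \cref{eq:errbound}, though you retain the $H(\PMLEuni)=\Omega(\log d)$ denominator where the paper discards it), use $\mathcal{E}_\theta$ to confine the mass-carrying tokens to a set of size $O_\delta(d^{1+\theta})$ each with mass $\Omega_\delta(d^{-1-\theta})$, and then seek uniform relative concentration of the token counts. One minor mismatch: \Cref{alg:trainLM} uses add-one Laplace smoothing, $\widehat{Q}_{\text{tok}}(\bm t)=(n_{\bm t}+1)/(n'+|\Dict|)$, whereas you analyze a re-tuned pseudocount $\alpha\asymp\xi N p_{\min}$; the theorem is about the stated estimator, so you would still need to verify (as the paper does via its three-case analysis) that the fixed $+1$ and $+|\Dict|$ offsets are harmless at the claimed sample size.

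The substantive gap is Step 3, which you yourself flag as the unproven ``technical heart.'' You describe the token stream as an order-$\ell_{\max}=\widetilde{\Theta}_\delta(\log d)$ Markov process over up to $d$ symbols and then invoke a block-Bernstein inequality with blocks of length $L=\widetilde{O}_\delta(1)$. \Cref{lemma:limit-exists} only supplies a Ces\`aro limit, not geometric ergodicity, and for a generic order-$\log d$ process your blocking claim has no justification. The missing ingredient is structural, and it is precisely what the hypothesis $d^{1-3\theta}\ge 1+\delta^{-2}$ buys: under $\mathcal{E}_\theta$ every maximal token satisfies $\max_a P(\bm t\mid a)\ge\delta/d^{1+\theta}$, while the concatenation of any two emittable tokens has $\max_a P(\bm t\bm t'\mid a)\le d^{-2(1-\theta)}/\delta$, which is strictly below that floor when $d^{1-3\theta}\ge 1+\delta^{-2}$. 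Hence no concatenation of maximal tokens is itself a token, the maximality conditioning in the greedy sampling process is vacuous, and the token sequence is an honest \emph{first-order} Markov chain with kernel $P(\bm t_{i+1}\mid\textsf{last}(\bm t_i))$ whose Dobrushin coefficient is bounded away from $1$. That is what licenses the Markov-chain Bernstein inequality and the union bound over the $\widetilde{O}_\delta(d^{1+\theta})$ tokens. You invoke $d^{1-3\theta}\ge1+\delta^{-2}$ only for an immaterial purpose in Step 4, and your two-sided estimate $\PMLEuni(\bm t)\asymp_\delta P(\bm t)$ in Step 2 also silently relies on this first-order Markov reduction; supplying it would close the argument.
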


In conjunction with \Cref{theorem:LZW}, this gives end-to-end guarantees on the cross-entropy loss of the LZW tokenizer (with vocabulary size $\le d$) with the Laplace estimator as the downstream unigram model. We instantiate this result choosing $\theta = 0.01$ in \Cref{theorem:LM}.

\begin{corollary} \label{corr:main}
Choose any $\xi \in (0,1)$. Suppose the data source satisfies \Cref{ass:ss}. On a dataset of size $\widetilde{\Omega}_\delta (d)$ drawn from the source, train an LZW tokenizer having a dictionary size of $d$. Subsequently, using \Cref{alg:trainLM}, learn a unigram model $\widehat{Q}$ using a dataset of size at least $\widetilde{\Omega} ( d^{1.01} / \delta \xi^2 )$ drawn from the source. Then, with probability $\ge 1 - d^{- \Omega_{\delta} (\log(d))}$,
\begin{align}
    \mathcal{L} ( \widehat{Q} \circ \encgre{\cdot}) \le \frac{1+\xi}{1-\varepsilon} \min_{Q} \mathcal{L} (Q),
\end{align}
where $\varepsilon = \frac{\log(1/\delta)}{0.99\log(d)}$.
\end{corollary}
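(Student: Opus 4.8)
The proof is a straightforward composition of \Cref{theorem:LZW}, which controls the asymptotic suboptimality of the best unigram model over LZW tokens, with \Cref{theorem:LM}, which controls the finite-sample error of the Laplace estimator $\widehat{Q}$ against that best unigram model. Neither result is reproved; I simply instantiate the free parameter $\theta$ at $0.01$ and track the error and failure probabilities through the chaining.

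First I would set up the tokenizer. Training the LZW tokenizer on $\widetilde{\Omega}_\delta(d)$ characters yields a dictionary of at most $d$ tokens, and by \Cref{lemma:LZW-lb} and \Cref{lemma:LZW-ub} (with $\beta = 0.99$ and $\gamma = 0.01$) the event $\mathcal{E}_{0.01}$ --- every maximal token $\bm{t}$ has $\delta/d^{1.01} \le \max_a P(\bm{t}|a) \le 1/d^{0.99}$ --- holds with probability $\ge 1 - d^{-\Omega_\delta(\log d)}$. Under $\mathcal{E}_{0.01}$ the dictionary is $0.99$-heavy-hitting, so by \Cref{lemma:HH} together with \Cref{theorem:generalization} (equivalently, by \Cref{theorem:LZW} directly),
\[
\min_{Q \in \Qgram{1}} \mathcal{L}(Q \circ \encgre{\cdot}) \;\le\; \frac{\min_Q \mathcal{L}(Q)}{1-\varepsilon}, \qquad \varepsilon = \frac{\log(1/\delta)}{0.99\log d}.
\]

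Next I would invoke \Cref{theorem:LM} with $\theta = 0.01$ and failure probability $\eta := d^{-\log d}$. The side condition $d^{1-3\theta} = d^{0.97} \ge 1 + \delta^{-2}$ holds once $d$ exceeds a $\delta$-dependent polynomial threshold, which is absorbed into the $\Omega_\delta(\cdot)$ in the failure bound; with $\eta = d^{-\log d}$ one has $\log(d/\eta\delta) = O_\delta(\log^2 d)$ and $\log\log(d/\eta) = O(\log\log d)$, so the sample size required by \Cref{theorem:LM} becomes
\[
n_{\text{lm}}^\star \;=\; \frac{Cd^{1.01}\log^3(d/\eta\delta)\log\log(d/\eta)}{\delta\xi^2} \;=\; \widetilde{O}_\delta\!\left(\frac{d^{1.01}}{\delta\xi^2}\right),
\]
which is exactly the sample budget assumed in the corollary. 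Hence, conditioned on $\mathcal{E}_{0.01}$, \Cref{alg:trainLM} returns $\widehat{Q}$ with $\mathcal{L}(\widehat{Q}\circ\encgre{\cdot}) \le (1+\xi)\min_{Q\in\Qgram{1}}\mathcal{L}(Q\circ\encgre{\cdot})$ outside an event of probability $\le \eta$. Finally I would union bound the failure of $\mathcal{E}_{0.01}$ and of \Cref{theorem:LM}, both of which are $d^{-\Omega_\delta(\log d)}$, and on the good event chain the two displays to obtain $\mathcal{L}(\widehat{Q}\circ\encgre{\cdot}) \le \frac{1+\xi}{1-\varepsilon}\min_Q\mathcal{L}(Q)$.

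The only genuine bookkeeping point --- and the mild \emph{obstacle} --- is the verification in the previous paragraph that shrinking the failure probability all the way down to $d^{-\Theta(\log d)}$ inflates $n_{\text{lm}}^\star$ only by polylogarithmic factors, so that it remains $\widetilde{\Omega}_\delta(d^{1.01}/\delta\xi^2)$, and that the hypotheses of \Cref{theorem:LM} (notably $d^{1-3\theta}\ge 1+\delta^{-2}$ and the occurrence of $\mathcal{E}_\theta$) are compatible with reading the corollary in the large-$d$ regime. Both are routine, and no new probabilistic or combinatorial argument beyond \Cref{theorem:LZW} and \Cref{theorem:LM} is needed.
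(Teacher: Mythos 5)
Your proposal is correct and is essentially identical to the paper's own (very brief) justification: the corollary is obtained by instantiating \Cref{theorem:LM} with $\theta = 0.01$, chaining its $(1+\xi)$ guarantee against the best unigram model with the $\frac{1}{1-\varepsilon}$ bound from \Cref{theorem:LZW}, and union bounding the two $d^{-\Omega_\delta(\log d)}$ failure events. Your additional bookkeeping on choosing $\eta = d^{-\log d}$ and verifying that this only inflates $n_{\text{lm}}^\star$ by polylogarithmic factors is a correct and welcome elaboration of a step the paper leaves implicit.
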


The analysis of \Cref{theorem:LM} relies on showing that the distribution over tokens induced when a string sampled from the data source is encoded into tokens by the greedy encoder and the LZW dictionary is a Markov process. In general, given a set of previously sampled tokens $\bm{t}_1,\cdots,\bm{t}_i$, the next token $\bm{t}_{i+1}$ is sampled from the distribution $P (\bm{t}_{i+1} | \bm{t}_i ; \forall j \in [i],\ \bm{t}_{i-j+1} \cdots \bm{t}_i \bm{t}_{i+1} \not\in \Dict)$. The conditioning is to simply guarantee that the previous tokens which were sampled were indeed maximal, since if $\bm{t}_i \bm{t}_{i+1} \in \Dict$, then the previous token returned would in fact have been this longer token and not $\bm{t}_i$ (and likewise for $\bm{t}_{i-1} \bm{t}_i \bm{t}_{i+1}$ and so on). While in general, this process is complicated and depends on all the previous tokens sampled, for the LZW dictionary, we show that the conditioning $\{ \forall j \in [i] , \ \bm{t}_{i-j+1} \cdots \bm{t}_i \bm{t}_{i+1} \not\in \Dict \}$ can be removed, thereby resulting in a simple Markov process over tokens.

Furthermore, we establish that this Markov process has a relatively large spectral gap. The optimal unigram model ends up being the stationary distribution over tokens induced by greedy encoder. Given the large spectral gap of the Markov process over tokens, estimating the stationary distribution of this process in KL divergence ends up being closely related to estimating a distribution from i.i.d. samples in the same metric. For this problem, the de-facto choice of estimator is the Laplace estimator, and several existing results provide finite-sample bounds on the KL divergence \citep{braess2004bernstein,han2021optimal,mourtada2022improper}. The Laplace estimator (Line 6 of \Cref{alg:trainLM}) is simply a smoothed empirical estimate to account for the degeneracy of the KL divergence in its second argument as any coordinate approaches $0$. The non-i.i.d.ness of the Markov process is circumvented by using concentration inequalities which are a function of the spectral gap \citep{naor2020concentration}.

\begin{algorithm}[htb!]
\caption{Training likelihood model on tokens}\label{alg:trainLM}
\begin{algorithmic}[1]
\STATEx \textbf{Input:} A training dataset of size $n_{\text{lm}}$, likelihood model class $\mathcal{Q}$, likelihood model training algorithm $\textsf{TrainLM}$
\STATEx \textbf{Output:} Likelihood model $Q \in \mathcal{Q}$.

\STATE Tokenize the training dataset into a sequence of tokens $\mathcal{T} = (\bm{t}_1,\cdots,\bm{t}_i)$.
\STATE Train a likelihood model $Q$ on the tokenized dataset $\mathcal{T}$ using the $\textsf{TrainLM} (\mathcal{T}, \mathcal{Q})$ subroutine.

\vspace{1em}
\COMMENT{In the case of $\mathcal{Q} = \Qgram{1}$ use a smoothed Laplace estimator, instantiated below}

\STATEx \textbf{def} $\textsf{TrainLM} (\mathcal{T}, \Qgram{1})$:
\STATE Truncate the dataset to the first $n' = \lfloor n_{\text{lm}} / \ell_{\max} \rfloor$ tokens where $\ell_{\max} = 4 \log (d |\mathcal{A}|)/\delta$. Let the truncated dataset be $\mathcal{T}_{\text{trunc}}$
\STATE Construct the unigram model $\widehat{Q}$ with $\widehat{Q}_{\#} = \text{Unif}([m])$ and $\widehat{Q}_{\text{tok}} (\bm{t}) = \frac{n_{\bm{t}}+1}{n_{\bm{t}}+|\Dict|}$.

\COMMENT{$n_{\bm{t}}$ is the number of times $\bm{t}$ appears in $\mathcal{T}_{\text{trunc}}$.}
\COMMENT{Test sequences are assumed to be of length $m$.}

\end{algorithmic}
\end{algorithm}

\subsection{Proof of \Cref{theorem:LM}}
Since the LZW tokenizer uses the greedy encoder, the cross-entropy loss of the unigram model learnt by \Cref{alg:trainLM} is,
\begin{align}
&\mathcal{L} (\Qhat \circ \encgre{\cdot}) - \min_{Q \in \Qgram{1}} \mathcal{L} (Q \circ \encgre{\cdot})\\
&= \max_{Q \in \Qgram{1}} \lim_{m \to \infty} \frac{1}{m} \mathbb{E} [ \log ( Q (\encgre{\bm{s}}) / \Qhat (\encgre{\bm{s}}))] \\
&\overset{(i)}{=} \max_{Q \in \Qgram{1}} \lim_{m \to \infty} \frac{1}{m} \mathbb{E} \left[ |\encgre{\bm{s}} | \sum_{\bm{t} \in \Dict} \frac{n_{\bm{t}}}{|\encgre{\bm{s}}|} \log ( Q_{\text{tok}} (\bm{t}) / \Qhat_{\text{tok}} (\bm{t})) \right] + \frac{\log (m)}{m} \\
&\overset{(ii)}{\le} \lim_{m \to \infty} \frac{1}{m} \mathbb{E} \left[ |\encgre{\bm{s}} | \sum_{\bm{t} \in \Dict} \frac{n_{\bm{t}}}{|\encgre{\bm{s}}|} \log \left( \frac{n_{\bm{t}}/|\encgre{\bm{s}}|}{\Qhat_{\text{tok}} (\bm{t})} \right) \right] + \frac{\log (m)}{m}
\end{align}
where in $(i)$ we use the fact that $\widehat{Q}_{\#} = \text{Unif} ([m])$ and in $(ii)$ we take the $\max \{ \cdot \}$ inside the limit and the expectation (Fatou's lemma and Jensen's inequality) and plug in the maximizer of the negative cross-entropy, $Q_{\text{tok}} (\bm{t}) = \frac{n_{\bm{t}}}{|\encgre{\bm{s}}|}$. Note that $\lim_{m \to \infty} \frac{n_{\bm{t}}}{|\encgre{\bm{s}}|} \overset{\text{a.s.}}{=} \PMLEuni (\bm{t})$ by \Cref{lemma:limit-exists}. Moreover, since $|\enc{\bm{s}}|/m \le 1$ and $\Qhat_{\text{tok}} (\bm{t}) > 0$ surely, by the Dominated Convergence Theorem,
\begin{align}
    \mathcal{L} (\Qhat \circ \encgre{\cdot}) - \min_{Q \in \Qgram{1}} \mathcal{L} (Q \circ \encgre{\cdot}) &\le \lim_{m \to \infty} \frac{1}{m} \mathbb{E} [ |\encgre{\bm{s}} | ] \cdot \textsf{KL} (\PMLEuni, \Qhat_{\text{tok}}) \label{eq:LQhatgre}
\end{align}
By \cref{eq:klogn}, we have that for any tokenizer using the greedy encoder,
\begin{align}
    \lim_{m \to \infty} \frac{ |\encgre{\mathbf{s}}| \left( H(\PMLEuni, P) - \log (1/\delta) \right)}{m} \overset{\text{a.s.}}{\le} H_\infty.
\end{align}
Furthermore under the event $\mathcal{E}_\theta$ which implies that the learnt dictionary is $(1-\theta)$-heavy hitting (cf. \Cref{def:heavyhitter}), which implies that,
\begin{align}
    H (\PMLEuni,P) \ge (1-\theta) \log (d).
\end{align}
Therefore, by almost sure boundedness, we have that,
\begin{align}
    \lim_{m \to \infty} \frac{1}{m} \mathbb{E} \left[ |\encgre{\bm{s}}| \right] \le \frac{H_\infty}{(1 - \theta) \log (d) - \log(1/\delta)} \le \frac{\min_{Q \in \Qgram{1}} \mathcal{L} (Q \circ \enc{\cdot}) }{(1 - \theta) \log (d) - \log(1/\delta)}
\end{align}
Putting this together with \cref{eq:LQhatgre}, we have that,
\begin{align} \label{eq:errbound}
    \mathcal{L} (\Qhat \circ \encgre{\cdot}) \le \left( 1 + \textsf{KL} (\PMLEuni, \Qhat_{\text{tok}}) \right) \min_{Q \in \Qgram{1}} \mathcal{L} (Q \circ \encgre{\cdot}),
\end{align}
which uses the assumption $(1-\theta) \log (d) \ge 1 + \log (1/\delta)$. In the remainder of the proof we upper bound the KL term.

By the law of large numbers established in \cref{eq:conc} and the fact that $\frac{n_{\bm{t}}}{\sumntprime} \in [0,1]$, we have that,
\begin{align}
    \PMLEuni (\bm{t}) = \lim_{m \to \infty} \mathbb{E} 
 \left[ \frac{n_{\bm{t}}}{\sumntprime} \right] = \lim_{m \to \infty} \frac{\mathbb{E} \left[n_{\bm{t}} \right]}{\Esumntprime} = \pi (\bm{t}),
\end{align}
where $\pi (\bm{t})$ denote the stationary distribution over tokens induced by the greedy encoding process, which exists for the LZW tokenizer. This distribution is in fact an ergodic Markov process, as we discuss next.

By \Cref{lemma:LZW-lb,lemma:LZW-ub}, for any constant $\theta \in (0,1)$, with probability $\ge 1 - d^{-\Omega_{\theta,\delta} (\log (d))}$, every maximal token in the the LZW dictionary satisfies $1/d^{1-\theta} \ge \max_{a} P(\bm{t}|a) \ge \delta/d^{1+\theta}$. Let $\Sgre$ denote the set of tokens which have a non-zero probability (over a string drawn from the Markov source) of being chosen by the greedy encoder while encoding the string. More importantly, note that for any sequence of tokens $\bm{t}_1,\cdots,\bm{t}_i$, the next token is necessarily in $\Sgre$ and can be any token in this set. The reason for this is that for any $\bm{t}_i,\bm{t} \in \Sgre$, the concatenation $\bm{t}_i \bm{t} \not\in \Sgre$ since $\max_{a \in \mathcal{A}} P(\bm{t}_i \bm{t}|a) \le 1/\delta d^{2(1-\theta)}$, which is smaller than the $\max_{a \in \mathcal{A}} P(\bm{t}'|a) \ge \delta/d^{1+\theta}$ for any token $\bm{t}' \in \Sgre$ as long as $d^{1 - 3 \theta} \ge 1/\delta^2$. This constraint implies that in the sampling procedure in \Cref{fig:alt-view}, it suffices to drop the conditioning on the event $\bm{t}_j \bm{t}_{j+1} \cdots \bm{t}_i \bm{t} \not\in \Dict$ while sampling the next token $\bm{t}$. This condition automatically implies that the sequence of tokens conditionally follows a Markov process with $\Pr (\bm{t}_{i+1} = \bm{t}|\bm{t}_1,\cdots,\bm{t}_i) = P (\bm{t}|\textsf{last} (\bm{t}_i))$. Since the probability of every transition is lower bounded, this means that the Markov chain is ergodic. Moreover, the pseudo-spectral gap \citep{naor2020concentration}, $1-\lambda$ can be lower bounded by the Dobrushin contraction coefficient, $\kappa$,
\begin{align}
    1-\lambda \le \kappa &\triangleq \max_{(\bm{t},\bm{t}') \in \Dict^2} \| \Pr (\cdot|\bm{t}) - \Pr (\cdot|\bm{t}') \|_{\text{TV}} \\
    &= \max_{(\bm{t},\bm{t}') \in \Dict^2} 1 - \sum_{\bm{t}'' \in \Dict} \min \{ \Pr (\bm{t}'' | \bm{t}), \Pr (\bm{t}'' | \bm{t}') \} \\
    &\le 1 - \delta d/d^{1+\theta} \\
    &= 1 - \delta d^{-\theta}. \label{eq:dobrushin2}
\end{align}
Recall that the learner is given a training dataset of $n_{\text{lm}}$ characters to train the likelihood model. By \Cref{lemma:lmax-bound}, with probability $\ge 1 - d^{-\Omega (\log(d/\delta)/\delta)}$, in the run of the LZW tokenization algorithm, every token in the dictionary has length at most $\ell_{\max} = 4 \log (d |\mathcal{A}|) / \delta$. Therefore, suppose the learner always truncates the dataset to the first $n ' = \lfloor n_{\text{lm}} / \ell_{\max} \rfloor$ tokens and runs the Laplace estimator on this truncated dataset. With this, we move onto upper bounding,
\begin{align}
    \textsf{KL} (\PMLEuni, \Qhat_{\text{tok}}) = \sum_{\bm{t} \in \Dict} \pi (\bm{t}) \log \left( \pi (\bm{t}) / \Qhat_{\text{tok}} (\bm{t}) \right)
\end{align}
which necessitates lower bounding $\Qhat_{\text{tok}} (\bm{t})$ for every $\bm{t}$. Recall that the learner's estimate $\Qhat (\bm{t})$ in \Cref{alg:trainLM} is the Laplace estimator, $\frac{n_{\bm{t}}+1}{\sumntprime + \Dict}$, where $\{ n_{\bm{t}} : \bm{t} \in \Dict \}$ is computed by truncating the dataset to the first $n'$ tokens. Firstly, by invoking Corollary 1.3 of~\cite{naor2020concentration} for the function $n_{\bm{t}} = \sum_{i=1}^{n'} \mathbb{I} (\bm{t}_i = \bm{t})$,
\begin{align} \label{eq:conc}
    \Pr \left( |n_{\bm{t}} - \mathbb{E} [n_{\bm{t}}]| \ge c\sqrt{\frac{\mathbb{E} [n _{\bm{t}}]}{1-\lambda} \cdot \log (1/\eta)} \right) \le \eta
\end{align}
for a universal constant $c > 0$. In particular, this implies that with probability $\ge 1-\eta$, simultaneously for all $\bm{t}$,
\begin{align}
    | n_{\bm{t}} - \mathbb{E} [n_{\bm{t}}] | \le \Delta_{\bm{t}} \triangleq \sqrt{ \frac{d^{\theta}}{\delta}  \mathbb{E} [n_{\bm{t}}] \cdot \log (|\Dict|/\eta)} \text{, and, } \mathbb{E} [n_{\bm{t}}] - n_{\bm{t}} \ge \mathbb{E} [n_{\bm{t}}].
\end{align}
Under this event, for any $\bm{t}$, the estimate is lower bounded by,
\begin{align} \label{eq:window}
    \Qhat_{\text{tok}} (\bm{t}) = \frac{n_{\bm{t}} + 1}{n' + |\Dict|} &\ge \frac{\mathbb{E} [n_{\bm{t}}]  + 1 - \min \{ \mathbb{E} [n_{\bm{t}}], \Delta_{\bm{t}} \}}{n' + |\Dict|} \\
    &\ge \max \left\{ \pi (\bm{t}) - \frac{ \left( \Delta_{\bm{t}} - 1 \right) n' + |\Dict| \mathbb{E} [n_{\bm{t}}] }{(n')^2 +  n' |\Dict|},\ \frac{1}{n' + |\Dict|} \right\} \\
    &\ge \max \left\{ \pi (\bm{t}) - \frac{ \Delta_{\bm{t}} n' + |\Dict| \mathbb{E} [n_{\bm{t}}]}{(n')^2},\ \frac{1}{n' + |\Dict|} \right\} 
\end{align}

Suppose the following condition is satisfied,
\begin{align} \label{eq:cond}
    n' = \frac{4 r d^{\theta} |\Dict| \log (|\Dict|/\eta)}{\delta} \tag{C1}
\end{align}
for some $r \ge 4$. Under this condition, we have that $n' \ge 2 \sqrt{r} \Delta$ and $n' \ge 4  r |\Dict|$.

\textbf{Case I.} $\Delta_{\bm{t}} n' \ge |\Dict| \mathbb{E} [n_{\bm{t}}]$.

In this case, we have the upper bound,
\begin{align}
    \Qhat_{\text{tok}} (\bm{t}) &\ge \max \left\{ \pi (\bm{t}) - \frac{2 \Delta_{\bm{t}}}{n'},\ \frac{1}{n' + |\Dict|} \right\} \\
    &= \max \left\{ \pi (\bm{t}) - 2\frac{\sqrt{\frac{d^{\theta}}{\delta} \mathbb{E} [n_{\bm{t}}] \cdot \log (|\Dict|/\eta)}}{n'},\ \frac{1}{n' + |\Dict|} \right\} \\
    &\ge \max \left\{ \pi (\bm{t}) - \sqrt{\frac{\pi (\bm{t})}{r|\Dict|}} ,\ \frac{1}{2n'} \right\}.
\end{align}
where the last inequality uses \cref{eq:cond}.

Consider two sub-cases,

\textbf{Sub-case I.} $\pi (\bm{t}) \ge 2/r |\Dict|$. Define this event $\mathcal{C}_{\textsf{I}}$.

Here,
\begin{align} \label{eq:subcase1}
     \pi (\bm{t}) \log ( \pi (\bm{t}) / \Qhat_{\text{tok}} (\bm{t}) ) \le - \pi (\bm{t}) \log \left( 1 - \sqrt{\frac{1}{\pi (\bm{t}) r |\Dict|}} \right) \le \frac{3}{2} \sqrt{\frac{\pi (\bm{t})}{r |\Dict|}}.
\end{align}

\textbf{Sub-case II.} $\pi (\bm{t}) \le 2/r |\Dict|$. Define this event $\mathcal{C}_{\textsf{II}}$.

Here,
\begin{align}
     \pi (\bm{t}) \log ( \pi (\bm{t}) / \Qhat_{\text{tok}} (\bm{t}) ) \le \pi (\bm{t}) \log \left( 2n' \pi (\bm{t}) \right) &\le \max \left\{ 0, \frac{2}{r |\Dict|} \log \left( \frac{4n'}{r|\Dict|} \right) \right\} \\
     &\le \frac{2}{r |\Dict|} \log \left( 16 d^\theta \log (|\Dict|/\eta) \right) \label{eq:subcase2}
\end{align}

\textbf{Case II.} $\Delta_{\bm{t}} n' < |\Dict| \mathbb{E} [n_{\bm{t}}]$. Define this event $\mathcal{C}_{\textsf{III}}$.

In this case we have the upper bound,
\begin{align}
    \Qhat_{\text{tok}} (\bm{t}) &\ge \pi (\bm{t}) - \frac{2 |\Dict| \mathbb{E} [n_{\bm{t}}]}{(n')^2} \ge \pi (\bm{t}) - \frac{\pi (\bm{t})}{2r}
\end{align}
where the last inequality follows from \cref{eq:cond}. This implies that,
\begin{align} \label{eq:case2}
    \pi (\bm{t}) \log ( \pi (\bm{t}) / \Qhat_{\text{tok}} (\bm{t}) ) \le - \pi (\bm{t}) \log ( 1 - 1/2r ) \le \frac{\pi (\bm{t})}{r}.
\end{align}

By using the geometric ergodicity of this Markov process (\cref{eq:dobrushin2}), when $n'$ tokens are sampled from an arbitrary initial distribution,
\begin{align}
    \left(1 - \kappa^{n'} \right) \pi(\bm{t}) \le \frac{\mathbb{E} [n_{\bm{t}}]}{n'} \le \kappa^{n'} + \left(1 - \kappa^{n'} \right) \pi (\bm{t}) \implies \pi (\bm{t}) \le \frac{\Qhat_{\text{tok}} (\bm{t})}{1 - e^{-4r |\Dict| \log (|\Dict|/\eta)}} = \frac{\Qhat_{\text{tok}} (\bm{t})}{1 - d^{-r}}
\end{align}
where in the implication, we use the condition on $n'$ in \cref{eq:cond} and the bound on the contraction coefficient $\kappa$ in \cref{eq:dobrushin2}.
\begin{align}
    &\textsf{KL} (\PMLEuni, \Qhat_{\text{tok}}) \\
    &= \sum_{\bm{t} \in \Dict} \pi (\bm{t}) \log ( \pi (\bm{t}) / \Qhat_{\text{tok}} (\bm{t}) ) \\
    &\le \sum_{\bm{t} \in \Dict} \frac{\pi (\bm{t})}{1 - d^{-r }} \log ( \pi (\bm{t}) / \Qhat_{\text{tok}} (\bm{t})) - \log (1 - d^{-r})\\
    &\le \frac{1}{1-d^{-r}}\sum_{\bm{t} \in \Dict} \mathbb{I} (\mathcal{C}_{\textsf{I}}) \pi (\bm{t}) \log (\pi (\bm{t}) / \Qhat_{\text{tok}} (\bm{t})) + \mathbb{I} (\mathcal{C}_{\textsf{II}}) \pi (\bm{t}) \log (\pi (\bm{t}) / \Qhat_{\text{tok}} (\bm{t})) + \mathbb{I} (\mathcal{C}_{\textsf{III}}) \pi (\bm{t}) \log (\pi (\bm{t}) / \Qhat_{\text{tok}} (\bm{t})) + 2d^{-r}\\
    &\le \frac{1}{1-d^{-r}} \sum_{\bm{t} \in \Dict} \underbrace{\vphantom{\sum_{\bm{t} \in \Dict}} \mathbb{I} (\mathcal{C}_{\textsf{I}}) \frac{3}{2} \sqrt{\frac{\pi (\bm{t})}{r |\Dict|}}}_{\cref{eq:subcase1}} + \underbrace{\vphantom{\sum_{\bm{t} \in \Dict}}\mathbb{I} (\mathcal{C}_{\textsf{II}}) \frac{2}{r |\Dict|} \log \left( 16 d^\theta \log (|\Dict|/\eta) \right) }_{\cref{eq:subcase2}} + \underbrace{\vphantom{\sum_{\bm{t} \in \Dict}}\mathbb{I} (\mathcal{C}_{\textsf{III}}) \frac{\pi (\bm{t})}{r} }_{\cref{eq:case2}} + 2d^{-r} \\
    &\le \frac{1}{1-d^{-r}}\left( \frac{3}{2} \sqrt{\frac{|\Dict|}{r |\Dict|}} + \frac{2}{r} \log (16 d^\theta \log (|\Dict| / \eta)) + \frac{1}{r} \right) + 2 d^{-r} \\
    &\le \frac{5}{\sqrt{r}} \log (16 d^\theta \log (|\Dict| / \eta))
\end{align}
Combining with \cref{eq:errbound}, we get the bound,
\begin{align}
    \mathcal{L} (\Qhat \circ \encgre{\cdot}) &\le \left( 1 + \textsf{KL} (\PMLEuni, \Qhat) \right) \min_{Q \in \Qgram{1}} \mathcal{L} (Q \circ \encgre{\cdot}) \\
    &\le \left(1 + \frac{5}{\sqrt{r}} \log (16 d^\theta \log (d / \eta)) \right) \min_{Q \in \Qgram{1}} \mathcal{L} (Q \circ \encgre{\cdot}).
\end{align}
Rescaling $r$ to be $r (\log (16 d^\theta \log (d / \eta)))^2$ completes the proof.

\section{Additional Theoretical Results II: The generalization ability of tokenizers} \label{app:lb-1}

The proofs of the upper bounds in the paper (\Cref{theorem:LZW,theorem:BPE}) relied on showing that the entropy $H(\PMLEuni,P)$ is large, or in other words, the algorithm typically encodes new strings into long length (i.e. low probability under $P$) tokens. This statement about generalization to new strings is fundamentally different from having a tokenizer which compresses the training dataset well. In other words, consider the following modification: the measure $\PMLEuni$ is defined as the expected empirical distribution over tokens when a new string is encoded into tokens, and not on the source dataset used to construct the dictionary. Suppose the definition of $\PMLEuni$ is changed to the empirical distribution over tokens in the source dataset. Under this new definition of the MLE unigram model, the largeness of the $H(\PMLEuni, P)$ metric, in a sense, captures compressing the source dataset well. However, we show that in general, this does not result in good tokenizers that minimize the population cross-entropy loss, suffering from $\min_{Q \in \Qgram{1}} \mathcal{L} (Q \circ \enc{\cdot}) \approx H (\pi) \gg H_\infty$.

\begin{theorem} \label{theorem:gen-lb}
Consider the stochastic source in \cref{ex:1} having entropy rate $H_\infty = \delta \log(1/\delta) + (1-\delta) \log (1/(1-\delta))$. Consider a training dataset of size $n$. For a dictionary $\Dict$ and $\bm{t} \in \Dict$, define $\widehat{Q}_{\text{MLE}} (\bm{t}) = \frac{n_{\bm{t}} (\bm{s}_{\text{src}})}{|\enc{\bm{s}_{\text{src}}}|}$ as the empirical distribution over tokens induced by the greedy encoder when encoding the training dataset, $\bm{s}_{\textsf{src}}$. There exists a dictionary $\Dict$ such that with probability $\ge 1 - e^{-\Omega (\sqrt{n})}$ over the training dataset,
\begin{align}
    H ( \widehat{Q}_{\text{MLE}} , P_\gamma ) \ge n H_\infty (1 - O(n^{-1/4}))
\end{align}
is large. However, for this dictionary, for any encoding algorithm (including the greedy encoder), the resulting tokenizer $\mathcal{T} = (\Dict, \emptyset, \enc{\cdot}, \dec{\cdot})$ satisfies,
\begin{align}
    \min_{Q \in \Qgram{1}} \mathcal{L} ( Q \circ \enc{\cdot}) \ge (1 - \varepsilon) H (\pi)
\end{align}
where $\varepsilon = 2 n e^{-n H_\infty (1 - O (n^{-1/4}))}$.
\end{theorem}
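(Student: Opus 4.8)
The plan is to exhibit the dictionary $\Dict = \{0,1,\bm{s}_{\text{src}}\}$ — the two single characters together with one token equal to the \emph{entire} training string $\bm{s}_{\text{src}}$ — with $\DS = \emptyset$. This is consistent (any string encodes character-by-character), and the greedy encoder run on $\bm{s}_{\text{src}}$ itself grabs the longest token matching at position $1$, namely $\bm{s}_{\text{src}}$ in full, so $\encgre{\bm{s}_{\text{src}}}$ is the single-token sequence and $\widehat{Q}_{\text{MLE}}$ is a point mass at $\bm{s}_{\text{src}}$, whence $H(\widehat{Q}_{\text{MLE}},P) = \log(1/P(\bm{s}_{\text{src}}))$. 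To lower bound this I would write $-\log P(\bm{s}_{\text{src}}) = -\log\pi(s_1) + \sum_{i=2}^{n}\big(-\log P(s_i\mid s_{i-1})\big)$, note that each summand is bounded by $\log(1/\delta)$ and has conditional mean exactly $H_\infty = H_{\textsf{Ber}}(\delta)$ given $s_{i-1}$ (by the symmetry of the source, the next-character law is always $\mathrm{Ber}(1-\delta)$), so that $-\log P(\bm{s}_{\text{src}}) - nH_\infty$ is a bounded martingale; Azuma--Hoeffding then gives $-\log P(\bm{s}_{\text{src}}) \ge nH_\infty - O(n^{3/4}\log(1/\delta))$ with probability $\ge 1 - e^{-\Omega(\sqrt n)}$, which is the event on which the first display holds. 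On the same event one also gets $\max_a P(\bm{s}_{\text{src}}\mid a) \le P(\bm{s}_{\text{src}})/\pi_{\min} = 2P(\bm{s}_{\text{src}}) \le 2e^{-nH_\infty(1 - O(n^{-1/4}))} =: \mu$, which drives the second part.

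For the second display, I would fix an arbitrary encoder and a fresh test string $\bm{s}$ of length $m$. Since $\Dict$ contains only $0$, $1$, and $\bm{s}_{\text{src}}$, every position of $\bm{s}$ is covered by a single-character token or by one of a collection of pairwise-disjoint occurrences of the length-$n$ string $\bm{s}_{\text{src}}$; the number of the latter is at most the number of starting positions $j$ with $\bm{s}_{j:j+n-1} = \bm{s}_{\text{src}}$, which has expectation at most $m\max_a P(\bm{s}_{\text{src}}\mid a) \le m\mu$, a bound that holds uniformly over \emph{all} encoders. Writing $m_0, m_1$ for the counts of positions emitted as the single-character tokens $0$ and $1$, I then use $\mathbb{E}[m_0] + \mathbb{E}[m_1] \ge m(1 - n\mu)$ and, since $\pi(0) = \pi(1) = 1/2$, $\mathbb{E}[m_0], \mathbb{E}[m_1] \in [m(1/2 - n\mu),\, m/2]$. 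Dropping the nonnegative length term and the nonnegative $\bm{s}_{\text{src}}$-token contribution from $\mathcal{L}_m(Q\circ\enc{\cdot})$ and minimizing $\mathbb{E}[m_0]\log(1/q_0) + \mathbb{E}[m_1]\log(1/q_1)$ over $q_0 + q_1 \le 1$ gives $\mathcal{L}_m(Q\circ\enc{\cdot}) \ge (\mathbb{E}[m_0] + \mathbb{E}[m_1])\, H_{\textsf{Ber}}\big(\mathbb{E}[m_0]/(\mathbb{E}[m_0]+\mathbb{E}[m_1])\big)$.

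Because the argument of $H_{\textsf{Ber}}$ is within $O(n\mu)$ of $1/2$ and $H_{\textsf{Ber}}(1/2 \pm x) = \log 2 - \Theta(x^2)$, this yields $\tfrac1m\mathcal{L}_m(Q\circ\enc{\cdot}) \ge (1 - n\mu)(\log 2 - O(n^2\mu^2)) \ge (1 - O(n\mu))\log 2$ for all large $m$, a bound uniform in $m$; taking $m\to\infty$ and minimizing over $Q \in \Qgram{1}$ gives $\min_{Q\in\Qgram{1}}\mathcal{L}(Q\circ\enc{\cdot}) \ge (1 - O(n\mu))H(\pi)$, and since $e^{O(n^{3/4}H_\infty)}$ swallows any absolute constant for large $n$, the constant and the factor $2$ inside $\mu$ can be absorbed into the exponent to give the stated $\varepsilon = 2n e^{-nH_\infty(1 - O(n^{-1/4}))}$. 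I expect the main obstacle to be the second part: bounding, uniformly over \emph{all} encoding algorithms, the expected number of disjoint self-occurrences of the random string $\bm{s}_{\text{src}}$ in a fresh sample and tying it to $P(\bm{s}_{\text{src}})$, and then being careful that — precisely because the stationary distribution is uniform — the deviation of $H_{\textsf{Ber}}$ from $\log 2$ is only second order in $n\mu$, so the per-character loss stays $\ge (1-\varepsilon)H(\pi)$ rather than degrading further; the $m\to\infty$ limit is disposed of by making the estimate uniform in $m$ as above.
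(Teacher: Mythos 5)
Your proposal is correct and follows essentially the same route as the paper: the same dictionary $\{0,1,\bm{s}_{\text{src}}\}$, a Hoeffding-type concentration bound on the switch count to show $P(\bm{s}_{\text{src}})\le e^{-nH_\infty(1-O(n^{-1/4}))}$, and the observation that any encoder can use the token $\bm{s}_{\text{src}}$ only at actual occurrences of that substring in a fresh string, which are too rare to matter, so the tokenizer degenerates to (nearly) character-level and inherits the $H(\pi)$ lower bound. The only difference is technical bookkeeping — you bound expected counts at finite $m$ and pass to the limit, whereas the paper invokes almost-sure convergence of $n_{\bm{t}}/m$ via the Krylov--Bogolyubov argument — and both are sound.
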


\begin{proof}

Suppose the entire training dataset was compressed into a single token, $\bm{t}_{\text{src}}$. The dictionary is $\mathcal{A} \cup \bm{t}_{\text{src}}$. In the following argument, we show that the number of occurrences, $n_{\bm{t}_{\text{src}}}$, of the entire training dataset $\bm{t}_{\text{src}}$ in a new string of length $m$ generated from the stochastic source, $\bm{s}$, converges to its expectation as $m \to \infty$. Let $\pi_n^{(i)}$ denote the stationary distribution of the Markov process induced by the stochastic source over length-$n$ strings with a shift of $i$ from the starting position, and let $n_{\bm{t}}^{(i)}$ denote the number of times $\bm{t}$ appears in the training dataset starting at the position $i+rn$ for some $r > 0$. Then,
\begin{align} \label{eq:ntm}
    \lim_{m \to \infty} \frac{n_{\bm{t}_{\text{src}}}}{m} = \frac{1}{n} \lim_{m \to \infty} \sum_{i=0}^{n-1} \frac{n_{\bm{t}_{\text{src}}}^{(i)}}{m/n} \overset{\text{a.s.}}{=} \frac{1}{n} \sum_{i=0}^{n-1} \mathbb{E}_{\bm{t}' \sim \pi_n^{(i)}} [ P (\bm{t}_{\text{src}}| \bm{t}') ] \le \max_{a \in \mathcal{A}} P (\bm{t}_{\text{src}} |a).
\end{align}
The second equation follows by considering the Markov process induced over length $n$ strings and applying the Krylov–Bogolyubov argument for ergodic and homogeneous Markov processes.

In \Cref{lemma:low-p-src}, we show that with probability $\ge 1- e^{-\Omega (\sqrt{n})}$, the token $\bm{t}_{\text{src}}$ constructed from the source dataset satisfies, $\max_{a \in \mathcal{A}} P(\bm{t}|a) \le e^{-n H_\infty (1 - O (n^{-1/4}))}$. In other words, the source string has exponentially small probability. Combining this with \cref{eq:ntm}, with probability $\ge 1 - e^{-\Omega (\sqrt{n})}$ over the source dataset, the number of occurrences of the substring $\bm{t}_{\text{src}}$ in a new string $\bm{s}$ is upper bounded by,
\begin{align} \label{eq:912901}
    \lim_{m \to \infty} \frac{n_{\bm{t}_{\text{src}}}}{m} \overset{\text{a.s.}}{\le} e^{- n H_\infty (1 - O (n^{-1/4}))} \triangleq \varepsilon/2n.
\end{align}
By the Krylov–Bogolyubov argument, for each $a \in \mathcal{A} = \{ 0,1 \}$, $\lim_{m \to \infty} \frac{n_a}{m} \overset{\text{a.s.}}{=} \pi(a)$. More importantly, the number of times $a$ is made as a token is upper bounded by $n_a$ and lower bounded by $n_a - n n_{\bm{t}_{\text{src}}}$. Therefore,
\begin{align} \label{eq:na-lb}
    (1-\varepsilon) \pi(a) = \pi(a) - \frac{\varepsilon}{2} \overset{\text{a.s.}}{\le} \lim_{m \to \infty} \frac{n_a}{m} \overset{\text{a.s.}}{\le} \pi (a) = \frac{1}{2}
\end{align}
Finally, putting everything together,
\begin{align}
    \min_{Q \in \Qgram{1}} \lim_{m \to \infty} \frac{1}{m} \mathcal{L}_m (Q \circ \enc{\cdot}) &= \min_{Q \in \Qgram{1}} \lim_{m \to \infty} - \frac{1}{m} \mathbb{E} \left[ \log (Q_{\#} (|\enc{\bm{s}}|) + \sum\nolimits_{\bm{t} \in \Dict} n_{\bm{t}} \log Q_{\text{tok}} (\bm{t}) \right] \\
    &\ge \min_{Q \in \Qgram{1}} \lim_{m \to \infty} - \frac{1}{m} \mathbb{E} \left[ \sum\nolimits_{a \in \mathcal{A}} n_a \log Q_{\text{tok}} (a) \right] \\
    &\overset{(i)}{\ge} \min_{Q \in \Qgram{1}} - (1 - \varepsilon) \sum\nolimits_{a \in \mathcal{A}} \pi(a) \log Q_{\text{tok}} (a) \\
    &\ge (1 - \varepsilon) H(\pi).
\end{align}
where $(i)$ follows from the lower bound on $n_a/m$ in \cref{eq:na-lb}. This completes the proof.
\end{proof}

\begin{lemma} \label{lemma:low-p-src}
With probability $\ge 1 - e^{-\Omega (\sqrt{n})}$ over the source dataset,
\begin{align}
    \max_{a \in \mathcal{A}} P (\bm{t}_{\text{src}}|a) \le e^{- n H (\delta) (1 - O (n^{-1/4}))}.
\end{align}
\end{lemma}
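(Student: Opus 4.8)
The plan is to exploit the symmetry of the switching source in \Cref{ex:1} (where $p=q=1-\delta$, so that the entropy rate $H(\delta)=H_{\textsc{Ber}}(\delta)$): because the probability of \emph{staying} in the current state is exactly $\delta$ from \emph{both} states, the sequence of stay/switch indicators along a sample path is i.i.d.\ $\mathrm{Bernoulli}(\delta)$, which turns the claim into a routine Chernoff bound.

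Write the source dataset as a sample path $\bm{t}_{\text{src}} = X_1 X_2 \cdots X_n$ of the chain. By the definition of $P(\cdot\,|\,\cdot)$, for either $a \in \mathcal{A} = \{0,1\}$,
\[
  P(\bm{t}_{\text{src}}\,|\,a) = P(X_1\,|\,a)\prod_{i=2}^n P(X_i\,|\,X_{i-1}) \;\le\; \prod_{i=2}^n P(X_i\,|\,X_{i-1}),
\]
since every transition probability of this chain is at most $1-\delta<1$. Hence it suffices to lower bound $\sum_{i=2}^n \log\!\big(1/P(X_i\,|\,X_{i-1})\big)$ with high probability. The key observation I would use is that $P(X_i\,|\,X_{i-1})$ equals $\delta$ when $X_i = X_{i-1}$ and $1-\delta$ otherwise, and that the indicators $B_i \triangleq \mathbb{I}\{X_i = X_{i-1}\}$ are i.i.d.\ $\mathrm{Bernoulli}(\delta)$: conditioned on any history $X_1,\dots,X_{i-1}$, the chain stays with probability exactly $\delta$ regardless of $X_{i-1}$ (this is precisely where $p=q=1-\delta$ enters). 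Thus $N \triangleq \sum_{i=2}^n B_i \sim \mathrm{Bin}(n-1,\delta)$ and
\[
  \sum_{i=2}^n \log\!\big(1/P(X_i\,|\,X_{i-1})\big)
  = N\log(1/\delta) + (n-1-N)\log\tfrac{1}{1-\delta}
  = (n-1)H_{\textsc{Ber}}(\delta) + \big(N - (n-1)\delta\big)\log\tfrac{1-\delta}{\delta}.
\]

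To finish, I would apply Hoeffding's inequality to $N$ with deviation $n^{3/4}$, giving $\Pr\big(|N-(n-1)\delta| > n^{3/4}\big) \le 2e^{-2n^{3/2}/(n-1)} = e^{-\Omega(\sqrt n)}$. On the complementary event,
\[
  \sum_{i=2}^n \log\!\big(1/P(X_i\,|\,X_{i-1})\big) \;\ge\; (n-1)H_{\textsc{Ber}}(\delta) - n^{3/4}\,\big|\log\tfrac{1-\delta}{\delta}\big| \;=\; n H(\delta)\big(1 - O(n^{-1/4})\big),
\]
where the additive $-H_{\textsc{Ber}}(\delta)$ term and the $\delta$-dependent factor $|\log((1-\delta)/\delta)|/H_{\textsc{Ber}}(\delta)$ are absorbed into the $O(\cdot)$. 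Exponentiating and combining with the first display yields $\max_{a \in \mathcal{A}} P(\bm{t}_{\text{src}}\,|\,a) \le e^{-n H(\delta)(1 - O(n^{-1/4}))}$, as claimed.

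There is no serious obstacle here; the only point that requires care is the i.i.d.\ claim for the stay/switch indicators, which rests on the symmetry ($p=q$) of the source in \Cref{ex:1}. For a general asymmetric switching chain this step fails, and one would instead have to track the empirical run-lengths of $0$'s and $1$'s separately and invoke a Markov-chain typicality/concentration argument; but that generality is not needed for this lemma. The manipulation of the $1/n$ and $n^{-1/4}$ error terms is routine bookkeeping.
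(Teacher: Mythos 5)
Your proposal is correct and follows essentially the same route as the paper: both arguments reduce $\log P(\bm{t}_{\text{src}}|a)$ to a count of stay/switch events, use the symmetry $p=q$ to see that these indicators are i.i.d.\ Bernoulli, and apply Hoeffding with deviation $n^{3/4}$ to get the $e^{-\Omega(\sqrt{n})}$ failure probability and $O(n^{-1/4})$ relative error. The only cosmetic difference is that the paper counts switches while you count stays, and the paper folds the first-character factor into the count rather than bounding $P(X_1|a)\le 1$.
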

\begin{proof}
Let $X$ denote the number of $i \in [n-1]$ such that $\bm{s}_i \ne \bm{s}_{i+1}$ in $\bm{s}$, the stochastic source. Since the transition of the Markov process only depends on whether the next character is the same as the previous character, we can write down,
\begin{align}
    \max_{a \in \mathcal{A}} \log P(\bm{t}_{\text{src}}|a) = - (X+1) \log (\delta) - (n-1-X) \log (1-\delta).
\end{align}
Note that $X$ is a sum of $n-1$ i.i.d. random variables, since $\mathbb{I} (\bm{s}_i \ne \bm{s}_{i+1}) \sim \textsf{Ber} (\delta)$ does not depend on whether $\bm{s}_i = 0$ or $= 1$. In particular, by Hoeffding's inequality, we have that with probability $\ge 1 - e^{-\Omega (\sqrt{n})}$,
\begin{align}
    \left| \frac{1}{n} \max_{a \in \mathcal{A}} \log P(\bm{t}_{\text{src}}|a) - H(\delta) \right| \le O \left( n^{-1/4} \right),
\end{align}
which uses the fact that $\mathbb{E} [X] = \delta (n-1)$ and $H_\infty = \delta \log (1/\delta) + (1-\delta) \log (1/(1-\delta))$. Taking an exponential on both sides proves the statement of the lemma. 
\end{proof}

\section{Additional Theoretical Results III: Interaction between the dictionary and encoding algorithm} \label{app:lb-2}

In this section, we show another kind of barrier to generalization, which brings out the relationship between the encoding algorithm and the dictionary. We show that there exist dictionaries which generalize under the minimal encoder, i.e. the encoding algorithm which encodes a string into the shortest number of possible tokens, but at the same time, completely fail to generalize under the greedy encoder. This means that in the process of constructing good tokenizers, it does not suffice to think about the dictionary in isolation. Its interaction with the encoding algorithm is pertinent.

\begin{definition}[minimal encoder] \label{def:BPE.min}
The minimal encoder parses a new string into the fewest possible number of tokens from the dictionary as possible. Ties are broken arbitrarily.
\end{definition}

\begin{theorem} \label{theorem:goodbadenc}
There exists a stochastic source parameterized by $\delta \in (0,0.5)$ and a dictionary $\Dict$ such that under the minimal encoder/decoder pair, the resulting tokenizer, $\mathcal{T} = (\Dict,\emptyset,\encmin{\cdot},\decmin{\cdot})$ generalizes near-optimally,
\begin{align} \label{eq:goodenc}
    \min_{Q \in \Qgram{1}} \mathcal{L} (Q \circ \encmin{\cdot}) \le 1.273 H_\infty.
\end{align}
Here the entropy rate of the source, $H_\infty$, is $\delta \log(\sqrt{2}/\delta) + (1-\delta) \log (1/(1-\delta))$. However, the same dictionary $\Dict$ under the greedy encoder/decoder pair, i.e. $\mathcal{T}' = (\Dict, \emptyset, \encgre{\cdot}, \decgre{\cdot})$, generalizes poorly, suffering from cross-entropy scaling as,
\begin{align} \label{eq:badenc}
    \min_{Q \in \Qgram{1}} \mathcal{L} (Q \circ \encgre{\cdot}) \ge \frac{1- o_\delta(1)}{3} H (\pi).
\end{align}
where the entropy of the stationary distribution of the source is $H(\pi) = \frac{1}{2} \log(8)$ and the $1-o_\delta (1)$ term is $(1-\delta)^2 (1 + \delta)^{-1}$.
\end{theorem}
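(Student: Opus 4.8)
I would take the source to be the three–state chain on $\mathcal{A}=\{0,1,2\}$ with $P(0\mid0)=1-\delta$, $P(1\mid0)=P(2\mid0)=\delta/2$, $P(0\mid1)=P(0\mid2)=\delta$, $P(1\mid1)=P(2\mid2)=1-\delta$. A one-line check gives $\pi=(1/2,1/4,1/4)$, hence $H(\pi)=\tfrac12\log2+\tfrac14\log4+\tfrac14\log4=\tfrac12\log8$; averaging the row entropies $H(1-\delta,\delta/2,\delta/2)$ and $H(1-\delta,\delta)$ against $\pi$ gives exactly $H_\infty=(1-\delta)\log\tfrac1{1-\delta}+\delta\log\tfrac{\sqrt2}{\delta}$, so the stated entropies are correct. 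Typical strings are concatenations of geometric-length runs $0^{k_1}c_1^{\ell_1}0^{k_2}c_2^{\ell_2}\cdots$ with $c_i\in\{1,2\}$ and mean run length $\approx1/\delta$. The dictionary $\Dict$ will contain the single characters $\{0,1,2\}$, the pure-zero blocks $\{0^j:1\le j\le L\}$, and a family of \emph{boundary-spanning} tokens that glue a whole non-zero run to a constant number of neighbouring zeros, e.g.\ $\{0\,1^\ell\,0,\ 0\,2^\ell\,0:1\le\ell\le L\}$ together with a few auxiliary tokens (such as $0\,c^\ell$, $c^\ell0$) to absorb short zero-runs, for a large constant $L$. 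The key asymmetry is that \emph{no token contains $11$ or $22$} except inside a spanning token whose first symbol is $0$.

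\textbf{Greedy lower bound \cref{eq:badenc}.} When the greedy encoder (\Cref{def:encgre}) reaches the start of a zero-run $0^k c^\ell\cdots$ with $k\ge2$, its longest matching prefix is the entire block $0^k$ (nothing begins $0^k1$ or $0^k2$ for $k\ge2$), so it emits $0^k$ and then stands at the start of the non-zero run $c^\ell0^{k'}\cdots$; there the only token available is the single character ``$c$'', so it is \emph{forced} to emit $\Theta(\ell)$ single-character tokens before reaching the next zero-run. Only the non-zero runs immediately preceded by a length-$1$ zero-run can escape (there a spanning token $0c^\ell0$ is grabbed), and runs longer than $L$ need a small correction; this bookkeeping yields $\lim_m\mathbb{E}[n_1]/m=\lim_m\mathbb{E}[n_2]/m\ge\tfrac{(1-\delta)^2}{4(1+\delta)}$. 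I would then discard all but the single-character contributions,
\[
\mathcal{L}_m(Q\circ\encgre{\cdot})\ \ge\ -\,\mathbb{E}[n_1]\log Q_{\mathrm{tok}}(1)\ -\ \mathbb{E}[n_2]\log Q_{\mathrm{tok}}(2),
\]
and minimise the right-hand side over $Q_{\mathrm{tok}}(1)+Q_{\mathrm{tok}}(2)\le1$; by the $1\leftrightarrow2$ symmetry of the construction the optimum is $Q_{\mathrm{tok}}(1)=Q_{\mathrm{tok}}(2)=\tfrac12$, giving $\mathcal{L}_m\ge\tfrac{(1-\delta)^2}{2(1+\delta)}\,m\log2=\tfrac{(1-\delta)^2}{1+\delta}\cdot\tfrac13\,m\,H(\pi)$, which is \cref{eq:badenc} after dividing by $m$.

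\textbf{Minimal upper bound \cref{eq:goodenc}.} The minimal encoder (\Cref{def:BPE.min}) is \emph{not} trapped: on a typical string it parses $0^{k_1}c_1^{\ell_1}0^{k_2}c_2^{\ell_2}\cdots$ with the spanning tokens $0\,c_i^{\ell_i}\,0$ plus the pure-zero remainders $0^{k_i-2}$, producing $\Theta(m\delta)$ tokens each covering $\Theta(1/\delta)$ characters. I would bound $\min_{Q\in\Qgram{1}}\mathcal{L}(Q\circ\encmin{\cdot})$ by plugging in $Q=P_\pi$, i.e.\ $Q_{\mathrm{tok}}(\bm t)=P(\bm t)$ with $Q_\#$ uniform, exactly as in the proof of \Cref{theorem:generalization}, so that
\begin{align*}
\mathcal{L}(P_\pi\circ\encmin{\cdot})\ &=\ \lim_{m\to\infty}\frac1m\,\mathbb{E}\Bigl[\ \sum\nolimits_{\bm t\in\encmin{\bm s}}\log\tfrac1{P(\bm t)}\ \Bigr],
\end{align*}
and then evaluate this finite quantity using $P(0^j)=\tfrac12(1-\delta)^{j-1}$ and $P(0c^\ell0)=\tfrac14\delta^2(1-\delta)^{\ell-1}$, with the token rate $|\encmin{\bm s}|/m$ and the laws of $j,\ell$ read off from the geometric run-length statistics of the chain. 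The outcome is a bound $c(\delta)\,H_\infty$, and one checks $\sup_{\delta\in(0,1/2)}c(\delta)\le1.273$; since $c(\delta)\to1$ as $\delta\to0$, the supremum is attained at an interior $\delta$ and the numerical constant $1.273$ is what the computation produces (with room to spare, since $P_\pi$ is not the optimal $Q$).

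\textbf{Main obstacle.} The structural dichotomy — greedy trapped on non-zero runs, minimal aligned to the spanning tokens — is clean, but the constants are delicate. Pinning down both $\tfrac{(1-\delta)^2}{1+\delta}$ and $1.273$ requires carefully controlling the ``bad'' events of the construction: zero-runs of length $1$ or $2$ (which cannot supply two bordering zeros to spanning tokens), runs exceeding $L$, and the auxiliary patch tokens — one must verify that greedy cannot opportunistically use the auxiliary tokens to break out of its trap and that the minimal parse loses only an $o_\delta(1)$ fraction of its tokens to these cases. The second subtlety is that $\encmin{\cdot}$ minimises the \emph{token count}, not the log-loss, and breaks ties arbitrarily; so one cannot simply substitute a favourable parse, and the fix is to show that for typical strings \emph{every} minimal parse has essentially the same multiset of long tokens, making the cross-entropy above an upper bound regardless of tie-breaking. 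Finally, \Cref{lemma:limit-exists} is stated for the greedy encoder only; for the minimal encoder one works with expectations or proves the analogous Ces\`aro-limit statement for the (block-)Markov process induced on tokens, paralleling the Krylov–Bogolyubov argument used there.
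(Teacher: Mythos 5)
Your overall strategy is the same as the paper's (a three-state switching chain, an asymmetric dictionary keyed to the run structure so that the minimal encoder covers each run with $O(1)$ tokens while the greedy encoder is stranded into character-level tokenization on half the string), and your source is isomorphic to the paper's. But there are two concrete problems. First, your dictionary defeats your own greedy lower bound: you include auxiliary tokens of the form $c^{\ell}0$, and the greedy encoder, standing at the start of a non-zero run $c^{\ell}0^{k'}\cdots$ after having consumed the preceding zero-block, would match the \emph{longest} prefix, which is now $c^{\ell}0$ rather than the singleton $c$. The trap never closes, $\mathbb{E}[n_1],\mathbb{E}[n_2]$ are no longer $\Omega(m)$, and \cref{eq:badenc} fails. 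The paper avoids this by making \emph{every} multi-character token begin with the high-mass center character (tokens $1\bm{s}$ with $\bm{s}$ an all-$0$, all-$1$ or all-$2$ block), so that after greedy chunks a center-run into maximal all-center tokens it arrives at the start of the next run with a residue of length $\ne 1$ (except on a rare arithmetic event), at which point only singletons are available; there are no ``run followed by center'' tokens to rescue it. Relatedly, your $L$ must scale as $\Theta(\log(1/\delta)/\delta)$ (as the paper's $\ell$ does), not be ``a large constant,'' or else runs longer than $L$ are the typical case and the token rate is not $O(\delta)$.

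Second, your route to \cref{eq:goodenc} is left genuinely open at the step you yourself flag: plugging in $Q_{\mathrm{tok}}(\bm t)=P(\bm t)$ makes the bound depend on \emph{which} minimal parse is chosen, since different minimal parses can use token multisets with different values of $\sum\log(1/P(\bm t))$, and you do not supply the uniformity argument you say is needed. The paper sidesteps this entirely with a cruder choice: take $Q_{\mathrm{tok}}=1/|\Dict|$ and $Q_{\#}=\mathrm{Unif}([m])$, so that
\begin{align}
\min_{Q\in\Qgram{1}}\mathcal{L}(Q\circ\encmin{\cdot})\le \lim_{m\to\infty}\frac{1}{m}\,\mathbb{E}\bigl[\,|\encmin{\bm s}|\,\bigr]\log|\Dict|,
\end{align}
which depends only on the token \emph{count}; by definition of the minimal encoder $|\encmin{\bm s}|\le|\enc{\bm s}|$ for any explicitly constructed comparison parse, so tie-breaking is irrelevant. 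With $|\Dict|=O(\ell)$ and token rate $\le\delta$ this gives $\delta\log(3+4\log(1/\delta)/\delta)\le1.273\,H_\infty$ numerically. You need either this device or an actual proof of your ``every minimal parse has essentially the same multiset of long tokens'' claim; as written, the upper bound is not established.
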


This means that the greedy encoder is not really compatible with the dictionary in the sense that the cross-entropy loss of the tokenizer is a constant multiple away from that achieved by the character-level tokenizer. The separation between \cref{eq:goodenc}, and \cref{eq:badenc} only manifests as $\delta$ becomes smaller and smaller.

In this section, we prove that generalization of a dictionary is a function of the underlying tokenization algorithm used. In particular, the greedy encoder is not universal, and there exists dictionaries under the minimum-length encoder/decoder which achieve small cross-entropy loss, which do not generalize under the greedy encoder/decoder.

We split the proof of \Cref{theorem:goodbadenc} into two parts. We first define the stochastic source and dictionary we consider. Then we show that under the minimum-length encoder, the asymptotic cross-entropy loss is upper bounded by $H_\infty$ up to a constant. Finally, we show that under the greedy-encoder, the same dictionary suffers from high cross-entropy loss, which is a constant factor away from that of the character encoder.

\subsection{Stochastic source and dictionary.} \label{subsubsec:ss}

Consider an extension of the switching Markov source in \cref{ex:1} to $\mathcal{A} = \{ 0,1,2 \}$. The Markov chain is described in \Cref{fig:ss}. The transition of the Markov chain is $P(0|0) = P(1|1) = P(2|2) = 1-\delta$, and $P(1|0) = P(2|1) = \delta$ and $P(2|1) = P(0|1) = \delta/2$, with the remaining transitions being $0$-probability. For a parameter $\ell > 0$ to be instantiated later, define $S_{\bm{1}}$ (resp. $S_{\bm{0}}$, $S_{\bm{2}}$) as the set of all-$1$ (resp. all-$0$, all-$2$) strings of length $\le \ell-1$, including the empty string. Consider a dictionary composed of the following set of tokens, $\{ 1\bm{s} : \bm{s} \in S_{\bm{0}} \cup S_{\bm{1}} \cup S_{\bm{2}} \}$. Therefore, the tokens follow the template $10\cdots0$, $11\cdots1$ or $12\cdots2$ and are of length at most $\ell$. $\ell$ is chosen to be $1 + 2\log(1/\delta)/\delta$.

\begin{figure}
\centering
\begin{tikzpicture}[->,>=stealth,shorten >=1pt,auto,node distance=2cm]
  \tikzstyle{every state}=[fill=gray!25,draw=black,text=black]

    \node[state] (s0) {$0$};
    \node[state, right=of s0] (s1) {$1$};
    \node[state, right=of s1] (s2) {$2$};

    \draw (s0) edge[bend left] node[above] {$\delta$} (s1);

    \draw (s1) edge[bend left] node[above] {$\frac{\delta}{2}$} (s2);
    \draw (s1) edge[bend left] node[below] {$\frac{\delta}{2}$} (s0); 

    \draw (s2) edge[bend left] node[below] {$\delta$} (s1);

\end{tikzpicture}
\caption{order-$1$ Markov source used in the proof of \Cref{theorem:goodbadenc}}
\label{fig:ss}
\end{figure}

Although we use the minimal encoder in the statement of \Cref{theorem:goodbadenc}, for the purpose of analysis, define the following encoding algorithm: if the new string is prefixed by $10\cdots0$ or $12\cdots2$, select the largest prefix which exists in dictionary and assign it as a token. If the new string starts with a sequence $11\cdots1$ of length $x$, consider the first $\max\{ \ell,x-1\}$ length prefix and assign it as a token. Finally, if the string starts with $0$ or $2$, assign that character as token. Once the first token has been assigned, remove it and repeat.

\subsection{Minimal encoder achieves the optimal cross-entropy loss up to a constant.} \label{subsubsec:goodenc}

First consider a simplification of the overall cross-entropy loss,
\begin{align}
    &\hspace{-2em} \min_{Q \in \Qgram{1}} \lim_{m \to \infty} \frac{1}{m} \mathcal{L}_m (Q \circ \enc{\cdot}) \\
    &= \min_{Q \in \Qgram{1}} \lim_{m \to \infty} - \frac{1}{m} \mathbb{E} \left[ \log Q_{\#} (|\encmin{\bm{s}}|) + \sum\nolimits_{\bm{t} \in \Dict} n_{\bm{t}} \log Q_{\text{tok}} (\bm{t}) \right] \label{eq:o123100}\\
    &\le \lim_{m \to \infty} \frac{1}{m} \mathbb{E} \left[ \log (m) + |\encmin{\bm{s}}| \log |\Dict| \right], \label{eq:o12310}
\end{align}
where in the last inequality we upper bound by choosing $Q_{\#} = \mathrm{Unif} ([m])$ and $Q_{\text{tok}} (\bm{t}) = 1/|\Dict|$. Note that $|\Dict| \le 2\ell + 1$ and letting $\lim_{m \to \infty} \log(m)/m = 0$,
\begin{align}
    \min_{Q \in \Qgram{1}} \lim_{m \to \infty} \frac{1}{m} \mathcal{L}_m (Q \circ \enc{\cdot}) &\le \lim_{m \to \infty} \frac{1}{m} \mathbb{E} [ |\encmin{\bm{s}}| \log (2 \ell + 1) ] \\
    &\le \lim_{m \to \infty} \frac{1}{m} \mathbb{E} \left[ |\enc{\bm{s}}| \log (2 \ell + 1) \right], \label{eq:90102931}
\end{align}
where in $(i)$, we replace $|\encmin{\bm{s}}|$ by $|\enc{\bm{s}}|$, which is the encoder we define in \Cref{subsubsec:ss}. By definition of the minimal encoder, $|\encmin{\bm{s}}| \le |\enc{\bm{s}}|$ surely. Recall that the encoder $\enc{\cdot}$ processes strings in a sequential (left-to-right) manner. In particular, by a similar argument as \Cref{lemma:limit-exists}, we can show that under this encoder, the limit $n_{\bm{t}}/\sumntprime$ almost surely converges to its expectation. More importantly, since, $\sum_{\bm{t} \in \Dict} |\bm{t}| n_{\bm{t}} = m$, we have that,
\begin{align} \label{eq:encs-limit-exists}
    \lim_{m \to \infty} \frac{|\enc{\bm{s}}|}{m} \overset{\text{a.s.}}{=} \frac{1}{\mathbb{E}_{\bm{t} \sim \PMLEuni} [|\bm{t}|]}.
\end{align}
converges to some limit almost surely. Therefore, from \cref{eq:90102931},
\begin{align} \label{eq:ldqub}
    \min_{Q \in \Qgram{1}} \lim_{m \to \infty} \frac{1}{m} \mathcal{L}_m (Q \circ \enc{\cdot}) \le \esslimsup_{m \to \infty} \frac{|\enc{\bm{s}}|}{m} \log (2 \ell + 1).
\end{align}
where the essential lim-sup captures the almost sure limit $1/\mathbb{E}_{\bm{t} \sim \PMLEuni}[ |\bm{t}|]$. The almost sure convergence of $|\enc{s}|/m$ also implies that we can let the limit $m$ go to $\infty$ in any manner, and the limit will remain the same. In particular, consider a process parameterized by $i^\star$ for generating the source string, such that surely $m \ge i^\star$, where the total number of characters, $m$, is a random variable. As $i^\star \to \infty$, we will also have $m \to \infty$ surely, and so the limit of $|\enc{\bm{s}}|/m$ under this modified stochastic process should also converge to the same limit.

Rather than sampling a string of a fixed length $m$ from the source, consider the following sampling model: for $i^\star \to \infty$, sample $i^\star$ geometric random variables $X_1,\cdots,X_{i^\star} \overset{\text{i.i.d.}}{\sim} \textsf{Geo} (\delta)$ and construct the source string as the concatenation of $i^\star$ strings alternating between successive $1$'s and successive $0$'s or $2$'s (with the choice between the two made uniformly at random), with the $i^{th}$ string of length $X_i + 1$. The overall number of characters sampled, $m$, is surely at least $i^\star$.

Under this stochastic process, the size of the encoding of the string is upper bounded by,
\begin{align}
    |\enc{\bm{s}}| \le |X_1+1| + \sum_{i=2}^{i^\star} \left( 1 + \left( X_i+1 - \ell \right)_+ \right) \label{eq:encsub}
\end{align}
This bound follows from the fact that in any substring $\bm{s}'$ of successive $1$'s followed by a substring $\bm{s}''$ of successive $0$'s or $2$'s, the encoder tokenizes the first $\max \{ \ell, |\bm{s}'|-1 \}$ length prefix of $\bm{s}'$ as a token, and the remaining characters in $\bm{s}'$ into individual tokens except the last. Then, the last character of $\bm{s}'$ and the first $\max\{ \ell-1 , |\bm{s}''|\}$ characters of $\bm{s}''$ are assigned as token. The remainder of $\bm{s}''$ is assigned as individual tokens. Each of $\bm{s}'$ or $\bm{s}''$ of length $x$, is allocated into at most $1 + (x + 1 - \ell)_+$ tokens.

For any $i$, $\Pr (X_i \ge u) = (1-\delta)^u$, and therefore, summing over $u \ge \ell$, we get that $\mathbb{E} [ (X_i + 1 - \ell)_+] = \frac{(1-\delta)^{\ell-1}}{\delta}$. With $\ell = 1+2\log(1/\delta)/\delta$, this expectation is upper bounded by $\delta$. Therefore,
\begin{align}
    \lim_{i^\star \to \infty} \frac{\mathbb{E} [|\enc{\bm{s}}|]}{i^\star} \le \lim_{i^\star \to \infty} \frac{1}{i^\star} \mathbb{E} \left[ |X_1| + \sum\nolimits_{i=2}^{i^\star} \left( 1 + \left( X_i + 1 - \ell \right)_+ \right) \right] \le 1 + \delta
\end{align}
More importantly, by the strong law of large numbers for a sum of independent random variables, $(|X_1+1| + \sum_{i=2}^{i^\star} (1 + (X_i + 1 - \ell)_+ ))/i^\star$, and therefore $|\enc{\bm{s}}|/i^\star$ is asymptotically almost surely upper bounded as,
\begin{align} \label{eq:0011}
    \lim_{i^\star \to \infty} \frac{|\enc{\bm{s}}|}{i^\star} \overset{\text{a.s.}}{\le} 1+\delta,
\end{align}
On the other hand, the number of characters generated, $m$, equals $\sum_{i=1}^{i^\star} (X_i+1)$, and satisfies, $\lim_{i^\star \to \infty} \mathbb{E} [m]/i^\star = 1 + \delta^{-1}$. By another application of the strong law of large numbers for a sum of independent random variables,
\begin{align} \label{eq:0012}
    \lim_{i^\star \to \infty} \frac{m}{i^\star} \overset{\text{a.s.}}{=} 1 + \delta^{-1}.
\end{align}
By combining \cref{eq:0011,eq:0012}, we have that,
\begin{align}
    \lim_{i^\star \to \infty} \frac{|\enc{\bm{s}}|}{m} \overset{\text{a.s.}}{\le} \frac{1+\delta}{1+ \delta^{-1}} = \frac{1}{\delta}.
\end{align}
Finally, combining with \cref{eq:ldqub} and the ensuing discussion, we may upper bound the limiting cross-entropy loss by,
\begin{align}
    \min_{Q \in \Qgram{1}} \lim_{m \to \infty} \frac{1}{m} \mathcal{L}_m (Q \circ \enc{\cdot}) \le \delta \log(2 \ell+1) = \delta \log (3+4 \log(1/\delta)/\delta).
\end{align}
Note for this Markovian source, it is a short calculation to see that,
\begin{align}
    H_\infty = \mathbb{E}_{x \sim \pi} [ H(P(\cdot|x))] = \delta \log(\sqrt{2}/\delta) + (1-\delta) \log(1/(1-\delta))
\end{align}
Note that for any $\delta \le 1/2$, numerical evaluation gives the inequality,
\begin{align}
    1 \le \frac{\delta \log (3 + 4 \log(1/\delta)/\delta)}{H_\infty} \le 1.273
\end{align}
with the approximation factor improving as $\delta$ becomes smaller. Therefore, this tokenizer achieves a normalized cross-entropy loss which asymptotically scales as a constant multiple of the entropy rate of the source.

\subsection{Greedy-encoder achieves poor cross-entropy loss} \label{subsec:poorce}

Note that the greedy encoder picks the largest prefix of the string which is a token, assigns and removes it, and iterates on the rest of the string. The greedy encoder's behavior is easy to analyze - every string of consecutive $1$'s in the new string is broken into chunks of length $\ell$ (save potentially the last chunk) and each chunk is assigned as a token in $\{ 1 \bm{s} : \bm{s} \in S_{\bm{1}} \} \subset \Dict$. If the length of this substring of successive $1$'s is not $1,\ell+1,2\ell+1,\cdots$, or in general, $\equiv 1 \mod{\ell}$, every character in the next sequence, composed of $0$'s or $2$'s is tokenized into individual characters.

Similar to \cref{eq:o123100} to \cref{eq:o12310}, consider a simplification of the overall cross-entropy loss,
\begin{align}
    &\min_{Q \in \Qgram{1}} \lim_{m \to \infty} \frac{1}{m} \mathcal{L}_m (Q \circ \encgre{\cdot}) \\
    &= \min_{Q \in \Qgram{1}} \lim_{m \to \infty} - \frac{1}{m} \mathbb{E} \left[ \log Q_{\#} (|\encgre{\bm{s}}|) + |\encgre{\bm{s}}| \sum\nolimits_{\bm{t} \in \Dict} \frac{n_{\bm{t}}}{|\encgre{\bm{s}}|} \log Q_{\text{tok}} (\bm{t}) \right] \\
    &\ge \min_{Q \in \Qgram{1}} \lim_{m \to \infty} - \frac{1}{m} \mathbb{E} \left[ |\encgre{\bm{s}}| \sum\nolimits_{\substack{\bm{t} \in \Dict \\
    \PMLEuni (\bm{t}) > 0}} \PMLEuni (\bm{t}) \log Q_{\text{tok}} (\bm{t}) \right],
\end{align}
where the last equation uses the fact that by \Cref{lemma:limit-exists}, for the greedy encoder, $\lim_{m \to \infty} \frac{n_{\bm{t}}}{|\encmin{\bm{s}}|} \overset{\text{a.s.}}{=} \PMLEuni (\bm{t})$. The minimizer of this objective subject to $\sum_{\bm{t} \in \Dict : \PMLEuni (\bm{t}) > 0} Q_{\text{tok}} (\bm{t}) \le 1$ is $Q_{\text{tok}} (\bm{t}) = \PMLEuni (\bm{t}) $ resulting in the inequality,
\begin{align} \label{eq:LDQlb}
    \min_{Q \in \Qgram{1}} \lim_{m \to \infty} \frac{1}{m} \mathcal{L}_m (Q \circ \encgre{\cdot}) &\ge \lim_{m \to \infty} \frac{1}{m} \mathbb{E} \left[ |\encgre{\bm{s}}| H (\PMLEuni) \right],
\end{align}
where we use the convention $0 \log (1/0) \triangleq \lim_{P \to 0} P \log (1/P) = 0$ and therefore we may sum over tokens such that $\PMLEuni (\bm{t}) = 0$ for free.

Considering the same geometric sampling model as in \Cref{subsubsec:goodenc}, and \Cref{lemma:limit-exists}, we may study the almost sure limit $\PMLEuni (\bm{t}) = \lim_{m \to \infty} n_{\bm{t}}/|\encgre{\bm{s}}|$ by computing $\lim_{i^\star \to \infty} n_{\bm{t}}/|\encgre{\bm{s}}|$ under the geometric sampling model since the almost sure limit exists. Recall that in the geometric sampling model, we generate the overall source string by concatenating $i^\star$ strings of length $X_1+1,\cdots,X_{i^\star}+1$ where $X_i \sim \textsf{Geo} (\delta)$, with the strings alternating between successive $1$'s and successive $0$'s or $2$'s (with the choice between the two made by the flip of a fair coin). For $x\in \{ 0,1,2\}$, let $\mathcal{E}_i (x)$ denote the event that $X_i$ is a string composed only of all $x$'s. The length of the greedy encoding of $\bm{s}$ is lower bounded by,
\begin{align} \label{eq:19290--}
    |\encgre{\bm{s}}| \ge \sum_{i = 1}^{i^\star} X_i \cdot \mathbb{I} (X_{i-1} \not\equiv 1 \mod{\ell}) \mathbb{I} (\mathcal{E}_i (0) \cup \mathcal{E}_i (2)).
\end{align}
Which captures for the fact that all $0$'s and $2$'s are encoded into singular tokens unless the previous string of $1$'s was of length $\equiv 1 \mod{\ell}$. By the law of large numbers of the RHS of \cref{eq:19290--}, the following a.a.s. lower bound is satisfied,
\begin{align} \label{eq:encgre-aas}
    \lim_{i^\star \to \infty} \frac{|\encgre{\bm{s}}|}{i^\star} \overset{\text{a.s.}}{\ge} \frac{1}{2\delta} \left( 1 - \sum_{u=0}^\infty \delta (1 - \delta)^{\ell u + 1}\right) = \frac{1}{2 \delta} \left( 1 - \frac{\delta(1-\delta)}{1 - (1-\delta)^\ell}\right) \ge \frac{1 - \delta}{2 \delta},
\end{align}
where the last inequality uses the fact that $\ell = 1 + 2 \log(1/\delta)/\delta$. Likewise, observe that, $|\encgre{\bm{s}}| \le m$ surely, and following the analysis in \Cref{subsubsec:goodenc} of \cref{eq:0012}, we have that,
\begin{align} \label{eq:encb}
    \lim_{i^\star \to \infty} \frac{|\encgre{\bm{s}}|}{i^\star} \le \lim_{i^\star \to \infty} \frac{m}{i^\star} \overset{\text{a.s.}}{=} 1 + \delta^{-1}.
\end{align}
For $x \in \{ 0,2\}$, observe that the expected number of times the token $x$ is observed in the encoding of $\bm{s}$, $n_x$ can be written as,
\begin{align} \label{eq:n_xlb}
    n_x \ge \sum_{i=1}^{i^\star} \left( (X_i+1) \cdot \mathbb{I} (X_{i-1} \not\equiv 1 \mod{\ell}) \right) \mathbb{I} (\mathcal{E}_i (x)).
\end{align}
In particular, taking the expectation of \cref{eq:n_xlb},
\begin{align} \label{eq:0019}
    \mathbb{E} [n_x | \mathcal{E}_1 (0) \cup \mathcal{E}_1 (2)],\ \mathbb{E} [n_x | \mathcal{E}_1 (1)] \ge \frac{i^\star-1}{4} (1 + \delta^{-1}) \left( 1 - \sum_{u=0}^\infty \delta (1 - \delta)^{\ell u + 1} \right) \ge \frac{i^\star-1}{4} \cdot \frac{1 - \delta^2}{\delta}.
\end{align}
Note that in any realization of the geometric sampling process, in \cref{eq:n_xlb}, either the odd indexed substrings are all-$1$'s or the even indexed substrings are all-$1$'s. Therefore, surely, all the non-zero terms in the above summation are of the same parity. Moreover, since the $i^{th}$ term in the sum only depends on $X_i$ and $X_{i-1}$, conditioned on whether the non-zero parities are even or odd, $n_x$ can be written as a sum of $\approx i^\star/2$ mutually independent terms. By the strong law of large numbers on each of the conditional processes, \cref{eq:n_xlb,eq:0019} implies that for $x \in \{ 0,2 \}$,
\begin{align}
    \lim_{i^\star \to \infty} \frac{n_x}{i^\star} \overset{\text{a.s.}}{\ge} \frac{1-\delta^2}{4 \delta}.
\end{align}
To upper bound $n_x$, note that it is upper bounded by the number of times the character $x$ appears in the source string, which by the strong law of large numbers a.a.s (after normalizing by $i^\star$), scales as $1/4\delta$. Finally, to bound $\PMLEuni (\bm{t})$ which is  the sequential nature of the encoder, using a similar proof as \Cref{lemma:limit-exists}, we can show that $n_{\bm{t}}/\sumntprime$ converges to the unigram MLE model for this tokenizer. For the token $x \in \{ 0,2 \}$,
\begin{align} \label{eq:nxk}
    \lim_{i^\star \to \infty} \frac{n_x}{|\enc{\bm{s}}|} = \PMLEuni (x) \le \mathbb{E} \left[ \lim_{i^\star \to \infty} \frac{n_x}{n_2 + n_0} \right]
\end{align}
Using the a.a.s. upper and lower bounds on $|\enc{\bm{s}}|$, $n_0$ and $n_2$ derived in \cref{eq:encb,eq:nxk}, we arrive at lower and upper bounds on $\PMLEuni (x)$ for $x \in \{ 0, 2\}$,
\begin{align}
    \frac{1}{4} \approx \frac{1-\delta}{4} = \frac{(1-\delta^2)}{4 \delta (1 + \delta^{-1})} \le \PMLEuni (x) \le \frac{1}{2(1-\delta^2)} \approx \frac{1}{2}.
\end{align}
Since there are at least two tokens having probability bounded away from $0$ and $1$ by a constant under the MLE unigram model, the entropy of $\PMLEuni$ must also be lower bounded by a constant. Indeed,
\begin{align}
    H (\PMLEuni) \ge 2 \min_{\frac{1-\delta}{4} \le y \le \frac{1}{2(1-\delta^2)}} y \log(1/y). 
\end{align}
It is easy to verify that for $\delta \le 0.5$, the minimizer is achieved at $y = \frac{1-\delta}{4}$, which leads to the lower bound,
\begin{align}
    H (\PMLEuni) \ge \left( \frac{1-\delta}{2} \right) \log \left( \frac{4}{1-\delta} \right)
\end{align}
Finally, combining this lower bound on $H(\PMLEuni)$ with \cref{eq:LDQlb}, we have that,
\begin{align}
    \min_{Q \in \Qgram{1}} \lim_{m \to \infty} \frac{1}{m} \mathcal{L}_m (Q \circ \enc{\cdot}) &= \lim_{i^\star \to \infty} \mathbb{E} \left[ \frac{|\encgre{\bm{s}}|}{m} H (\PMLEuni) \right] \\
    &\ge \lim_{i^\star \to \infty} \mathbb{E} \left[ \frac{|\encgre{\bm{s}}|}{m} \right] \cdot \left( \frac{1-\delta}{2} \right) \log \left( \frac{4}{1-\delta} \right)\\
    &\overset{(i)}{\ge} \frac{1-\delta}{2 \delta (1 + \delta^{-1})} \cdot \left( \frac{1 - \delta}{2}  \right) \log \left( \frac{4}{1-\delta} \right) \\
    &\ge \frac{(1-\delta)^2}{3 (1 + \delta)} H (\pi)
\end{align}
where $(i)$ follows from the lower bound on $|\encgre{\bm{s}}|$ in \cref{eq:encgre-aas} with the almost sure limit of $m$ in \cref{eq:0012} and noting that $|\encgre{\bm{s}}|/m \le 1$ surely. The last inequality follows by simplifying using $\pi = (1/4,1/2,1/4)$ and $H(\pi) = \frac{1}{2} \log(8)$.

\section{Experiment details} \label{app:add-exp}


\setlength\extrarowheight{2pt}

\begin{table}[t]
\centering
\begin{tabular}{ll}
\hline\hline
Architecture & 
GPT-2 \\
\hline Batch size & Grid-searched in $\{8,16,32\}$ \\
Gradient acc. steps & $1$ \\
\hline
Tokenizer dictionary size & $\{ 10, 20 \}$ \\
Tokenizer dataset size & $10,000$ \\
\hline Optimizer & AdamW $\left(\beta_1=0.9, \beta_2=0.95\right)$ \\
Learning rate & $0.002$ \\
Scheduler & Cosine \\
\# Iterations & $8000$ \\
Weight decay & $1 \times 10^{-3}$ \\
\hline Dropout & $0$ \\
Sequence length & $512$ \\
Embedding dimension & Grid-searched in $\{10,20,30,40\}$ \\
\# layers & Grid-searched in $\{ 1,2,4,8\}$ \\
\# heads & Grid-searched in $\{1,2,4,8,16\}$ \\
\hline Repetitions & $5$ \\
\hline\hline
\end{tabular}
\vspace{1em}
\caption{Hyperparameter choices}
\label{tab:hyperparam}
\end{table}

\paragraph{Experiment 1 (\Cref{fig:speedup,fig:speedup-2}).} In this and previous experiments (\Cref{fig:unigram-transformer,fig:layers,fig:unigram}), we train the transformers on a single GPU on an $8 \times$ A100 node. The wall-clock time measured does not count time spent in validation loss evaluations. The hyperparameter choices are listed in \Cref{tab:hyperparam}.

\paragraph{Experiment 2 (Table~\ref{table:1}).} We evaluate pre-trained tokenizers on various datasets.  In this experiment, we do not evaluate the likelihood model on test sequences, rather, we estimate the cross-entropy of the best unigram model by using the approximation,
\begin{align}
    &- \mathbb{E} \Bigg[ \sum_{\bm{t} \in \Dict} n_{\bm{t}} \log \PMLEuni (\bm{t}) \Bigg] {\approx} - \sum_{\bm{t} \in \Dict} \widehat{n}_{\bm{t}} \log ( \widehat{Q} (\bm{t})) \label{eq:approx}
\end{align}
where $\widehat{Q} (\bm{t}) = \frac{\widehat{n}_{\bm{t}}}{\sum_{\bm{t}}\widehat{n}_{\bm{t}}}$ is the MLE unigram model learnt from a finite dataset, which we choose here as GLUE \citep{wang2019glue}, and $\widehat{n}_{\bm{t}}$ is the number of times the token $\bm{t}$ is observed in the encoding of the dataset. This approximation allows us to separate the error stemming from learning a suboptimal likelihood model which tends to have higher sample complexity requirements and focus on the asymptotic error of the tokenizer.

We use Monte-carlo sampling to approximate the cross-entropy loss estimator in \cref{eq:approx}. These approximations tends to underestimate the true cross-entropy loss due to the concavity of $x \log(1/x)$ close to $0$. In general, the gap between the approximation and the true error is expected to grow with $k$. Therefore, the true difference between the estimate of the best unigram model on a tokenizer and the best $k$-gram model for $k \ge 2$ on the character level tokenizer is likely to be larger than the reported figures.

\paragraph{Experiment 3 (\Cref{fig:main}).} We train the LZW, BPE, Unigram and Wordpiece tokenizers with dictionary sizes $\{ 5000, 6000, 8000, 12000, 20000, 32000, 50000, 80000 \}$. The cross-entropy loss incurred by the best $1$-gram model is estimated using \cref{eq:approx} while for $k$-gram models for $k \ge 2$, we use Monte-carlo sampling to estimate the cross-entropy of the empirical $k$-gram model computed using the GLUE dataset. For the $k$-gram models trained on the character level tokenizer, since the vocabulary size is fixed, we instead plot the number of distinct $k$-grams on the $x$-axis. While this is not a true measure of the number of parameters in the underlying $k$-gram model, we use this as a proxy for the same.

\newpage

\end{document}